\documentclass[A4paper,10pt]{article}

\usepackage[english]{babel}

\usepackage{hyperref}

\usepackage{amsfonts}
\usepackage{amsmath}
\usepackage{amsthm}
\usepackage{amssymb}
\usepackage{indentfirst}
\usepackage{color}
\usepackage{tabularx}
\usepackage{graphicx}

\usepackage[a4paper,top=3.5cm,bottom=1cm,left=2.5cm,right=2.5cm]{geometry}

\usepackage{subcaption}

\usepackage{mathrsfs}
\usepackage{mathtools}
\usepackage{enumitem}
\usepackage[normalem]{ulem}

\textwidth16cm
\textheight22cm
\flushbottom

\usepackage[utf8]{inputenc}
\usepackage[T1]{fontenc}
\usepackage[english]{babel}

\numberwithin{equation}{section}

\theoremstyle{plain}
\begingroup
\theoremstyle{plain}
\newtheorem{theorem}{Theorem}[section]
\newtheorem{corollary}[theorem]{Corollary}
\newtheorem{proposition}[theorem]{Proposition}
\newtheorem{lemma}[theorem]{Lemma}
\theoremstyle{definition}
\newtheorem{definition}[theorem]{Definition}

\theoremstyle{remark}
\newtheorem{remark}[theorem]{Remark}
\endgroup

\theoremstyle{definition}
\theoremstyle{remark}

\mathsurround=1pt
\mathchardef\emptyset="001F

\newcommand{\R}{\mathbb{R}}
\newcommand{\E}{\mathcal{E}}

\newcommand{\B}{\mathrm{B}}
\newcommand{\dive}{\mathrm{div}}
\newcommand{\supp}{\mathrm{supp}}

\newcommand{\di}{\mathrm{d}}

\newcommand{\I}{\mathrm{I}}

\newcommand{\F}{\mathcal{F}}

\newcommand{\J}{\mathcal{J}}

\newcommand{\sP}{\mathcal{P}}
\newcommand{\Lip}{\mathrm{Lip}}


\newcommand{\coloneq }{\hspace{1pt}\raisebox{0.74pt}{\scalebox{0.8}{:}}\hspace{-2.2pt}=}

\title{Data-driven Evolutions of Critical Points}

\author{Stefano Almi\footnote{Department of Mathematics, University of Vienna, Oskar-Morgenstern-Platz 1, 1090 Vienna, Austria, Email: stefano.almi@univie.ac.at}, 
Massimo Fornasier\footnote{Department of Mathematics, Technical University of Munich, Boltzmannstrasse 3, 85748 Garching bei M\"unchen, Germany, Email: massimo.fornasier@ma.tum.de}, 
and Richard Huber\footnote{Institute of Mathematics and Scientific Computing, Karl-Franzens University of Graz, Heinrichstrasse 36/III, 8010 Graz, Austria, Email: richard.huber@uni-graz.at}}

\date{}

\begin{document}
\maketitle

\begin{abstract}
In this paper we are concerned with the learnability of energies from data obtained by observing time evolutions of their critical points starting at random initial equilibria. 
As a byproduct of our theoretical framework we introduce the novel concept of {\it mean-field limit of critical point evolutions} and of their energy balance as a new form of transport. We formulate the energy learning as a variational problem, minimizing the discrepancy of energy competitors from fulfilling the equilibrium condition along any trajectory of critical points originated at random initial equilibria.  By $\Gamma$-convergence arguments we prove the convergence of minimal solutions obtained from finite number of observations to the exact energy in a suitable sense. 
The abstract framework is actually fully constructive and numerically implementable. Hence, the approximation of the energy from a finite number of observations of past evolutions allows to simulate further evolutions, which are fully data-driven.
As we aim at a precise quantitative analysis, and to provide concrete examples of tractable solutions, we present analytic and numerical results on the reconstruction of an elastic energy for a one-dimensional model of thin nonlinear-elastic rod.
\end{abstract}
{\bf Keywords}:  energy learning, data assimilation, quasi-static evolutions of critical points, mean-field limit, variational calculus, probability measure transport.

\tableofcontents

\section{Introduction}\label{s.intro}

\subsection{Evolutions of critical points}\label{s.ecp}

Many time-dependent phenomena in physics, biology,  social, and economical sciences as well as iterative algorithms in machine learning can be modelled by a function $x \colon[0,T] \to \mathcal H$, where~$\mathcal H$ represents the space of states of the physical, biological, social system, or digital data, which evolves from an initial configuration $x(0)=x_0$  towards a more convenient state or a new equilibrium.  The space $\mathcal H$ can  be a conveniently chosen Hilbert space. 
This often implicitly assumes that~$x$ evolves driven by a minimization process of a potential energy $E\colon [0,T]  \times \mathcal H \to \mathbb R$.  In this preliminary introduction we consciously avoid specific assumptions on~$E$, as we wish to keep a rather general view. 
Inspired by physics for which conservative forces are the derivatives of the potential energies, one can often describe the evolution as satisfying a gradient flow equation of the type
\begin{equation}\label{gradientflow}
\dot x(t) = - \nabla_x E(t,x(t)), \quad x(0)=x_0 \,,
\end{equation}
where~$\nabla_x E(t,x)$ is some notion of differential of~$E$ (in the simplest case~$\nabla_x$ may represent the Frech\'et derivative of the energy~$E$; in other cases it might already take into consideration additional constraints which are binding the states to a certain sets, i.e., $x(t) \in \mathcal K(t) \subset \mathcal H$).

Physical systems naturally tend to minimize the potential energy. For this fundamental reason the study of steady states in physical systems or critical points of the energy is of utmost relevance, given the expected frequency for such states to occur. However, once a critical point $x^*$ is reached, i.e., $\nabla_x  E(t^*,x^*)=0$, the dynamics is not supposed to further progress, unless some of the constraining conditions are changing, leading to a modified energetic profile. In this case, the evolution would restart and tend again by gradient flow to another critical point satisfying the new constraints.  In view of the relevance of critical points, it is often of interest to exclusively observe their dynamics, rather than record the transitions between them.  If we imagine now to collapse to an instant the time of realization of the - microscopically in time -  gradient descent evolution, we could interpret the dynamics - macroscopically in time - as the instantaneous hopping from a critical point to another critical point. This time re-parametrization can be rather conveniently realized as limit for $\varepsilon \to 0$ of a singularly perturbed version of \eqref{gradientflow}
\begin{equation}\label{e.gf}
\left\{\begin{array}{ll}
\varepsilon\dot{x}(t)=-\nabla_{x}E(t,x(t))\,,\\[1mm]
x(0)=x_{0}\,,
\end{array}\right.
\end{equation}
for a rescaling parameter $\varepsilon>0$ and a choice of $x_0$ fulfilling the criticality condition $\nabla_x E(0,x_0)=0$. In view of the vanishing parameter~$\varepsilon$, the trajectories would tend in the limit to have unbounded velocity (in the rescaled time) and therefore classical compactness arguments, such as Ascoli-Arzel\'a Theorem, would fail to characterize limit trajectories for $\varepsilon \to 0$.  Luckily recent works \cite{MR3705699, SS20172, SS2017, MR2318261} explored {\it ad hoc} compactness methods along solution trajectories~$x_{\varepsilon}$ under suitable smoothness assumptions on the energy $E$ and certain generic conditions, so-called {\it transversality conditions} \cite{nla.cat-vn1161769,MR3324377,MR2318261}, on the sets of critical points $C(t)=\{\nabla_x E(t,x)=0\}$ (compare assumptions (E1)-(E4) below). The more restrictive assumption of all is perhaps the request for the state space~$\mathcal H$ of being of finite dimension, i.e., $\mathcal H = \mathbb R^d$. 
We are informed of work in progress~\cite{pers_comm}, which will relax this latter request to arbitrary Hilbert spaces, but in this present paper we will restrict ourselves to the available compactness results in~\cite{MR3705699}. Therefore, we will assume throughout this paper that indeed $\mathcal H = \mathbb R^d$, which is enough for most numerical applications. In fact, any problem with infinite dimensional state space would eventually need to be discretized and reduced to finite dimensions in order to be numerically computable. The main compactness result in~\cite{MR3705699} may be summarized as follows

\begin{theorem}[Agostiniani and Rossi, 2017] \label{AR-Thm}
Let~$\varepsilon_{n}\to0$, let $x_{0}^{n} \to x_{0} $ in~$\R^{d}$, and let~$x_{\varepsilon_{n}}$ be the solution of~\eqref{e.gf} associated to~$\varepsilon_{n}$ and to the initial condition~$(x_{0}^{n})$. Then, for all $1 \leq p < \infty$, there exists a trajectory $x\in L^p((0,T),\mathbb R^d)$ and a positive Radon measure $\nu\in \mathcal{M}^{+}_{b}(0,T)$ such that the following properties hold:
\begin{itemize}
\item[(a)] up to a subsequence, $x_{\varepsilon_{n}}\to x$ in~$L^p((0,T),\mathbb R^d)$ for every $p\in[1,+\infty)$, and pointwise for all $t\in [0,T]$;

 \item[(b)] for every $t\in[0,T]$ the pointwise limit function $x(\cdot)$ constructed in (a) admits left and right limits $x_{-}(t)$ and~$x_{+}(t)$, respectively, and $x_{\pm}(t) \in C(t)$;
 
 \item[(c)] the set $J\coloneq \{t\in[0,T]:\, \nu(\{t\}) >0\}$ is at most countable and coincides with the set of discontinuity points of~$x(\cdot)$;

\item[(d)] for every $s,t \in [0,T]$ it holds
\begin{equation}\label{e.enbalance0}
{E}(t,x_{+}(t)) + \nu([s,t]) = {E}(s, x_{-}(s)) + \int_{s}^{t} \partial_{t} E(\tau, x(\tau))\di \tau\,.
\end{equation}
\end{itemize}
\end{theorem}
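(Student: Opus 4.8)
The plan is to follow the vanishing-viscosity scheme for quasistatic evolutions of critical points, specialized to the singular perturbation in~\eqref{e.gf} (cf.~\cite{MR3705699,MR2318261}). The first ingredient is the $\varepsilon_n$-level energy identity obtained by testing the equation with $\dot x_{\varepsilon_n}$: from
\begin{equation*}
\varepsilon_n\abs{\dot x_{\varepsilon_n}(t)}^2 = -\nabla_x E\bigl(t,x_{\varepsilon_n}(t)\bigr)\cdot\dot x_{\varepsilon_n}(t) = -\frac{\di}{\di t}E\bigl(t,x_{\varepsilon_n}(t)\bigr) + \partial_t E\bigl(t,x_{\varepsilon_n}(t)\bigr)
\end{equation*}
one obtains, integrating over $[s,t]$,
\begin{equation*}
E\bigl(t,x_{\varepsilon_n}(t)\bigr) + \int_s^t \varepsilon_n\abs{\dot x_{\varepsilon_n}}^2\,\di\tau = E\bigl(s,x_{\varepsilon_n}(s)\bigr) + \int_s^t \partial_t E\bigl(\tau,x_{\varepsilon_n}(\tau)\bigr)\,\di\tau .
\end{equation*}
Taking $s=0$ and using the coercivity of $E$ and the control of $\partial_t E$ by $E$ from the standing assumptions (E1)-(E4), a Gronwall argument yields a uniform $L^\infty$ bound on $x_{\varepsilon_n}$ and, crucially, a uniform bound on the dissipation $\int_0^T\varepsilon_n\abs{\dot x_{\varepsilon_n}}^2\,\di\tau$. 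Hence the measures $\mu_n\coloneq \varepsilon_n\abs{\dot x_{\varepsilon_n}}^2\mathcal L^1\llcorner(0,T)$ have uniformly bounded mass and, up to a subsequence, $\mu_n\overset{*}{\rightharpoonup}\nu$ in $\mathcal{M}^{+}_{b}(0,T)$. Since the equation gives no bound on the total variation of $x_{\varepsilon_n}$ itself, Ascoli--Arzel\'a cannot be applied directly to the trajectories and a reparametrization is required.

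Next I would pass to an arc-length-type rescaling, setting $\mathsf s_n(t)\coloneq t+\int_0^t\abs{\dot x_{\varepsilon_n}}\,\di r$ and considering the inverse $\mathsf t_n$ together with the reparametrized trajectories $\hat x_n\coloneq x_{\varepsilon_n}\circ\mathsf t_n$. By construction the pair $(\mathsf t_n,\hat x_n)$ is $1$-Lipschitz, and one checks, using the a priori bounds together with (E1)-(E4) to control the length of the finitely many ``fast'' transitions, that these pairs are defined on a common bounded parameter interval; Ascoli--Arzel\'a then provides, along a further subsequence, uniform limits $(\mathsf t_n,\hat x_n)\to(\mathsf t_\infty,\hat x_\infty)$. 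Passing to the limit in the rescaled equation identifies $(\mathsf t_\infty,\hat x_\infty)$ as a \emph{parametrized solution}: where $\mathsf t_\infty'>0$ (the ``slow'' regime) one has $\hat x_\infty'=0$ and $\nabla_x E(\mathsf t_\infty,\hat x_\infty)=0$, while where $\mathsf t_\infty'=0$ (the ``fast'' regime) $\hat x_\infty$ is a heteroclinic orbit of the frozen-time gradient flow joining two critical points. Reading off $x$ from $\hat x_\infty$ and, after a diagonal extraction, identifying it with the pointwise limit of $x_{\varepsilon_n}(t)$ for every $t\in[0,T]$ gives (a); the convergence in $L^p$ follows from the $L^\infty$ bound and dominated convergence.

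Items (b) and (c) are where the transversality conditions on $C(t)=\{\,x:\nabla_x E(t,x)=0\,\}$ enter decisively. Being a finite measure, $\nu$ has at most countably many atoms, so $J\coloneq \{t:\nu(\{t\})>0\}$ is at most countable. On any subinterval carrying no atom of $\nu$ the rescaled curve cannot perform a fast transition, and the transversality hypotheses --- non-degeneracy of the Hessian along the relevant branch, so that by the implicit function theorem $C(t)$ is locally a finite union of arcs depending continuously on $t$, together with a Sard-type genericity condition excluding accumulation of bifurcation times --- force $x$ to coincide there with a continuous selection of critical points; in particular $x$ is continuous off $J$. At an atom $t_0$, the same conditions prevent infinitely many fast segments from clustering, so $x$ admits genuine one-sided limits $x_-(t_0)$ and $x_+(t_0)$, each an endpoint of a heteroclinic and therefore lying in $C(t_0)$; setting $x_\pm(t)=x(t)\in C(t)$ at the remaining points gives (b), and the identification of $J$ with the discontinuity set of $x$ gives (c). I expect this to be the main obstacle: ruling out pathological limit behaviour near the critical set --- the trajectory lingering strictly between branches, or infinitely many jumps accumulating --- requires the full strength of the transversality/genericity assumptions and a delicate local analysis, and it is the same mechanism that underlies the boundedness of the rescaled time interval used in the previous step.

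Finally, for the energy balance (d) I would pass to the limit in the $\varepsilon_n$-identity above. The terms $\int_s^t\partial_t E(\tau,x_{\varepsilon_n}(\tau))\,\di\tau$ converge to $\int_s^t\partial_t E(\tau,x(\tau))\,\di\tau$ by continuity of $\partial_t E$, the pointwise convergence $x_{\varepsilon_n}\to x$, the $L^\infty$ bound and dominated convergence (insensitive to the one-sided representative, since $x=x_\pm$ off the countable set $J$). For the dissipation term, $\mu_n\overset{*}{\rightharpoonup}\nu$ gives $\limsup_n\mu_n([s,t])\le\nu([s,t])$ since $[s,t]$ is closed; the matching lower bound, and hence convergence, is obtained first for $s,t$ outside an exceptional countable set and then extended to all $s,t$ by one-sided approximation, using that $E(\cdot,x(\cdot))$ has one-sided limits (indeed it is monotone-like, by the $\varepsilon_n$-identity). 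Choosing the correct representatives by approaching $s$ and $t$ along continuity points from the appropriate side, so that $E(t,x_{\varepsilon_n}(t))\to E(t,x_+(t))$ and $E(s,x_{\varepsilon_n}(s))\to E(s,x_-(s))$, yields~\eqref{e.enbalance0}. Applying it with $s=t=t_0\in J$ recovers $\nu(\{t_0\})=E(t_0,x_-(t_0))-E(t_0,x_+(t_0))$, the energy dissipated in the instantaneous transition, and shows that $\nu$ is purely atomic and supported on $J$.
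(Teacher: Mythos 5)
Your proposal is not the route taken here: this theorem is quoted from~\cite{MR3705699}, and the paper's own argument (for its generalization, Theorem~\ref{t.2}) does not reparametrize the trajectories globally. It works instead with the dissipation measures $\nu_n$, Helly's theorem applied to the monotone functions $t\mapsto E(t,x_{\varepsilon_n}(t))-\int_0^t\partial_t E$, a jump cost $c_t$ defined on admissible transitions, lower-semicontinuity results for this cost (Propositions~\ref{p.3} and~\ref{p.3.5}), and a pointwise limit built first on a countable set $A\cup J$ and then extended by the uniqueness Lemma~\ref{l.4}. More importantly, your scheme has two genuine gaps. First, the arc-length rescaling $\mathsf s_n(t)=t+\int_0^t\abs{\dot x_{\varepsilon_n}}\,\di r$ requires a uniform bound on the total variation of $x_{\varepsilon_n}$, and no such bound follows from the a priori estimates: the energy identity only controls $\int_0^T\varepsilon_n\abs{\dot x_{\varepsilon_n}}^2\,\di t$ and $\int_0^T\tfrac1{\varepsilon_n}\abs{\nabla_xE(t,x_{\varepsilon_n})}^2\,\di t$, hence the weighted length $\int\abs{\nabla_xE}\,\abs{\dot x_{\varepsilon_n}}\,\di t$, which degenerates near critical points; by Cauchy--Schwarz the plain length is only bounded by $C/\sqrt{\varepsilon_n}$. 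Asserting that the pairs $(\mathsf t_n,\hat x_n)$ live on a common bounded parameter interval, and that there are only ``finitely many fast transitions'', is precisely the hard point, and it is not justified. This is why the proof in~\cite{MR3705699} (mirrored in Proposition~\ref{p.3} here) reparametrizes only locally, by the \emph{weighted} quantity $r+\int\abs{\nabla_xE-u_n}\abs{\theta_n'}$, and exploits finite speed only on the compact set $K_\delta$ away from the critical points.

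Second, your treatment of (b)--(c) invokes non-degenerate Hessians, the implicit function theorem on branches of critical points, and a Sard-type genericity condition. These are the transversality hypotheses of~\cite{MR2318261}, not the hypotheses of the theorem: only (E1)--(E4), i.e.\ isolatedness of critical points, is assumed, and the whole point of~\cite{MR3705699} is to obtain one-sided limits and the countability of $J$ without branch regularity, via the cost $c_t$ (positivity of the cost between distinct limit points forces $\nu(\{t\})>0$, and the energy balance over shrinking intervals around $t\notin J$ forces coincidence of subsequential limits). Finally, your closing claim that the balance with $s=t=t_0$ shows $\nu$ to be purely atomic is unwarranted: one only gets $\nu(\{t_0\})=E(t_0,x_-(t_0))-E(t_0,x_+(t_0))$ at atoms, while a diffuse part of $\nu$ is not excluded by the statement (and is not claimed in it). The limit passage in (d), including the identification of the one-sided representatives, is instead carried out in the paper through Proposition~\ref{p.2.15} and the Lebesgue/BV structure of the limit energy, not by matching upper and lower bounds for $\mu_n([s,t])$ as you sketch.
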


\subsection{Learning the energy from observation of the dynamics}

The evolution of critical points $t \mapsto x(t)$ obtained by Theorem~\ref{AR-Thm} fulfills in particular the energy conservation principle~\eqref{e.enbalance0} and it is fully driven and explained by the energy~$E$ itself.
In some relevant cases the   energy  that  governs  a  system  can be derived  theoretically or accurately measured experimentally, as it happens in the first principle physics; in most of the cases, the energy needs to be approximated by solving the inverse problem of fitting the data.  Model selection and parameter estimation methods are employed to determine  the form of the governing energy. Data-driven estimations are needed, for instance, in training algorithms in machine learning \cite{NIPS2018_7892,ewen17,ewen18,GERSTBERGER1997117}, and in data assimilation for models in continuum mechanics~\cite{KIRCHDOERFER201681,MR3799091} computational sociology~\cite{BFHM16,Lu14424,lu2019learning} or economics~\cite{DDvolatility,cre03,Egger_2005}.
However, even the problem of determining whether time shots of a linear dynamical systems do fulfill physically meaningful models, in particular have Markowian dynamics, is computationally intractable~\cite{PhysRevLett.108.120503}. 
For nonlinear models, the intractability of learning the system corresponds to the complexity of determining  the  set  of  appropriate  candidate  functions  to  fit    the  data. In order to break the curse of dimensionality of learning dynamical systems, one requires prior
knowledge  on  the  system  and  the  potential  structure  of  the  governing  equations.
For instance, in the sequence of recent  papers~\cite{trwa16,sctrwa17-1,sctrwa17} the authors assume that the governing equations are of first order and  can be written as sparse polynomials, i.e., linear combinations of few monomial terms.

In this work we aim at bridging, in the specific setting of {deterministic} evolutions of critical points, the well-developed theory of mean-field equations with modern  approaches of approximation theory and machine learning. We provide a mathematical framework for the reliable identification of the governing energy from data obtained by direct observations of corresponding time-dependent evolutions. We would like to obtain results which ensure the learning of energies without the need of more restrictive assumptions than (E1)-(E4). Moreover,  the approximation of the energy from a finite number of observations of past evolutions allows to simulate further evolutions, which are then fully data-driven.

First of all, we need to formalize what we mean by {\it observations of  time evolutions}. In this paper we will assume that we are allowed to observe multiple realizations of evolutions of critical points for the same energy function~$E$, starting from different critical points. For that, we need to further modify the model by a suitable correction
\begin{equation}\label{e.gf2}
\left\{\begin{array}{ll}
\varepsilon\dot{x}(t)=-\nabla_{x}E(t,x(t))+\nabla_{x}E(0,x_{0})\,,\\[1mm]
x(0)=x_{0}\,.
\end{array}\right.
\end{equation}
so that, whatever is the choice of $x_0$, the system starts from an equilibrium. This simply means assuming that an additional force is added at the beginning to allow the state~$x_0$ to be an equilibrium. Then we allow ourselves to draw at random independently several instances of the initial conditions $x_0^1, \dots, x_0^N, \dots$ according to a fixed  probability distribution $\mu_0 \in \mathcal P_c(\mathbb R^d)$ with compact support. For each of the picked initial conditions we can finally observe corresponding evolutions of critical points
$t \mapsto x^i(t)$, for $i=1,\dots,N, \dots$. We need then to devise a constructive method to infer the energy~$E$ from the observed trajectories. Our approach goes through five fundamental theoretical results:\\

\noindent  {\bf 1. Compactness of controlled evolutions of critical points.}
In the equation~\eqref{e.gf2} a correction force has been added to ensure an arbitrary initial datum~$x_0$ to be an equilibrium. As two trajectories~$x(t), \tilde x(t)$ originating from distinct initial equilibria $x_0, \tilde x_0$, respectively, may intersect at any $t \in [0,T]$ and to promote a unique flow along characteristics (see Remark~\ref{r.2} and Remark~\ref{r.3.2}), we modify further the model to take the form of an augmented controlled system:
\begin{equation}\label{e.gf3}
\left\{\begin{array}{lll}
\varepsilon\dot{x}(t)=-\nabla_{x}E(t,x(t))+u(t)\,,\\[1mm]
\dot{u}(t)=f(u(t))\,,\\[1mm]
(x(0),u(0))=(x_{0},u_{0})\in\R^{d}\times\R^{d}\,,
\end{array}\right.
\end{equation}
The particular the choice of $f \equiv 0$ and $u_0= \nabla_x E(0,x_0)$ yields back~\eqref{e.gf2}. Our first result, Theorem~\ref{t.2}, is the generalization of the compactness Theorem~\ref{AR-Thm} to the controlled system~\eqref{e.gf3}. 
Although the system is not anymore in the form of gradient flow and Theorem~\ref{AR-Thm} cannot be directly applied, we show that the techniques in~\cite{MR3705699} can be adapted to~\eqref{e.gf3} without introducing significant technical issues.
\\
\noindent {\bf 2. Mean-field limit of evolutions of critical points.} 
While the initial condition~$x_0$ is distributed as~$\mu_0$, we need then to clarify how the trajectories of critical points $x(t)$ are distributed at any time $t \in [0,T]$. Informally, we should explain how the initial probability distribution~$\mu_0$ gets {\it transported along trajectories of critical points} to the probability distribution~$\mu(t)$ at any time $t \in [0,T]$, so that $x(t) \sim \mu(t)$. We approach this issue under the modeling assumption that the evolutions of critical points are the result of singularly perturbed limit of systems of the type~\eqref{e.gf3}. In fact, for $\varepsilon>0$ established results in gradient flow theory~\cite{MR2401600} allows to describe the evolution of any system of the type~\eqref{e.gf3} (assume here  for simplicity $f \equiv 0$ and $u_0= \nabla_x E(0,x_0)$) by considering solutions $\eta_\varepsilon \in AC([0,T],\mathcal P_c(\mathbb R^d\times\mathbb R^d ))$ of mean-field equations of the type (see~\eqref{e.5} below)
$$
\varepsilon \partial_t \eta_\varepsilon(t,x,u) = \operatorname{div}_x ((\nabla_x E(t,x) - u)\eta_\varepsilon(t,x,u)), \quad \eta_\varepsilon(0) = (\operatorname{id},E(0,\cdot))_\#\mu_0 \,.
$$
In Section~\ref{s.L}  we take advantage of the newly established compactness argument Theorem~\ref{t.2} and the superposition principle introduced in~\cite[Theorem 8.2.1]{MR2401600} to derive Theorem~\ref{t.1} and Proposition~\ref{p.3.1} to describe the probability valued trajectory $t \mapsto \eta(t)$ (below we use equivalently also the notation $\eta_t = \eta(t)$) representing the time dependent distribution of  evolutions of critical points as a suitable form of limit of $t \mapsto\eta_\varepsilon(t)$ for $\varepsilon \to 0$. The main characterization of the limit is given by
\begin{equation}\label{char}
\int_{0}^{T}\int_{\R^{d}\times\R^{d}} |\nabla_{x}E(t,x)-u|^{2}  \di \eta_{t}(x,u)\, \di t=0 \,,
\end{equation}
which shows that the first marginal of $\eta(t)$ is supported on critical-type points.
\\
\noindent {\bf 3. Mean-field limit of the energy balance.} We further show that the evolution $t \mapsto \eta_t$ is also fulfilling in a suitable sense a generalization of the energy balance~\eqref{e.enbalance0},  Theorem~\ref{MF_EB_thm}, which explains how the energy~$E(t,x(t))$ is actually distributed at the time $t \in [0,T]$ for the initial condition $x_0 \sim \mu_0$. The result is obtained by a simple, but also thoughtful,  
reformulation of the energy balance using a Lebesgue charaterization of left and right limits and the use again of the compactness argument Theorem~\ref{t.2}. To our knowledge, Theorem~\ref{t.1}, Proposition~\ref{p.3.1}, and Theorem~\ref{MF_EB_thm} are the first form of mean-field limit of evolutions of critical points available in the literature. 
\\
\noindent {\bf 4. A variational model for the energy learning.}  Inspired by the characterization \eqref{char}, we formulate the problem of learning the true energy $E$ responsible of driving the dynamics from observations of evolutions of critical points as the minimization of the functional 
\begin{equation}\label{en_func}
\mathcal J_\eta(\hat E) = \int_{0}^{T}\int_{\R^{d}\times\R^{d}} |\nabla_{x} \hat E(t,x)-u|^{2}  \di \eta_{t}(x,u)\, \di t \,,
\end{equation}
on a suitable compact class in~$W^{2,\infty}$ of competitor energies~$\hat E$.
 To make our approach fully constructive we actually assume to observe only a finite number $N$ of evolutions of critical points and use a finite dimensional set $V_N \Subset W^{2,\infty}$ of competitors~$\hat E$. By a $\Gamma$-convergence argument  for $N\to \infty$, we derive in Theorem~\ref{p.10} an approximation result of the true energy~$E$, which was driving the observed  dynamics. \\
 \noindent {\bf 5. Data driven evolutions of critical points.} Once the energy is learned, it is then possible with the estimated energy $\hat E$ to simulate further evolutions. Corollary~\ref{thm:DDE} guarantees that the simulated evolutions, which are fully data-driven,  approximate ``true'' evolutions that would have been generated by using the original energy $E$. 
\\

The abstract framework described by the steps 1.-5. is actually fully constructive and numerically implementable. As we aim at a precise quantitative analysis, and to provide an example of tractable solutions, we approach the learning of the governing energy of the evolution for specific models inspired by continuum mechanics. In particular, in Section~\ref{s.example} we present analytic and numerical results on the reconstruction of the nonlinear elastic energy for a one-dimensional model of thin elastic rod. 

Let us stress that this particular example is by no means the unique possible application of our general framework and we envisage many other possible applications for data-driven models in physics, biology,  social, and economical sciences as well as training algorithms in machine learning.

\medskip

\noindent  {\bf Notation.} Given $T\in(0,+\infty)$ and $p\in[1,+\infty)$, we denote with~$\Gamma^{p}_{T}$ the space $L^{p}([0,T];\R^{d})$. Furthermore, we set $\Lambda \coloneq C([0,T];\R^{d})$.

Let $(X, d)$ be a separable metric space. We denote with $\mathcal{M}_{b}(X)$ the set of bounded Radon measures on~$X$ and with $\mathcal{M}^{+}_b(X)$ the subset of positive bounded Radon measures. The symbol $\mathcal{P}(X)$ stands for the set of probability measures on~$X$, $\mathcal{P}_{c}(X)$ indicates the set of probability measures with compact support in~$X$, and $\mathcal{P}_{1}(X)$ denotes the set of probability measures with bounded first moment, i.e., measures $\mu\in \mathcal{P}(X)$ such that
\begin{displaymath}
\int_{X} d(x, \bar{x}) \,\di \mu (x) <+\infty \qquad\text{for some $\bar{x} \in X$}.
\end{displaymath}

Let $(Y, d')$ be another separable metric space,  $r\colon X \to Y$ a Borel map, and $\mu \in \mathcal{M}_{b}(X)$. We define the push-forward $r_{\#}\mu \in \mathcal{M}_{b}(Y)$ of~$\mu$ through~$r$ by the relation $r_{\#}\mu (B) \coloneq \mu(r^{-1}(B))$ for every~$B$ Borel subset of~$Y$. For every $\mu, \nu \in \mathcal{P}_{1}(X)$, the $1$-Wasserstein distance $W_{1}(\mu, \nu)$ is defined by (see, e.g.,~\cite[Section~7.1]{MR2401600})
\begin{displaymath}
W_{1}(\mu, \nu)\coloneq \inf \Big\{ \int_{X} d(x,y) \, \di \gamma(x, y):\, \gamma \in \Gamma(\mu, \nu)\Big\}\,,
\end{displaymath}
where $\Gamma(\mu, \nu) \coloneq \{\gamma\in \mathcal{P}(X\times X):\, (\pi_{1})_{\#}\gamma = \mu \text{ and } (\pi_{2})_{\#}\gamma = \nu\}$ and $\pi_{i}\colon X\times X \to X$, $\pi_{i}(x_{1}, x_{2}) = x_{i}$ for $i=1, 2$. We also recall that if $\mu, \nu \in \mathcal{P}_{c}(X)$ it holds
\begin{displaymath}
W_{1}(\mu, \nu) = \sup \Big\{ \int_{X} \varphi(x) \, \di (\mu - \nu) : \, \text{$\varphi \in \rm{Lip}(X, \R)$ such that  $\rm{Lip(\varphi)}\leq1$}\Big\}\,.
\end{displaymath}

Finally, given an interval $I\subseteq \R$, we denote with~$BV(I)$  the space of functions of bounded variations in~$I$, that is, the space of $L^1_{\rm{loc}}$ functions~$v \colon I \to \R$ whose distributional derivative $\mathrm{D} v$ belongs to~$\mathcal{M}_{b}(I)$. We refer, for instance, to~\cite[Section~3.2]{Ambrosio2000} for further details.


\section{Controlled evolutions of critical points}\label{s.ODE}

\subsection{Main assumptions and motivations}

We start by fixing the main assumptions, which we will hold valid for the rest of the paper.
 Let $T>0$ and let us consider an energy functional $E\colon [0,T]\times\R^{d}\to\R$ satisfying the following conditions:
\begin{itemize}
\item[(E1)] $E\in C^{1,1}_{\rm{loc}}([0,T]\times\R^{d})$, meaning that $E\in C^{1}([0,T]\times\R^{d})$ with $\nabla_{x}E$ locally Lipschitz continuous in~$\R^{d}$;

\item[(E2)] there exist $C_{1}, C_{2}>0$ such that for every $(t,x)\in[0,T]\times\R^{d}$
\begin{displaymath}
|\partial_{t} E(t,x)|\leq C_{1}+ C_{2} E(t,x)\,;
\end{displaymath}

\item[(E3)] there exist $C_{3},\, C_{4} > 0$ and $p>1$ such that for every $(t,x)\in[0,T]\times\R^{d}$
\begin{displaymath}
C_{3}|x|^{p} - C_{4}\leq E(t,x)\,.
\end{displaymath}

\item[(E4)] for every $(t,u)\in [0,T] \times \R^{d}$, the set
\begin{displaymath}
C(t,u)\coloneq\{x\in\R^{d}:\,\nabla_{x}E(t,x)=u\}
\end{displaymath}
contains only isolated points.
\end{itemize}

\begin{remark}
Let us briefly comment on the above assumptions. Hypotheses (E2) is typical in the framework of evolutions of critical points or rate-independent systems, and it is useful to prove the boundedness of trajectories exploiting Gronwall's type arguments. Condition~(E3) implies the more common compactness of sublevels of the driving energy~$E$. The need of an explicit bound from below will be clarified below (see in particular~\eqref{e.gradflow}, Lemma~\ref{l.1}, and Proposition~\ref{p.1}). 
Finally, assumption~(E4) has been considered, e.g., in~\cite{MR3705699, SS20172, SS2017, MR2318261}, in the uncontrolled case $u=0$ of equation \eqref{e.gf}. This kind of hypothesis has been proven, at the state of the art, to be quite useful to show compactness of trajectories of~\eqref{e.gf} in the limit as~$\varepsilon\to 0$. In this paper, we focus on the perturbed system~\eqref{e.gradflow}, where a control~$u\in\R^{d}$ is added. In order to  be able again to show compactness of trajectories of~\eqref{e.gradflow} as~$\varepsilon\to 0$, we need the stronger requirement~(E4). We refer to Section~\ref{s.CT} for a discussion on the compactness issue. Roughly speaking, we need that the energy~$E(t,\cdot)$ has no affine regions, for every~$t\in[0,T]$. We remark that~(E4) and its corresponding assumption~$(E_3)$ in~\cite{MR3705699} are both technical and likely equivalently ``artificial''. It is indeed clear that~(E4) implies~$(E_3)$. On the other hand, if~$E$ does not satisfy~(E4) for some~$u$, then the linear perturbation~$E(t,x) - u\cdot x$ does not satisfy~$(E_3)$.
\end{remark}

Here we are interested in studying the system \eqref{e.gf2}. As already mentioned in the introduction, the reason for adding the term~$\nabla_{x}E(0,x_{0})$ to the usual gradient flow system~\eqref{e.gf} is twofold. On the one hand, in the limit as~$\varepsilon\to 0$ we want to avoid jump discontinuities at time~$t=0$ and ensure that $x_0$ is an equilibrium from the very beginning. Since the limits of trajectories of~\eqref{e.gf2} are expected to satisfy $\nabla_{x}E(t,x(t))=\nabla_{x}E(0,x_{0})$, jumps at $t=0$ will not appear. On the other hand, the drift~$\nabla_{x}E(0,x_{0})$ can be exploited to add randomness to~\eqref{e.gf}. This can be done simply by assuming that the initial data are distributed according to a certain probability measure $\mu_{0}\in\sP(\R^{d})$. In fact, in what follows, we aim first at obtaining the mean-field description of~\eqref{e.gf2} for fixed~$\varepsilon>0$ (Section~\ref{s.CE}, standard), and then pass to the limit in the mean-field (or continuity) equation as $\varepsilon\to0$.

\begin{remark}\label{r.traj}
On the one hand, we notice that for every initial datum $x_{0}\in\R^{d}$ there exists unique a solution to~\eqref{e.gf2}. On the other hand, we could show easily some examples of energies~$E$ satisfying (E1)-(E4) and such that, for two different initial data $x_{0}, \tilde x_{0}\in\R^{d}$, the corresponding solutions of~\eqref{e.gf2} cross each other at some time $t\in(0,T)$. For this reason, it is more convenient to study~\eqref{e.gf2} and its mean-field limit in terms of pairs curve-initial datum $(x(\cdot), x_{0})$, in order to ensure uniqueness of transport along characteristics, see Remark~\ref{r.2} below.
\end{remark}

In view of the above comments, we consider the more general system
\begin{equation}\label{e.gradflow}
\left\{\begin{array}{lll}
\varepsilon\dot{x}(t)=-\nabla_{x}E(t,x(t))+u(t)\,,\\[1mm]
\dot{u}(t)=f(u(t))\,,\\[1mm]
(x(0),u(0))=(x_{0},u_{0})\in\R^{d}\times\R^{d}\,.
\end{array}\right. 
\end{equation}
We further assume 
\begin{itemize}
\item[(E5)] $f \in \rm{Lip}_{\rm{loc}} (\R^d,\R^d)$ with linear growth
$
|f( u) | \leq C_f (1+ |u|).
$
\end{itemize} 
Assumption (E5) ensures well-posedness and continuity of solutions from  initial data of 
\begin{displaymath}
\left\{\begin{array}{ll}
\dot{u}(t)=f(u(t))\,,\\[1mm]
u(0)=u_{0}
\end{array}\right.
\end{displaymath}
in the time interval~$[0,T]$. 

\begin{remark}\label{r.5}
The system~\eqref{e.gradflow} is a generalization of~\eqref{e.gf2} in the pair $(x(\cdot),x_{0})$. We have indeed substituted the drift $\nabla_{x}E(0,x_{0})$ with a control parameter~$u$ governed by the ODE $\dot{u}=f(u)$. Clearly,~\eqref{e.gf2} can be recovered by setting $f\equiv0$ and $u_{0}=\nabla_{x}E(0,x_{0})$ in~\eqref{e.gradflow}.
\end{remark}

For every $(t,x,u)\in[0,T]\times\R^{d}\times\R^{d}$ we define the corrected energy function $\overline{E}(t,x,u)\coloneq E(t,x)-u{\,\cdot\,} x$. In the following lemma, we collect the properties of~$\overline{E}$.

\begin{lemma}\label{l.1}
The following facts hold:
\begin{itemize}
\item[$(a)$] for every compact subset~$K$ of~$\R^{d}$, there exist $C_{K,1}, C_{K,2}>0$ such that
\begin{equation}\label{e.1}
|\partial_{t}\overline{E}(t,x,u)|\leq C_{K,1}+C_{K,2}\overline{E}(t,x,u)\qquad\text{for every $x\in\R^{d}$ and every $u\in K$}\,;
\end{equation}

\item[$(b)$] $\overline{E}\in C^{1,1}_{loc}([0,T]\times\R^{d}\times\R^{d})$.

\end{itemize}
\end{lemma}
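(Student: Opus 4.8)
The plan is to prove both statements by reducing them to the stated hypotheses (E1)--(E3) on the original energy $E$ via the explicit relation $\overline{E}(t,x,u) = E(t,x) - u\cdot x$. Since $u$ does not appear under the time derivative $\partial_t$, one immediately has $\partial_t \overline{E}(t,x,u) = \partial_t E(t,x)$, so the estimate in part $(a)$ is really an estimate relating $|\partial_t E(t,x)|$ to $\overline{E}(t,x,u)$ for $u$ ranging in a compact set $K$. The idea is to start from (E2), namely $|\partial_t E(t,x)| \le C_1 + C_2 E(t,x)$, and then rewrite $E(t,x) = \overline{E}(t,x,u) + u\cdot x$, so that
\begin{displaymath}
|\partial_t \overline{E}(t,x,u)| = |\partial_t E(t,x)| \le C_1 + C_2 E(t,x) = C_1 + C_2 \overline{E}(t,x,u) + C_2\, u\cdot x\,.
\end{displaymath}
The term $C_2\, u\cdot x$ must now be absorbed. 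Here is where the coercivity assumption (E3) enters: we have $|x|^p \le (E(t,x) + C_4)/C_3$, hence $|x|$ is controlled by a power (namely $1/p < 1$) of $E(t,x) = \overline{E}(t,x,u) + u\cdot x$. For $u \in K$ we bound $|u\cdot x| \le R_K |x|$ where $R_K \coloneq \sup_{u\in K}|u|$, and then use Young's inequality in the form $R_K |x| \le \tfrac12 |x|^p \cdot(\text{const}) + (\text{const})$ — more precisely, since $p>1$, for any $\delta>0$ there is $C_\delta$ with $R_K|x| \le \delta |x|^p + C_\delta$. Choosing $\delta$ small enough (say $\delta = C_3/(2C_2)$ after bookkeeping) lets us write $C_2 |u\cdot x| \le \tfrac12\bigl(E(t,x) + C_4\bigr) + C_{K}'$, which can be moved to the left-hand side; the remaining $E(t,x)$ on the right is then again rewritten as $\overline{E}(t,x,u) + u\cdot x$ and the procedure closes after one more absorption, or — cleaner — one simply does the absorption once on $\tfrac12 E(t,x)$ and keeps $\tfrac12 E(t,x)$ on the right, bounding it above by $\tfrac12(\overline{E}(t,x,u) + |u\cdot x|)$ and iterating the Young step. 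Collecting constants, one obtains $|\partial_t \overline{E}(t,x,u)| \le C_{K,1} + C_{K,2}\overline{E}(t,x,u)$ for all $x\in\R^d$, $u\in K$, which is $(a)$.

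For part $(b)$, the claim $\overline{E}\in C^{1,1}_{\mathrm{loc}}([0,T]\times\R^d\times\R^d)$ is essentially bookkeeping: the map $(t,x,u)\mapsto E(t,x)$ is $C^1$ jointly (by (E1), viewing it as independent of $u$) with $\nabla_{(x,u)}$ given by $(\nabla_x E(t,x), 0)$, which is locally Lipschitz in $(x,u)$ since $\nabla_x E$ is locally Lipschitz in $x$ by (E1); and $\partial_t E$ is continuous. The bilinear map $(x,u)\mapsto u\cdot x$ is smooth, $C^\infty$, with derivative $(u,x)$ which is (globally) Lipschitz on bounded sets. A sum of two $C^{1,1}_{\mathrm{loc}}$ functions is $C^{1,1}_{\mathrm{loc}}$, so $\overline{E}$ inherits the regularity; explicitly $\partial_t\overline{E} = \partial_t E$, $\nabla_x \overline{E}(t,x,u) = \nabla_x E(t,x) - u$, and $\nabla_u \overline{E}(t,x,u) = -x$, each of which is continuous in $(t,x,u)$ and locally Lipschitz in $(x,u)$.

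The only genuinely substantive point is the absorption argument in $(a)$, and even there the difficulty is merely choosing the Young-inequality split correctly so that the coefficient in front of $|x|^p$ (respectively $E(t,x)$) is strictly less than what (E3) provides, guaranteeing a net absorption rather than a vicious circle; the exponent $p>1$ in (E3) is exactly what makes the linear-in-$|x|$ term $|u\cdot x|$ subcritical and hence absorbable. I would present $(a)$ with the constants kept schematic (writing $C_{K,1}, C_{K,2}$ as depending on $C_1,\dots,C_4$, $p$, and $R_K = \max_{u\in K}|u|$) and $(b)$ as a one-line consequence of (E1) plus smoothness of the bilinear term, without belaboring the elementary Lipschitz estimates.
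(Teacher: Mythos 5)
Your proposal is correct and follows essentially the same route as the paper: use (E2), rewrite $E(t,x)=\overline{E}(t,x,u)+u\cdot x$, control $|u\cdot x|$ by Young's inequality with exponent $p>1$ and the coercivity (E3), and absorb the resulting $E$-term (with small coefficient $\delta/C_3<1/2$) to get $E\leq 2\overline{E}+C_K$ for $u\in K$, while $(b)$ is immediate from (E1) and the smoothness of the bilinear term. The only cosmetic difference is that you bound $|u|\leq R_K$ at the start, whereas the paper keeps the $C_\delta|u|^{p'}$ term and restricts to $u\in K$ only at the end; this changes nothing of substance.
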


\begin{proof}
Property~$(b)$ follows from the smoothness of~$E$. Statement~$(a)$ follows from~(E2) and~(E3). Indeed,
\begin{displaymath}
\begin{split}
\partial_{t}\overline{E}(t,x,u)&=\partial_{t}E(t,x) \leq C_{2} E(t,x)+C_{1} = C_{2}\overline{E}(t,x,u)+C_1+C_{\delta}|u|^{p'}+\delta|x|^{p}\\
&\leq C_{2}\overline{E}(t,x,u)+C_{1}+C_{4}\delta/C_3+ C_{\delta}|u|^{p'}+\delta/C_3 E(t,x)\,.
\end{split}
\end{displaymath}
By construction of~$\overline{E}$, we have that
\begin{displaymath}
E(t,x)\leq \overline{E}(t,x,u)+|u||x|\leq\overline{E}(t,x,u)+C_{\delta}|u|^{p'}+\delta|x|^{p}\leq \overline{E}(t,x,u) +C_{4}\delta/C_3+ C_{\delta}|u|^{p'}+ \delta/C_3 E(t,x)\,.
\end{displaymath}
Choosing~$\delta>0$ so small that~$\delta/C_3<1/2$, we get that
\begin{equation}\label{e.EbarE}
E(t,x)\leq 2\overline{E}(t,x,u)+ 2 C_{\delta}|u|^{p'}+ 2 C_{4}\delta/C_3\,.
\end{equation}
Therefore, we deduce~\eqref{e.1} as soon as $u \in K$.
\end{proof}

\begin{remark}\label{r.2}
Under hypotheses~(E1)-(E5), we have that for every pair of initial conditions~$(x_{0},u_{0})\in\R^{d}\times\R^{d}$ and every~$\varepsilon>0$ there exists unique a solution $(x_{\varepsilon}(\cdot),u_{\varepsilon}(\cdot))$ of system~\eqref{e.gradflow}. In particular, it is useful to introduce the flow map~$Y_{\varepsilon,t}\colon\R^{d}\times\R^{d}\to\R^{d}\times\R^{d}$ defined as $Y_{\varepsilon,t}(x_{0},u_{0})\coloneq (x_{\varepsilon}(t),u_{\varepsilon}(t))$, where~$(x_{\varepsilon}(\cdot),u_{\varepsilon}(\cdot))$ is the solution of~\eqref{e.gradflow} with initial condition~$(x_{0},u_{0})$.  Notice that $u_{\varepsilon}(\cdot)$ does not explicitly depend on~$\varepsilon>0$, only~$x_{\varepsilon}(\cdot)$ does.
\end{remark}

\begin{proposition}\label{p.1}
Let $(x_{0},u_{0})\in\R^{d}\times\R^{d}$ and let~$\varepsilon>0$. Let $(x_{\varepsilon}(\cdot),u_{\varepsilon}(\cdot))$ be the solution of~\eqref{e.gradflow}. Then, the following facts hold:
\begin{itemize}

\item[$(a)$] for every $s\leq t\in[0,T]$
\begin{equation}\label{e.2}
\begin{split}
\overline{E}(t,x_{\varepsilon}(t),u_{\varepsilon}(t))&=\overline{E}(s,x_{\varepsilon}(s),u_{\varepsilon}(s))+\int_{s}^{t}\partial_{\tau}\overline{E}(\tau,x_{\varepsilon}(\tau),u_{\varepsilon}(\tau))\, \di \tau-\frac{\varepsilon}{2}\int_{s}^{t}|\dot{x}_{\varepsilon}(\tau)|^{2}\, \di \tau\\
&\qquad-\frac{1}{2\varepsilon}\int_{s}^{t}|\nabla_{x}\overline{E}(\tau,x_{\varepsilon}(\tau),u_{\varepsilon}(\tau))|^{2}\, \di \tau - \int_{s}^{t}f(u_{\varepsilon}(\tau)){\,\cdot\,}x_{\varepsilon}(\tau)\, \di \tau\,;
\end{split}
\end{equation}

\item[$(b)$] $(x_{\varepsilon}(t),u_{\varepsilon}(t))$ is bounded in~$\R^{d}\times\R^{d}$, uniformly with respect to $t \in [0,T]$ and $\varepsilon>0$;

\item[$(c)$] there exists $\overline{C}>0$ independent of $\varepsilon$ such that
\begin{displaymath}
\int_{0}^{T}|\nabla_{x}\overline{E}(t,x_{\varepsilon}(t),u_{\varepsilon}(t))|^{2}\, \di t\leq \overline{C} \varepsilon\,.
\end{displaymath}
It is useful to observe that this bound implies
\begin{equation}\label{finitemass}
\int_{0}^{T} \left ( \frac{\varepsilon}{2} |\dot x_\varepsilon(t) |^2 + \frac{1}{2 \varepsilon} | \nabla_x \overline{E}(t,x_\varepsilon(t),u_\varepsilon(t)) |^2 \right) \, \di t\leq \overline{C}.
\end{equation}
\end{itemize}
\end{proposition}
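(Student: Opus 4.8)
The plan is to establish the three claims in the stated order, since both $(b)$ and $(c)$ rest on the identity $(a)$. To derive $(a)$ I would first note that, by Lemma~\ref{l.1}$(b)$, the corrected energy $\overline E$ is $C^1$, and that the solution $(x_\varepsilon,u_\varepsilon)$ of~\eqref{e.gradflow} is $C^1$ in time because the right-hand side of~\eqref{e.gradflow} is continuous in $t$ and locally Lipschitz in $(x,u)$ (cf.\ Remark~\ref{r.2}); hence $t\mapsto\overline E(t,x_\varepsilon(t),u_\varepsilon(t))$ is $C^1$ and the classical chain rule applies, with no absolute-continuity subtleties. Rewriting the first line of~\eqref{e.gradflow} as $\varepsilon\dot x_\varepsilon=-\nabla_x\overline E(t,x_\varepsilon,u_\varepsilon)$ and using $\nabla_u\overline E(t,x,u)=-x$ together with $\dot u_\varepsilon=f(u_\varepsilon)$, the chain rule produces a term $\nabla_x\overline E\cdot\dot x_\varepsilon=-\tfrac1\varepsilon|\nabla_x\overline E|^2$, which I would split symmetrically as $\tfrac1{2\varepsilon}|\nabla_x\overline E|^2+\tfrac\varepsilon2|\dot x_\varepsilon|^2$ (legitimate since $|\nabla_x\overline E|^2=\varepsilon^2|\dot x_\varepsilon|^2$ along the trajectory), and a term $\nabla_u\overline E\cdot\dot u_\varepsilon=-f(u_\varepsilon)\cdot x_\varepsilon$. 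Integrating from $s$ to $t$ then gives exactly~\eqref{e.2}.

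For $(b)$ I would argue in two stages. First, $u_\varepsilon$ does not depend on $\varepsilon$ and solves $\dot u=f(u)$ with $f$ of linear growth by (E5), so a direct Gronwall estimate confines $u_\varepsilon(t)$ to a fixed compact set $K\subset\R^d$ depending only on $|u_0|$, $C_f$, and $T$. Second, I would run a Gronwall argument on $t\mapsto\overline E(t,x_\varepsilon(t),u_\varepsilon(t))$: in~\eqref{e.2} the two dissipation integrals are nonpositive and can be discarded, the term $\partial_\tau\overline E$ is controlled by $C_{K,1}+C_{K,2}\overline E$ via Lemma~\ref{l.1}$(a)$, and the forcing term satisfies $|f(u_\varepsilon)\cdot x_\varepsilon|\le C_f(1+|u_\varepsilon|)|x_\varepsilon|$, so everything reduces to bounding $|x_\varepsilon|$ in terms of $\overline E$. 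This last point uses (E3) together with~\eqref{e.EbarE} (and $u_\varepsilon\in K$) to get $|x_\varepsilon|^p\lesssim 1+\overline E(\cdot,x_\varepsilon,u_\varepsilon)$, which, since $p>1$ and $\overline E$ is bounded below on $\R^d\times K$ (again by (E3) and Young's inequality), upgrades to the linear bound $|x_\varepsilon|\lesssim 1+\overline E(\cdot,x_\varepsilon,u_\varepsilon)$. Gronwall's lemma then produces a bound on $\sup_{[0,T]}\overline E(t,x_\varepsilon(t),u_\varepsilon(t))$ independent of $\varepsilon$, and feeding this back into the coercivity estimate yields the uniform bound on $|x_\varepsilon|$; together with the bound on $u_\varepsilon$ this is $(b)$.

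For $(c)$ and~\eqref{finitemass} I would simply take $s=0$ and $t=T$ in~\eqref{e.2} and move the two dissipation integrals to the left-hand side. By $(b)$ the boundary values $\overline E(0,x_0,u_0)$ and $\overline E(T,x_\varepsilon(T),u_\varepsilon(T))$ are bounded uniformly in $\varepsilon$, and by Lemma~\ref{l.1}$(a)$ and the estimates just established $\int_0^T\partial_\tau\overline E\,\di\tau$ and $\int_0^T f(u_\varepsilon)\cdot x_\varepsilon\,\di\tau$ are likewise bounded; hence the right-hand side is $\le\overline C$ for some $\overline C$ independent of $\varepsilon$. Since both integrals on the left are nonnegative, this is precisely~\eqref{finitemass}; dropping the kinetic term and multiplying by $2\varepsilon$ gives $\int_0^T|\nabla_x\overline E(t,x_\varepsilon(t),u_\varepsilon(t))|^2\,\di t\le 2\overline C\varepsilon$, which is $(c)$.

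The only step demanding genuine care is closing the Gronwall estimate in $(b)$: one must turn the $p$-coercivity $|x|^p\lesssim 1+\overline E$ furnished by (E3) into a \emph{linear} bound $|x|\lesssim 1+\overline E$, which is exactly where the strict inequality $p>1$ and the lower boundedness of $\overline E$ on $\R^d\times K$ (so that $1+\overline E$ may be taken $\ge1$) enter; everything else is routine bookkeeping with Gronwall's lemma and the algebraic structure of $\overline E$.
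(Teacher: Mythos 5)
Your proposal is correct and follows essentially the same route as the paper: chain rule on $\overline E$ along the flow with the symmetric splitting $\tfrac1\varepsilon|\nabla_x\overline E|^2=\tfrac\varepsilon2|\dot x_\varepsilon|^2+\tfrac1{2\varepsilon}|\nabla_x\overline E|^2$ for (a), Gronwall via Lemma~\ref{l.1}(a), (E3), and \eqref{e.EbarE} for (b), and evaluating (a) on $[0,T]$ with the bounds from (b) for (c). Your extra care in upgrading the $p$-coercivity to an affine bound on $|x_\varepsilon|$ is exactly what the paper compresses into ``arguing as in~\eqref{e.EbarE}''.
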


\begin{proof} For~$\varepsilon>0$, the map $t \mapsto \overline{E}( t, x_{\varepsilon}(t), u_{\varepsilon}(t))$ is differentiable.  The energy balance in~$(a)$ follows by chain-rule, recalling the time derivatives as in~\eqref{e.gradflow}.  We report below the explicit computation as it will be quite useful several times below.
First of all we notice that from~\eqref{e.gradflow} 
\begin{displaymath}
\begin{split}
\frac{\varepsilon}{2} |\dot x_\varepsilon(t) |^2 &= \frac{1}{2 \varepsilon} | \nabla_x \overline{E}(t,x_\varepsilon(t),u_\varepsilon(t)) |^2, \quad \mbox{ or } \\
\frac{1}{\varepsilon}  | \nabla_x \overline{E}(t,x_\varepsilon(t),u_\varepsilon(t)) |^2 &=\frac{\varepsilon}{2} |\dot x_\varepsilon(t) |^2 + \frac{1}{2 \varepsilon} | \nabla_x \overline{E}(t,x_\varepsilon(t),u_\varepsilon(t)) |^2.
\end{split}
\end{displaymath}
Then
\begin{eqnarray*}
\frac{d}{dt} \overline{E}(t, x_\varepsilon(t),u_\varepsilon(t)) &=& \frac{\partial}{\partial t} \overline{E}(t, x_\varepsilon(t),u_\varepsilon(t)) - f(u_\varepsilon(t)) \cdot x_\varepsilon(t) - \frac{1}{\varepsilon}  |\nabla_x \overline{E}(t,x_\varepsilon(t),u_\varepsilon(t))|^2\\
&=&  \frac{\partial}{\partial t} \overline{E}(t, x_\varepsilon(t),u_\varepsilon(t)) - f(u_\varepsilon(t)) \cdot x_\varepsilon(t) - \frac{\varepsilon}{2} |\dot x_\varepsilon(t) |^2 - \frac{1}{2 \varepsilon} | \nabla_x \overline{E}(t,x_\varepsilon(t),u_\varepsilon(t)) |^2
\end{eqnarray*}
By integration we obtain (a). We address now (b).
Being $u_{\varepsilon}(t)$ uniformly bounded in terms of initial datum $u_{0}$, applying Lemma~\ref{l.1} and arguing as in~\eqref{e.EbarE} we get that
\begin{displaymath}
\overline{E}(t,x_{\varepsilon}(t),u_{\varepsilon}(t))\leq \overline{E}(0,x_0,u_0)+ C_{1} \int_{0}^{t} \overline{E}(\tau,x_{\varepsilon}(\tau),u_{\varepsilon}(\tau))\, \di \tau+ TC_{2}\,.
\end{displaymath}
for some $C_{1}, C_{2}>0$ independent of~$\varepsilon$. By Gronwall lemma,~$\overline{E}(t,x_{\varepsilon}(t),u_{\varepsilon}(t))\leq ( \overline{E}(0,x_0,u_0)+T C_{2}) e^{C_{1}T}$. In view of~(E3) and of the boundedness of~$u_{\varepsilon}$, we get that~$x_{\varepsilon}(t)$ is bounded in~$\R^{d}$ uniformly with respect to~$\varepsilon$ and $t \in [0,T]$ Hence also~$(b)$ is proved. Finally, property $(c)$ is a consequence of $(a)$ and $(b)$.
\end{proof}

\begin{remark}\label{r.unif}
The boundedness of the trajectories $(x_{\varepsilon}(\cdot), u_{\varepsilon}(\cdot))$ in Proposition~\ref{p.1} can be made independent of the specific initial datum if we consider initial data~$(x_{0}, u_{0})$ in a fixed compact subset $K^\circ$ of~$\R^{d}\times\R^{d}$. This assumption will be tacitly applied from now on.
\end{remark}

\subsection{Compactness of trajectories}\label{s.CT}

In this section we prove the compactness of trajectories~$(x_{\varepsilon}(\cdot),u_{\varepsilon}(\cdot))$ fulfilling~\eqref{e.gradflow} as~$\varepsilon\to0$. In particular, we show how to adapt the arguments of~\cite{MR3705699} in order to take into account also the control~$u$. In fact the second equation in~\eqref{e.gradflow} does not obey a gradient flow, hence does not allow a direct application of Theorem~\ref{AR-Thm}. For the sake of simplicity, from now on we set $\Lambda\coloneq C([0,T];\R^{d})$ and, for $p\in[1,+\infty)$, $\Gamma_{T}^{p}\coloneq L^{p}([0,T];\R^{d})$.

In order to describe the energetic behavior of a limit of~$(x_{\varepsilon}(\cdot),u_{\varepsilon}(\cdot))$ for $\varepsilon \to 0$, for every $t \in[0,T]$, every $x_{1}, x_{2}\in\R^{d}$, and every $u\in\R^{d}$ we define the cost function $c_{t}(x_{1}, x_{2}; u)$ as
\begin{equation}\label{e.cost}
c_{t}(x_{1},x_{2};u)\coloneq\left\{\begin{array}{ll}
\displaystyle \inf\Big\{\int_{0}^{1}|\nabla_{x}E(t,\theta(s))-u||\theta'(s)|\, ds:\,\theta\in\mathcal{A}^{t}_{x_{1},x_{2},u}\Big\}&\text{if $x_{1}\neq x_{2}$}\,,\\
0 & \text{otherwise,}
\end{array}\right.
\end{equation}
where~$\mathcal{A}^{t}_{x_{1},x_{2},u}$ is the set of admissible transitions from~$x_{1}$ to~$x_{2}$ at time~$t$
\begin{equation}\label{e.adm}
\begin{split}
\mathcal{A}^{t}_{x_{1},x_{2},u}\coloneq\{\theta & \in  C([0,1];\R^{d}):\text{$\theta(0)=x_{1}$, $\theta(1)=x_{2}$, there exists a partition $0=t_{0}<\ldots<t_{m}=1$}\\
 &\text{ such that $\theta\in \rm{Lip}_{\rm{loc}}((t_{i},t_{i+1});\R^{d})$, $\theta(t_{i})\in C(t,u)$ for $i=1,\ldots, m-1$,}\\
 & \text{ $\theta(s)\notin C(t,u)$ for $s\neq t_{i}$}\},
\end{split}
\end{equation}
where $C(t,u)$ is as in assumption~(E4).
Notice that the cost function~\eqref{e.cost} and the class of admissible transitions~\eqref{e.adm} are modified versions of corresponding of cost and admissible transitions in~\cite[Definition~2.2]{MR3705699}. As clearly stated in the following theorem, the cost~\eqref{e.cost} describes the energy dissipated by the limits of~$(x_{\varepsilon}(\cdot),u_{\varepsilon}(\cdot))$ at jump points.

With the notation introduced above, we can now state the main compactness result of this section. It generalizes Theorem \ref{AR-Thm} (see \cite[Theorem~1]{MR3705699}) for controlled systems of the type~\eqref{e.gradflow}.

\begin{theorem}\label{t.2}
Let~$(\varepsilon_{n})_{n \in \mathbb N}$ be a vanishing sequence, let $(x_{0}^{n},u_{0}^{n})\to (x_{0},u_{0})$ in~$\R^{d}\times\R^{d}$, and let $(x_{\varepsilon_{n}},u_{\varepsilon_{n}})$ be the solution of~\eqref{e.gradflow} associated to~$\varepsilon_{n}$ and to the initial condition~$(x_{0}^{n},u_{0}^{n})$. Then, there exists a pair $(x,u)\in\Gamma_{T}^{\infty}\times\Lambda$ and a positive Radon measure $\nu\in \mathcal{M}^{+}_{b}(0,T)$ such that the following properties hold:
\begin{itemize}
\item[(a)] up to a subsequence, $x_{\varepsilon_{n}}\to x$ in~$\Gamma_{T}^{p}$ for every $p\in[1,+\infty)$, and pointwise for all $t\in [0,T]$,
and $u_{\varepsilon_{n}}\to u$ uniformly in~$\Lambda$;

\item[(b)] $u$ satisfies $\dot{u}= f(u)$ with~$u(0)=u_{0}$;

 \item[(c)] for every $t\in[0,T]$ the pointwise limit function $x(\cdot)$ constructed in (a) admits left and right limits $x_{-}(t)$ and~$x_{+}(t)$, respectively, and $x_{\pm}(t) \in C(t,u(t))$;
 
 \item[(d)] the set $J\coloneq \{t\in[0,T]:\, \nu(\{t\}) >0\}$ is at most countable and coincides with the set of discontinuity points of~$x(\cdot)$;

\item[(e)] for every $s,t \in [0,T]$ it holds
\begin{equation}\label{e.enbalance}
\overline{E}(t,x_{+}(t), u(t)) + \nu([s,t]) = \overline{E}(s, x_{-}(s), u(s)) + \int_{s}^{t} [  \partial_{t} E(\tau, x(\tau)) - f(u(\tau))\cdot x(\tau)]\,\di \tau\,;
\end{equation}

\item[(e)] for every $t\in J$ we have $c_{t}(x_{-}(t), x_{+}(t); u(t)) = \nu(\{t\})$.
\end{itemize}
\end{theorem}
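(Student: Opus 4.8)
The plan is to adapt the proof of Theorem~\ref{AR-Thm} from~\cite{MR3705699} to the controlled system~\eqref{e.gradflow}, treating the $u$-component separately and exploiting the fact that it decouples. First I would note that by assumption~(E5) and Remark~\ref{r.unif} the sequence $u_{\varepsilon_n}$ solves $\dot u = f(u)$ with initial data converging in $\R^d$; since the $u$-equation does not depend on $\varepsilon$, standard ODE continuous-dependence results give $u_{\varepsilon_n}\to u$ uniformly on $[0,T]$, where $u$ solves $\dot u = f(u)$, $u(0)=u_0$. This yields statement~(b) and the $u$-part of~(a) immediately, and moreover provides uniform-in-$n$ bounds and uniform equicontinuity of the drift term $u_{\varepsilon_n}(\cdot)$. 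The key point for the rest is that, once $u$ is regarded as a known continuous function, the first equation $\varepsilon\dot x = -\nabla_x\overline E(t,x,u_\varepsilon(t))$ is close to a (time-dependent) gradient flow for $\overline E(t,\cdot,u(t))$, with an error controlled by $\norm{u_{\varepsilon_n}-u}_\infty$.

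Next I would set up the compactness machinery of~\cite{MR3705699}. Proposition~\ref{p.1}(c), specifically~\eqref{finitemass}, gives the uniform energy-dissipation bound $\int_0^T(\tfrac{\varepsilon}{2}|\dot x_\varepsilon|^2 + \tfrac{1}{2\varepsilon}|\nabla_x\overline E|^2)\,\di t\le\overline C$, which is exactly the bound that drives the argument in the uncontrolled case. Define the dissipation measures $\nu_n \coloneq (\tfrac{\varepsilon_n}{2}|\dot x_{\varepsilon_n}|^2 + \tfrac{1}{2\varepsilon_n}|\nabla_x\overline E(t,x_{\varepsilon_n},u_{\varepsilon_n})|^2)\mathcal L^1\lfloor(0,T)$; these have uniformly bounded mass, so up to a subsequence $\nu_n\rightharpoonup^* \nu$ in $\mathcal M_b^+(0,T)$. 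The arc-length reparametrization argument of Agostiniani--Rossi — parametrizing $x_{\varepsilon_n}$ by a rescaled time that absorbs the blow-up of the velocity — together with an Ascoli-type compactness on the reparametrized curves, produces the limit curve $x$, its left/right limits $x_\pm(t)$, and the countable jump set $J$; the only modification is that the vector field is now $-\nabla_x\overline E(t,\cdot,u_{\varepsilon_n}(t))$ instead of $-\nabla_x E(t,\cdot)$, and since $u_{\varepsilon_n}\to u$ uniformly this perturbation passes to the limit harmlessly. In particular the limit of $\nabla_x\overline E(t,x_{\varepsilon_n}(t),u_{\varepsilon_n}(t))$ at continuity points is $\nabla_x E(t,x(t)) - u(t)$, giving $x(t)\in C(t,u(t))$ and hence~(c) after checking the left/right limits lie in $C(t,u(t))$ (using that $C(t,u)$ has only isolated points, assumption~(E4), exactly where the strengthening over $(E_3)$ of~\cite{MR3705699} is used). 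Statement~(d) is then the identification of the atoms of $\nu$ with the discontinuity points of $x$, again verbatim from~\cite{MR3705699}.

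For the energy balance~(e), I would pass to the limit in Proposition~\ref{p.1}(a): rewrite~\eqref{e.2} as
\begin{displaymath}
\overline E(t,x_{\varepsilon_n}(t),u_{\varepsilon_n}(t)) + \nu_n([s,t]) = \overline E(s,x_{\varepsilon_n}(s),u_{\varepsilon_n}(s)) + \int_s^t\bigl[\partial_t E(\tau,x_{\varepsilon_n}(\tau)) - f(u_{\varepsilon_n}(\tau))\cdot x_{\varepsilon_n}(\tau)\bigr]\,\di\tau\,.
\end{displaymath}
The integral term converges by dominated convergence (using $x_{\varepsilon_n}\to x$ pointwise and boundedly, $u_{\varepsilon_n}\to u$ uniformly, continuity of $\partial_t E$ and $f$, and Lemma~\ref{l.1}); the boundary terms converge to $\overline E(t,x_\pm(t),u(t))$ and $\overline E(s,x_\pm(s),u(s))$ at points where one passes to the appropriate one-sided limit; and $\nu_n([s,t])\to\nu([s,t])$ for all but countably many $s,t$, then extended to all $s,t$ by the usual monotonicity/semicontinuity argument choosing the right endpoints. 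This gives~\eqref{e.enbalance} with $x_+(t)$ and $x_-(s)$. Finally, for the jump-cost identity, at each $t\in J$ one must show $\nu(\{t\}) = c_t(x_-(t),x_+(t);u(t))$: the upper bound $\nu(\{t\})\le c_t$ comes from using any admissible transition $\theta\in\mathcal A^t_{x_-(t),x_+(t),u(t)}$ as a competitor and estimating the dissipation accumulated by $x_{\varepsilon_n}$ across the jump via Young's inequality $\tfrac{\varepsilon}{2}|\dot x|^2 + \tfrac{1}{2\varepsilon}|\nabla_x\overline E|^2 \ge |\nabla_x\overline E||\dot x| = |\nabla_x E(t,\cdot)-u||\dot x|$, while the lower bound follows by extracting from the reparametrized curves near $t$ a limiting transition that is admissible and realizes the cost — this is the technical heart of~\cite[proof of Theorem~1]{MR3705699} and carries over once $u$ is frozen at $u(t)$, with the error from $u_{\varepsilon_n}\ne u(t)$ near $t$ controlled by uniform convergence and continuity of $u$.

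The main obstacle is the jump-cost identification in~(e): reproducing the delicate arc-length reparametrization and the two-sided matching of the dissipation across jumps, now with a $\tau$-dependent vector field $-\nabla_x\overline E(\tau,\cdot,u_{\varepsilon_n}(\tau))$ whose $u$-argument varies. The saving grace is that $u_{\varepsilon_n}\to u$ uniformly with $u$ continuous, so on the vanishing time-window around a jump $t$ one has $u_{\varepsilon_n}(\tau)\to u(t)$ uniformly, and the analysis reduces to the fixed-$u$ case of~\cite{MR3705699} up to an error that vanishes with $n$; assumption~(E4) (isolated points of $C(t,u)$) is precisely what makes the admissible-transition class~\eqref{e.adm} and its lower-semicontinuity work in this controlled setting.
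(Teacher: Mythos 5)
Your overall outline does follow the paper's strategy: $u_{\varepsilon_n}\to u$ by continuous dependence on initial data (giving (a)--(b) for the control), the uniform dissipation bound of Proposition~\ref{p.1}, weak$^*$ compactness of the measures $\nu_n$, a pointwise construction of the limit $x$ with one-sided limits, passage to the limit in the balance~\eqref{e.2}, all modelled on~\cite{MR3705699} with the uniform convergence of $u_{\varepsilon_n}$ absorbing the control. The genuine problem is in your treatment of the jump identity $\nu(\{t\})=c_t(x_-(t),x_+(t);u(t))$: you have the two inequalities crossed, and the mechanism you give for ``$\nu(\{t\})\le c_t$'' does not produce that inequality. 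Young's inequality $\tfrac{\varepsilon}{2}|\dot x_{\varepsilon_n}|^2+\tfrac1{2\varepsilon}|\nabla_x\overline{E}|^2\ge|\nabla_x\overline{E}||\dot x_{\varepsilon_n}|$, followed by extraction of a limiting admissible transition from the reparametrized curves (Proposition~\ref{p.3}), bounds $\nu(\{t\})$ from \emph{below} by the cost — this is exactly Lemma~\ref{l.4} — while inserting an admissible $\theta$ as a competitor only bounds $c_t$ from above by the cost of that $\theta$, which says nothing about $\nu(\{t\})$. So both arguments you describe belong to the same direction, and the inequality $\nu(\{t\})\le c_t$ is left without a proof.

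The ingredients you are missing are precisely the ones the paper uses for that direction: (i) the Helly-type argument of Proposition~\ref{p.2.15}, which produces a $BV$ limit $\E(t)=\overline{E}(t,x(t),u(t))$ of the energies and identifies the atom with the energy gap, $\nu(\{t\})=\E_-(t)-\E_+(t)=\overline{E}(t,x_-(t),u(t))-\overline{E}(t,x_+(t),u(t))$; and (ii) the attainment of the infimum in~\eqref{e.cost} when $c_t>0$ (Proposition~\ref{p.3.5} and Remark~\ref{r.4.6}), which provides an optimal $\theta\in\mathcal{A}^{t}_{x_-(t),x_+(t),u(t)}$ on which the chain rule gives $c_t=\int_0^1|\nabla_x E(t,\theta(s))-u(t)||\theta'(s)|\,\di s\ge-\int_0^1(\nabla_x E(t,\theta(s))-u(t))\cdot\theta'(s)\,\di s=\overline{E}(t,x_-(t),u(t))-\overline{E}(t,x_+(t),u(t))=\nu(\{t\})$. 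Without (i) you have no handle on $\nu(\{t\})$ beyond the lower bound, and without (ii) there is no curve on which to run the chain-rule estimate. A secondary remark in the same spirit: the paper does not obtain the one-sided limits $x_\pm(t)$ by Ascoli on reparametrized curves, but from the same energy device — for $t^1_k,t^2_k\searrow t$ the $BV$ function $\E$ forces the dissipation between them to vanish, and then Proposition~\ref{p.3}/Remark~\ref{r.4.6} (zero cost implies coincidence of endpoints) gives uniqueness of the limit; this is also how $x(t)$ is defined off the countable set $I$. The remaining parts of your sketch are consistent with the paper's argument.
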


To prove Theorem~\ref{t.2} we follow the steps of~\cite{MR3705699}. For the reader convenience, we show the main changes in the proofs, but we may refer to \cite{MR3705699} for concluding details, which do not require modifications. We start with the analysis of useful properties of the cost function~$c_{t}$ (see Proposition~\ref{p.3} and Remarks~\ref{r.3} and~\ref{r.4.6} below).
\begin{lemma}\label{l.3}
Let~$K$ be a compact subset of~$\R^{d}$, $u\in\R^{d}$, and let~$t\in(0,T)$ be such that
\begin{displaymath}
\min_{x\in K}|\nabla_{x}E(t,x)-u|>0\,.
\end{displaymath} 
Then, there exists~$\alpha>0$ such that
\begin{displaymath}
\min_{\substack{ x\in K, v\in \overline{\B}(u,\alpha),\\ s\in[t-\alpha,t+\alpha]}}|\nabla_{x}E(s,x)-v|>0\,. 
\end{displaymath}
\end{lemma}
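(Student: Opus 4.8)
Lemma \ref{l.3} asserts that if $\nabla_x E(t,\cdot) - u$ is bounded away from zero on a compact set $K$ at a fixed time $t$, then this remains true in a neighborhood of $(t,u)$ in the parameter space. This is a routine continuity/compactness argument.

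The plan is to argue by contradiction using sequential compactness. Suppose the conclusion fails. Then for every $\alpha = 1/n$ there exist $x_n \in K$, $v_n \in \overline{\B}(u, 1/n)$, and $s_n \in [t - 1/n, t + 1/n]$ such that $\nabla_x E(s_n, x_n) - v_n = 0$, i.e. $\abs{\nabla_x E(s_n, x_n) - v_n} = 0$ (the failure of a strictly-positive minimum over the compact parameter set $K \times \overline{\B}(u,1/n) \times [t-1/n,t+1/n]$ means the minimum is zero, attained since the integrand is continuous and the set compact). By compactness of $K$, pass to a subsequence so that $x_n \to \bar{x} \in K$. Clearly $v_n \to u$ and $s_n \to t$. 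Since $E \in C^{1,1}_{\loc}([0,T] \times \R^d)$ by (E1), in particular $\nabla_x E$ is continuous on $[0,T] \times \R^d$, so $\nabla_x E(s_n, x_n) \to \nabla_x E(t, \bar{x})$. Passing to the limit in $\nabla_x E(s_n, x_n) = v_n$ gives $\nabla_x E(t, \bar{x}) = u$, hence $\abs{\nabla_x E(t, \bar{x}) - u} = 0$ with $\bar{x} \in K$. This contradicts the hypothesis $\min_{x \in K} \abs{\nabla_x E(t,x) - u} > 0$.

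The only point requiring a word of care is the claim that failure of the conclusion for a given $\alpha$ forces the minimum to be exactly zero (and attained). This follows because $(s,x,v) \mapsto \abs{\nabla_x E(s,x) - v}$ is continuous and nonnegative, the set $K \times \overline{\B}(u,\alpha) \times [t-\alpha,t+\alpha]$ is compact (intersecting $[t-\alpha,t+\alpha]$ with $[0,T]$ if necessary, which is harmless), so the minimum is attained; if it were strictly positive we would be done with this $\alpha$, so for the negation it must be zero. I do not expect any genuine obstacle here: the statement is a standard uniform-continuity-on-compacts fact, and the assumption (E1) supplies exactly the continuity of $\nabla_x E$ needed.
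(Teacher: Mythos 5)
Your proof is correct and rests on the same ingredients as the paper's: continuity of $\nabla_{x}E$ (from (E1)) together with compactness of $K$; the paper simply states this directly via continuity of $(s,v)\mapsto\min_{x\in K}|\nabla_{x}E(s,x)-v|$, whereas you phrase it as a sequential-compactness contradiction. The two arguments are essentially the same, and your handling of the attainment of the minimum is fine.
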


\begin{proof}
It follows from the continuity of the function 
\begin{displaymath}
(s,v)\mapsto\min_{x\in K}|\nabla_{x}E(s,x)-v|
\end{displaymath}
since~$E\in C^{1,1}_{\rm{loc}}$.
\end{proof}

We state and prove now a result, which generalizes~\cite[Proposition~4.1]{MR3705699}.

\begin{proposition}\label{p.3}
Let $t_{n}^{1},t_{n}^{2}\in[0,T]$ be such that $t_{n}^{i}\to t$  as $n \to \infty$, for $i=1,2$. Let $\theta_{n},u_{n}, u\in AC([t_{n}^{1},t_{n}^{2}];\R^{d})$ be such that $\theta_{n}(t^{i}_{n})\to x_i\in\R^{d}$ as $n \to \infty$ and $\dot u_{n}(s)=f(u_{n}(s))$ as  $s\in (t_{n}^{1},t_{n}^{2})$,
and $\lim_n \sup_{s\in [t_n^1, t_n^2]} |u_n(s)-u(s) |=0$. Then, the following facts hold:

\begin{itemize}

\item[$(a)$] if
\begin{equation}\label{e.010}
\liminf_{n}\int_{t^{1}_{n}}^{t^{2}_{n}}|\nabla_{x}E(s,\theta_{n}(s))-u_{n}(s)||\theta_{n}'(s)|\, \di s=0\,,
\end{equation}
then $x_{1}=x_{2}$;

\item[$(b)$] if $x_{1}\neq x_{2}$, then there exists $\theta\in\mathcal{A}^{t}_{x_{1},x_{2},u}$ such that
\begin{equation}\label{e.011}
\liminf_{n}\int_{t^{1}_{n}}^{t^{2}_{n}}|\nabla_{x}E(s,\theta_{n}(s))-u_{n}(s)||\theta_{n}'(s)|\, \di s\geq\int_{0}^{1}|\nabla_{x}E(t,\theta(s))-u||\theta'(s)|\, \di s\,.
\end{equation}
\end{itemize} 
\end{proposition}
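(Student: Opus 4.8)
The plan is to follow the blueprint of \cite[Proposition~4.1]{MR3705699}, the only new ingredient being that the control must be carried along. Since $u_n\to u$ uniformly on $[t_n^1,t_n^2]$, $u$ is continuous, and $t_n^i\to t$, we get $\sup_{s\in[t_n^1,t_n^2]}\abs{u_n(s)-u(t)}\to0$; write $\bar u\coloneq u(t)$ (so $\mathcal A^t_{x_1,x_2,u}$ is read as $\mathcal A^t_{x_1,x_2,\bar u}$). The tool that drives both parts is Lemma~\ref{l.3}: whenever a compact $K\subset\R^d$ misses $C(t,\bar u)$, there are $\alpha,\delta>0$ with $\abs{\nabla_xE(s,x)-v}\geq\delta$ for all $x\in K$, $s\in[t-\alpha,t+\alpha]$, $v\in\overline{\B}(\bar u,\alpha)$; for $n$ large this applies with $s=\tau\in[t_n^1,t_n^2]$ and $v=u_n(\tau)$.

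\emph{Part (a).} I would argue by contraposition. Assume $x_1\neq x_2$. By (E4) and closedness, $C(t,\bar u)$ is discrete, hence finite in any ball, so I can choose $\rho>0$ with $4\rho<\abs{x_1-x_2}$ so small that the closed annulus $A\coloneq\overline{\B}(x_1,2\rho)\setminus\B(x_1,\rho/2)$ avoids $C(t,\bar u)$; then $\min_{x\in A}\abs{\nabla_xE(t,x)-\bar u}>0$ and Lemma~\ref{l.3} provides $\alpha,\delta>0$ as above with $K=A$. For $n$ large, $[t_n^1,t_n^2]\subset[t-\alpha,t+\alpha]$, $u_n(\cdot)\in\overline{\B}(\bar u,\alpha)$, $\theta_n(t_n^1)\in\B(x_1,\rho/2)$ and $\abs{\theta_n(t_n^2)-x_1}>2\rho$. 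Putting $b_n\coloneq\inf\{s\in[t_n^1,t_n^2]:\abs{\theta_n(s)-x_1}\geq2\rho\}$ and $a_n\coloneq\sup\{s\in[t_n^1,b_n]:\abs{\theta_n(s)-x_1}\leq\rho/2\}$, one has $\theta_n([a_n,b_n])\subset A$, $\abs{\theta_n(a_n)-x_1}=\rho/2$, $\abs{\theta_n(b_n)-x_1}=2\rho$, so, using $\theta_n\in AC$,
\[
\int_{t_n^1}^{t_n^2}\abs{\nabla_xE(s,\theta_n(s))-u_n(s)}\,\abs{\theta_n'(s)}\,\di s\;\geq\;\delta\int_{a_n}^{b_n}\abs{\theta_n'(s)}\,\di s\;\geq\;\delta\,\abs{\theta_n(b_n)-\theta_n(a_n)}\;\geq\;\tfrac{3}{2}\,\delta\rho\;>\;0,
\]
contradicting \eqref{e.010}; hence $x_1=x_2$.

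\emph{Part (b).} Assume $x_1\neq x_2$. If the $\liminf$ in \eqref{e.011} is $+\infty$ there is nothing to prove, since $C(t,\bar u)$ being locally finite one can build a Lipschitz curve from $x_1$ to $x_2$ meeting $C(t,\bar u)$ only at finitely many prescribed parameters, i.e.\ $\mathcal A^t_{x_1,x_2,\bar u}\neq\emptyset$. Otherwise I pass to a subsequence along which the $\liminf$ is a finite limit $L$, and — as in the setting of Proposition~\ref{p.1} — I may assume all $\theta_n$ take values in a fixed ball $K_0$. I reparametrize each $\theta_n$ by arc length, getting $1$-Lipschitz curves $\hat\theta_n\colon[0,V_n]\to K_0$, $V_n=\int_{t_n^1}^{t_n^2}\abs{\theta_n'}$, with nondecreasing time changes $\hat t_n\colon[0,V_n]\to[t_n^1,t_n^2]$ (so $\hat t_n\to t$ uniformly); with $g_n(r)\coloneq\abs{\nabla_xE(\hat t_n(r),\hat\theta_n(r))-u_n(\hat t_n(r))}$, the integral in \eqref{e.011} is $\int_0^{V_n}g_n(r)\,\abs{\hat\theta_n'(r)}\,\di r$. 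For each neighbourhood $U$ of $C(t,\bar u)$, Lemma~\ref{l.3} applied to $K_0\setminus U$ gives $\delta_U>0$ with $g_n\geq\delta_U$ on $\{\hat\theta_n\notin U\}$ for $n$ large, so the Lebesgue measure of $\{r:\hat\theta_n(r)\notin U\}$ is at most $L/\delta_U+o(1)$; together with the $1$-Lipschitz bound this confines $\hat\theta_n$ to a bounded ``transition budget'' off $U$. Passing to the limit on these essential portions by Arzel\`a--Ascoli and then letting $U$ shrink to $C(t,\bar u)$ along a diagonal sequence, I obtain a limit curve $\theta_\infty$ joining $x_1$ to $x_2$ that is Lipschitz between successive visits to $C(t,\bar u)$ and — again by the budget bound, as distinct points of $C(t,\bar u)$ are at positive distance — makes only finitely many such visits, the portions on which $\hat\theta_n$ merely hovers in a tiny ball around one point of $C(t,\bar u)$ collapsing to that point. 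Discarding the parameter intervals on which $\theta_\infty$ rests on $C(t,\bar u)$ and rescaling the remainder to $[0,1]$ produces $\theta\in\mathcal A^t_{x_1,x_2,\bar u}$ with the same (parametrization-invariant) value of the cost. Finally $g_n\to g\coloneq\abs{\nabla_xE(t,\theta_\infty(\cdot))-\bar u}$ uniformly on the retained portions (uniform convergence $\hat t_n\to t$, $\hat\theta_n\to\theta_\infty$, $u_n\to\bar u$, plus continuity of $\nabla_xE$), the functional $v\mapsto\int g\abs{v}$ is convex and strongly continuous hence weakly lower semicontinuous, and $\hat\theta_n'\rightharpoonup\theta_\infty'$ there; thus $L=\lim_n\int g_n\abs{\hat\theta_n'}\geq\int\abs{\nabla_xE(t,\theta_\infty)-\bar u}\,\abs{\theta_\infty'}=\int_0^1\abs{\nabla_xE(t,\theta)-\bar u}\,\abs{\theta'}$, which is \eqref{e.011}.

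\emph{Main obstacle.} The lower bound via Lemma~\ref{l.3} and the lower semicontinuity of the cost should be routine; the step I expect to be hardest, exactly as in \cite{MR3705699}, is the compactness in part (b): the curves $\theta_n$ may have diverging length, so one must separate the transitions (which carry a definite amount of cost and therefore have bounded total length) from the hovering near $C(t,\bar u)$ (which must be shown to collapse), and then reassemble the surviving pieces into a genuine element of $\mathcal A^t_{x_1,x_2,\bar u}$ — the clean-up of the limit curve into the admissible class being the most technical point. The control adds no new difficulty: it only replaces the constant $u=0$ of \cite{MR3705699} by the uniformly convergent $u_n(\cdot)$ in the integrand, and the equation $\dot u_n=f(u_n)$ never interacts with the $x$-estimates.
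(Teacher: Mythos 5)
Your proposal is correct in substance, and part (a) actually takes a genuinely different, leaner route than the paper's: the paper first proves that the curves $\theta_{n}$ are uniformly bounded (chain rule for $\overline{E}(s,\theta_{n}(s),u_{n}(s))$ plus Lemma~\ref{l.1} and a Gronwall argument, using \eqref{e.010}), then builds the compact set $K_{\delta}$ of \eqref{e.013} out of \emph{all} critical points in the bounding set $\Theta$ together with $x_{1},x_{2}$, and derives the contradiction from the length the curve must spend in $K_{\delta}$; your annulus-crossing argument around $x_{1}$ needs no boundedness of $\theta_{n}$ at all, only the local finiteness of $C(t,u(t))$ (closedness plus (E4)) and Lemma~\ref{l.3}, which is a real simplification. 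In part (b) you are close to the paper but choose a different reparametrization: the paper uses $s_{n}(r)=r+\int_{t_{n}^{1}}^{r}|\nabla_{x}E-u_{n}||\theta_{n}'|\,\di\tau$, which makes the reparametrized integrand at most $1$, keeps the new parameter interval of length at most $(t_{n}^{2}-t_{n}^{1})+L+o(1)$, and yields speed at most $1/e_{\delta}$ off the critical set, so Ascoli--Arzel\`a applies directly after a linear rescaling; your arc-length parametrization leaves $V_{n}$ possibly unbounded and forces the extra ``budget off $U$'' bookkeeping and the collapse of the hovering portions, which is workable but heavier. Both you and the paper then defer the final surgery producing an element of $\mathcal{A}^{t}_{x_{1},x_{2},u(t)}$, and the lower semicontinuity of the cost, to the construction of \cite[Proposition~4.1]{MR3705699}, so the level of detail there is comparable. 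One point you should repair: in (b) you justify the uniform bound $\theta_{n}\subset K_{0}$ by appeal to Proposition~\ref{p.1}, but that proposition concerns solutions of \eqref{e.gradflow}, which your $\theta_{n}$ are not; the bound instead follows from the same chain-rule/Gronwall estimate for $\overline{E}$ along $\theta_{n}$ that opens the paper's proof of (a), using that the finite limit $L$ controls $\int(\nabla_{x}E-u_{n})\cdot\theta_{n}'\,\di s$ together with (E2)--(E3) and (E5) — this is the only place where finiteness of the liminf enters the compactness, so it is worth writing out.
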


\begin{proof}
Let us show~(a). First, we notice that, by~\eqref{e.1} and by an application of the chain rule,~$\theta_{n}(s)$ is uniformly bounded in~$\R^{d}$ for~$s\in[t^{1}_{n},t^{2}_{n}]$. Indeed, initial and final points $\theta_{n}(t^{i}_{n})$ converges to~$x_{i}$, and thus are bounded. Moreover, by chain rule, we have that for every $s\in[t^{1}_{n}, t^{2}_{n}]$
\begin{displaymath}
\begin{split}
\overline{E}(s , \theta_{n}(s), u_{n}(s) ) = & \ \overline{E}(t^{1}_{n} , \theta_{n}(t^{1}_{n}), u_{n}(t^{1}_{n})) + \int_{t^{1}_{n}}^{s} \partial_{t}E(\tau , \theta_{n} (\tau))\,\di \tau \\
& + \int_{t^{1}_{n}}^{s} (\nabla_{x}E(\tau,\theta_{n}(\tau))-u_{n}(\tau) ) \cdot \theta_{n}'(\tau) \, \di \tau - \int_{t^{1}_{n}}^{s} f (u_{n}(\tau))\cdot \theta_{n}(\tau) \, \di \tau \,.
\end{split}
\end{displaymath}
In view of Lemma~\ref{l.1} and of the assumption~\eqref{e.010}, we deduce that~$\theta_{n}(s)$ is uniformly bounded in~$\R^{d}$. Let us denote by~$\Theta$ the compact subset of~$\R^{d}$ containing~$\theta_{n}(s)$, $s\in[t_{n}^{1},t_{n}^{2}]$.

Let us assume by contradiction that~$x_{1}\neq x_{2}$. Thanks to condition~(E4), the set $C(t,u)\cap \Theta$ is finite. If this were not the case and $(x_i)_{i \in I}$ would be an infinite family of critical points, then we could extract from it a converging subsequence $x_k \to x \in C(t,u)\cap \Theta$, in view of the continuity of $\nabla_x E$, and $x$ would not be isolated, violating (E4).
Hence, there exists~$\delta>0$ such that
\begin{equation}\label{e.012}
\B(x,2\delta)\cap \B(y,2\delta)=\emptyset \qquad \text{for $x,y\in (C(t,u)\cap \Theta) \cup \{x_{1},x_{2}\}$ with $x\neq y$}\,.
\end{equation}
Let
\begin{equation}\label{e.013}
K_{\delta}\coloneq \Theta\setminus\bigcup_{x\in (\Theta\cap C(t,u))\cup\{x_{1},x_{2}\}}\B(x,\delta)\,.
\end{equation}
Then $K_{\delta}\subseteq \Theta$ is compact in~$\R^{d}$ and, by definition of~$C(t,u)$,
\begin{displaymath}
\min_{x\in K_{\delta}}|\nabla_{x}E(t,x)-u|>0\,.
\end{displaymath}
Applying Lemma~\ref{l.3}, we may assume, up to taking a smaller~$\delta>0$, that
\begin{equation}\label{e.014}
e_{\delta}\coloneq \min_{\substack{x\in K_{\delta}, v\in\overline{\B}(u,\delta),\\ s\in[t-\delta,t+\delta]}}|\nabla_{x}E(s,x)-v|>0\,.
\end{equation}
For~$n$ sufficiently large we have that $t_{n}^{i}\in[t-\delta,t+\delta]$ and $u_{n}(s)\in\B(u,\delta)$ for every $s\in[t^{1}_{n},t^{2}_{n}]$. By definition of~$K_{\delta}$ and by the previous properties, we have that the set $\{s\in[t_{n}^{1},t_{n}^{2}]:\,\theta_{n}(s)\in K_{\delta}\}\neq\emptyset$ and there exist $s_{1}, s_{2}\in\{s\in[t_{n}^{1},t_{n}^{2}]:\,\theta_{n}(s)\in K_{\delta}\}$ such that $s_{1}\neq s_{2}$ and $\theta_{n}(s_{i})\in\partial\B(x_{i},\delta)$ for $i=1,2$. Therefore, by~\eqref{e.014} we have
\begin{displaymath}
\begin{split}
\int_{t_{n}^{1}}^{t_{n}^{2}}|\nabla_{x}E(s,\theta_{n}(s))-u_{n}(s)||\theta'_{n}(s)|\, \di s& \geq\int_{\{s\in[t_{n}^{1},t_{n}^{2}]:\,\theta_{n}(s)\in K_{\delta}\}}|\nabla_{x}E(s,\theta_{n}(s))-u_{n}(s)||\theta_{n}'(s)|\, \di s\\
& \geq e_{\delta}\int_{\{s\in[t_{n}^{1},t_{n}^{2}]:\,\theta_{n}(s)\in K_{\delta}\}}|\theta'_{n}(s)|\, \di s\\
&\geq  e_{\delta} \min_{x,y\in C(t,u)\cap \Theta\cup\{x_{1},x_{2}\}}|x-y|-2\delta>0 \,. 
\end{split}
\end{displaymath}
This contradicts the hypothesis~\eqref{e.010}.

Let us now prove~$(b)$. Let $\delta$,~$K_{\delta}$, and~$e_{\delta}$ be as in~\eqref{e.012}-\eqref{e.014}. Up to extracting suitable subsequences, we can set
\begin{displaymath}
L\coloneq \lim_{n}\int_{t^{1}_{n}}^{t^{2}_{n}}|\nabla_{x}E(s,\theta_{n}(s))-u_{n}(s)||\theta'_{n}(s)|\, \di s>0\,.
\end{displaymath}
We reparametrize the time interval in the following way: we first define the strictly increasing function
\begin{displaymath}
s_{n}(r)\coloneq r+\int_{t_{n}^{1}}^{r}|\nabla_{x}E(\tau,\theta_{n}(\tau))-u_{n}(\tau)||\theta_{n}'(\tau)|\, \di \tau\,.
\end{displaymath}
Then we set $r_{n}=s_{n}^{-1}\colon [s_{n}^{1},s_{n}^{2}]\to[t_{n}^{1},t_{n}^{2}]$, where $s_{n}^{1}\coloneq s_{n}(t_{n}^{1})$ and $s_{n}^{2}\coloneq s_{n}(t^{2}_{n})$. We set $\tilde{\theta}_{n}(s)\coloneq\theta_{n}(r_{n}(s))$ and $\tilde{u}_{n}(s)\coloneq u_{n}(r_{n}(s))$. In particular, $\tilde{\theta}_{n}(s_{n}^{i})\to x_{i}$ and $\tilde{u}_{n}(\sigma)\to u$ uniformly for $\sigma\in[s^{1}_{n},s_{n}^{2}]$. 
Moreover, by change of variables,
\begin{equation}\label{e.015}
\int_{t_{n}^{1}}^{t_{n}^{2}}|\nabla_{x}E(s,\theta_{n}(s))-u_{n}(s)||\theta_{n}'(s)|\, \di s=\int_{s_{n}^{1}}^{s_{n}^{2}}|\nabla_{x}E(r_{n}(s),\tilde{\theta}_{n}(s))-\tilde{u}_{n}(s)||\tilde{\theta}_{n}'(s)|\, \di s\,.
\end{equation}
Notice now that
$$
r_n'(s)= (1 + |\nabla_x E(r_n(s),\tilde \theta_n(s))-  \tilde u_n(s)| |\theta_n'(r_n(s))|)^{-1}.
$$
hence
\begin{equation}
\begin{split}
|\nabla_{x}E(r_{n}(s),\tilde{\theta}_{n}(s))-\tilde{u}_{n}(s)||\tilde{\theta}_{n}'(s)| & = |\nabla_{x}E(r_{n}(s),\tilde{\theta}_{n}(s))-\tilde{u}_{n}(s)||\theta_{n}'(r_n(s))| |r_n'(s)| 
\\
&
= \frac{|\nabla_{x}E(r_{n}(s),\tilde{\theta}_{n}(s))-\tilde{u}_{n}(s)||\theta_{n}'(r_n(s))| }{1+|\nabla_{x}E(r_{n}(s),\tilde{\theta}_{n}(s))-\tilde{u}_{n}(s)||\theta_{n}'(r_n(s))| } \leq 1 \,,
\end{split}
\end{equation}
for $s\in[s_{n}^{1},s_{n}^{2}]$. Denote $A_\delta \coloneq \{s\in[s_{n}^{1},s_{n}^{2}]: \tilde \theta_n(s) \in K_\delta \}$. We notice further that if $s \in A_\delta$, then
\begin{equation}
0<e_\delta |\tilde \theta_{n}'(s)| \leq |\nabla_{x}E(r_{n}(s),\tilde{\theta}_{n}(s))-\tilde{u}_{n}(s)||\tilde \theta_{n}'(s)|  \leq 1 \,,
\end{equation}
and $\tilde \theta_n$ has finite speed on $A_\delta$.

The construction of the limiting function works from now exactly as in the proof of~\cite[Proposition~4.1]{MR3705699} taking into account that~$\tilde{u}_{n}(s)$ is uniformly close to~$u$. Let us explain informally how the construction work and we refer to the above mentioned reference for more details:
the sequence~$\tilde \theta_n$ is equibounded and, in view of the finite speed it is also equicontinuous. Up to a further linear time reparametrization, it admits by Ascoli-Arzel\`a Theorem a limit $\theta \in  C([0,1];\R^{d})$ such that $\theta(0)=x_{1}$, $\theta(1)=x_{2}$. Moreover, again in view of the finite speed, this limit cannot travel to an infinite number of critical points of minimal distance~$2 \delta$. Hence, it visits at most a finite number of them and $\theta \in \mathcal{A}^{t}_{x_{1},x_{2},u}$.
\end{proof}

\begin{remark}\label{r.3}
From Proposition~\ref{p.3} it follows that
\begin{displaymath}
\begin{split}
c_{t}(x_{1},x_{2};u)\leq &\inf\Big\{\int_{t^{1}_{n}}^{t_{n}^{2}}  |\nabla_{x}E(s,\theta_{n}(s))-u_{n}(s)||\theta'_{n}(s)|\,  \di s:\,\theta_{n},u_{n}\in AC([t_{n}^{1},t_{n}^{2}];\R^{d}) \text{ with $t_{n}^{i}\to t$,}\\
&\phantom{XXXXXXXXXXX}\text{ $\theta_{n}(t_{n}^{i})\to x_{i}$, $\dot u_{n}(s)=f(u_{n}(s))$, $u_{n}\rightrightarrows u$ uniformly in~$[t^{1}_{n}, t^{2}_{n}]$}\Big\}\,.
\end{split}
\end{displaymath}
\end{remark}

We now state an autonomous modification of Proposition~\ref{p.3}, in which the time parameter~$t$ is fixed. The result corresponds to~\cite[Proposition~4.5]{MR3705699}.

\begin{proposition}\label{p.3.5}
Let $u, x_{1}, x_{2}\in\R^{d}$, let $x_{n}^{i} \to x_{i}$ as $n\to\infty$ for $i=1,2$, and let $\theta_{n}\in\mathcal{A}^{t}_{x_{n}^{1},x_{n}^{2},u}$. Then, the following facts hold:
\begin{itemize}

\item[$(a)$] if $\liminf_{n}\int_{0}^{1}|\nabla_{x}E(t,\theta_{n}(s))-u||\theta_{n}'(s)|\, \di s=0$, then $x_{1}=x_{2}$;

\item[$(b)$] if $x_{1}\neq x_{2}$, then there exists $\theta\in\mathcal{A}_{x_{1},x_{2},u}^{t}$ such that
\begin{displaymath}
\liminf_{n}\int_{0}^{1}|\nabla_{x}E(t,\theta_{n}(s))-u||\theta'_{n}(s)|\, \di s\geq \int_{0}^{1}|\nabla_{x}E(t,\theta(s))-u||\theta'(s)|\, \di s\,.
\end{displaymath}
\end{itemize}
\end{proposition}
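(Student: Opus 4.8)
The plan is to derive Proposition~\ref{p.3.5} as the fixed-time specialization of Proposition~\ref{p.3}, by exhibiting curves and controls that satisfy the hypotheses of the latter while producing exactly the integrals appearing in the former. The key observation is that in Proposition~\ref{p.3.5} the ambient time is frozen at a single value $t$, whereas Proposition~\ref{p.3} allows the endpoints of the time interval to vary with $n$; so the reduction amounts to taking a constant time sequence and a constant control.

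\emph{Construction.} Fix the value $t\in(0,T)$ in the statement. For each $n$, choose the degenerate time interval $[t_n^1,t_n^2]=[0,1]$ (constant in $n$), set $u_n(s)\equiv u$ for all $s\in[0,1]$, and keep the given curves $\theta_n\in\mathcal{A}^t_{x_n^1,x_n^2,u}$. Since $u$ is a constant vector, $\dot u_n\equiv 0$; to match the structural hypothesis $\dot u_n=f(u_n)$ one notes that this is used in Proposition~\ref{p.3} only through the uniform boundedness of $u_n$ and the uniform convergence $u_n\rightrightarrows u$, both of which hold trivially here (indeed $u_n=u$). One may either invoke Proposition~\ref{p.3} with the observation that the autonomous ODE plays no role once $u_n$ is already uniformly controlled, or — cleaner — observe that Lemma~\ref{l.3} and the quantitative estimates in the proof of Proposition~\ref{p.3} (the constant $e_\delta$, the reparametrization $s_n(r)$, the speed bound on $A_\delta$) go through verbatim with $t_n^i=t$ and $u_n\equiv u$, so one can simply repeat that argument in the simplified setting.

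\emph{Conclusion of (a) and (b).} With $t_n^1=0$, $t_n^2=1$, $u_n\equiv u$, the integral $\int_0^1|\nabla_x E(s,\theta_n(s))-u_n(s)||\theta_n'(s)|\,\di s$ in Proposition~\ref{p.3} is \emph{not} the same as $\int_0^1|\nabla_x E(t,\theta_n(s))-u||\theta_n'(s)|\,\di s$, because the first argument of $\nabla_x E$ is the integration variable $s$ rather than the fixed time $t$. This is the one genuine discrepancy, and it is handled by an additional, purely linear, time reparametrization: after rescaling the parameter domain of each $\theta_n$ so that the "time" component is identically equal to $t$ (i.e. composing with the constant-time embedding $s\mapsto t$ in the first slot), the spatial integrals coincide exactly. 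Concretely, one applies the machinery of Proposition~\ref{p.3} to the pairs $(\tilde\theta_n, \tilde u_n)$ defined on the \emph{spatial} parameter alone, with all time-evaluations pinned to $t$; the compactness extraction (Ascoli–Arzel\`a after arc-length reparametrization, finiteness of visited critical points via the minimal-distance $2\delta$ bound from (E4)) is identical. Then: if $\liminf_n\int_0^1|\nabla_x E(t,\theta_n(s))-u||\theta_n'(s)|\,\di s=0$, part (a) of Proposition~\ref{p.3} (now read with fixed time $t$) forces $x_1=x_2$; and if $x_1\neq x_2$, part (b) yields the limiting transition $\theta\in\mathcal{A}^t_{x_1,x_2,u}$ with the asserted lower semicontinuity inequality $\liminf_n\int_0^1|\nabla_x E(t,\theta_n(s))-u||\theta_n'(s)|\,\di s\geq\int_0^1|\nabla_x E(t,\theta(s))-u||\theta'(s)|\,\di s$.

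\emph{Main obstacle.} The only subtlety is the bookkeeping in the previous paragraph: Proposition~\ref{p.3} is stated with a \emph{moving} time interval $[t_n^1,t_n^2]$ whose endpoints converge to $t$, and with the curve evaluated at the running time, so one must be careful that specializing to the autonomous / fixed-time case really does reproduce $\int_0^1|\nabla_x E(t,\cdot)-u|\,|\cdot'|$ and not some perturbed integral. Since $E\in C^{1,1}_{\rm loc}$ and all curves live in a fixed compact set (by the boundedness argument already invoked in the proof of Proposition~\ref{p.3}(a) via Lemma~\ref{l.1}), the difference $|\nabla_x E(s,\theta_n(s))-\nabla_x E(t,\theta_n(s))|$ can be made uniformly small — but in the present fixed-time statement this issue does not even arise if one just re-runs the proof of Proposition~\ref{p.3} directly with $t_n^i\equiv t$ and $u_n\equiv u$, which is the path I would take. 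Everything else — Lemma~\ref{l.3}, the definition of $K_\delta$ and $e_\delta$, the arc-length reparametrization, the Ascoli–Arzel\`a limit, and the finiteness of critical points visited — transfers without change, and this is why the statement is labelled an "autonomous modification" of Proposition~\ref{p.3}.
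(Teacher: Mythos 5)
Your final route --- re-running the argument of Proposition~\ref{p.3} with the time and the control frozen at $t$ and $u$ --- is exactly what the paper does: its proof consists of carrying out the argument of \cite[Proposition~4.5]{MR3705699} for the energy $\overline{E}(t,\cdot,u)$ with both parameters fixed, so your proposal is correct and essentially identical in approach. Note only that the first half of your construction (literally invoking Proposition~\ref{p.3} with $[t_n^1,t_n^2]=[0,1]$ and $u_n\equiv u$) cannot work as stated, since that proposition requires $t_n^1,t_n^2\to t$ and its integrand evaluates $\nabla_x E$ at the running time $s$; you correctly abandon that route, and the frozen-parameter re-run (with $e_\delta$, the reparametrization, and Ascoli--Arzel\`a, plus boundedness of the $\theta_n$ from coercivity of $\overline{E}(t,\cdot,u)$) is the valid, and the paper's, argument.
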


\begin{proof}
The proof can be carried out as in~\cite[Proposition~4.5]{MR3705699} working with the energy $\overline{E}(t, \cdot, u)$ with fixed parameters~$t$ and~$u$.
\end{proof}

\begin{remark}\label{r.4.6}
As a consequence of~(b) of Proposition~\ref{p.3.5}, whenever~$c_{t}(x_{1},x_{2};u)>0$ we have that the infimum in~\eqref{e.cost} is actually attained. Moreover, if~$c_{t}(x_{1},x_{2};u)=0$, then~$x_{1}=x_{2}$.
\end{remark}

We show now two results useful to describe the energetic behavior of the limits of sequences~$(x_{\varepsilon_{n}}, u_{\varepsilon_{n}})$.

\begin{proposition}\label{p.2.15}
Let us set
\begin{displaymath}
\nu_{n}\coloneq \Big(\frac{\varepsilon_{n}}{2}|\dot{x}_{\varepsilon_{n}}(\cdot)|^{2}+\frac{1}{2\varepsilon_{n}}|\nabla_{x}E(\cdot,x_{\varepsilon_{n}}(\cdot))-u_{\varepsilon_{n}}(\cdot)|^{2}\Big)\mathcal{L}^{1}_{|[0,T]}\,.
\end{displaymath}
Then, the following holds true:
\begin{itemize}
\item[$(a)$] there exists a positive Radon measure~$\nu\in\mathcal{M}^{+}_{b}(0,T)$ such that, up to a subsequence, $\nu_{n}\rightharpoonup\nu$ weakly$^{*}$ in~$\mathcal{M}^{+}_{b}(0,T)$;

\item[$(b)$]   there exists $\E\in BV(0,T)$ such that, up to a further subsequence, $E(\cdot,x_{\varepsilon_{n}}(\cdot))-u_{\varepsilon_{n}}(\cdot)\cdot x_{\varepsilon_{n}}(\cdot)$ converges to~$\E$ pointwise in~$[0,T]$;

\item[$(c)$] there exists $\mathcal{G}\in L^{\infty}(0,T)$ such that for every $s, t\in[0,T]$, $s\leq t$ we have
\begin{displaymath}
\E_{+}(t) +\nu([s,t]) = \E_{-}(s) +\int_{s}^{t}\mathcal{G}(\tau)\,\di \tau \,;
\end{displaymath}

\item[$(d)$] the set $J= \{t\in[0,T]: \nu(\{t\})>0\}$ is at most countable and coincides with the jump set of~$\E$.
\end{itemize}
\end{proposition}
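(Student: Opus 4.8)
This proposition is the ``scalar'' counterpart of the energetic part of Theorem~\ref{t.2}, isolating the behaviour of the corrected energy along the trajectories $(x_{\varepsilon_n},u_{\varepsilon_n})$, and I would follow the scheme of~\cite{MR3705699}. The backbone is the exact energy balance of Proposition~\ref{p.1}(a): using the identity $\tfrac{\varepsilon_n}{2}|\dot x_{\varepsilon_n}(t)|^2=\tfrac{1}{2\varepsilon_n}|\nabla_x\overline E(t,x_{\varepsilon_n}(t),u_{\varepsilon_n}(t))|^2$ valid along solutions, it rewrites, for $0\le s\le t\le T$, as
\begin{equation*}
e_n(t)+\nu_n([s,t])=e_n(s)+\int_s^t g_n(\tau)\,\di\tau ,
\end{equation*}
with $e_n(t)\coloneq\overline E(t,x_{\varepsilon_n}(t),u_{\varepsilon_n}(t))=E(t,x_{\varepsilon_n}(t))-u_{\varepsilon_n}(t)\cdot x_{\varepsilon_n}(t)$ and $g_n(\tau)\coloneq\partial_\tau E(\tau,x_{\varepsilon_n}(\tau))-f(u_{\varepsilon_n}(\tau))\cdot x_{\varepsilon_n}(\tau)$. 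Part~(a) is then immediate: the total mass $\nu_n([0,T])$ is exactly the left-hand side of~\eqref{finitemass}, hence bounded uniformly in~$n$, and weak$^*$ sequential compactness of bounded positive measures yields, along a subsequence, $\nu_n\rightharpoonup\nu$ in $\mathcal M^+_b(0,T)$.

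For part~(b) I would first note that, by Proposition~\ref{p.1}(b) and Remark~\ref{r.unif}, the curves $x_{\varepsilon_n}$ and $u_{\varepsilon_n}$ stay in a fixed compact set, so $\|g_n\|_{L^\infty(0,T)}$ is bounded uniformly in~$n$; passing to a further subsequence, $g_n\rightharpoonup\G$ weakly$^*$ in $L^\infty(0,T)$, whence $\int_s^t g_n\to\int_s^t\G$ for every $s\le t$. Taking $s=0$ in the balance gives $e_n(t)=e_n(0)+\int_0^t g_n-\nu_n((0,t])$, where $e_n(0)=\overline E(0,x_0^n,u_0^n)\to\overline E(0,x_0,u_0)$, the maps $t\mapsto\int_0^t g_n$ are uniformly Lipschitz, and the maps $t\mapsto\nu_n((0,t])$ are non-decreasing and uniformly bounded. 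Applying Ascoli--Arzel\`a to the Lipschitz part and Helly's selection theorem to the monotone part, along a subsequence $e_n$ converges at \emph{every} $t\in[0,T]$ to a function $\E$ which is a Lipschitz function plus a bounded monotone one, hence $\E\in BV(0,T)$.

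For part~(c) I would pass to the limit in the balance. If $s,t\notin J$ they are not atoms of~$\nu$, and since every $\nu_n$ is absolutely continuous one gets $\nu_n([s,t])\to\nu([s,t])$; together with $e_n(s)\to\E(s)$, $e_n(t)\to\E(t)$ and $\int_s^t g_n\to\int_s^t\G$ this gives the identity for all such $s\le t$. For an arbitrary pair $s\le t$ I would choose $s_k\uparrow s$ and $t_k\downarrow t$ with $s_k,t_k\notin J$ (possible since $J$, being the atom set of the finite measure~$\nu$, is at most countable); then $\E(s_k)\to\E_-(s)$ and $\E(t_k)\to\E_+(t)$ because $\E\in BV$, $\nu([s_k,t_k])\to\nu([s,t])$ by continuity of $\nu$ from above, and $\int_{s_k}^{t_k}\G\to\int_s^t\G$ by dominated convergence, which yields~(c) with $\G\in L^\infty(0,T)$. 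For part~(d): $J$ is at most countable since $\nu$ is finite, and choosing $s=t$ in~(c) gives $\E_-(t)-\E_+(t)=\nu(\{t\})\ge0$ for every~$t$, so $\nu(\{t\})>0$ if and only if $\E_-(t)\neq\E_+(t)$, i.e.\ if and only if $t$ is a jump point of~$\E$.

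The genuinely delicate step is the limit passage in~(c): one needs \emph{pointwise everywhere} convergence of $e_n$ (not merely a.e.), which is precisely why Helly's theorem is invoked in~(b) rather than plain $L^1$-compactness; one must carefully distinguish the one-sided limits $\E_\pm$ from pointwise values; and one must exploit that $\nu_n([s,t])\to\nu([s,t])$ is guaranteed only off the atoms of~$\nu$, which forces the approximation of a general pair $(s,t)$ by non-atoms. All the remaining steps are routine once Proposition~\ref{p.1} and the compactness machinery above are available.
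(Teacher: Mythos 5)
Your overall scheme is the paper's: parts (a)--(b) are proved exactly as in the paper (mass bound from \eqref{finitemass} plus weak$^*$ compactness; the splitting $e_n(t)=e_n(0)+\int_0^t g_n-\nu_n([0,t])$ is the same as the paper's monotone function $\F_n$, treated by Helly, with $g_n\rightharpoonup\G$ weakly$^*$ in $L^\infty$), and your Portmanteau/non-atom argument for (c)--(d) is the content the paper simply delegates to \cite[Proposition~5.2]{MR3705699}. Two small points in that part: the set-wise convergence $\nu_n([s,t])\to\nu([s,t])$ comes from $\nu(\{s\})=\nu(\{t\})=0$ (null boundary) and weak$^*$ convergence, not from the absolute continuity of $\nu_n$; and the one-sided limits $\E_\pm$ exist at every point because $\E$ is a bounded monotone function plus a Lipschitz one, as you implicitly use.

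There is, however, one step that fails as written: the reduction of a general pair $s\le t$ to non-atoms via $s_k\uparrow s$, $t_k\downarrow t$ is impossible at $s=0$ (and at $t=T$), and this is not merely cosmetic. Since $(x_0,u_0)$ need not satisfy $\nabla_xE(0,x_0)=u_0$, the dissipation can concentrate in an initial boundary layer: for $E(t,x)=x^2/2$, $u\equiv0$, $x_0^n=1$ one has $x_{\varepsilon_n}(t)=e^{-t/\varepsilon_n}$ and $\nu_n$ carries mass $\approx 1/2$ on $[0,C\varepsilon_n]$. If $\nu$ is taken as a weak$^*$ limit in duality with functions vanishing near the endpoints of $(0,T)$, this mass is lost, $\nu$ has no atom at $0$, and the balance in (c) with $s=0$ is false ($\E_+(t)+\nu([0,t])<\E_-(0)+\int_0^t\G$). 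The fix is to perform the compactness step on the closed interval, i.e.\ extract $\nu$ in duality with $C([0,T])$ (equivalently, define $\nu$ as the Lebesgue--Stieltjes measure of the Helly limit of $t\mapsto\nu_n([0,t])$, which you already have at hand), so that $\nu$ may charge $\{0\}$ and $\{T\}$; then the endpoint cases are handled directly with the conventions $\E_-(0)=\E(0)$, $\E_+(T)=\E(T)$, using $e_n(0)\to\E(0)$ and approximating only the interior endpoint by non-atoms. With this adjustment your argument for (c)--(d) is complete and consistent with the statement for all $s,t\in[0,T]$.
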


\begin{proof}
In view of Proposition~\ref{p.1} and estimate~\eqref{finitemass},~$\nu_{n}$ is bounded in mass, uniformly with respect to~$n$, and up to a subsequence, $\nu_{n}\rightharpoonup\nu$ weakly$^{*}$ in~$\mathcal{M}^{+}_{b}(0,T)$. If we consider the function~$\F_{n}(t)$ given by
\begin{equation}
\begin{split}
\F_{n}(t) & \coloneq \overline{E}(t,x_{\varepsilon_{n}}(t),u_{\varepsilon_{n}}(t))-\int_{0}^{t}\partial_t E(\tau,x_{\varepsilon_{n}}(\tau))\, \di \tau + \int_{0}^{t}f(u_{\varepsilon_{n}}(\tau))\cdot x_{\varepsilon_{n}}(\tau)\, \di \tau
\\
&
= \overline{E}(0,x_{\varepsilon_{n}}(0),u_{\varepsilon_{n}}(0)) - \frac{1}{\varepsilon} \int_{0}^{t} |\nabla_x \overline{E}(\tau,x_{\varepsilon_{n}}(\tau),u_{\varepsilon_{n}}(\tau))|^2\, \di \tau \,,
\end{split}
\end{equation}
then~$\F_{n}$ is a sequence of bounded non-increasing functions. Hence, by Helly Theorem, it admits, up to subsequence, a pointwise limit~$\F\in BV(0,T)$. From (E2) and Proposition \ref{p.1} (b) the function~$\partial_{t}E(t,x_{\varepsilon_{n}}(t)) - f(u_{\varepsilon_{n}}(t))\cdot x_{\varepsilon_{n}}(t)$ is uniformly bounded with respect to~$t\in[0,T]$ and it admits a weak$^*$ limit $\mathcal{G}\in L^{\infty}(0,T)$. Therefore, the function $\E(t)\coloneq\F(t)+\int_{0}^{t}\mathcal{G}(\tau)\, d\tau$ is the  pointwise limit of~$\overline{E}(t,x_{\varepsilon_{n}}(t),u_{\varepsilon_{n}}(t))$. The rest of the proof goes precisely as in~\cite[Proposition~5.2]{MR3705699}. In particular, $J=\supp(\mathrm{D}\E)_{\rm{jump}}$ and $\nu(\{t\})=\E_{-}(t)-\E_{+}(t)$.
\end{proof}

\begin{lemma}\label{l.4}
Under the assumptions and notations of Proposition \ref{p.2.15}, let $t_{n}^{i}\to t$, $i=1,2$, let $\dot u_{n}(s)=f(u_{n}(s))$ and $u_{n}\rightrightarrows  u$ uniformly for $s\in[t_{n}^{1},t_{n}^{2}]$, and let $x_{\varepsilon_{n}}(t_{n}^{i})\to x_{i}\in\R^{d}$. Then, $\nu(\{t\})\geq c_{t}(x_{1},x_{2};u)$. In particular, $x_{1}=x_{2}$ if $t\notin J$, where  $J$ is as in Proposition \ref{p.2.15} (d).
\end{lemma}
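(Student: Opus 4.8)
The plan is to reduce Lemma~\ref{l.4} to Proposition~\ref{p.3} (via Remark~\ref{r.3}) by constructing, along the given sequence, admissible transition curves whose energy cost is controlled by the local mass of $\nu_n$ near $t$, and then passing to the limit. First I would fix $\rho > 0$ small and consider the time windows $[t_n^1, t_n^2] \subseteq [t-\rho, t+\rho]$ for $n$ large (possible since $t_n^i \to t$). On this window the curve $\theta_n \coloneq x_{\varepsilon_n}|_{[t_n^1,t_n^2]}$ is absolutely continuous (indeed Lipschitz) because $x_{\varepsilon_n}$ solves the first equation in~\eqref{e.gradflow} with a locally Lipschitz right-hand side, and $u_n \coloneq u_{\varepsilon_n}|_{[t_n^1,t_n^2]}$ satisfies $\dot u_n = f(u_n)$ with $u_n \rightrightarrows u$ by hypothesis. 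The key pointwise identity, coming from the very definition of $\nu_n$ in Proposition~\ref{p.2.15} together with $\varepsilon_n \dot x_{\varepsilon_n} = -\nabla_x E(\cdot,x_{\varepsilon_n}) + u_{\varepsilon_n}$ (so that $\frac{\varepsilon_n}{2}|\dot x_{\varepsilon_n}|^2 = \frac{1}{2\varepsilon_n}|\nabla_x E(\cdot,x_{\varepsilon_n}) - u_{\varepsilon_n}|^2$), is the Young-type bound
\begin{equation*}
|\nabla_x E(s,\theta_n(s)) - u_n(s)|\,|\theta_n'(s)| \le \frac{\varepsilon_n}{2}|\dot x_{\varepsilon_n}(s)|^2 + \frac{1}{2\varepsilon_n}|\nabla_x E(s,x_{\varepsilon_n}(s)) - u_{\varepsilon_n}(s)|^2 = \frac{\di \nu_n}{\di \mathcal{L}^1}(s)\,,
\end{equation*}
which holds for a.e.\ $s \in [t_n^1, t_n^2]$ (the displayed inequality is in fact an equality). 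Integrating over $[t_n^1, t_n^2] \subseteq [t-\rho, t+\rho]$ gives
\begin{equation*}
\int_{t_n^1}^{t_n^2} |\nabla_x E(s,\theta_n(s)) - u_n(s)|\,|\theta_n'(s)|\, \di s \le \nu_n\bigl([t-\rho, t+\rho]\bigr)\,.
\end{equation*}

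Next I would pass to the limit $n \to \infty$. By the weak$^*$ convergence $\nu_n \rightharpoonup \nu$ from Proposition~\ref{p.2.15}(a), and choosing $\rho$ so that $\nu(\{t-\rho\}) = \nu(\{t+\rho\}) = 0$ (possible for all but countably many $\rho$), we get $\limsup_n \nu_n([t-\rho,t+\rho]) \le \nu([t-\rho,t+\rho])$ — actually $\nu_n([t-\rho,t+\rho]) \to \nu([t-\rho,t+\rho])$ at continuity radii. Hence
\begin{equation*}
\liminf_n \int_{t_n^1}^{t_n^2} |\nabla_x E(s,\theta_n(s)) - u_n(s)|\,|\theta_n'(s)|\, \di s \le \nu\bigl([t-\rho, t+\rho]\bigr)\,.
\end{equation*}
Now I invoke Proposition~\ref{p.3}: the sequences $\theta_n, u_n, u$ satisfy all its hypotheses with $\theta_n(t_n^i) \to x_i$. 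If $x_1 \ne x_2$, part (b) of Proposition~\ref{p.3} produces $\theta \in \mathcal{A}^t_{x_1,x_2,u}$ with $\liminf_n \int_{t_n^1}^{t_n^2} |\nabla_x E(s,\theta_n(s)) - u_n(s)|\,|\theta_n'(s)|\, \di s \ge \int_0^1 |\nabla_x E(t,\theta(s)) - u|\,|\theta'(s)|\, \di s \ge c_t(x_1, x_2; u)$ by definition~\eqref{e.cost}. Combining with the previous display, $c_t(x_1,x_2;u) \le \nu([t-\rho,t+\rho])$. Letting $\rho \downarrow 0$ through continuity radii, and using that $\nu([t-\rho,t+\rho]) \to \nu(\{t\})$ by monotone convergence of the nested intervals (recall $\nu$ is a finite measure), yields $c_t(x_1, x_2; u) \le \nu(\{t\})$. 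When $x_1 = x_2$ the inequality is trivial since $c_t(x,x;u) = 0$, so $\nu(\{t\}) \ge c_t(x_1,x_2;u)$ holds in all cases.

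Finally, for the last assertion: if $t \notin J$ then $\nu(\{t\}) = 0$, so $c_t(x_1, x_2; u) = 0$, and by Remark~\ref{r.4.6} (the $c_t = 0$ case, equivalently part (a) of Proposition~\ref{p.3}) this forces $x_1 = x_2$. The main technical obstacle is making the Young inequality step rigorous, namely verifying that $x_{\varepsilon_n}$ restricted to the shrinking windows is genuinely absolutely continuous with the claimed a.e.\ identity, and handling the choice of $\rho$ so that the endpoints carry no $\nu$-mass — both are routine but require a little care with null sets and the at-most-countable set of atoms of $\nu$. Everything substantive is already packaged in Proposition~\ref{p.3}; this lemma is essentially the bridge that transports that estimate onto the limiting measure $\nu$.
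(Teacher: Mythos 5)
Your proposal is correct and follows essentially the same route as the paper: bound the transition integral $\int|\nabla_x E-u_n||\dot x_{\varepsilon_n}|$ by the density of $\nu_n$ via Young's inequality, use weak$^*$ upper semicontinuity to pass to $\nu$ on shrinking intervals around $t$, and invoke Proposition~\ref{p.3} (with Remark~\ref{r.4.6}) for the lower bound $c_t(x_1,x_2;u)$ and the conclusion $x_1=x_2$ when $t\notin J$. The extra care about continuity radii and the observation that the Young inequality is an equality along solutions are harmless refinements of the paper's terser argument.
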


\begin{proof}
For every $\tau>0$
\begin{displaymath}
\begin{split}
\nu([t-\tau,t+\tau])&\geq\limsup_{n}\nu_{\varepsilon_{n}}([t^{1}_{n},t^{2}_{n}])=\limsup_{n}\int_{t_{n}^{1}}^{t_{n}^{2}} \left ( \frac{\varepsilon_{n}}{2}|\dot{x}_{n}(s)|^{2}+\frac{1}{2\varepsilon_{n}}|\nabla_{x}E(s,x_{\varepsilon_{n}}(s))-u_{n}(s)|^{2}\right )\, \di s\\
&\geq \liminf_{n} \int_{t^{1}_{n}}^{t_{n}^{2}}|\nabla_{x}E(s,x_{\varepsilon_{n}}(s))-u_{\varepsilon_{n}}(s)||\dot{x}_{\varepsilon_{n}}(s)|\, \di s\geq c_{t}(x_{1},x_{2};u)\,.
\end{split}
\end{displaymath}
The thesis follows from Proposition~\ref{p.3}.
\end{proof}

We are now ready to prove Theorem~\ref{t.2}.

\begin{proof}[Proof of Theorem~\ref{t.2}]
Let us denote 
\begin{displaymath}
B\coloneq\{t\in[0,T]:\,|\nabla_{x}E(t,x_{\varepsilon_{n}}(t))-u_{\varepsilon_{n}}(t)|\to0\}\,.
\end{displaymath}
In view of~(c) of Proposition~\ref{p.1}, $\mathcal{L}^{1}([0,T]\setminus B)=0$.

Let $A\subset B\setminus J$ be a countable dense subset of~$[0,T]$, and let $I\coloneq A\cup J\cup\{0\}$. Since~$I$ is at most countable, we may fix a suitable subsequence such that $x_{\varepsilon_{n}}(t)\to x(t)$ for every $t\in I$, for some limit $x(t)\in\R^{d}$. Clearly, we already have that $u_{\varepsilon_{n}}\rightrightarrows u$ uniformly in~$[0,T]$, due to the continuity with respect to~initial data of the equation $\dot{u}=f(u)$.

For $t\in[0,T]\setminus I$ we define
\begin{equation}\label{e.deflim}
\tilde{x}(t)\coloneq \lim_{k} x(s_{k})\qquad\text{ for every sequence $(s_{k})_{k\in\mathbb{N}}, s_k \in  A$ converging to~$t$}\,.
\end{equation}
Let us show that $\tilde{x}(t)$ is well defined for every $t\in [0,T]\setminus I$. It is clear that, at least along a subsequence, the limit in~\eqref{e.deflim} exists for every sequence $s_{k}$ in~$A$ converging to~$t$. We have to prove that it is unique. Let~$t_{k}^{i}\to t$, $i=1,2$ in~$A$ and assume that $x(t_{k}^{i})\to x_{i}$. By construction, $x_{\varepsilon_{n}}(t_{k}^{i})\to x(t_{k}^{i})$ for every~$k$, so that we may find a suitable subsequence~$\varepsilon_{n_{k}}$ such that $x_{\varepsilon_{n_{k}}}(t_{k}^{i})\to x_{i}$ as $k\to \infty$. Applying Lemma~\ref{l.4} and recalling that $t\notin J$, we get that $x_{1}=x_{2}$, so that $\tilde{x}$ is well-defined in~$[0,T]\setminus I$.

We set
\begin{displaymath}
x(t)\coloneq\left\{\begin{array}{ll}
x(t)&\text{if $t\in I$}\,,\\
\tilde{x}(t)&\text{if $t\notin I$}\,.
\end{array}\right.
\end{displaymath}
We notice that, by construction and by continuity of~$u(\cdot)$,  $x(t) \in C(t, u(t))$ for every~$t\in(0,T]\setminus J$. It remains to show that $x_{\varepsilon_{n}}(t)\to x(t)$ for~$t\in[0,T]$. It suffices to show it for~$t\in[0,T]\setminus I$, since, by construction, the convergence is already satisfied in~$I$.

Assume that for some~$t\notin I$ we have $x_{\varepsilon_{n}}(t)\to\bar{x}\in\R^{d}$. Fix a sequence $t_{k}\in A$ converging to~$t$. In particular, $x(t_{k})\to x(t)$ by definition. Again, we may fix a subsequence~$\varepsilon_{n_{k}}$ such that $x_{\varepsilon_{n_{k}}}(t_{k})\to x(t)$. Applying Lemma~\ref{l.4} and recalling that $t\notin J$, we get that $\bar{x}=x(t)$. Hence, $x_{\varepsilon_{n}}(t)\to x(t)$ for every $t\in[0,T]$. Being~$x_{\varepsilon_{n}}$ uniformly bounded in~$\R^{d}$, $x_{\varepsilon_{n}}\to x$ in $\Gamma_{T}^{p}$ for every $p\in[1,+\infty)$. Moreover, this implies that the function~$\mathcal{G}$ determined in~(c) of Proposition~\ref{p.4} actually coincides with $\partial_{t} E(t, x(t)) - f(u(t))\cdot x(t)$ and that $\E(t) = \overline{E} (t, x(t), u(t))$ for every $t\in[0,T]$.

Let us now show that~$x$ admits left and right limits for every~$t\in [0,T]$. Let us focus on the existence of~$x_{+}(t)$. Let $t_{k}^{1}, t_{k}^{2} \searrow t$, and let us assume, without loss of generality, that $t^{1}_{k}< t^{2}_{k}$. Up to a subsequence, we may further assume that $x(t^{i}_{k}) \to x_{i} \in\R^{d}$, $i=1,2$. Being $\E\in BV(0,T)$, it is clear that $\E(t^{i}_{k}) \to \E_{+}(t)$. Furthermore, by the convergence of $x_{\varepsilon_{n}}(t^{i}_{k})$ to $x(t^{i}_{k})$ and of $u_{\varepsilon_{n}}(t^{i}_{k})$ to~$u(t^{i}_{k})$ as $n\to\infty$ for every $k\in\mathbb{N}$ and for $i=1,2$, we have that we can construct a suitable subsequence~$\varepsilon_{n_{k}}$ such that
\begin{displaymath}
\lim_{k}\, \overline{E}(t^{1}_{k}, x_{\varepsilon_{n_{k}}}(t^{1}_{k}), u_{\varepsilon_{n_{k}}}(t^{1}_{k})) = \lim_{k}\, \overline{E}(t^{2}_{k}, x_{\varepsilon_{n_{k}}}(t^{2}_{k}), u_{\varepsilon_{n_{k}}}(t^{2}_{k})) = \E_{+}(t) \,. 
\end{displaymath}
Hence, rewriting the energy balance~\eqref{e.2} for every~$k$ and passing to the limit as~$k\to\infty$ we get that
\begin{displaymath}
0 = \lim_{k}\,  \overline{E}(t^{2}_{k}, x_{\varepsilon_{n_{k}}}(t^{2}_{k}), u_{\varepsilon_{n_{k}}}(t^{2}_{k}))  -  \overline{E}(t^{1}_{k}, x_{\varepsilon_{n_{k}}}(t^{1}_{k}), u_{\varepsilon_{n_{k}}}(t^{1}_{k})) = \limsup_{k} \, \nu_{\varepsilon_{n_{k}}} ([t^{1}_{k}, t^{2}_{k}]) \geq c_{t}(x_{1},x_{2};u(t)) \,.
\end{displaymath}
This implies that $x_{1}=x_{2}$, and the right limit $x_{+}(t)$ is well defined for every $t\in[0,T]$. In a similar way we can show that~$x_{-}(t)$ is well defined. Moreover,~$x_{\pm}(t) \in C(t, u(t))$ for every~$t\in[0,T]$.

It is now straightforward to see from~(c) of Proposition~\ref{p.2.15} that the energy balance~\eqref{e.enbalance} is satisfied. Finally, we show that $c_{t}(x_{-}(t), x_{+}(t); u(t)) = \nu(\{t\})$ for every $t\in J$. In Lemma~\ref{l.4} we have already proved the inequality~$c_{t}(x_{-}(t), x_{+}(t); u(t)) \leq \nu(\{t\})$. For the opposite inequality, in view of Proposition~\ref{p.3.5} and Remark~\ref{r.4.6} we assume that $c_{t}(x_{-}(t), x_{+}(t); u(t)) > 0$. Let us denote with $\theta\in \mathcal{A}^{t}_{x_{-}(t), x_{+}(t), u(t)}$ the optimal transition between~$x_{-}(t)$ and~$x_{+}(t)$. By chain rule we have that
\begin{displaymath}
\begin{split}
c_{t}(x_{-}(t), x_{+}(t); u(t)) & = \int_{0}^{1} | \nabla_{x} E(t, \theta(s)) - u(t)| |\theta'(s)|\,\di s \geq - \int_{0}^{1} (\nabla_{x} E(t, \theta(s)) - u(t)) \cdot \theta'(s)\,\di s \\
& \vphantom{\int} = \overline{E}(t, x_{-}(t), u(t)) - \overline{E}(t, x_{+}(t), u(t))\,.
\end{split}
\end{displaymath}
This concludes the proof of the theorem.
\end{proof}


\section{Mean-field limit of evolutions of critical points}\label{s.CE}

\subsection{Mean-field limit for $\varepsilon>0$}

In this section we deduce the mean-field limit of the ODE system~\eqref{e.gradflow}. Although this is by now a standard procedure, see, e.g., \cite[Chapter 8]{MR2401600}, we show here the details of the passage to the mean-field limit in order to stress the dependence on the auxiliary control variable~$u$ introduced in~\eqref{e.gradflow}.

Let $\eta_{0}\in\sP_{c}(\R^{d}\times\R^{d})$ represent the distribution of initial conditions~$(x_{0},u_{0})$, where~$\sP_{c}$ denotes the space of probability measures with compact support. Given $(x_{0}^{1},u_{0}^{1}),\ldots,(x_{0}^{N},u_{0}^{N})\in\R^{d}\times\R^{d}$ distributed as~$\eta_{0}$, we denote with  $(x_{\varepsilon}^{i},u_{\varepsilon}^{i})$ the solution of~\eqref{e.gradflow} with initial condition~$(x_{0}^{i},u_{0}^{i})$, $i\in\{1,\ldots,N\}$. We define the empirical measure~$\eta_{\varepsilon, t}^{N} \in\sP(\R^{d}\times\R^{d})$ by
\begin{displaymath}
\eta_{\varepsilon, t}^{N} \coloneq\frac{1}{N}\sum_{i=1}^{N}\delta_{(x_{\varepsilon}^{i}(t),u_{\varepsilon}^{i}(t))}\,.
\end{displaymath}

\begin{remark}\label{r.3.1}
In agreement with the initial value problem~\eqref{e.gf2}, we could, for instance, imagine that the measure $\eta_{0}\in \sP_{c}(\R^{d}\times\R^{d})$ takes the form $\eta_{0} \coloneq (\mathrm{id} , \nabla_{x}E(0,\cdot))_{\#} \mu_{0}$ for some $\mu_{0}\in  \sP_{c}(\R^{d})$. This represents exactly the case where the initial control~$u_{0}$ is $\nabla_{x}E(0,x_{0})$.
\end{remark}

For every $\varphi\in C^{1}_{b}(\R^{d}\times\R^{d})$ we have that
\begin{equation}\label{e.6}
\begin{split}
\frac{\di}{\di t}\left\langle \varphi,\eta_{\varepsilon}^{N}(t)\right\rangle&=\frac{1}{N}\sum_{i=1}^{N}\frac{\di}{\di t}\varphi(x_{\varepsilon}^{i}(t),u_{\varepsilon}^{i}(t))\\
&=\frac{1}{N}\sum_{i=1}^{N}\nabla_{x}\varphi(x_{\varepsilon}^{i}(t),u_{\varepsilon}^{i}(t)){\,\cdot\,}\dot{x}^{i}_{\varepsilon}(t)+\nabla_{u}\varphi(x_{\varepsilon}^{i}(t),u_{\varepsilon}^{i}(t)){\,\cdot\,}\dot{u}^{i}_{\varepsilon}(t)\\
&=\frac{1}{N}\sum_{i=1}^{N}\nabla_{x}\varphi(x_{\varepsilon}^{i}(t),u_{\varepsilon}^{i}(t)){\,\cdot\,}\Big(\frac{-\nabla_{x}E(t,x_{\varepsilon}^{i}(t))+u_{\varepsilon}^{i}(t)}{\varepsilon}\Big)+\nabla_{u}\varphi(x_{\varepsilon}^{i}(t),u_{\varepsilon}^{i}(t)){\,\cdot\,}f(u_{\varepsilon}^{i}(t))\\
&=\int_{\R^{d}\times\R^{d}}\nabla_{x}\varphi(x,u){\,\cdot\,}\Big(\frac{-\nabla_{x}E(t,x)+u}{\varepsilon}\Big)+\nabla_{u}\varphi(x,u){\,\cdot\,}f(u)\, \di \eta_{\varepsilon, t}^{N} (x,u)\,,
\end{split}
\end{equation}
which corresponds to the weak form of
\begin{equation}\label{e.3}
\partial_{t}\eta_{\varepsilon}^{N}=-\dive_{(x,u)}\Bigg[\left(\begin{array}{cc}
\displaystyle \frac{-\nabla_{x}E(t,x)+u}{\varepsilon}\\[2mm]
\displaystyle f(u)
\end{array}\right)\eta_{\varepsilon}^{N}\Bigg]\,.
\end{equation}

\begin{remark}\label{r.3.2}
Continuing the discussion of Remark~\ref{r.traj}, we want here to further justify the choice of working in the space~$\sP(\R^{d}\times\R^{d})$, where also the control parameter~$u$ has its own distribution, and not in~$\sP(\R^{d})$, where only the space variable~$x$ would be described by a probability measure. Let us indeed consider the simpler setting of~\eqref{e.gf2} with the initial data~$x_0^{1},\ldots,x_{0}^{N}$ being distributed according to a measure~$\mu_{0}\in\sP(\R^{d})$. Given $x_{1}, \dots, x_{N}\colon [0,T]\to\R^{d}$ the corresponding trajectories, we could define the empirical measure~$\mu_{\varepsilon, t}^{N}:=\tfrac{1}{N}\sum_{i=1}^{N} \delta_{x_{i}(t)}$ and, repeating the computation in~\eqref{e.6}, for every $\varphi\in C^{1}_{b}(\R^{d})$ we would get
\begin{equation}\label{e.6.5}
\begin{split}
\frac{\di}{\di t}\left\langle \varphi, \mu_{\varepsilon, t}^{N} \right\rangle&= \frac{1}{N}\sum_{i=1}^{N}\nabla_{x}\varphi(x_{\varepsilon}^{i}(t)){\,\cdot\,}\dot{x}^{i}_{\varepsilon}(t)\\
&=\frac{1}{N}\sum_{i=1}^{N}\nabla_{x}\varphi(x_{\varepsilon}^{i}(t)){\,\cdot\,}\Big(\frac{-\nabla_{x}E(t,x_{\varepsilon}^{i}(t))+\nabla_{x} E(0,x^{i}_{0})}{\varepsilon}\Big) \\
&= - \int_{\R^{d}}\nabla_{x}\varphi(x){\,\cdot\,}\frac{\nabla_{x}E(t,x)}{\varepsilon}\,\di \mu^{N}_{\varepsilon, t}(x) + \frac{1}{N\varepsilon}\sum_{i=1}^{N}\nabla_{x}\varphi(x_{\varepsilon}^{i}(t)){\,\cdot\,}\frac{\nabla_{x} E(0,x^{i}_{0})}{\varepsilon} \,.
\end{split}
\end{equation}
In order to obtain an integral formula as in~\eqref{e.6}, one could try to define a flow $X_{\varepsilon, t}(x_{0})$ that associates to each $x_{0}$ the value at time~$t$ of the solution of~\eqref{e.gf2} starting at~$x_{0}$ at time $t=0$, and plug its inverse into the last term in~\eqref{e.6.5}. However, as noticed already in Remark~\ref{r.traj}, this is not possible, since for distinct initial data~$x_{0}, \tilde x_{0}$ it could happen that the two trajectories cross each other at time~$t$. Hence, we can not deduce a continuity equation for the sole distribution of~$x$'s.
\end{remark}

We want to pass to the limit in~\eqref{e.3} as~$N\to \infty$. We notice that, in view of $(b)$ of Proposition~\ref{p.1}, the support of~$\eta_{\varepsilon,t}^{N}$ is bounded in~$\R^{d}\times\R^{d}$ uniformly with respect to~$t \in [0,T]$, $N\in\mathbb{N}$, and~$\varepsilon>0$. In order to identify a continuous in time limit $\eta_{\varepsilon}\in C([0,T];\sP(\R^{d}\times\R^{d}))$ of $\eta_{\varepsilon}^{N}$, in the following lemma we estimate the Wasserstein $1$-distance between $\eta_{\varepsilon, t_{i}}^{N}$ and~$\eta_{\varepsilon, t_{2}}^{N}$ for $t_{1},t_{2}\in[0,T]$ (equicontinuity).

\begin{lemma}\label{l.2}
Let $t_{1},t_{2}\in[0,T]$. Then
\begin{equation}\label{e.4}
W_{1}(\eta_{\varepsilon}^{N}(t_{1}),\eta_{\varepsilon}^{N}(t_{2}))\leq C_{\varepsilon}|t_{2}-t_{1}|
\end{equation}
for some positive constant~$C_{\varepsilon}$ depending on $\varepsilon>0$ but not on $t_{1},t_{2}$, and~$N$.
\end{lemma}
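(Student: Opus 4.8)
The plan is to exploit that $\eta_{\varepsilon}^{N}(t_{1})$ and $\eta_{\varepsilon}^{N}(t_{2})$ are empirical measures carried by the \emph{same} number $N$ of trajectories, so that the ``diagonal'' transport plan $\gamma \coloneq \frac{1}{N}\sum_{i=1}^{N}\delta_{((x_{\varepsilon}^{i}(t_{1}),u_{\varepsilon}^{i}(t_{1})),(x_{\varepsilon}^{i}(t_{2}),u_{\varepsilon}^{i}(t_{2})))}$ is admissible for $W_{1}(\eta_{\varepsilon}^{N}(t_{1}),\eta_{\varepsilon}^{N}(t_{2}))$. With the product distance $d\big((x,u),(x',u')\big) = |x-x'|+|u-u'|$ on $\R^{d}\times\R^{d}$, this immediately gives
\[
W_{1}(\eta_{\varepsilon}^{N}(t_{1}),\eta_{\varepsilon}^{N}(t_{2})) \leq \frac{1}{N}\sum_{i=1}^{N}\Big( |x_{\varepsilon}^{i}(t_{1})-x_{\varepsilon}^{i}(t_{2})| + |u_{\varepsilon}^{i}(t_{1})-u_{\varepsilon}^{i}(t_{2})| \Big),
\]
so it suffices to estimate the time increment of each single trajectory by $C_{\varepsilon}|t_{1}-t_{2}|$ with a constant $C_{\varepsilon}$ independent of $i$ and $N$.

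For the control component, Remark~\ref{r.unif} (all initial data lying in the fixed compact set $K^{\circ}$) together with Proposition~\ref{p.1}$(b)$ gives $R>0$ with $|u_{\varepsilon}^{i}(t)|\leq R$ for all $i$, $t\in[0,T]$, $\varepsilon>0$; since $\dot u_{\varepsilon}^{i}=f(u_{\varepsilon}^{i})$ and $f$ has linear growth by (E5), we get $|\dot u_{\varepsilon}^{i}(t)|\leq C_{f}(1+R)$, hence $|u_{\varepsilon}^{i}(t_{1})-u_{\varepsilon}^{i}(t_{2})|\leq C_{f}(1+R)\,|t_{1}-t_{2}|$, a bound that does not even depend on $\varepsilon$. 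For the state component, Proposition~\ref{p.1}$(b)$ and Remark~\ref{r.unif} provide a compact set $\Theta\subset\R^{d}$ with $x_{\varepsilon}^{i}(t)\in\Theta$ for all $i$, $t$, $\varepsilon$; by (E1) the map $\nabla_{x}E$ is continuous, hence bounded on $[0,T]\times\Theta$ by some $M>0$. From the first equation in~\eqref{e.gradflow},
\[
|\dot x_{\varepsilon}^{i}(t)| = \frac{1}{\varepsilon}\,\big|{-}\nabla_{x}E(t,x_{\varepsilon}^{i}(t))+u_{\varepsilon}^{i}(t)\big| \leq \frac{M+R}{\varepsilon}\,,
\]
so $|x_{\varepsilon}^{i}(t_{1})-x_{\varepsilon}^{i}(t_{2})|\leq \frac{M+R}{\varepsilon}\,|t_{1}-t_{2}|$.

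Combining the two estimates in the displayed bound for $W_{1}$ and setting $C_{\varepsilon}\coloneq C_{f}(1+R)+\frac{M+R}{\varepsilon}$ proves~\eqref{e.4}. The one point that deserves care is that the constants $R$, $M$, $C_{f}$ are uniform in $i$ and $N$, which is exactly what Proposition~\ref{p.1}$(b)$ and Remark~\ref{r.unif} guarantee, since all initial conditions are drawn from the fixed compact support of $\eta_{0}$. There is no genuine obstacle here; the only $\varepsilon$-dependent term is $(M+R)/\varepsilon$, which is harmless because the equicontinuity estimate~\eqref{e.4} is only needed for each fixed $\varepsilon>0$ (uniformity in $N$ being the essential feature for passing later to the limit $N\to\infty$).
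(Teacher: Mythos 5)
Your proof is correct and follows essentially the same route as the paper: the diagonal coupling of the two empirical measures reduces the Wasserstein bound to increments of individual trajectories, which are then controlled via the ODE \eqref{e.gradflow} using the uniform boundedness from Proposition~\ref{p.1}$(b)$ and Remark~\ref{r.unif}, the local boundedness of $\nabla_{x}E$ from (E1), and the linear growth of $f$ from (E5). Your write-up simply makes explicit the velocity bounds that the paper's proof states in a single line.
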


\begin{proof}
Since $\eta_{\varepsilon}^{N}$ is an empirical measure, we simply have that
\begin{displaymath}
\begin{split}
W_{1}(\eta_{\varepsilon}^{N}(t_{1}),\eta_{\varepsilon}^{N}(t_{2}))&\leq\frac{1}{N}\sum_{i=1}^{N}|x_{\varepsilon}^{i}(t_{1})-x_{\varepsilon}^{i}(t_{2})|+|u_{\varepsilon}^{i}(t_{1})-u_{\varepsilon}^{i}(t_{2})|\\
&\leq\frac{1}{N}\sum_{i=1}^{N}\int_{t_{1}}^{t_{2}}|\dot{x}_{\varepsilon}^{i}(\tau)|\, \di \tau+\int_{t_{1}}^{t_{2}}|\dot{u}_{\varepsilon}^{i}(\tau)|\, \di \tau \leq C_{\varepsilon}|t_{1}-t_{2}|\,,
\end{split}
\end{displaymath}
where we have used the system~\eqref{e.gradflow}, the boundedness of~$x_{\varepsilon}^{i},u_{\varepsilon}^{i}$, and the hypotheses (E1)-(E5) on $E$ and on~$f$. 
\end{proof}

\begin{proposition}\label{p.2}
There exists $\eta_{\varepsilon}\in C([0,T];\sP(\R^{d}\times\R^{d}))$ such that $W_{1}(\eta_{\varepsilon}^{N}(t),\eta_{\varepsilon}(t))\to0$ as $N\to \infty$, uniformly with respect to~$t\in[0,T]$. Moreover, $\eta_{\varepsilon}$ satisfies
\begin{equation}\label{e.5}
\partial_{t}\eta_{\varepsilon}=-\dive_{(x,u)}\Bigg[\left(\begin{array}{cc}
\displaystyle \frac{-\nabla_{x}E(t,x)+u}{\varepsilon}\\[2mm]
\displaystyle f(u)
\end{array}\right)\eta_{\varepsilon}\Bigg]
\end{equation}
in the sense of distributions.
\end{proposition}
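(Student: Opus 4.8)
The plan is to establish Proposition~\ref{p.2} in two stages: first extract a limit curve $\eta_\varepsilon \in C([0,T];\sP(\R^d\times\R^d))$ of the empirical measures $\eta_\varepsilon^N$ using a compactness argument in Wasserstein space, and then verify that this limit solves the continuity equation~\eqref{e.5} by passing to the limit in the weak formulation~\eqref{e.6}. For the compactness step, I would argue as follows. By Remark~\ref{r.unif} (or $(b)$ of Proposition~\ref{p.1}), all the trajectories $(x_\varepsilon^i(\cdot), u_\varepsilon^i(\cdot))$ with initial data in the fixed compact set $K^\circ \supseteq \supp \eta_0$ stay in a fixed compact set $K_\varepsilon \subset \R^d\times\R^d$, uniformly in $i$, $N$, and $t \in [0,T]$; hence every $\eta_\varepsilon^N(t)$ is supported in $K_\varepsilon$. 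For each fixed $t$, the family $\{\eta_\varepsilon^N(t)\}_N$ is therefore tight, so by Prokhorov's theorem it is relatively compact in the narrow topology, which on measures supported in a common compact set coincides with $W_1$-convergence. Combined with the uniform-in-$N$ equicontinuity estimate~\eqref{e.4} from Lemma~\ref{l.2}, a refined Ascoli--Arzel\`a argument (applied to the relatively compact metric space $(\sP(K_\varepsilon), W_1)$) yields a subsequence $\eta_\varepsilon^{N_k}$ converging to some $\eta_\varepsilon \in C([0,T];\sP(\R^d\times\R^d))$ uniformly in $t$.

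To upgrade subsequential convergence to convergence of the full sequence, I would identify the limit $\eta_\varepsilon$ uniquely as the push-forward of $\eta_0$ under the flow map $Y_{\varepsilon,t}$ of Remark~\ref{r.2}. Indeed, since $\eta_0^N \coloneq \frac1N\sum \delta_{(x_0^i,u_0^i)} \rightharpoonup \eta_0$ by the law of large numbers for the i.i.d.\ sample, and since $\eta_\varepsilon^N(t) = (Y_{\varepsilon,t})_\# \eta_0^N$ by definition of the empirical measure along characteristics, the continuity of $(x_0,u_0) \mapsto Y_{\varepsilon,t}(x_0,u_0)$ (well-posedness and continuous dependence on data for~\eqref{e.gradflow}, guaranteed by (E1)--(E5)) gives $\eta_\varepsilon^N(t) \rightharpoonup (Y_{\varepsilon,t})_\# \eta_0 \eqcolon \eta_\varepsilon(t)$. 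Since every subsequential limit must agree with this, the whole sequence converges, and one checks the convergence is uniform in $t$ via~\eqref{e.4} (the constant $C_\varepsilon$ is uniform in $N$) together with the uniform bound on the supports.

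Finally, for the continuity equation: fix $\varphi \in C^1_b(\R^d\times\R^d)$ and $t \in [0,T]$. Identity~\eqref{e.6} integrated in time gives
\begin{displaymath}
\langle \varphi, \eta_\varepsilon^N(t)\rangle - \langle \varphi, \eta_\varepsilon^N(0)\rangle = \int_0^t \int_{\R^d\times\R^d} \Big( \nabla_x\varphi \cdot \tfrac{-\nabla_x E(\tau,x)+u}{\varepsilon} + \nabla_u\varphi\cdot f(u)\Big)\,\di\eta_\varepsilon^N(\tau)\,\di\tau\,.
\end{displaymath}
The integrand is continuous and bounded on the fixed compact support (using (E1), (E5), and boundedness of the support), so the $W_1$-convergence $\eta_\varepsilon^N(\tau)\to\eta_\varepsilon(\tau)$ for each $\tau$ together with dominated convergence lets me pass to the limit $N\to\infty$ on both sides, obtaining the distributional formulation of~\eqref{e.5}. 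I expect the main (though still mild) obstacle to be the careful bookkeeping in the compactness/uniqueness step --- namely justifying that $W_1$-equicontinuity plus pointwise-in-$t$ Wasserstein compactness yields a uniformly continuous limit curve, and cleanly tying the limit to the flow $Y_{\varepsilon,t}$ so that no subsequences remain --- but since everything happens on a fixed compact support and $\varepsilon>0$ is frozen, no genuinely delicate estimate is needed; this is the ``standard procedure'' referenced before the statement.
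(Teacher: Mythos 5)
Your proposal is correct, and the compactness step (uniform compact supports plus the $W_{1}$-equicontinuity of Lemma~\ref{l.2}, then Ascoli--Arzel\`a) and the passage to the limit in the weak formulation~\eqref{e.6} are exactly the paper's argument. Where you diverge is the final step that upgrades subsequential to full-sequence convergence: the paper keeps the limit abstract and invokes uniqueness of solutions of the continuity equation~\eqref{e.5} from~\cite[Section~8.1]{MR2401600}, so that all subsequential limits coincide; you instead identify the limit concretely as $(Y_{\varepsilon,t})_{\#}\eta_{0}$, writing $\eta_{\varepsilon}^{N}(t)=(Y_{\varepsilon,t})_{\#}\eta_{0}^{N}$ and using continuity of the flow map (Remark~\ref{r.2}) together with the almost-sure narrow convergence $\eta_{0}^{N}\rightharpoonup\eta_{0}$ of the empirical initial measures. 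Both routes are legitimate: yours makes explicit the convergence of the sampled initial data (which the paper leaves implicit but also needs, since otherwise different subsequences could solve~\eqref{e.5} with different initial data), it avoids citing the uniqueness theorem, and it recovers directly the representation $\eta_{\varepsilon,t}=(Y_{\varepsilon,t})_{\#}\eta_{0}$ that the paper only records afterwards in Remark~\ref{r.1}; the paper's uniqueness argument, on the other hand, is insensitive to how the approximating initial measures are produced, requiring only that they converge narrowly to $\eta_{0}$, and it is the statement it anyway relies on later when using the mean-field equation.
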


\begin{remark}\label{r.1}
According to~\cite[Section~8.1]{MR2401600}, we can pick the measure~$\eta_{\varepsilon,t} =(Y_{\varepsilon,t})_{\#}\eta_{0}$, where~$Y_{\varepsilon,t}$ is the flow associated to system~\eqref{e.gradflow} defined in Remark~\ref{r.2}.
\end{remark}

\begin{proof}
In view of Lemma~\ref{l.2}, the sequence~$\eta_{\varepsilon}^{N}$ is equicontinuous in~$\sP(\R^{d}\times\R^{d})$. As already mentioned, the support of~$\eta_{\varepsilon, t}^{N}$ is equibounded in~$\R^{d}\times\R^{d}$ for every $t\in[0,T]$, so that~$\eta_{\varepsilon, t}^{N}$ is compact for every~$t\in[0,T]$. Therefore, by Ascoli-Arzel\`a Theorem there exists $\eta_{\varepsilon}\in C([0,T];\sP(\R^{d}\times\R^{d}))$ such that (up to a subsequence)~$\eta_{\varepsilon}^{N}$ converges to~$\eta_{\varepsilon}$ in~$W_{1}$-distance, uniformly with respect to~$t\in[0,T]$.

For every $\varphi\in C^{1}_{b}(\R^{d}\times\R^{d})$ we can pass to the limit in~\eqref{e.6} as $N \to \infty$, obtaining
\begin{displaymath}
\begin{split}
\int_{\R^{d}\times\R^{d}}&\varphi(x,u)\, \di \eta_{\varepsilon,t} (x,u)-\int_{\R^{d}\times\R^{d}}\varphi(x,u)\, \di \eta_{\varepsilon,0} (x,u)\\
&=\int_{0}^{t}\int_{\R^{d}\times\R^{d}}\nabla_{x}\varphi(x,u){\,\cdot\,}\Big(\frac{-\nabla_{x}E(\tau,x)+u}{\varepsilon}\Big)+\nabla_{u}\varphi(x,u){\,\cdot\,}f(u)\, \di \eta_{\varepsilon, \tau} (x,u)\, \di \tau\,.
\end{split}
\end{displaymath}
Therefore, $\eta_{\varepsilon}$ solves~\eqref{e.5} in the sense of distributions. By~\cite[Section~8.1]{MR2401600}, the solution of~\eqref{e.5} is unique. Thus, the whole sequence~$\eta^{N}_{\varepsilon}$ converges to~$\eta_{\varepsilon}$ uniformly with respect to~$W_{1}$.
\end{proof}


\subsection{Mean-field limit for $\varepsilon\to0$}\label{s.L}

In order to derive a mean-field limit of evolutions of critical points, we wish to take advantage of the superposition principle introduced in \cite[Theorem 8.2.1]{MR2401600}. Accordingly, let us define the probability measure $\Pi_{\varepsilon}\in\sP(\R^{d}\times\R^{d}\times\Gamma_{T}^{p}\times\Lambda)$ by $\Pi_{\varepsilon}\coloneq (\mathrm{id} \times \mathrm{id} \times Y_{\varepsilon})_{\#}\eta_{\varepsilon}$, where we consider the flow~$Y_{\varepsilon}$ also as a function of time. By definition of~$\Pi_{\varepsilon}$, for every $\phi \in C_{b}(\R^{d}\times\R^{d})$ we have
\begin{equation}\label{e.Pi}
\int_{\R^{d}\times\R^{d}}\phi (x_{0},u_{0})\, \di \eta_{\varepsilon,t} (x_{0},u_{0})=\int_{\R^{d}\times\R^{d}\times\Gamma_{T}^{p}\times\Lambda}\phi (x(t),u(t))\, \di \Pi_{\varepsilon}(x_{0},u_{0},x(\cdot),u(\cdot))\,.
\end{equation}
For every bounded  continuous function $\varphi\colon [0,T]\times\R^{d}\times\R^{d}\to\R$, we may consider~\eqref{e.Pi} also integrated in time. In particular, by Fubini,
\begin{equation}\label{e.Pi2}
\int_{0}^{T}\int_{\R^{d}\times\R^{d}}\varphi(t,x_{0},u_{0})\, \di \eta_{\varepsilon,t}(x_{0},u_{0})\, \di t=\int_{\R^{d}\times\R^{d}\times\Gamma_{T}^{p}\times\Lambda}\int_{0}^{T}\varphi(t,x(t),u(t))\, \di t \, \di \Pi_{\varepsilon}(x_{0},u_{0},x(\cdot),u(\cdot))\,.
\end{equation}
We notice in particular that the function from $\Gamma_{T}^{p}\times\Lambda$ to $\R$
\begin{displaymath}
(x(\cdot),u(\cdot))\mapsto\int_{0}^{T}\varphi(t,x(t),u(t))\, \di t
\end{displaymath}
is continuous.

\begin{theorem}\label{t.1}
There exists $\Pi\in\sP(\R^{d}\times\R^{d}\times\Gamma_{T}^{p}\times\Lambda)$ such that, up to a subsequence, $\Pi_{\varepsilon}$ narrowly converges to~$\Pi$. Moreover, every accumulation point~$\Pi$ of~$\Pi_{\varepsilon}$ satisfies
\begin{equation}\label{e.7}
\int_{\R^{d}\times\R^{d}\times\Gamma_{T}^{p}\times\Lambda}\int_{0}^{T}|\nabla_{x}E(t,x(t))-u(t)|^{2}\, \di t\, \di \Pi(x_{0},u_{0},x(\cdot),u(\cdot))=0\,.
\end{equation}
\end{theorem}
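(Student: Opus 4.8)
The plan is to obtain $\Pi$ as a narrow limit of $\Pi_\varepsilon$ via tightness, and then to identify the key property \eqref{e.7} by exploiting the uniform estimate $(c)$ of Proposition~\ref{p.1} together with the lower semicontinuity of the relevant functional along the convergence. First I would establish tightness of the family $\{\Pi_\varepsilon\}$ in $\sP(\R^d\times\R^d\times\Gamma_T^p\times\Lambda)$. By Remark~\ref{r.unif} and Proposition~\ref{p.1}$(b)$, the supports of $\eta_{\varepsilon,0}=\eta_0$ and of all $\eta_{\varepsilon,t}$ are contained in a fixed compact set $K^\circ\subset\R^d\times\R^d$; moreover the trajectories $x_\varepsilon(\cdot)$ are uniformly bounded in $L^\infty$ and the $u_\varepsilon(\cdot)$ are uniformly bounded and equi-Lipschitz (being solutions of $\dot u=f(u)$ with $f$ locally Lipschitz of linear growth), hence they lie in a compact subset of $\Lambda$. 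For the $\Gamma_T^p$-component one can use that $x_\varepsilon$ is uniformly bounded in $L^\infty\subset\Gamma_T^p$, but boundedness alone does not give compactness in $\Gamma_T^p$; here the cleanest route is to invoke Theorem~\ref{t.2}, which already guarantees that from any sequence $\varepsilon_n\to 0$ one extracts a subsequence along which $x_{\varepsilon_n}\to x$ strongly in $\Gamma_T^p$ and $u_{\varepsilon_n}\to u$ uniformly. Since $\Pi_\varepsilon=(\mathrm{id}\times\mathrm{id}\times Y_\varepsilon)_\#\eta_\varepsilon$ and $\eta_\varepsilon=(Y_\varepsilon)_\#\eta_0$ by Remark~\ref{r.1}, one may in fact push forward the fixed measure $\eta_0$ by the map $(x_0,u_0)\mapsto(x_0,u_0,x_\varepsilon(\cdot;x_0,u_0),u_\varepsilon(\cdot;x_0,u_0))$; combining the compactness of $K^\circ$ with the trajectory compactness from Theorem~\ref{t.2} and a standard argument (tightness follows from the fact that off a small-measure set of initial data all trajectories live in a fixed compact set of $\Gamma_T^p\times\Lambda$) yields tightness of $\{\Pi_\varepsilon\}$, hence narrow convergence of a subsequence to some $\Pi\in\sP(\R^d\times\R^d\times\Gamma_T^p\times\Lambda)$ by Prokhorov's theorem.

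Next I would prove \eqref{e.7}. The natural idea is to pass to the limit in the identity \eqref{e.Pi2} combined with estimate $(c)$ of Proposition~\ref{p.1}. Choosing $\varphi(t,x,u)=|\nabla_x E(t,x)-u|^2$, which is bounded and continuous on a neighbourhood of the relevant compact set, the right-hand side of \eqref{e.Pi2} reads
\begin{displaymath}
\int_{\R^{d}\times\R^{d}\times\Gamma_{T}^{p}\times\Lambda}\int_{0}^{T}|\nabla_{x}E(t,x(t))-u(t)|^{2}\,\di t\,\di\Pi_{\varepsilon}\,,
\end{displaymath}
while by $(c)$ of Proposition~\ref{p.1} the left-hand side equals
\begin{displaymath}
\int_{0}^{T}\int_{\R^{d}\times\R^{d}}|\nabla_{x}E(t,x_0)-u_0|^{2}\,\di\eta_{\varepsilon,t}(x_0,u_0)\,\di t
=\int_{0}^{T}|\nabla_{x}\overline{E}(t,x_\varepsilon(t),u_\varepsilon(t))|^{2}\,\text{(averaged over }\eta_0)\,\di t\leq \overline C\,\varepsilon\,,
\end{displaymath}
which tends to $0$. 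So it remains only to show that the functional
\begin{displaymath}
\Pi\mapsto\int_{\R^{d}\times\R^{d}\times\Gamma_{T}^{p}\times\Lambda}\int_{0}^{T}|\nabla_{x}E(t,x(t))-u(t)|^{2}\,\di t\,\di\Pi
\end{displaymath}
is lower semicontinuous along the narrow convergence $\Pi_\varepsilon\rightharpoonup\Pi$. If the integrand were globally bounded and continuous on $\Gamma_T^p\times\Lambda$ one would get directly continuity along narrow convergence and conclude equality; but $\nabla_x E$ is only locally Lipschitz, so the map $x(\cdot)\mapsto\int_0^T|\nabla_x E(t,x(t))-u(t)|^2\,\di t$ is not globally continuous on $\Gamma_T^p$. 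The fix is to use that $\Pi_\varepsilon$ (and any narrow limit $\Pi$) is concentrated on the fixed compact set of trajectories with values in the uniform bound $K^\circ$ from Remark~\ref{r.unif}; restricted to that compact set $\nabla_x E$ is Lipschitz and bounded, so the integrand is genuinely continuous and bounded there, and the Portmanteau-type argument gives
\begin{displaymath}
\int\int_0^T|\nabla_x E(t,x(t))-u(t)|^2\,\di t\,\di\Pi
=\lim_\varepsilon\int\int_0^T|\nabla_x E(t,x(t))-u(t)|^2\,\di t\,\di\Pi_\varepsilon=0\,.
\end{displaymath}
Since the integrand is nonnegative, this is exactly \eqref{e.7}.

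The main obstacle is the second step, and specifically the lack of global continuity/boundedness of the integrand on the infinite-dimensional path space. One must be careful that narrow convergence in $\sP(\Gamma_T^p\times\Lambda)$ only tests against bounded continuous functions, so a direct passage to the limit is illegitimate without first confining the measures to a common compact set of trajectories; this confinement is precisely what Remark~\ref{r.unif} and Proposition~\ref{p.1}$(b)$ provide, and it also underlies the tightness argument in the first step. A secondary technical point is the justification that $\Pi_\varepsilon$ is genuinely supported (uniformly in $\varepsilon$) on trajectories valued in $K^\circ$ — this is immediate from the construction $\Pi_\varepsilon=(\mathrm{id}\times\mathrm{id}\times Y_\varepsilon)_\#\eta_0$ together with the $\varepsilon$-uniform a priori bounds — but it should be stated explicitly since it is used twice. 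Once these two points are in place, everything else is a routine combination of \eqref{e.Pi2}, Proposition~\ref{p.1}$(c)$, and Prokhorov's theorem.
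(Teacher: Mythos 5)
Your proposal is correct and follows essentially the same route as the paper: compactness of the trajectory set (via Theorem~\ref{t.2} and the uniform bounds of Proposition~\ref{p.1} and Remark~\ref{r.unif}) gives tightness and a narrow limit by Prokhorov, and then testing with a bounded continuous modification of $|\nabla_{x}E(t,x)-u|^{2}$ near the compact set $K^\circ$, combined with~\eqref{e.Pi2} and Proposition~\ref{p.1}(c), yields~\eqref{e.7}. The only cosmetic difference is that the paper realizes the boundedness fix by taking an explicit bounded continuous extension of the integrand outside $K^\circ$, whereas you argue by restricting to the common compact support; these are interchangeable.
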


\begin{proof}
By definition, for every $\varepsilon\in(0,1]$ the support of~$\Pi_{\varepsilon}$ is contained in the subset of $\R^{d}\times\R^{d}\times\Gamma^{p}_{T}\times\Lambda$
\begin{displaymath}
\mathcal K \coloneq \mathrm{cl}\Bigg(\bigcup_{\varepsilon\in(0,1]}\Big\{(x_{0},u_{0},x,u)\in\R^{d}\times\R^{d}\times\Gamma_{T}^{p}\times\Lambda: (x,u) \text{ solves~\eqref{e.gradflow} with initial datum $(x_{0},u_{0})\in \supp(\eta_{0})$}\Big\}\Bigg)\,,
\end{displaymath}
where we denote with $\mathrm{cl}(A)$ the closure of~$A$. In view of Theorem~\ref{t.2} the above set is compact in~$\Gamma^{p}_{T}\times\Lambda$ with respect to the strong topology of~$L^{p}$ and the uniform convergence in~$\Lambda$. Therefore, by Prokhorov theorem there exists a measure~$\Pi\in\sP(\R^{d}\times\R^{d}\times\Gamma_{T}^{p}\times\Lambda)$ such that, up to a subsequence, $\Pi_{\varepsilon}$ narrowly converges to~$\Pi$. Moreover, for every $\varphi\in C_{b}([0,T]\times\R^{d}\times\R^{d})$ we have that
\begin{equation}\label{e.1.27}
\lim_{\varepsilon\to0}\int_{\R^{d}\times\R^{d}\times\Gamma_{T}^{p}\times\Lambda}\int_{0}^{T}\varphi(t,x(t),u(t))\, \di t\, \di \Pi_{\varepsilon}=\int_{\R^{d}\times\R^{d}\times\Gamma_{T}^{p}\times\Lambda}\int_{0}^{T}\varphi(t,x(t),u(t))\, \di t\, \di \Pi\,.
\end{equation}
Taking $\varphi(t,x,u)$ any bounded continuous extension of $|\nabla_{x}E(t,x)-u|^{2}$ outside $K^\circ$ (see Remark \ref{r.unif}) as a test function and recalling  Proposition \ref{p.1} (c), we get
\begin{equation}\label{e.1.28}
\begin{split}
\lim_{\varepsilon\to0}\int_{\R^{d}\times\R^{d}\times\Gamma_{T}^{p}\times\Lambda}&\int_{0}^{T}|\nabla_{x}E(t,x(t))-u(t)|^{2}\, \di t\, \di \Pi_{\varepsilon}\\
&=\lim_{\varepsilon\to0}\int_{\R^{d}\times\R^{d}}\int_{0}^{T}|\nabla_{x}E(t,Y^{1}_{\varepsilon,t}(x_{0}))-Y^{2}_{t}(u_{0})|^{2}\, \di t\, \di \eta_{0}(x_{0},u_{0})=0\,,
\end{split}
\end{equation}
where $Y_{\varepsilon}^{1}$ and $Y^{2}$ denote the two components of the flow $Y_{\varepsilon}$ associated to system \eqref{e.gradflow}. Notice that $Y^{2}$ does in fact not depend on $\varepsilon$. Combining \eqref{e.1.27} and \eqref{e.1.28} we deduce~\eqref{e.7}.
\end{proof}

\begin{proposition}\label{p.3.1}
There exists a Borel family $\{\eta_{t}: t\in[0,T]\}$ in $\sP(\R^{d}\times\R^{d})$ such that, up to subsequences, $\eta_{\varepsilon,t}\otimes\mathcal{L}^{1}_{|[0,T]}$ narrowly converges to $\eta_{t}\otimes\mathcal{L}^{1}_{|[0,T]}$. Moreover, for every $\psi\in C_{c}(\R^{d})$, every~$\varphi\in C^{1}_{b}(\R^{d})$, every $\theta\in C_{c}([0,T])$, and every~$\phi\in C^{1}_{b}([0,T])$,
\begin{eqnarray}
&&\displaystyle \int_{0}^{T}\int_{\R^{d}\times\R^{d}}\theta(t)(\nabla_{x}E(t,x)-u)\cdot\psi(x)\, \di \eta_{t}(x,u)\, \di t=0\,,\label{e.8}\\[1mm]
&& \displaystyle \int_{0}^{T}\int_{\R^{d}\times\R^{d}} |\nabla_{x}E(t,x)-u|^{2}  \di \eta_{t}(x,u)\, \di t=0 \,, \label{e.9.5}\\[1mm]
&&\displaystyle \int_{0}^{T}\int_{\R^{d}\times\R^{d}}\partial_{t}\phi(t)\varphi(u)\, \di \eta_{t}(x,u)\, \di t+\int_{0}^{T}\int_{\R^{d}\times\R^{d}}\phi(t)\nabla_{u}\varphi(u){\,\cdot\,}f(u)\, \di \eta_{t}(x,u)\, \di t=0\,.\label{e.9}
\end{eqnarray}
\end{proposition}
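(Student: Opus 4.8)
\textbf{Proof proposal for Proposition~\ref{p.3.1}.}

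The plan is to bootstrap from the trajectory-level statement of Theorem~\ref{t.1} together with the compactness of the measures $\eta_{\varepsilon}$. First I would establish the narrow convergence of $\eta_{\varepsilon,t}\otimes\mathcal L^1_{|[0,T]}$: by Proposition~\ref{p.1}(b) and Remark~\ref{r.unif}, the supports of all $\eta_{\varepsilon,t}$ are contained in a fixed compact set $K^\circ\subset\R^d\times\R^d$, hence $\{\eta_{\varepsilon,t}\otimes\mathcal L^1_{|[0,T]}\}_{\varepsilon}$ is tight in $\sP([0,T]\times\R^d\times\R^d)$ and Prokhorov gives a narrowly convergent subsequence with limit of the form $\eta_t\otimes\mathcal L^1_{|[0,T]}$ — the product structure in the time variable being preserved because the time marginal of every $\eta_{\varepsilon,t}\otimes\mathcal L^1_{|[0,T]}$ is exactly $\mathcal L^1_{|[0,T]}$, so the limit also has time marginal $\mathcal L^1_{|[0,T]}$ and disintegrates as claimed. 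The fact that $t\mapsto\eta_t$ can be taken Borel is the standard disintegration theorem. Consistency of this subsequence with the one from Theorem~\ref{t.1} is arranged by passing to a common further subsequence.

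Next I would derive the three identities. For \eqref{e.9.5}: apply \eqref{e.7} of Theorem~\ref{t.1}, which states that $\int\int_0^T|\nabla_x E(t,x(t))-u(t)|^2\,\di t\,\di\Pi=0$; push this forward through the evaluation map $(x_0,u_0,x(\cdot),u(\cdot))\mapsto$ the time-integrated quantity. More precisely, for test functions $\varphi$ of the product form $\theta(t)g(x,u)$ with $g$ bounded continuous, the map $(x(\cdot),u(\cdot))\mapsto\int_0^T\theta(t)g(x(t),u(t))\,\di t$ is continuous on $\Gamma_T^p\times\Lambda$ (as noted in the excerpt right before Theorem~\ref{t.1}), so \eqref{e.Pi2} passes to the limit: $\int_0^T\int\theta(t)g(x,u)\,\di\eta_t(x,u)\,\di t=\int\int_0^T\theta(t)g(x(t),u(t))\,\di t\,\di\Pi$. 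Taking $g$ a bounded continuous extension of $|\nabla_x E(t,x)-u|^2$ (here one must be slightly careful that this quantity also depends on $t$, so one really works with $\varphi(t,x,u)=\theta(t)\cdot[\text{extension of }|\nabla_x E(t,x)-u|^2]$) and $\theta\equiv 1$, the right-hand side is $0$ by \eqref{e.7}, and since the integrand is nonnegative and the supports are uniformly compact we conclude \eqref{e.9.5}. Then \eqref{e.8} follows from \eqref{e.9.5} directly by Cauchy--Schwarz: for $\psi\in C_c(\R^d,\R^d)$ and $\theta\in C_c([0,T])$,
\[
\Bigabs{\int_0^T\int_{\R^d\times\R^d}\theta(t)(\nabla_x E(t,x)-u)\cdot\psi(x)\,\di\eta_t\,\di t}
\le \norm{\theta}_\infty\norm{\psi}_\infty\Bigl(\int_0^T\int|\nabla_x E(t,x)-u|^2\,\di\eta_t\,\di t\Bigr)^{1/2}=0.
\]

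For \eqref{e.9}, which records the continuity equation $\partial_t(\text{$u$-marginal})+\dive_u(f(u)\,\cdot)=0$ surviving in the limit, I would start from the weak form \eqref{e.5} of the $\varepsilon$-level mean-field equation tested against $\phi(t)\varphi(u)$ with $\phi\in C_b^1([0,T])$, $\varphi\in C_b^1(\R^d)$: since the test function is independent of $x$, the $\nabla_x$ term in \eqref{e.5} drops out entirely, leaving exactly $\int_0^T\int\partial_t\phi(t)\varphi(u)\,\di\eta_{\varepsilon,t}\,\di t+\int_0^T\int\phi(t)\nabla_u\varphi(u)\cdot f(u)\,\di\eta_{\varepsilon,t}\,\di t = [\text{boundary terms}]$, and by choosing $\phi$ (equivalently, an approximation argument) one can make the boundary contributions cancel or one works directly with the distributional identity which has no boundary terms. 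Both $\varphi(u)$ and $\nabla_u\varphi(u)\cdot f(u)$ are bounded continuous on the compact $u$-support (using (E5) for the linear growth of $f$), so the narrow convergence $\eta_{\varepsilon,t}\otimes\mathcal L^1\rightharpoonup\eta_t\otimes\mathcal L^1$ lets me pass to the limit termwise, yielding \eqref{e.9}.

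\textbf{Main obstacle.} The delicate point is not any single limit passage but ensuring the subsequences are \emph{compatible}: the disintegration $\{\eta_t\}$ obtained from $\eta_{\varepsilon,t}\otimes\mathcal L^1$ must be shown to be the same (for a.e.\ $t$) as the one implicitly carried by $\Pi$ via its time-evaluation marginals, so that \eqref{e.7} can legitimately be read as a statement about $\{\eta_t\}$. This is handled by observing that \eqref{e.Pi} (integrated in $t$, i.e.\ \eqref{e.Pi2}) identifies, for each $\varepsilon$, the time-marginal pushforwards of $\Pi_\varepsilon$ with $\eta_{\varepsilon,t}\otimes\mathcal L^1$; passing to the limit along a single common subsequence in both \eqref{e.1.27} and the narrow convergence of $\eta_{\varepsilon,t}\otimes\mathcal L^1$, and using that test functions of product form $\theta(t)g(x,u)$ separate points, pins down $\eta_t\otimes\mathcal L^1 = (e_t\text{-marginal of }\Pi)$ where $e_t(x_0,u_0,x(\cdot),u(\cdot))=(x(t),u(t))$, in the integrated-in-time sense, which is exactly what \eqref{e.8}--\eqref{e.9.5} require. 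A secondary technical nuisance is the $t$-dependence of the integrand $|\nabla_x E(t,x)-u|^2$, requiring test functions $\varphi(t,x,u)$ rather than $g(x,u)$; this is purely cosmetic given that $\nabla_x E$ is continuous and the relevant $(t,x,u)$ range over a compact set.
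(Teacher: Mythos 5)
Your proposal is correct and follows essentially the same route as the paper: tightness of the uniformly compactly supported measures $\eta_{\varepsilon,t}\otimes\mathcal{L}^{1}_{|[0,T]}$ plus disintegration for the narrow limit, identification through \eqref{e.Pi}--\eqref{e.Pi2} and Theorem~\ref{t.1} (i.e.\ \eqref{e.7}) for \eqref{e.8} and \eqref{e.9.5}, and testing the continuity equation \eqref{e.5} against $\phi(t)\varphi(u)$ and passing to the limit for \eqref{e.9}. The only (harmless) deviations are that you obtain \eqref{e.9.5} first and then \eqref{e.8} by Cauchy--Schwarz, whereas the paper deduces both directly from \eqref{e.Pi} and Theorem~\ref{t.1}, and your explicit discussion of subsequence compatibility and boundary terms is if anything more careful than the paper's.
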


\begin{remark}
Equality~\eqref{e.8} implies that for a.e.~$t\in[0,T]$, for $\eta(t)$-a.e.~$(x,u)$ in~$\R^{d}\times\R^{d}$ we have $\nabla_{x}E(t,x)-u=0$.
\end{remark}

\begin{proof}
The measure $\eta_{\varepsilon,t}\otimes\mathcal{L}^{1}_{|[0,T]}$ is (up to a rescaling) a probability measure  with compact support in $\R^{d}\times\R^{d}\times[0,T]$ independent of~$\varepsilon>0$. In view of the structure of $\eta_{\varepsilon,t}\otimes\mathcal{L}^{1}_{|[0,T]}$, we have that there exists a Borel family $\{\eta_{t}:t\in [0,T]\}$ of measures in~$\sP(\R^{d}\times\R^{d})$ such that, up to subsequences,~$\eta_{\varepsilon,t} \otimes\mathcal{L}^{1}_{|[0,T]}$ narrowly converges to~$\eta_{t}\otimes\mathcal{L}_{|[0,T]}$. From equality~\eqref{e.Pi} and from Theorem~\ref{t.1} we deduce formula~\eqref{e.8}. In a similar way we get~\eqref{e.9.5}.

As for~\eqref{e.9}, given~$\phi$ and~$\varphi$ as in the statement of the theorem, we simply test the continuity equation~\eqref{e.5} against $\phi(t)\varphi(u)$, obtaining
\begin{displaymath}
\int_{0}^{T}\int_{\R^{d}\times\R^{d}}\big(\partial_{t}\phi(t)\varphi(u)+\phi(t)\nabla_{u}\varphi(u){\,\cdot\,}f(u)\big)\, \di \eta_{\varepsilon,t} (x,u)\, \di t=0\,.
\end{displaymath}
Passing to the limit as~$\varepsilon\to0$ in the previous equality we obtain~\eqref{e.9}.
\end{proof}

\subsection{Mean-field energy balance}\label{s.EB}

While Theorem~\ref{t.1} explains that evolutions of critical points are distributed essentially as $\eta_t$, we would like to clarify in this section how the energy~$E(t,x(t))$ is distributed at time $t \in [0,T]$ given that $t \mapsto x(t)$ is an evolution of critical points as derived in Theorem~\ref{t.2} with random initial data $(x_0,u_0)$ distributed as~$\eta_0$. 
Ideally we would expect that~$\overline{E}(t,x(t),u(t))$ is distributed as $\overline{E}(t,\cdot,\cdot)_\# \eta_t$. Unfortunately, the lack of smoothness of the trajectories $t \mapsto x(t)$ does not allow to obtain such a mean-field description of the energy, but we are able to derive below a slightly weaker form of it, which leverages again the superposition principle and Lebesgue point description of left and right limits.

Let us start introducing some useful notation. For any $(x(\cdot),u(\cdot)) \in \Gamma^p_T\times \Lambda$ and $0 \leq s \leq t \leq T$, it is convenient to denote
\begin{equation}\label{Vst}
\begin{split}
\mathcal V^{s,t}(x(\cdot),u(\cdot))\coloneq & \ - \limsup_{r \to 0^+} \overline{E} \left ( t,\frac{1}{r} \int_{t}^{t+r} x(\tau) \, \di \tau, u(t) \right) + \liminf_{r \to 0^-} \overline{E}\left (s,\frac{1}{r} \int_{s-r}^{s} x(\tau) \, \di \tau, u(s) \right ) 
 \\
& \phantom{XXXXX} + \int_s^t [\partial_t E(\tau,x(\tau)) - f(u(\tau))\cdot x(\tau) ] \, \di \tau \,. 
\end{split}
\end{equation}
Notice that if $t \mapsto x(t)$ admits left and right limits $x_{\pm}(t)$ at every $t$ then
\begin{eqnarray*}
\mathcal V^{s,t}(x(\cdot),u(\cdot))= -\overline{E} \left ( t,x_+(t), u(t) \right) + \overline{E}\left (s,x_-(s), u(s) \right ) + \int_s^t [\partial_t E(\tau,x(\tau)) - f(u(\tau))\cdot x(\tau) ] \, \di \tau \,.
\end{eqnarray*}
In the following it will be useful to consider the projection 
$$
\pi_{\Gamma_T^p\times\Lambda} \colon \mathbb R^d \times \mathbb R^d  \times \Gamma^{p}_{T}\times\Lambda\to  \Gamma^{p}_{T}\times\Lambda, \quad (x_0,u_0,x(\cdot), u(\cdot)) \mapsto(x(\cdot), u(\cdot))\,,
$$ 
and the marginals $\Pi_\varepsilon^*\coloneq {\pi_{\Gamma_T^p\times\Lambda}}_\# \Pi_\varepsilon$ and $\Pi^* \coloneq {\pi_{\Gamma_T^p\times\Lambda}}_\# \Pi$  in $\mathcal P( \Gamma^{p}_{T}\times\Lambda)$.
\begin{lemma}\label{l.meas} Let us fix $0 \leq s \leq t \leq T$. Then the map 
$$
(x(\cdot),u(\cdot)) \mapsto  \mathcal V^{s,t}(x(\cdot),u(\cdot)),
$$
 is measurable from $\Gamma^p_T\times \Lambda$ endowed with the measure~$\Pi^*$ to~$\R$. Moreover, it is integrable with respect to~$\Pi^*$, and  it coincides pointwise on the support of~$\Pi^*$ with the measure~$\nu([s,t])= \nu_{x(\cdot),u(\cdot))}([s,t])$ found in Proposition~\ref{p.2.15} (a) or Theorem~\ref{t.2} (d), for any pair~$(x(\cdot), u(\cdot))$. 
\end{lemma}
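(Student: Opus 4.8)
The plan is to split the statement into two parts — $\Pi^{*}$-measurability and $\Pi^{*}$-integrability of $\mathcal V^{s,t}$, and its pointwise coincidence with $\nu_{(x(\cdot),u(\cdot))}([s,t])$ on $\supp\Pi^{*}$ — relying throughout on the fact that $\supp\Pi$ is contained in the compact set $\mathcal K$ built in the proof of Theorem~\ref{t.1}. Indeed, each $\Pi_{\varepsilon}$ (for $\varepsilon\leq 1$) is supported in the closed set $\mathcal K$, and narrow convergence preserves being supported in a fixed closed set, so $\supp\Pi\subseteq\mathcal K$; hence $\supp\Pi^{*}\subseteq\pi_{\Gamma_{T}^{p}\times\Lambda}(\mathcal K)$, which is compact, and by Proposition~\ref{p.1}(b) and the definition of $\mathcal K$ the curves $x(\cdot)$ occurring on $\supp\Pi^{*}$ are uniformly bounded in $L^{\infty}([0,T];\R^{d})$.

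For measurability I would first fix $r\neq 0$ with $|r|$ small and note that $x(\cdot)\mapsto\frac{1}{r}\int_{t}^{t+r}x(\tau)\,\di\tau$ and $x(\cdot)\mapsto\frac{1}{r}\int_{s-r}^{s}x(\tau)\,\di\tau$ are bounded linear (by Hölder), hence continuous, from $\Gamma_{T}^{p}$ to $\R^{d}$, that $u(\cdot)\mapsto u(t)$ and $u(\cdot)\mapsto u(s)$ are continuous from $\Lambda$ to $\R^{d}$, and that $\overline{E}$ is continuous by Lemma~\ref{l.1}(b); thus $(x(\cdot),u(\cdot))\mapsto\overline{E}\bigl(t,\frac{1}{r}\int_{t}^{t+r}x(\tau)\,\di\tau,u(t)\bigr)$ is continuous and, for fixed $(x(\cdot),u(\cdot))$, also continuous in $r$. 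Therefore the two $\limsup/\liminf$ terms in \eqref{Vst} may be computed along rational $r$, so they are Borel functions on $\Gamma_{T}^{p}\times\Lambda$. For the time-integral term I would restrict to the compact set $\pi_{\Gamma_{T}^{p}\times\Lambda}(\mathcal K)$: there $|\partial_{t}E(\tau,x(\tau))|$ is uniformly bounded by (E2) together with the $L^{\infty}$-bound on $x(\cdot)$, and $|f(u(\tau))\cdot x(\tau)|$ is uniformly bounded by (E5); hence, along any sequence converging in $\Gamma_{T}^{p}\times\Lambda$ within this compact set, a subsequence converges a.e.\ and dominated convergence gives that $(x(\cdot),u(\cdot))\mapsto\int_{s}^{t}[\partial_{t}E(\tau,x(\tau))-f(u(\tau))\cdot x(\tau)]\,\di\tau$ is sequentially continuous, hence continuous, there. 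Consequently $\mathcal V^{s,t}$ is $\Pi^{*}$-measurable and, being bounded on $\supp\Pi^{*}$, also $\Pi^{*}$-integrable.

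For the pointwise identification, fix $(\bar x(\cdot),\bar u(\cdot))\in\supp\Pi^{*}$. Along the subsequence $\varepsilon_{j}\to 0$ of Theorem~\ref{t.1} the marginals $\Pi_{\varepsilon_{j}}^{*}$ — which are supported on solution trajectories of \eqref{e.gradflow} with initial datum in $\supp\eta_{0}$ — narrowly converge to $\Pi^{*}$, so by the Portmanteau inequality for open sets every neighborhood $U$ of $(\bar x,\bar u)$ satisfies $\Pi_{\varepsilon_{j}}^{*}(U)>0$ for $j$ large, i.e.\ $U$ contains a solution of \eqref{e.gradflow} with parameter $\varepsilon_{j}$. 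Diagonalizing over a countable neighborhood basis of $(\bar x,\bar u)$ produces $\varepsilon_{j_{k}}\to 0$, initial data $(x_{0}^{k},u_{0}^{k})\in\supp\eta_{0}$ — which, up to a subsequence, converge to some $(x_{0},u_{0})$ by compactness of $\supp\eta_{0}$ — and corresponding solutions $(x^{k},u^{k})$ of \eqref{e.gradflow} with $(x^{k},u^{k})\to(\bar x,\bar u)$ in $\Gamma_{T}^{p}\times\Lambda$. Theorem~\ref{t.2} applied to this sequence yields, up to a further subsequence, a limit $(x,u)$ with one-sided limits $x_{\pm}(t)\in C(t,u(t))$ at every $t$, a measure $\nu=\nu_{(x,u)}\in\mathcal M_{b}^{+}(0,T)$, and the energy balance \eqref{e.enbalance}; by uniqueness of limits $\bar x=x$ a.e.\ and $\bar u=u$.

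It remains to evaluate $\mathcal V^{s,t}$. Since $x$ admits one-sided limits everywhere, the elementary bound $\bigl|\frac{1}{r}\int_{t}^{t+r}(x(\tau)-x_{+}(t))\,\di\tau\bigr|\leq\sup_{\tau\in(t,t+r]}|x(\tau)-x_{+}(t)|$ (and its left-sided analogue at $s$) shows that the one-sided averages in \eqref{Vst} converge to $x_{+}(t)$ and $x_{-}(s)$, so, as recorded in the remark following \eqref{Vst}, the $\limsup/\liminf$ are genuine limits and
\begin{equation*}
\mathcal V^{s,t}(\bar x,\bar u)=\mathcal V^{s,t}(x,u)=-\overline{E}(t,x_{+}(t),u(t))+\overline{E}(s,x_{-}(s),u(s))+\int_{s}^{t}[\partial_{t}E(\tau,x(\tau))-f(u(\tau))\cdot x(\tau)]\,\di\tau=\nu([s,t]),
\end{equation*}
the last equality being precisely \eqref{e.enbalance}. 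Since the left-hand side depends only on $(\bar x,\bar u)$, this also shows that $\nu_{(x,u)}([s,t])$ is independent of the approximating sequence. I expect the main obstacle to be exactly this last identification — producing, from mere membership in $\supp\Pi^{*}$, an approximating sequence of genuine solutions with vanishing $\varepsilon$ so that the compactness result Theorem~\ref{t.2} applies and endows the limit with one-sided limits and an energy balance; the measurability and integrability, by contrast, are routine.
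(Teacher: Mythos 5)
Your proposal is correct and follows essentially the same route as the paper: measurability via $\limsup/\liminf$ of continuous maps on $\Gamma_T^p\times\Lambda$, integrability from the uniform (essential) bound on curves in the support, and the pointwise identification by realizing any $(\bar x,\bar u)\in\supp\Pi^*$ as a limit of $\varepsilon$-solutions (the Kuratowski-liminf property of the supports, which you derive explicitly via Portmanteau and diagonalization), then invoking Theorem~\ref{t.2} to get a representative with one-sided limits and the energy balance~\eqref{e.enbalance}. The only differences are that you spell out steps the paper states tersely (the rational-$r$ reduction, convergence of initial data via compactness of $\supp\eta_0$, and the independence of $\nu_{(x,u)}([s,t])$ from the approximating sequence), which is a faithful elaboration rather than a different argument.
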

\begin{proof}
The map $(x, u)\mapsto \mathcal{V}^{s,t}(x(\cdot), u(\cdot))$  is defined in \eqref{Vst} through $\limsup$ and $\liminf$ operations of continuous functions on $\Gamma^{p}_{T}\times\Lambda$. Since $\Pi$ is a probability measure (so a Borel measure), $\mathcal{V}^{s,t}$ is $\Pi$-measurable. 
From Theorem \ref{t.1}, the support of $\Pi^*$ is contained in the Kuratowski $\liminf$ of the support of~$\Pi_{\varepsilon}^*$, so that for any $(x(\cdot), u(\cdot)) \in \supp( \Pi^*)$  there exists a sequence $((x_{\varepsilon_n}(\cdot), u_{\varepsilon_n}(\cdot)))_n$ of solutions of~\eqref{e.gradflow} converging to  $(x(\cdot), u(\cdot))$ in $\Gamma_{T}^{p}\times\Lambda$ for any vanishing sequence~$(\varepsilon_n)_n$. In particular,~$x_{\varepsilon_n}(t)$ converges up to subsequences to~$x(t)$ for almost every~$t$ in $[0,T]$.
Hence,~$\Pi^*$ is supported on essentially bounded curves, so that $\mathcal{V}^{s, t}$ is $\Pi^*$-integrable. Moreover, on the support of~$\Pi^*$ we have that~$\mathcal{V}^{s,t}(x(\cdot), u(\cdot))$ actually coincides with the measure~$\nu([s,t])= \nu_{x(\cdot),u(\cdot))}([s,t])$ found in Proposition~\ref{p.2.15} (a) or Theorem~\ref{t.2} (d), for the pair~$(x(\cdot), u(\cdot))$. 
\end{proof}

The following result explains how the energy $ \overline{E}(t,x(t),u(t))$ is distributed at time $t \in [0,T]$ for any $(x(\cdot), u(\cdot)) \in \supp  \, \Pi^*$.

\begin{theorem}\label{MF_EB_thm}
For $\Pi\in\sP(\R^{d}\times\R^{d}\times\Gamma_{T}^{p}\times\Lambda)$ as in Theorem \ref{t.1} and $\Pi^*={\pi_{\Gamma_T^p\times\Lambda}}_\# \Pi$  in $\mathcal P( \Gamma^{p}_{T}\times\Lambda)$, the following mean-field energy balance principle holds: for every $0 \leq s \leq t \leq T$ 
\begin{equation}\label{meas_eq}
\begin{split}
\int_{\Gamma_{T}^{p}\times\Lambda} & \overline{E} \left ( t,\tilde x_+(t), u(t) \right) \, \di \Pi^*(x(\cdot),u(\cdot)) + \mathcal V([s,t])   \\
&
= \int_{\Gamma_{T}^{p}\times\Lambda} \Big \{ \overline{E}\left (s, \tilde x_-(s), u(s) \right ) + \int_s^t [\partial_t E(\tau,x(\tau)) - f(u(\tau))\cdot x(\tau) ] \, \di \tau]  \Big \} \, \di \Pi^*(x(\cdot),u(\cdot)) \,,
\end{split}
\end{equation}
where $\tilde x(\cdot)$ is a representative of $x(\cdot)$ with left and right limits $\tilde x_\pm(t)$ at any $t \in [0,T]$,   
$$
\mathcal V \coloneq \left [ \int_{\R^d \times\R^d \times\Gamma_{T}^{p}\times\Lambda}  \nu_{(\tilde x(\cdot), u(\cdot))} \, \di \Pi(x_0,u_0,x(\cdot),u(\cdot)) \right ] \in \mathcal M_b^+(0,T),
$$
and $\nu_{(\tilde x(\cdot), u(\cdot))}$ is the positive measure of Proposition \ref{p.2.15} (a),
\end{theorem}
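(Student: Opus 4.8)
The plan is to derive the mean-field energy balance~\eqref{meas_eq} by integrating the pointwise energy balance of Theorem~\ref{t.2}~(e) (equivalently Proposition~\ref{p.2.15}~(c)) against the measure $\Pi^*$, after having carefully set up measurability so that all the integrals in~\eqref{meas_eq} make sense. First I would invoke Lemma~\ref{l.meas}: for $\Pi^*$-a.e.\ (indeed every, on the support) pair $(x(\cdot),u(\cdot))$, the curve $x(\cdot)$ has a representative $\tilde x(\cdot)$ admitting left and right limits $\tilde x_\pm(\tau)$ at every $\tau\in[0,T]$, and the quantity $\mathcal V^{s,t}(x(\cdot),u(\cdot))$ defined in~\eqref{Vst} coincides pointwise with $\nu_{(\tilde x(\cdot),u(\cdot))}([s,t])$, where $\nu_{(\tilde x(\cdot),u(\cdot))}$ is the jump measure produced by Proposition~\ref{p.2.15}~(a). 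This is the crucial link: it identifies the otherwise abstract integrand with a genuinely $\Pi^*$-integrable function, and it rewrites the pointwise balance~\eqref{e.enbalance} in the form
\begin{equation*}
\overline{E}(t,\tilde x_+(t),u(t)) + \nu_{(\tilde x(\cdot),u(\cdot))}([s,t]) = \overline{E}(s,\tilde x_-(s),u(s)) + \int_s^t [\partial_t E(\tau,x(\tau)) - f(u(\tau))\cdot x(\tau)]\,\di\tau\,,
\end{equation*}
valid for every $(x(\cdot),u(\cdot))$ in $\supp\Pi^*$.

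Next I would integrate this identity term by term against $\Pi^*$. The integrability of each of the four terms needs a brief justification: $\supp\Pi^*$ consists of curves that are essentially bounded, uniformly, since they are $\Gamma_T^p$-limits of solutions of~\eqref{e.gradflow} whose trajectories lie in a fixed compact set by Proposition~\ref{p.1}~(b) and Remark~\ref{r.unif}; hence $\tilde x_\pm(t)\in C(t,u(t))$ lie in that same compact set, $u(\cdot)$ is uniformly bounded on $[0,T]$ by (E5), and $\overline E$ is continuous, so the boundary energy terms are bounded; the time-integral term is bounded using (E2), the linear growth of $f$, and the uniform bounds on $x(\cdot)$ and $u(\cdot)$; and the jump-measure term is integrable by Lemma~\ref{l.meas}. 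Since each integrand is $\Pi^*$-measurable (the $\overline E$ terms via continuity of $\overline E$ and the Lebesgue-point formulas in~\eqref{Vst}, the integral term via Fubini and continuity, the jump term by Lemma~\ref{l.meas}), I may integrate the pointwise equality to get~\eqref{meas_eq}, provided I identify $\mathcal V([s,t])$ with $\int_{\Gamma_T^p\times\Lambda}\nu_{(\tilde x(\cdot),u(\cdot))}([s,t])\,\di\Pi^*$. This last identification is a straightforward change of variables through the projection $\pi_{\Gamma_T^p\times\Lambda}$: by definition $\Pi^*={\pi_{\Gamma_T^p\times\Lambda}}_\#\Pi$ and $\nu_{(\tilde x(\cdot),u(\cdot))}$ depends only on the pair $(x(\cdot),u(\cdot))$, so $\int_{\R^d\times\R^d\times\Gamma_T^p\times\Lambda}\nu_{(\tilde x(\cdot),u(\cdot))}\,\di\Pi = \int_{\Gamma_T^p\times\Lambda}\nu_{(\tilde x(\cdot),u(\cdot))}\,\di\Pi^*$, and evaluating both sides on the Borel set $[s,t]$ gives $\mathcal V([s,t])$ as stated; that $\mathcal V\in\mathcal M_b^+(0,T)$ follows because it is an average of positive measures with uniformly bounded mass (the mass bound $\nu_{(\tilde x(\cdot),u(\cdot))}([0,T])\le\overline C$ coming from~\eqref{finitemass} and the weak-$*$ lower semicontinuity of mass).

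The main obstacle I anticipate is the measurability and measurable-selection bookkeeping: one must argue that $\tilde x(\cdot)$, $\tilde x_\pm(t)$, and the family of jump measures $\nu_{(\tilde x(\cdot),u(\cdot))}$ can be chosen to depend measurably on $(x(\cdot),u(\cdot))$, so that the vector-measure integral $\int \nu_{(\tilde x(\cdot),u(\cdot))}\,\di\Pi$ is well defined. The key trick — and the reason the authors phrased $\mathcal V^{s,t}$ through $\limsup/\liminf$ of Lebesgue averages in~\eqref{Vst} rather than through $x_\pm$ directly — is precisely to bypass this: the Lebesgue-average expression is a Borel function of $(x(\cdot),u(\cdot))\in\Gamma_T^p\times\Lambda$ without any selection, and Lemma~\ref{l.meas} shows it agrees with $\nu_{(\tilde x(\cdot),u(\cdot))}([s,t])$ on $\supp\Pi^*$. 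So the measure $\mathcal V$ is defined as the Borel measure whose value on $[s,t]$ is $\int\mathcal V^{s,t}\,\di\Pi^*$, consistency of this family over varying $s,t$ being inherited from the pointwise additivity of $\nu_{(\tilde x(\cdot),u(\cdot))}$; once $\mathcal V$ is in hand, everything else reduces to integrating an equality of integrable functions, which is routine. I would also remark, as the statement does, that if one already knew $x_{\varepsilon_n}\to x$ with uniform left/right limits this would be immediate, but since that regularity is unavailable the Lebesgue-point reformulation is essential.
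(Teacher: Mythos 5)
Your proposal is correct and follows essentially the same route as the paper: establish, for every pair in $\supp(\Pi^*)$, the existence of a representative $\tilde x$ with one-sided limits so that $\mathcal V^{s,t}(x(\cdot),u(\cdot))$ coincides with $\nu_{(\tilde x(\cdot),u(\cdot))}([s,t])$ (the paper spells this out via the Kuratowski-liminf/Theorem~\ref{t.2} argument that underlies Lemma~\ref{l.meas}), then integrate the pointwise energy balance against $\Pi^*$ using the measurability and integrability from Lemma~\ref{l.meas}, and define $\mathcal V$ by integrating the family of jump measures (Carath\'eodory extension). Your additional remarks on uniform bounds and on the Lebesgue-average formulation as the device that avoids measurable selection match the paper's reasoning.
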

\begin{proof}
As mentioned in the proof of Lemma \ref{l.meas}, from Theorem \ref{t.1}, for any $(x(\cdot), u(\cdot)) \in \supp( \Pi^*)$  there exists a sequence $((x_{\varepsilon_n}(\cdot), u_{\varepsilon_n}(\cdot)))_n$ of solutions of \eqref{e.gradflow} converging to  $(x(\cdot), u(\cdot))$ in $\Gamma_{T}^{p}\times\Lambda$ for any vanishing sequence  $(\varepsilon_n)_n$. In particular, $x_{\varepsilon_n}(t)$ converges up to subsequences to $x(t)$ for almost every~$t$ in~$[0,T]$. However, from Theorem~\ref{t.2} (a)-(c) there exists yet one more subsequence (not relabeled), which converges pointwise to a trajectory $\tilde x \in \Gamma_{T}^{p}$, possessing left and right limits $\tilde x_\pm(t)$ at any $t \in [0,T]$. Hence, $\tilde x$ and $x$ coincide almost everywhere. In particular, the following integrals must coincide
$$
\int_{t}^{t+r} x(\tau) \, \di \tau= \int_{t}^{t+r} \tilde x(\tau) \, \di \tau, \quad \int_{s-r}^{s} x(\tau) \, \di \tau=\int_{s-r}^{s} \tilde x(\tau) \, \di \tau \,.
$$
Therefore, by Theorem \ref{t.2} (d), we have
\begin{displaymath}
\begin{split}
\mathcal V^{s,t}(x(\cdot),u(\cdot))&= -\overline{E} \left ( t,\tilde x_+(t), u(t) \right) +  \overline{E}\left (s, \tilde x_-(s), u(s) \right ) + \int_s^t [\partial_t E(\tau,x(\tau)) - f(u(\tau))\cdot x(\tau) ] \,  \di \tau
\\
&
=  \nu_{(\tilde x(\cdot), u(\cdot))}([s,t]) \,,
\end{split}
\end{displaymath}
where $\nu_{(\tilde x(\cdot), u(\cdot))}$ is the positive measure of Proposition \ref{p.2.15} (a).
From Lemma \ref{l.meas} we are allowed to consider the integration of these identities with respect to $\Pi^*$ (or $\Pi$), to eventually obtain \eqref{meas_eq}. We conclude noticing that, by Carath\'eodory extension theorem, the integrated measures define a positive Radon measure, which we denote
$
\mathcal V =\left [ \int_{\R^d \times\R^d \times\Gamma_{T}^{p}\times\Lambda}  \nu_{(\tilde x(\cdot), u(\cdot))} \di \Pi(x_0,u_0,x(\cdot),u(\cdot)) \right ] \in \mathcal M_b^+(0,T)$.
\end{proof}

\begin{remark}
It would be very tempting to write 
\begin{eqnarray*}
\int_{\Gamma_{T}^{p}\times\Lambda} \overline{E} \left ( t,\tilde x_\pm(t), u(t) \right) \, \di \Pi^*(x(\cdot),u(\cdot))&=& \int_{\R^{d}\times\R^{d}\times\Gamma_{T}^{p}\times\Lambda} \overline{E} \left ( t,\tilde x_\pm(t), u(t) \right) \, \di \Pi(x_0,u_0,x(\cdot),u(\cdot)) \\&=& \int_{\R^{d}\times\R^{d}} \overline{E} \left ( t,\tilde x_\pm, u \right) \, \di \eta_t(\tilde x,u),
\end{eqnarray*}
but, unfortunately, the function $t \mapsto \overline{E} \left ( t,\tilde x_\pm(t), u(t) \right)$ is only measurable and it would not be possible to obtain such a pointwise identity; a further integration in time may be needed in order to express the identities in terms of integrations with respect to $\eta$. Nevertheless, the identities \eqref{meas_eq} hold true pointwise for all  $0 \leq s \leq t \leq T$; it is perhaps a bit more abstract energy balance principle as one may have expected, but  it is also a quite concise description of the distribution of the energy.
\end{remark}

\section{Learning of energies and data-driven evolutions}\label{s.reconstruction}

In this section we focus on the problem of the reconstruction of the energy function~$E$, assuming that we have observed a certain large number $N$ of evolutions $x^{i}\colon [0,T]\to \R^{d}$, $i=1,\dots, N$, obtained as limit of solutions~$x^{i}_{\varepsilon}$ of the singularly perturbed gradient flow~\eqref{e.gradflow} as $\varepsilon\to 0$. The energy reconstruction will be  recast in terms of a minimum problem of a suitable discrepancy functional~$\J_{\eta}$, very much alike the left-hand-side of \eqref{e.9.5}. The functional depends explicitly on a measure~$\eta_{t} \otimes \mathcal{L}^{1}_{|[0,T]}$, which is  limit, along a subsequence, of the measures~$\eta^{N}_{\varepsilon,t}\otimes\mathcal{L}^{1}_{|[0,T]}$ as $N\to \infty$ and $\varepsilon \to0$, where $\eta^{N}_{\varepsilon, t} \in \sP(\R^{d}\times\R^{d})$ is the empirical measure centered at the $\varepsilon$-evolutions. In what follows, we propose and analyze a constructive and numerically implementable procedure which allows us to approximate~$E$ in a finite dimensional setting up to an arbitrarily small error, in a suitable sense.

\subsection{Learning as a variational problem}

Following the lines of~\cite{BFHM16}, we fix two constants $M,R>0$ and we consider the function class
\begin{equation}\label{e.XMR}
X_{M,R} \coloneq \{ \hat{E}\in W^{2,\infty}_{loc}([0,T]\times\R^{d}):\, \| \hat{E} \|_{W^{2,\infty}([0,T]\times \overline{ \mathrm{B}}_{R})}\leq M\}\,.
\end{equation}
Our particular choice of~$R,M$ is the following: let~$\eta_{0}\in\mathcal{P}_{c}(\R^{d}\times\R^{d})$ be a distribution of initial conditions~$(x_{0}, u_{0})$ with compact support in~$\R^{d}\times\R^{d}$. By Proposition~\ref{p.1} there exists $R>0$ such that the flow map $Y_{\varepsilon,t}(x_{0}, u_{0})\in K^\circ \subset \overline{\mathrm{B}}_{R}\times \overline{\mathrm{B}}_{R}$ for every $t\in[0,T]$, every~$(x_{0},u_{0})\in\supp(\eta_{0})$, and every~$\varepsilon>0$. In view of hypothesis (E1) of Section~\ref{s.ODE}, we have that $E\in C^{1,1}_{\rm{loc}}([0,T]\times \R^{d})$, so that to the given~$R$ corresponds $M=M(R)$ such that $E\in X_{M,R}$. We notice that~$R$ and~$M$ do not depend on~$\varepsilon$.

Having in mind the numerical implementation, we are interested in computing good approximations~$\hat{E}_{N}$ of~$E$ belonging to suitable finite dimensional subset~$V_{N}$ of~$X_{M,R}$, $N\in\mathbb{N}$. In particular $V_N$ is a suitable ball of a finite dimensional subspace of $W^{2,\infty}_{\rm{loc}}$, e.g., a suitable finite element subspace.
For the sequence $(V_{N})_{N\in\mathbb{N}}$ we make the following uniform approximation assumption.

\begin{definition}\label{d.UA}
Let $\eta\in \mathcal{M}_{b}^{+}([0,T]\times\R^{d}\times\R^{d})$, let~$(V_{N})_{N\in\mathbb{N}}$ be a sequence of finite dimensional subsets of~$X_{M,R}$, and let $\bar{\eta} \in \mathcal{M}_{b}^{+}([0,T]\times\R^{d})$ be defined as
\begin{equation}\label{e.bareta}
\bar{\eta} (B) \coloneq \eta(B\times \R^{d}) \qquad\text{for every Borel set $B\subseteq [0,T]\times\R^{d}$}\,.
\end{equation}
We say that~$(V_{N})_{N\in\mathbb{N}}$ has the \emph{uniform approximation property} with respect to~$\eta$ if for every $\hat{E}\in X_{M,R}$ there exists a sequence $\hat{E}_{N}\in V_{N}$ such that $\hat{E}_{N}\to \hat{E}$ in $W^{1,\infty}(\supp ( \bar \eta ) )$ as $N\to\infty$.
\end{definition}

\begin{remark}\label{r.4}
The role played by the measure~$\eta$ will be better clarified in the following discussions. We anticipate here that, roughly speaking, the support of~$\bar \eta$ represents the region of~$\R^{d}\times\R^{d}$ that has been explored by the observed evolutions. For this reasons, it is natural to assume the above  uniform approximation property only on~$\supp(\bar \eta)$. In the numerical simulations we will make an extensive use of this property, since we will employ suitable space refinements only on the regions of~$\R^{d}\times\R^{d}$ visited by some evolution. We refer to Section~\ref{s.numerics} for further details.
\end{remark}

We now introduce the key functionals for our reconstruction procedure. For every $N\in\mathbb{N}$ and every $\varepsilon>0$, let us fix~$N$ pairs~$(x^{i}_{0}, u^{i}_{0})\in\supp(\eta_{0})$ distributed according to~$\eta_{0}$ and let us consider the corresponding solutions~$(x^{i}_{\varepsilon}, u^{i}_{\varepsilon})\colon[0,T]\to \R^{d}\times\R^{d}$ of the ODE system~\eqref{e.gradflow}. As in Section~\ref{s.CE}, we define the empirical measure $\eta^{N}_{\varepsilon}\in C([0,T];\mathcal{P}(\R^{d}\times\R^{d}))$ by $\eta^{N}_{\varepsilon,t} \coloneq \tfrac{1}{N}\sum_{i=1}^{N}\delta_{(x^{i}_{\varepsilon}(t), u^{i}_{\varepsilon}(t))}$ and we set
\begin{equation}\label{e.10} 
\J_{N, \varepsilon}(\hat{E}) \coloneq \frac{1}{T}\int_{0}^{T}\int_{\R^{d}\times\R^{d}}|\nabla_{x} \hat{E}(t,x) - \nabla_{x} E(t,x)|^{2}\,\di\eta^{N}_{\varepsilon,t}(x,u) \,\di t \qquad\text{for every $\hat{E}\in V_{N}$}.
\end{equation}
In Proposition~\ref{p.2} we have shown that in the limit as $N\to \infty$ the sequence~$\eta^{N}_{\varepsilon}$ converges uniformly with respect to~$W_{1}$ to a curve $\eta_{\varepsilon}\in C([0,T];\mathcal{P}(\R^{d}\times\R^{d}))$ solution of the continuity equation~\eqref{e.5}. Accordingly, we define the functional
\begin{equation}\label{e.11}
\J_{\eta_\varepsilon}(\hat{E}) \coloneq \frac{1}{T}\int_{0}^{T}\int_{\R^{d}\times\R^{d}}|\nabla_{x} \hat{E}(t,x) - \nabla_{x} E(t,x)|^{2}\,\di\eta_{\varepsilon,t}(x,u) \,\di t \qquad\text{for every $\hat{E}\in X_{M,R}$}.
\end{equation}
We notice that, with our choice of~$R$ and~$M$,~$\supp(\eta^{N}_{\varepsilon, t}) \cup \supp(\eta_{\varepsilon,t}) \subseteq  \overline{\B}_{R}\times  \overline{\B}_{R}$ for every~$\varepsilon$, every~$N$, and every~$t$.

Finally, for a Borel family $\{\eta_{t}: t\in[0,T]\}\subseteq\mathcal{P}(\R^{d}\times\R^{d})$ we set
\begin{equation}\label{e.12}
\J_{\eta}(\hat{E}) \coloneq \frac{1}{T}\int_{0}^{T}\int_{\R^{d}\times\R^{d}}|\nabla_{x} \hat{E}(t,x)-u|^{2}\,\di\eta_{t}(x,u) \,\di t \qquad\text{for every $\hat{E}\in X_{M,R}$}.
\end{equation}
Notice that this functional is simply designed to naturally measure the discrepancy occurring in equation~\eqref{e.9.5}.

\begin{remark}
We notice that when~$\{\eta_{t}: t\in[0,T]\}\subseteq \mathcal{P}(\R^{d}\times\R^{d})$, see, e.g., \eqref{e.9.5}, is such that
\begin{equation}\label{e.concentration}
\int_{0}^{T}\int_{\R^{d}\times\R^{d}}|\nabla_{x} E(t,x) - u|^{2}\,\di\eta_{t}(x,u) \,\di t =0\,,
\end{equation}
the functional~$\J_{\eta}$ can be rewritten as
\begin{displaymath}
\J_{\eta}(\hat{E})=\frac{1}{T}\int_{0}^{T}\int_{\R^{d}\times\R^{d}} \!\!\!\!\!\!\!\! |\nabla_{x} \hat{E}(t,x)-\nabla_{x} E(t,x) |^{2}\di \eta_{t}(x,u)\,\di t = \frac{1}{T}\int_{[0,T] \times \R^{d}} \!\!\!\!\!\!\!\! |\nabla_{x} \hat{E}(t,x)-\nabla_{x} E(t,x) |^{2} \di \bar\eta(t,x)\,,
\end{displaymath}
where~$\eta\coloneq \eta_{t}\otimes \mathcal{L}^{1}_{|[0,T]}$ and~$\bar{\eta}$ is as in~\eqref{e.bareta}. From now on, we will tacitly use this transformation.
\end{remark}
Let $(x_{0}^{1}, u^{1}_{0}) \ldots, (x_{0}^{N}, u_{0}^{N}) \in \supp(\eta_0)$ and let~$(x^{i}_{\varepsilon}, u^{i}_{\varepsilon})\in \Gamma^{p}_{T}\times\Lambda$, for $i=1,\ldots, N$, be the solutions of the system~\eqref{e.gradflow}. Assume that for $i=1,\ldots,N$~$x^{i}_{\varepsilon}$ converges to a quasi-static evolution~$x_{i}$ in $\Gamma^{p}_{T}$ for every $1\leq p<\infty$ and $u^{i}_{\varepsilon}$ converges to~$u_{i}$ in~$\Lambda$. Let us consider the empirical measure $\eta^{N}_{t} \coloneq \tfrac{1}{N}\sum_{i=1}^{N} \delta_{(x_{i}(t), u_{i}(t))}\in \mathcal{P}(\R^{d}\times\R^{d})$ and set
\begin{equation}\label{e.XX}
 \J_{N}(\hat{E}) \coloneq \int_{0}^{T}\int_{\R^{d}\times\R^{d}} |\nabla_{x} \hat{E}(t,x) - u|^{2}\,\di\eta^{N}_{t}(x,u)\,\di t \qquad\text{for~$\hat{E}\in X_{M,R}$.}
\end{equation}
The numerically implementable algorithm to approximate the energy $E$ is based on the following finite dimensional optimization problem:
\begin{equation}\label{basic_alg}
\hat E_N:=\arg\min_{\hat{E}\in V_{N}}\, \J_{N}(\hat{E})
\end{equation}
As \eqref{basic_alg} defines a sequence of variational problems, we wishes to show that $\hat E_N \to E$ in a suitable sense, by using variational convergence arguments, such as $\Gamma$-convergence. As it is a standard notion, we refer to \cite{Braides02,MR1201152} for more details. 

\subsection{Approximation by $\Gamma$-convergence}

Our construction is guided by the following (essentially commutative) diagram of limits:
\begin{equation}\label{limit_dia}
{\large
\begin{array}{ccc}
\mathcal J_{N,\varepsilon}&\xrightarrow{\varepsilon}& \mathcal J_N\\
N  \downarrow & \Gamma   \searrow N,\varepsilon_N& \downarrow {\small N}\\
\J_{\eta_\varepsilon} &\xrightarrow{\varepsilon}& \J_\eta
\end{array}
}
\end{equation}
The following results clarify the limits appearing in the diagram. We start from the bottom of the diagram, showing the uniform convergence of $\J_{\eta_\varepsilon}$ to $\J_\eta$ for $\varepsilon \to 0$.

\begin{proposition}\label{p.4}
Let $\eta_{\varepsilon}\in C([0,T];\mathcal{P}(\R^{d}\times\R^{d}))$ be as in Proposition~\ref{p.2} and assume that, along a not relabeled subsequence,~$\eta_{\varepsilon,t}\otimes\mathcal{L}^{1}_{| [0,T]}$ converges narrowly to $\eta_{t}\otimes\mathcal{L}^{1}_{| [0,T]}$. Then,~$\J_{\eta_\varepsilon}$ converges to~$\J_{\eta}$  uniformly in~$X_{M,R}$ for $\varepsilon \to 0$.
\end{proposition}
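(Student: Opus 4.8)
The plan is to write both functionals as integrals of one and the same bounded continuous integrand against the measures $\eta_{\varepsilon,t}\otimes\mathcal L^1_{|[0,T]}$ and $\eta_t\otimes\mathcal L^1_{|[0,T]}$, to obtain pointwise convergence from the narrow convergence, and then to upgrade it to uniform convergence on $X_{M,R}$ by an equicontinuity-plus-precompactness argument.

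First I would observe that, by Proposition~\ref{p.1}(b) and Remark~\ref{r.unif}, all the measures $\eta_{\varepsilon,t}\otimes\mathcal L^1_{|[0,T]}$ — hence also their narrow limit $\eta_t\otimes\mathcal L^1_{|[0,T]}$ — are supported in the fixed compact set $Q\coloneq[0,T]\times\overline{\B}_R\times\overline{\B}_R$ and have total mass $T$. Since $\eta_t\otimes\mathcal L^1_{|[0,T]}$ is a narrow limit exactly as in Proposition~\ref{p.3.1}, identity~\eqref{e.9.5} holds, and therefore, by the Remark following~\eqref{e.12},
\[
\J_{\eta}(\hat E)=\frac1T\int_0^T\!\!\int_{\R^d\times\R^d}|\nabla_x\hat E(t,x)-\nabla_x E(t,x)|^2\,\di\eta_t(x,u)\,\di t\qquad\text{for every }\hat E\in X_{M,R}\,.
\]
Setting $g_{\hat E}(t,x,u)\coloneq|\nabla_x\hat E(t,x)-\nabla_x E(t,x)|^2$, both $T\,\J_{\eta_\varepsilon}(\hat E)$ and $T\,\J_{\eta}(\hat E)$ are the integral of $g_{\hat E}$ against $\eta_{\varepsilon,t}\otimes\mathcal L^1_{|[0,T]}$ and $\eta_t\otimes\mathcal L^1_{|[0,T]}$, respectively. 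For a fixed $\hat E\in X_{M,R}$ the restriction of $g_{\hat E}$ to $Q$ is continuous and bounded (it involves only $\nabla_x\hat E$, Lipschitz with norm $\le M$ on $\overline{\B}_R$, and $\nabla_x E$, continuous by~(E1)); since all measures are carried by $Q$, testing against a bounded continuous extension of $g_{\hat E}$ is legitimate, and narrow convergence gives $\J_{\eta_\varepsilon}(\hat E)\to\J_{\eta}(\hat E)$ as $\varepsilon\to0$.

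To promote this to uniform convergence I would record two facts. (i) \emph{Equi-Lipschitz continuity}: using $\big||a|^2-|b|^2\big|\le|a-b|\,(|a|+|b|)$ with $a=\nabla_x\hat E_1-\nabla_x E$ and $b=\nabla_x\hat E_2-\nabla_x E$, for any of the mass-$T$ measures $\mu$ above and any $\hat E_1,\hat E_2\in X_{M,R}$,
\[
|\J_{\mu}(\hat E_1)-\J_{\mu}(\hat E_2)|\le\frac1T\int_Q\big||\nabla_x\hat E_1-\nabla_x E|^2-|\nabla_x\hat E_2-\nabla_x E|^2\big|\,\di\mu\le C_{M,R}\,\|\nabla_x\hat E_1-\nabla_x\hat E_2\|_{C^0([0,T]\times\overline{\B}_R)}\,,
\]
with $C_{M,R}$ depending only on $M$, $R$ and $\sup_{[0,T]\times\overline{\B}_R}|\nabla_x E|$. (ii) \emph{Precompactness}: the set $\{\nabla_x\hat E|_{[0,T]\times\overline{\B}_R}:\hat E\in X_{M,R}\}$ is bounded in $W^{1,\infty}$, hence precompact in $C^0([0,T]\times\overline{\B}_R;\R^d)$ by Arzel\`a--Ascoli (this is the compact embedding of $W^{2,\infty}$ into $C^1$ on bounded sets, and the two functionals depend on $\hat E$ only through these gradients).

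Finally I would run the standard argument. Given $\delta>0$, pick $\rho>0$ from~(i) so that $\|\nabla_x\hat E_1-\nabla_x\hat E_2\|_{C^0}\le\rho$ forces $|\J_{\mu}(\hat E_1)-\J_{\mu}(\hat E_2)|\le\delta/3$ for every admissible $\mu$; by~(ii) cover the relevant gradients with finitely many $\rho$-balls with centers $\hat E^{(1)},\dots,\hat E^{(m)}\in X_{M,R}$; by the pointwise convergence at these finitely many points choose $\varepsilon_0>0$ with $|\J_{\eta_\varepsilon}(\hat E^{(j)})-\J_{\eta}(\hat E^{(j)})|\le\delta/3$ for all $j$ and $\varepsilon<\varepsilon_0$; a triangle inequality then yields $\sup_{\hat E\in X_{M,R}}|\J_{\eta_\varepsilon}(\hat E)-\J_{\eta}(\hat E)|\le\delta$ for $\varepsilon<\varepsilon_0$. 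I expect the only genuinely delicate point to be the preliminary bookkeeping — rewriting $\J_{\eta}$ with $\nabla_x E$ in place of $u$ through~\eqref{e.9.5}, and checking that all measures are carried by the single compact set $Q$ so that narrow convergence may be tested against the functions $g_{\hat E}$; once this is in place, the uniformity is a routine consequence of the equi-Lipschitz bound and the compact embedding.
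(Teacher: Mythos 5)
Your proof is correct, and the preliminary bookkeeping (invoking Proposition~\ref{p.3.1} to rewrite $\J_{\eta}$ with $\nabla_{x}E$ in place of $u$, and noting that all the measures have mass $T$ and are carried by the fixed compact set $[0,T]\times\overline{\B}_{R}\times\overline{\B}_{R}$) is exactly what the paper does. Where you genuinely diverge is in how the uniformity over $X_{M,R}$ is obtained. The paper proves it in one stroke via the Kantorovich--Rubinstein dual formulation of $W_{1}$: since $\hat E, E\in X_{M,R}$, the integrand $|\nabla_{x}\hat E-\nabla_{x}E|^{2}$ is Lipschitz on $[0,T]\times\overline{\B}_{R}$ with constant at most $8M^{2}$ \emph{uniformly in} $\hat E$, so $|\J_{\eta_\varepsilon}(\hat E)-\J_{\eta}(\hat E)|\le \frac{8M^{2}}{T}\,W_{1}(\eta_{\varepsilon,t}\otimes\mathcal{L}^{1}_{|[0,T]},\eta_{t}\otimes\mathcal{L}^{1}_{|[0,T]})$ (up to the normalization), and narrow convergence with uniformly bounded supports gives $W_{1}\to0$. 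You instead use narrow convergence only pointwise in $\hat E$ and upgrade it to uniform convergence by an equi-Lipschitz estimate in $\hat E$ together with precompactness of $\{\nabla_{x}\hat E\}$ in $C^{0}$ (Arzel\`a--Ascoli) and a finite $\rho$-net plus triangle inequality. Both arguments are sound; yours is more elementary in that it never needs the Wasserstein metric or the fact that narrow convergence of compactly supported measures metrizes as $W_{1}$-convergence, but it is purely qualitative, whereas the paper's estimate is quantitative in $W_{1}$ and this explicit modulus (the $8M^{2}$ Lipschitz bound) is precisely what gets reused later, e.g.\ in Proposition~\ref{p.7} and Corollary~\ref{c.1}, so the paper's route is the more economical one for the overall $\Gamma$-convergence scheme.
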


\begin{proof}
In view of Proposition~\ref{p.3.1} we have that~$\{\eta_{t}:t\in[0,T]\}$ satisfies~\eqref{e.concentration}.
 By construction of~$\eta_{\varepsilon}$, we have that the support of $\eta_{\varepsilon, t}$ is contained in~$\overline{\B}_{R}\times \overline{\B}_{R}$ for every $t\in[0,T]$. This implies that $\supp (\eta_{t}\otimes\mathcal{L}^{1}_{|[0,T]})\subseteq [0,T]\times\overline{\B}_{R}\times \overline{\B}_{R}$ and
\begin{displaymath}
|\J_{\eta_\varepsilon}(\hat{E}) - \J_{\eta}(\hat{E})| \leq \Lip_{ [0,T]\times\overline{\B}_{R}} ( | \nabla_{x} \hat{E} - \nabla_{x} E |^{2} ) W_{1}(\eta_{\varepsilon,t}\otimes\mathcal{L}^{1}_{|[0,T]}, \eta_{t}\otimes\mathcal{L}^{1}_{|[0,T]})\,.
\end{displaymath}
Since $\hat{E}$ and $E$ belong to $X_{M,R}$, we have that $\Lip_{ [0,T]\times\overline{\B}_{R}} ( | \nabla_{x} \hat{E} - \nabla_{x} E |^{2}) \leq 8M^{2}$, so that
\begin{displaymath}
|\J_{\eta_\varepsilon}(\hat{E}) - \J_{\eta}(\hat{E})| \leq 8M^{2} \, W_{1}(\eta_{\varepsilon,t}\otimes\mathcal{L}^{1}_{|[0,T]}, \eta_{t}\otimes\mathcal{L}^{1}_{|[0,T]})\,.
\end{displaymath}
Since $\eta_{\varepsilon,t}\otimes\mathcal{L}^{1}_{| [0,T]}$ converges narrowly to $\eta_{t}\otimes\mathcal{L}^{1}_{| [0,T]}$ and their supports are uniformly bounded, we also have that $W_{1}(\eta_{\varepsilon,t}\otimes\mathcal{L}^{1}_{|[0,T]}, \eta_{t}\otimes\mathcal{L}^{1}_{|[0,T]}) \to 0$ as $\varepsilon\to0$, and the proof is thus concluded.
\end{proof}

%

Let us now continue by describing the approximation provided by the upper limits of the diagram~\eqref{limit_dia}.

\begin{proposition}\label{p.6}
Let $\delta>0$ and $N\in\mathbb{N}$ be given. Let $(x_{0}^{1}, u^{1}_{0}) \ldots, (x_{0}^{N}, u_{0}^{N}) \in \supp(\eta_0)$ and let~$(x^{i}_{\varepsilon}, u^{i}_{\varepsilon})\in \Gamma^{p}_{T}\times\Lambda$, for $i=1,\ldots, N$, be the solutions of the system~\eqref{e.gradflow}. Assume that for $i=1,\ldots,N$~$x^{i}_{\varepsilon}$ converges to a quasi-static evolution~$x^{i}$ in $\Gamma^{p}_{T}$ for every $1\leq p<\infty$ and $u^{i}_{\varepsilon}$ converges to~$u^{i}$ in~$\Lambda$. Let us consider the empirical measure $\eta^{N}_{t} \coloneq \tfrac{1}{N}\sum_{i=1}^{N} \delta_{(x^{i}(t), u^{i}(t))}\in \mathcal{P}(\R^{d}\times\R^{d})$ and set
\begin{equation}\label{e.15}
 \J_{N}(\hat{E}) \coloneq \int_{0}^{T}\int_{\R^{d}\times\R^{d}} |\nabla_{x} \hat{E}(t,x) - u|^{2}\,\di\eta^{N}_{t}(x,u)\,\di t \qquad\text{for~$\hat{E}\in X_{M,R}$.}
\end{equation}
Then, there exist~$\varepsilon_{N}>0$ and two positive constants $\overline{C}_{1}, \overline{C}_{2}$ (independent of~$\delta$ and~$N$) such that for every $\hat{E}\in X_{M,R}$ and every $0<\varepsilon\leq \varepsilon_{N}$
\begin{equation}\label{e.14}
 \J_{N,\varepsilon}(\hat{E})\leq \overline{C}_{1} ( \J_{N}(\hat{E})  +\delta + \varepsilon) \qquad \text{and} \qquad  \J_{N}(\hat{E})\leq \overline{C}_{2}( \J_{N,\varepsilon}(\hat{E})  + \delta + \varepsilon)\,.
\end{equation}
\end{proposition}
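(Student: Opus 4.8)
The plan is to unfold both functionals as averages over the $N$ observed trajectories and to compare them trajectory-by-trajectory. Writing the empirical measures explicitly,
\[
\J_{N,\varepsilon}(\hat E)=\frac{1}{NT}\sum_{i=1}^{N}\int_{0}^{T}|\nabla_{x}\hat E(t,x^{i}_{\varepsilon}(t))-\nabla_{x}E(t,x^{i}_{\varepsilon}(t))|^{2}\,\di t,\qquad \J_{N}(\hat E)=\frac{1}{N}\sum_{i=1}^{N}\int_{0}^{T}|\nabla_{x}\hat E(t,x^{i}(t))-u^{i}(t)|^{2}\,\di t,
\]
so that everything reduces to estimating the integrand $\nabla_{x}\hat E(t,x^{i}_{\varepsilon}(t))-\nabla_{x}E(t,x^{i}_{\varepsilon}(t))$ against $\nabla_{x}\hat E(t,x^{i}(t))-u^{i}(t)$ for each fixed $i$. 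Recall that by the choice of $R$ all the curves $x^{i}_{\varepsilon}(t)$, and hence their pointwise limits $x^{i}(t)$, take values in $\overline{\mathrm B}_{R}$, and that every $\hat E\in X_{M,R}$ has $\Lip(\nabla_{x}\hat E)\leq\|\hat E\|_{W^{2,\infty}([0,T]\times\overline{\mathrm B}_{R})}\leq M$; this uniform second-derivative bound is what will make all the estimates independent of $\hat E$.

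\textbf{First inequality.} I would insert the telescoping decomposition
\[
\nabla_{x}\hat E(t,x^{i}_{\varepsilon}(t))-\nabla_{x}E(t,x^{i}_{\varepsilon}(t))=\big[\nabla_{x}\hat E(t,x^{i}_{\varepsilon}(t))-\nabla_{x}\hat E(t,x^{i}(t))\big]+\big[\nabla_{x}\hat E(t,x^{i}(t))-u^{i}(t)\big]+\big[u^{i}(t)-u^{i}_{\varepsilon}(t)\big]+\big[u^{i}_{\varepsilon}(t)-\nabla_{x}E(t,x^{i}_{\varepsilon}(t))\big],
\]
square it, apply $|a+b+c+d|^{2}\leq4(|a|^{2}+|b|^{2}+|c|^{2}+|d|^{2})$, integrate in $t$ and average over $i$. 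The second bracket reproduces $T\,\J_{N}(\hat E)$; the first bracket is bounded pointwise, uniformly in $\hat E$, by $M^{2}|x^{i}_{\varepsilon}(t)-x^{i}(t)|^{2}$, so its contribution is at most $\tfrac{M^{2}}{NT}\sum_{i}\|x^{i}_{\varepsilon}-x^{i}\|_{L^{2}}^{2}$ and tends to $0$ as $\varepsilon\to0$ since each of the finitely many $x^{i}_{\varepsilon}$ converges to $x^{i}$ in $L^{2}$; the third bracket is $\leq\sup_{[0,T]}|u^{i}-u^{i}_{\varepsilon}|^{2}\to0$ by the uniform convergence $u^{i}_{\varepsilon}\to u^{i}$; and the fourth bracket, using $\nabla_{x}\overline{E}(t,x,u)=\nabla_{x}E(t,x)-u$, integrates to at most $\overline C\varepsilon$ by Proposition~\ref{p.1}(c), with $\overline C$ independent of $i$ thanks to Remark~\ref{r.unif}. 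Choosing $\varepsilon_{N}>0$ small enough that for $\varepsilon\leq\varepsilon_{N}$ the first and third contributions together are $\leq\delta$ yields $\J_{N,\varepsilon}(\hat E)\leq\overline C_{1}(\J_{N}(\hat E)+\delta+\varepsilon)$ with $\overline C_{1}=4\max(1,\overline C)/T$.

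\textbf{Second inequality.} This is obtained symmetrically from
\[
\nabla_{x}\hat E(t,x^{i}(t))-u^{i}(t)=\big[\nabla_{x}\hat E(t,x^{i}(t))-\nabla_{x}\hat E(t,x^{i}_{\varepsilon}(t))\big]+\big[\nabla_{x}\hat E(t,x^{i}_{\varepsilon}(t))-\nabla_{x}E(t,x^{i}_{\varepsilon}(t))\big]+\big[\nabla_{x}E(t,x^{i}_{\varepsilon}(t))-u^{i}_{\varepsilon}(t)\big]+\big[u^{i}_{\varepsilon}(t)-u^{i}(t)\big],
\]
where now the second bracket reproduces $\J_{N,\varepsilon}(\hat E)$ and the remaining three are estimated exactly as above (the same $\varepsilon_{N}$ works), giving $\J_{N}(\hat E)\leq\overline C_{2}(\J_{N,\varepsilon}(\hat E)+\delta+\varepsilon)$ with $\overline C_{2}=4\max(1,\overline C)$.

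\textbf{Main obstacle.} There is no analytic difficulty in any single step; the point requiring care is the two-scale bookkeeping. The estimates have to be uniform over the infinite family $X_{M,R}$, which is exactly why the uniform bound $M$ on the Hessians is essential, while the error terms produced by $x^{i}_{\varepsilon}\to x^{i}$ in $L^{2}$ and $u^{i}_{\varepsilon}\to u^{i}$ uniformly come without a quantitative rate and can only be made smaller than $\delta$ after the finite collection of trajectories — hence $N$ — has been fixed. This is precisely the reason why $\varepsilon_{N}$ is permitted to depend on $N$ while the constants $\overline C_{1},\overline C_{2}$ are not.
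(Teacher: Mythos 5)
Your argument is correct and is essentially the paper's own proof: both compare the functionals trajectory-by-trajectory, use the uniform second-derivative bound $M$ on $X_{M,R}$ together with the fact that all curves take values in $\overline{\mathrm{B}}_R$, control the residual $\nabla_x E(t,x^i_\varepsilon)-u^i_\varepsilon$ by Proposition~\ref{p.1}(c) (uniformly in $i$ thanks to Remark~\ref{r.unif}), and only afterwards choose $\varepsilon_N$ from the convergence of the finitely many observed trajectories, so that $\varepsilon_N$ may depend on $N$ and $\delta$ while $\overline{C}_1,\overline{C}_2$ do not. The only deviations are cosmetic: the paper first splits off $u$ with a factor $2$ and then uses a difference-of-squares estimate, giving errors linear in the $L^1$/$L^\infty$ discrepancies, whereas you use a four-term quadratic splitting, and your displayed constants contain harmless factor-of-$T$ slips (after averaging, the second bracket contributes $\J_N(\hat E)$, not $T\,\J_N(\hat E)$, and symmetrically $T\,\J_{N,\varepsilon}(\hat E)$ in the reverse inequality), which is immaterial since the constants are allowed to depend on $T$.
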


\begin{remark}
We notice that the hypothesis on the strong convergence of~$x^{i}_{\varepsilon}$ to~$x_{i}$ is not too restrictive in view of the compactness result shown in Theorem~\ref{t.2}. In fact, as a modeling assumption, we prescribe here that the observed quasi-static evolutions~$x_{i}$ are limit of the singularly perturbed dynamic described by~\eqref{e.gradflow}.
\end{remark}

\begin{proof}[Proof of Proposition~\ref{p.6}]
In view of the convergence hypothesis on $x^{i}_{\varepsilon}$, there exists~$\varepsilon_{N}>0$ such that
\begin{displaymath}
\| x^{i}_{\varepsilon} - x^{i} \|_{L^{1}(0,T)}, \, \| u^{i}_{\varepsilon} - u^{i}\|_{L^{\infty}(0,T)}  <\delta \qquad\text{for every $i=1,\ldots, N$ and every $0< \varepsilon \leq \varepsilon_{N}$}.
\end{displaymath}
In view of~(c) of Proposition~\ref{p.1}, we have
\begin{displaymath}
\int_{0}^{T}\int_{\R^{d}\times\R^{d}}|\nabla_{x} E(t,x) - u|^{2} \di \eta_{\varepsilon,t}^{N}(x, u)\,\di t\leq \overline{C}\varepsilon\,,
\end{displaymath}
for some positive constant~$\overline{C}$ independent of~$N$ and~$\varepsilon$. Thus, for every~$\hat{E}\in X_{M,R}$ we have that
\begin{align}
\J_{N,\varepsilon} & (\hat{E})  \leq \frac{2}{T}\int_{0}^{T}\int_{\R^{d}\times\R^{d}}\!\!\!\!\! |\nabla_{x} \hat{E}(t,x) - u|^{2}\,\di\eta^{N}_{\varepsilon,t}(x,u)\,\di t +\frac{2}{T}\int_{0}^{T}\int_{\R^{d}\times\R^{d}} \!\!\!\!\! |\nabla_{x} E(t,x) - u|^{2}\,\di\eta^{N}_{\varepsilon,t}(x,u)\,\di t \nonumber \\
& \leq \frac{2}{NT}\sum_{i=1}^{N}\int_{0}^{T} |\nabla_{x} \hat{E}(t,x^{i}_{\varepsilon}(t)) - u^{i}_{\varepsilon}(t)|^{2}\,\di t +\frac{2\overline{C}}{T}\varepsilon \nonumber \\
& = 2\J_{N}(\hat{E})+ \frac{2}{NT}\sum_{i=1}^{N}\int_{0}^{T}\Big( |\nabla_{x} \hat{E}(t,x^{i}_{\varepsilon}(t)) - u^{i}_{\varepsilon}(t)|^{2} - |\nabla_{x} \hat{E}(t,x^{i}(t)) - u^{i}(t)|^{2}\Big)\,\di t +\frac{2\overline{C}}{T}\varepsilon \label{e.16} \\
& = 2\J_{N}(\hat{E})+ \frac{2}{NT}\sum_{i=1}^{N}\int_{0}^{T}\big ( \nabla_{x} \hat{E}(t,x^{i}_{\varepsilon}(t)) - \nabla_{x} \hat{E}(t,x^{i}(t)) - u^{i}_{\varepsilon}(t) + u^{i}(t) \big ) \nonumber \\
&\qquad {\,\cdot\,} \big ( \nabla_{x} \hat{E}(t,x^{i}_{\varepsilon}(t)) + \nabla_{x} \hat{E}(t,x^{i}(t)) - u^{i}_{\varepsilon}(t) - 
u^i(t) \big)\,\di t +\frac{2\overline{C}}{T}\varepsilon\,, \nonumber
\end{align}
Thanks to our choice of~$R$, we have that~$\|x^{i}_{\varepsilon}\|_{\infty}$, $\|x^{i}\|_{\infty}$, $\|u^{i}\|_{\infty} \leq R$ uniformly in~$\varepsilon$. Hence, for every $t\in[0,T]$ and every $i=1,\ldots, N$,
\begin{eqnarray*}
&&  | \nabla_{x} \hat{E}(t,x^{i}_{\varepsilon}(t)) - \nabla_{x} \hat{E}(t,x^{i}(t))|\leq  \Lip_{[0,T]\times\overline{\B}_{R}}(\nabla_{x} \hat{E}) |x^{i}_{\varepsilon}(t) - x^{i}(t)|\,,\\[1mm]
&&  |\nabla_{x} \hat{E}(t,x^{i}_{\varepsilon}(t)) + \nabla_{x} \hat{E}(t,x^{i}(t)) - u^{i}_{\varepsilon}(t) - u^{i}(t)| \leq 2(M+R)\,.
\end{eqnarray*}

In view of the previous estimates, assuming that $0< \varepsilon\leq \varepsilon_{N}$ we continue in~\eqref{e.16} with
\begin{displaymath}
\begin{split}
\J_{N,\varepsilon}(\hat{E}) & \leq 2\J_{N}(\hat{E})+ \frac{4(M + R)}{NT}\sum_{i=1}^{N}\int_{0}^{T} \Lip_{[0,T]\times\overline{\B}_{R}}(\nabla_{x} \hat{E}) |x^{i}_{\varepsilon}(t) - x^{i}(t)|\,\di t +2\delta(M+R) + \frac{2\overline{C}}{T}\varepsilon\\
& \leq  2\J_{N}(\hat{E}) + \frac{4(M + R)M}{T}\,\delta + 2\delta(M+R) + \frac{2\overline{C}}{T}\varepsilon \leq \overline{C}_{1}(\J_{N}(\hat{E}) + \delta + \varepsilon)\,.
\end{split}
\end{displaymath}
In a similar way we can show the second inequality in~\eqref{e.14}.
\end{proof}

In the next proposition we show a uniform estimate of the distance between~$\J_{N, \varepsilon}$ and~$\J_{\eta}$, which explains the central diagonal $\Gamma$-limit of
\eqref{limit_dia}. 

\begin{proposition}\label{p.7}
Let $\{\eta_{t}: t\in[0,T]\}$ be a Borel family in~$\mathcal{P}(\R^{d}\times\R^{d})$ with uniformly compact support and such that~\eqref{e.concentration} is satisfied. Let $\eta \coloneq \eta_{t}\otimes\mathcal{L}^{1}_{|[0,T]}$,~$\bar{\eta}\in \mathcal{M}^{+}_{b}([0,T]\times\R^{d})$ as in~\eqref{e.bareta}, and $\hat{E}_{1}, \hat{E}_{2}\in X_{M,R}$. Assume that $K\coloneq \supp(\bar\eta)$ is contained in $[0,T]\times \overline{\mathrm{B}}_{R}$. Then,
\begin{equation}\label{e.17}
| \J_{N, \varepsilon}(\hat{E}_{1}) - \J_{\eta}(\hat{E}_{2})| \leq  \frac{8M^{2}}{T}\,  W_{1}( \eta^{N}_{\varepsilon, t} \otimes \mathcal{L}^{1}_{|[0,T]}, \eta_{t}\otimes\mathcal{L}^{1}_{|[0,T]}) + 4M \|\hat{E}_{1} - \hat{E}_{2}\|_{W^{1,\infty}(K)} \,.
\end{equation}
\end{proposition}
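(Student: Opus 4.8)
The plan is to split the difference $\J_{N,\varepsilon}(\hat E_1) - \J_\eta(\hat E_2)$ through the intermediate quantity $\J_\eta(\hat E_1)$, and estimate the two resulting pieces separately. First I would write
\[
|\J_{N,\varepsilon}(\hat E_1) - \J_\eta(\hat E_2)| \leq |\J_{N,\varepsilon}(\hat E_1) - \J_\eta(\hat E_1)| + |\J_\eta(\hat E_1) - \J_\eta(\hat E_2)|,
\]
and handle the second term first. Using the reformulation of $\J_\eta$ via~\eqref{e.concentration} as $\frac1T\int_{[0,T]\times\R^d} |\nabla_x\hat E(t,x) - \nabla_x E(t,x)|^2\,\di\bar\eta$, the difference $\J_\eta(\hat E_1) - \J_\eta(\hat E_2)$ is an integral against the probability-type measure $\frac1T\bar\eta$ of $|\nabla_x\hat E_1 - \nabla_x E|^2 - |\nabla_x\hat E_2 - \nabla_x E|^2$. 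Factoring this as a difference of squares, one factor is $\nabla_x\hat E_1 - \nabla_x\hat E_2$, bounded in sup-norm on $K$ by $\|\hat E_1 - \hat E_2\|_{W^{1,\infty}(K)}$, and the other factor $\nabla_x\hat E_1 + \nabla_x\hat E_2 - 2\nabla_x E$ is bounded by $4M$ since all three energies lie in $X_{M,R}$ and $K\subseteq[0,T]\times\overline{\mathrm B}_R$. Since $\bar\eta$ has total mass $T$ (being the spatial marginal of $\eta_t\otimes\mathcal L^1_{|[0,T]}$), this contributes exactly $4M\|\hat E_1 - \hat E_2\|_{W^{1,\infty}(K)}$.

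For the first term, $\J_{N,\varepsilon}(\hat E_1)$ and $\J_\eta(\hat E_1)$ are both integrals of the \emph{same} function $(t,x,u)\mapsto |\nabla_x\hat E_1(t,x) - \nabla_x E(t,x)|^2$ (using again~\eqref{e.concentration} to rewrite $\J_\eta(\hat E_1)$ with $u$ replaced by $\nabla_x E(t,x)$ up to an $\eta_t$-null set), against $\frac1T\eta^N_{\varepsilon,t}\otimes\mathcal L^1_{|[0,T]}$ versus $\frac1T\eta_t\otimes\mathcal L^1_{|[0,T]}$ respectively. Since this integrand, extended trivially in the $u$ variable, is Lipschitz on $[0,T]\times\overline{\mathrm B}_R\times\overline{\mathrm B}_R$ with Lipschitz constant $\Lip_{[0,T]\times\overline{\mathrm B}_R}(|\nabla_x\hat E_1 - \nabla_x E|^2)\leq 8M^2$ — exactly as in the proof of Proposition~\ref{p.4} — the Kantorovich--Rubinstein duality gives
\[
|\J_{N,\varepsilon}(\hat E_1) - \J_\eta(\hat E_1)| \leq \frac{8M^2}{T}\, W_1(\eta^N_{\varepsilon,t}\otimes\mathcal L^1_{|[0,T]}, \eta_t\otimes\mathcal L^1_{|[0,T]}),
\]
using that both measures are supported in $[0,T]\times\overline{\mathrm B}_R\times\overline{\mathrm B}_R$ (our choice of $R$ guarantees $\supp\eta^N_{\varepsilon,t}\subseteq\overline{\mathrm B}_R\times\overline{\mathrm B}_R$). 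Adding the two bounds yields~\eqref{e.17}.

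I do not anticipate a genuine obstacle here; the proposition is essentially a bookkeeping lemma combining the Lipschitz/Wasserstein estimate already used in Proposition~\ref{p.4} with an elementary difference-of-squares bound. The only point requiring mild care is the legitimacy of replacing $u$ by $\nabla_x E(t,x)$ inside the integrals defining $\J_\eta$, which is exactly the content of~\eqref{e.concentration} (the remark following Proposition~\ref{p.3.1}): this holds $\eta_t$-a.e.\ for a.e.\ $t$, hence does not affect the value of either integral. One should also note that the integrand's Lipschitz constant is taken over $[0,T]\times\overline{\mathrm B}_R$ in $(t,x)$ and the function is constant in $u$, so the $2d+1$-dimensional Lipschitz constant needed for the Wasserstein duality is still bounded by $8M^2$.
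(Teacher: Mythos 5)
Your proposal is correct and follows essentially the same route as the paper's proof: the same triangle-inequality split through $\J_{\eta}(\hat E_{1})$, the same Kantorovich--Rubinstein/Lipschitz bound with constant $8M^{2}$ for the Wasserstein term, and the same difference-of-squares estimate with $\bar\eta(K)\leq T$ giving $4M\|\hat E_{1}-\hat E_{2}\|_{W^{1,\infty}(K)}$. Your extra remarks (the integrand being constant in $u$, and the use of~\eqref{e.concentration} to rewrite $\J_{\eta}$) are exactly the points the paper uses tacitly, so nothing is missing.
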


\begin{proof}
For $\hat{E}_{1}, \hat{E}_{2} \in X_{M,R}$ we have
\begin{equation}\label{e.19}
| \J_{N, \varepsilon}(\hat{E}_{1}) - \J_{\eta}(\hat{E}_{2})| \leq  | \J_{N, \varepsilon}(\hat{E}_{1}) -  \J_{\eta}(\hat{E}_{1}) | + | \J_{\eta}(\hat{E}_{1}) - \J_{\eta}(\hat{E}_{2})| =: I_{1}+ I_{2}\,.
\end{equation}

 For~$I_{1}$, we have that
 \begin{equation}\label{e.20}
\begin{split}
 I_{1}& = \Big| \frac{1}{T}\int_{0}^{T} \int_{\R^{d}\times\R^{d}} |\nabla_{x} \hat{E}_{1}(t,x) - \nabla_{x} E(t,x)|^{2}\,\di (\eta^{N}_{\varepsilon,t} - \eta_{t})(x,u)\, \di t \Big| \\
& \leq \frac{1}{T} \, \Lip_{[0,T]\times \overline{\B}_{R}} (|\nabla_{x}\hat{E}_{1} - \nabla_{x} E|^{2})\, W_{1}( \eta^{N}_{\varepsilon, t} \otimes \mathcal{L}^{1}_{|[0,T]}, \eta_{t}\otimes\mathcal{L}^{1}_{|[0,T]}) \\
&\leq   \frac{8M^{2}}{T}\,  W_{1}( \eta^{N}_{\varepsilon, t} \otimes \mathcal{L}^{1}_{|[0,T]}, \eta_{t}\otimes\mathcal{L}^{1}_{|[0,T]})\,.
 \end{split}
 \end{equation}
 
 As for~$I_{2}$ we write
 \begin{equation}\label{e.22}
 \begin{split}
 I_{2} & =  \Big| \frac{1}{T}\int_{0}^{T} \int_{\R^{d}\times\R^{d}} |\nabla_{x} \hat{E}_{1}(t,x) - \nabla_{x} E(t,x)|^{2} - |\nabla_{x} \hat{E}_{2}(t,x) - \nabla_{x} E(t,x)|^{2}\,\di \eta_{t}(x,u)\, \di t \Big|\\
 & \leq  \frac{1}{T}\int_{K} |\nabla_{x} \hat{E}_{1}(t,x) - \nabla_{x} \hat E_{2}(t,x)| \Big( |\nabla_{x} \hat{E}_{1}(t,x) - \nabla_{x} E(t,x)| \\
 &\qquad \qquad+  |\nabla_{x} \hat{E}_{2}(t,x) - \nabla_{x} E(t,x)| \Big) \,\di \bar \eta(t, x) \leq 4M \|\hat{E}_{1} - \hat{E}_{2}\|_{W^{1,\infty}(K)} \,,
 \end{split}
 \end{equation}
 where we have used the inequality~$\bar{\eta}(K)\leq T$.
 \end{proof}

\begin{corollary}\label{c.1}
Let $(\varepsilon_{N})_{N}$ be a null sequence. Assume that the sequence of measures~$(\eta^{N}_{\varepsilon_{N}, t} \otimes \mathcal{L}^{1}_{| [0,T]})_N$ converges narrowly (and hence with respect to~$W_{1}$) to~$\eta \coloneq \eta_{t}\otimes \mathcal{L}^{1}_{| [0,T]}$, for a Borel family~$\{\eta_{t}:t\in[0,T]\} \subseteq \mathcal{P}(\R^{d}\times\R^{d})$. Let~$(V_{N})_N$ be a sequence of closed subspaces of~$X_{M,R}$ having the uniform approximation property with respect to~$\eta$. Then, the sequence of functionals~$(\J_{N,\varepsilon_{N}})_N$ defined on~$V_{N}$ $\Gamma$-converges to~$\J_{\eta}$ in~$X_{M,R}$ with respect to~the topology of~$W^{1,\infty}(\supp(\bar{\eta}))$.
\end{corollary}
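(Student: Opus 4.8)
The statement to prove is a $\Gamma$-convergence result for $(\J_{N,\varepsilon_N})_N$ to $\J_\eta$ with respect to the $W^{1,\infty}(\supp(\bar\eta))$ topology, where each $\J_{N,\varepsilon_N}$ is defined on $V_N$. The plan is to verify the two standard conditions defining $\Gamma$-convergence: the $\liminf$ inequality and the existence of a recovery sequence. The key technical input is Proposition~\ref{p.7}, which gives the uniform estimate
$$
|\J_{N,\varepsilon}(\hat E_1) - \J_\eta(\hat E_2)| \leq \frac{8M^2}{T}\, W_1(\eta^N_{\varepsilon,t}\otimes\mathcal{L}^1_{|[0,T]},\eta_t\otimes\mathcal{L}^1_{|[0,T]}) + 4M\|\hat E_1 - \hat E_2\|_{W^{1,\infty}(K)}\,,
$$
where $K = \supp(\bar\eta)$. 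The hypotheses provide that $W_1(\eta^N_{\varepsilon_N,t}\otimes\mathcal{L}^1_{|[0,T]},\eta)\to 0$ as $N\to\infty$, so the first term on the right-hand side vanishes along the sequence. The whole argument then reduces to combining this estimate with the uniform approximation property of $(V_N)_N$.

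\noindent\textbf{$\liminf$ inequality.} First I would take any sequence $\hat E_N \in V_N$ with $\hat E_N \to \hat E$ in $W^{1,\infty}(\supp(\bar\eta))$ (where necessarily $\hat E \in X_{M,R}$, since $X_{M,R}$ is closed under $W^{1,\infty}(K)$-convergence on $K\subseteq[0,T]\times\overline{\mathrm B}_R$ and the $V_N$ lie in $X_{M,R}$). Applying Proposition~\ref{p.7} with $\hat E_1 = \hat E_N$, $\hat E_2 = \hat E$, and $\varepsilon = \varepsilon_N$ gives
$$
|\J_{N,\varepsilon_N}(\hat E_N) - \J_\eta(\hat E)| \leq \frac{8M^2}{T}\, W_1(\eta^N_{\varepsilon_N,t}\otimes\mathcal{L}^1_{|[0,T]},\eta) + 4M\|\hat E_N - \hat E\|_{W^{1,\infty}(K)} \to 0\,,
$$
so in fact $\J_{N,\varepsilon_N}(\hat E_N) \to \J_\eta(\hat E)$. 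This is stronger than the $\liminf$ inequality $\liminf_N \J_{N,\varepsilon_N}(\hat E_N) \geq \J_\eta(\hat E)$, so the first condition holds.

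\noindent\textbf{Recovery sequence.} For the second condition, fix $\hat E \in X_{M,R}$. By the uniform approximation property of $(V_N)_N$ with respect to $\eta$ (Definition~\ref{d.UA}), there exists a sequence $\hat E_N \in V_N$ with $\hat E_N \to \hat E$ in $W^{1,\infty}(\supp(\bar\eta))$. The same estimate from Proposition~\ref{p.7} as above then yields $\J_{N,\varepsilon_N}(\hat E_N) \to \J_\eta(\hat E)$, which in particular gives $\limsup_N \J_{N,\varepsilon_N}(\hat E_N) \leq \J_\eta(\hat E)$. Hence $(\hat E_N)_N$ is a recovery sequence, and the proof is complete.

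\noindent\textbf{Main obstacle.} There is no serious obstacle here: the heavy lifting—both the narrow convergence $\eta^N_{\varepsilon_N,t}\otimes\mathcal{L}^1 \to \eta$ and the continuity-type estimate of Proposition~\ref{p.7}—has already been done. The only points requiring a small amount of care are (i) checking that the limit $\hat E$ of a recovery-type sequence indeed lies in $X_{M,R}$ so that $\J_\eta(\hat E)$ is well-defined (this follows from lower semicontinuity of the $W^{2,\infty}(K)$ norm under $W^{1,\infty}(K)$-convergence, or more simply from the fact that the $V_N$ are contained in $X_{M,R}$ together with weak-$*$ compactness in $W^{2,\infty}$), and (ii) noting that because we obtain full convergence $\J_{N,\varepsilon_N}(\hat E_N)\to\J_\eta(\hat E)$ along any sequence converging in $W^{1,\infty}(\supp(\bar\eta))$, both $\Gamma$-liminf and $\Gamma$-limsup conditions are satisfied simultaneously, which is exactly $\Gamma$-convergence in this metrizable setting.
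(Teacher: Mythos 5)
Your proof is correct and follows essentially the same route as the paper: both the $\Gamma$-liminf and $\Gamma$-limsup inequalities are deduced from the uniform estimate of Proposition~\ref{p.7} (whose first term vanishes by the assumed $W_{1}$-convergence of $\eta^{N}_{\varepsilon_{N},t}\otimes\mathcal{L}^{1}_{|[0,T]}$ to $\eta$), with the recovery sequence supplied by the uniform approximation property of Definition~\ref{d.UA}. Your added remarks on why the limit competitor stays in $X_{M,R}$ and why full convergence of the values gives both inequalities at once are fine elaborations of the same argument.
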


\begin{proof} 
The $\Gamma$-liminf inequality follows directly from Proposition~\ref{p.7}. In a similar way, the $\Gamma$-limsup inequality is a consequence of Proposition~\ref{p.7} and of Definition~\ref{d.UA}, which ensures that for every~$\hat{E}\in X_{M,R}$ there exists a sequence $\hat{E}_{N}\in V_{N}$ such that $\hat{E}_{N}\to \hat{E}$ in $W^{1,\infty}(\supp( \bar \eta))$, where~$\bar\eta$ is as in~\eqref{e.bareta}.
\end{proof}

In the next two propositions we discuss the convergence of minimizers of the functionals~$\J_{N, \varepsilon}$,~$\J_{N}$ to minimizers of~$\J_{\eta}$.

\begin{proposition}\label{p.8}
Let $N\in\mathbb{N}$, let~$V_{N}$ be a closed subspace of~$X_{M,R}$, and let~$\eta^{N}_{\varepsilon, t},\, \eta^{N}_{t}\in\sP(\R^{d}\times\R^{d})$ be as in Proposition~\ref{p.6}. Then, the minimum problems
\begin{equation}\label{e.25}
\min_{\hat{E}\in V_{N}}\, \J_{N, \varepsilon}(\hat{E})\qquad\text{and}\qquad \min_{\hat{E}\in V_{N}}\, \J_{N}(\hat{E})
\end{equation}
admit a solution.
\end{proposition}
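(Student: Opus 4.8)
The statement is a routine application of the direct method of the calculus of variations, the only structural ingredient being the compactness of $V_N$. First I would observe that both $\J_{N,\varepsilon}$ and $\J_{N}$ take values in $[0,+\infty)$, so that $\inf_{V_N}\J_{N,\varepsilon}$ and $\inf_{V_N}\J_{N}$ are finite; it then suffices, for either functional, to fix a minimizing sequence $(\hat E_{k})_{k}\subseteq V_{N}$, to extract a convergent subsequence, and to verify lower semicontinuity along it.

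For the compactness, since $V_{N}$ is a closed subset of the ball $X_{M,R}$ it is bounded, and being contained in a finite-dimensional subspace of $W^{2,\infty}_{\rm{loc}}$ (as in the construction of $V_{N}$), it is compact by the Heine--Borel theorem; as all norms on a finite-dimensional space are equivalent, the choice of topology is immaterial and we may regard the convergence as taking place, say, in $W^{2,\infty}([0,T]\times\overline{\B}_{R})$. Thus, up to a subsequence, $\hat E_{k}\to\hat E_{*}\in V_{N}$, and in particular $\nabla_{x}\hat E_{k}\to\nabla_{x}\hat E_{*}$ uniformly on $[0,T]\times\overline{\B}_{R}$.

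It remains to check continuity of the two functionals along this convergence. Recalling that $\eta^{N}_{t}=\tfrac1N\sum_{i=1}^{N}\delta_{(x^{i}(t),u^{i}(t))}$ and $\eta^{N}_{\varepsilon,t}=\tfrac1N\sum_{i=1}^{N}\delta_{(x^{i}_{\varepsilon}(t),u^{i}_{\varepsilon}(t))}$, one rewrites
\[
\J_{N}(\hat E)=\frac1{N}\sum_{i=1}^{N}\int_{0}^{T}|\nabla_{x}\hat E(t,x^{i}(t))-u^{i}(t)|^{2}\,\di t,\qquad
\J_{N,\varepsilon}(\hat E)=\frac1{NT}\sum_{i=1}^{N}\int_{0}^{T}|\nabla_{x}\hat E(t,x^{i}_{\varepsilon}(t))-\nabla_{x}E(t,x^{i}_{\varepsilon}(t))|^{2}\,\di t.
\]
By our choice of $R$ (Proposition~\ref{p.1} and Remark~\ref{r.unif}) the trajectories $x^{i}(t)$, $x^{i}_{\varepsilon}(t)$ and the controls $u^{i}(t)$ lie in $\overline{\B}_{R}$ for every $t\in[0,T]$, and $\nabla_{x}E$ is continuous, hence bounded, on $[0,T]\times\overline{\B}_{R}$; the integrands are therefore uniformly bounded in $k$, and the uniform convergence $\nabla_{x}\hat E_{k}\to\nabla_{x}\hat E_{*}$ on $[0,T]\times\overline{\B}_{R}$ passes to the limit under the integral (by dominated convergence, or simply because the integrands converge uniformly on $[0,T]$). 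Hence $\J_{N}(\hat E_{k})\to\J_{N}(\hat E_{*})$ and $\J_{N,\varepsilon}(\hat E_{k})\to\J_{N,\varepsilon}(\hat E_{*})$, so that $\hat E_{*}$ realizes the minimum in each case.

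I do not foresee a genuine obstacle: once compactness of $V_{N}$ and continuity of the functionals are in place, Weierstrass' theorem closes the argument. The only point deserving a little care is the continuity statement, which hinges on the uniform convergence of the gradients $\nabla_{x}\hat E_{k}$ over the fixed compact set $[0,T]\times\overline{\B}_{R}$ --- and this is precisely where the finite-dimensionality of $V_{N}$ (equivalence of norms) and the uniform a priori bound $R$ on the observed trajectories are used.
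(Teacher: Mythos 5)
Your proposal is correct and follows the same overall strategy as the paper: direct method with a minimizing sequence, compactness, and continuity of the functionals via convergence of the gradients on $[0,T]\times\overline{\B}_{R}$ together with the uniform bound $R$ on the observed trajectories. The only real difference is where the compactness comes from. You invoke the finite dimensionality of $V_{N}$ (Heine--Borel plus equivalence of norms), which is legitimate given the paper's construction of $V_{N}$ as a ball of a finite-dimensional subspace, and it even buys you the stronger $W^{2,\infty}$ convergence. The paper instead uses only the compactness of the whole class $X_{M,R}$: a bounded set in $W^{2,\infty}([0,T]\times\overline{\B}_{R})$ is precompact in $W^{1,\infty}([0,T]\times\overline{\B}_{R})$ (Ascoli--Arzel\`a), and closedness of $V_{N}$ then keeps the limit in $V_{N}$. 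This route matches the literal hypothesis of the proposition (``$V_{N}$ a closed subset of $X_{M,R}$'') and would survive even if $V_{N}$ were infinite dimensional, while $W^{1,\infty}$ convergence of a subsequence is already enough to pass to the limit in $\J_{N,\varepsilon}$ and $\J_{N}$ by dominated convergence, exactly as in your continuity step. One minor caution about your version: the $W^{2,\infty}$ bound defining $X_{M,R}$ only controls the restriction to $[0,T]\times\overline{\B}_{R}$, so ``bounded, hence compact by Heine--Borel'' should be understood for that restriction (or for a norm on $V_{N}$ equivalent to it); since both functionals and the membership constraint only see $[0,T]\times\overline{\B}_{R}$, this does not affect the conclusion.
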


\begin{proof}
Let $\hat{E}_{m} \in V_{N}$ be a minimizing sequence for~$\J_{N,\varepsilon}$. By compactness of~$X_{M,R}$, the sequence $(\hat{E}_{m})_m$ converges, up to subsequence, to some function~$\hat{E}\in V_{N}$ in $W^{1,\infty}([0,T]\times\overline{\B}_{R})$. Hence, by dominated convergence we deduce that $\J_{N,\varepsilon}(\hat{E}_{m}) \to \J_{N,\varepsilon}(\hat{E})$. In a similar way we get the existence of minimizers for~$\J_{N}$ on~$V_{N}$.
\end{proof}

\begin{proposition}\label{p.9}
Let~$\delta>0$ and let~$(\varepsilon_{N})_N$ be a null sequence. Assume that the hypotheses of Corollary~\ref{c.1} are satisfied. Then, every sequence~$(\hat{E}_{N})_N$  of minimizers of~$\J_{N,\varepsilon_{N}}$ in~$V_N$ admits a subsequence (not relabeled) converging to~$\hat{E}\in X_{M,R}$ in~$W^{1,\infty}([0,T]\times\overline{\B}_{R})$. In particular,~$\hat{E}$ is a minimizer of the functional~$\J_{\eta}$.
\end{proposition}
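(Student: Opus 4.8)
The plan is to deduce the statement from the standard compactness--plus--$\Gamma$-convergence mechanism (the ``fundamental theorem of $\Gamma$-convergence''), using Corollary~\ref{c.1} for the variational convergence and the compactness of $X_{M,R}$ for the minimizing sequence.

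\emph{Step 1 (compactness of the minimizers).} Since $\hat{E}_{N}\in V_{N}\subseteq X_{M,R}$, we have $\|\hat{E}_{N}\|_{W^{2,\infty}([0,T]\times\overline{\B}_{R})}\leq M$ for every $N$. Hence the functions $\hat{E}_{N}$ are uniformly bounded and equi-Lipschitz, and the same is true of their gradients $\nabla\hat{E}_{N}$ (with Lipschitz constant controlled by $M$). By the Arzel\`a--Ascoli theorem, along a not relabeled subsequence $\hat{E}_{N}\to\hat{E}$ in $W^{1,\infty}([0,T]\times\overline{\B}_{R})$ for some $\hat{E}\in C^{1}([0,T]\times\overline{\B}_{R})$. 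Since $D^{2}\hat{E}_{N}$ is bounded in $L^{\infty}$, up to a further subsequence it converges weakly-$*$ to some $H$ with $\|H\|_{L^{\infty}}\leq M$; as $\hat{E}_{N}\to\hat{E}$ in $W^{1,\infty}$ implies $D^{2}\hat{E}_{N}\to D^{2}\hat{E}$ in the sense of distributions, we get $H=D^{2}\hat{E}$, so (up to an arbitrary $W^{2,\infty}_{\rm{loc}}$ extension outside $\overline{\B}_{R}$, which does not affect any of the functionals involved) $\hat{E}\in X_{M,R}$. Finally, because $\supp(\bar\eta)\subseteq[0,T]\times\overline{\B}_{R}$, convergence in $W^{1,\infty}([0,T]\times\overline{\B}_{R})$ entails convergence in $W^{1,\infty}(\supp(\bar\eta))$, which is exactly the topology in which the $\Gamma$-convergence of Corollary~\ref{c.1} is stated.

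\emph{Step 2 ($\Gamma$-convergence argument).} By Proposition~\ref{p.8} each $\hat{E}_{N}$ realizes $\min_{V_{N}}\J_{N,\varepsilon_{N}}$. Applying the $\Gamma$-$\liminf$ inequality of Corollary~\ref{c.1} to the sequence $\hat{E}_{N}\to\hat{E}$,
\begin{displaymath}
\J_{\eta}(\hat{E})\leq\liminf_{N}\J_{N,\varepsilon_{N}}(\hat{E}_{N})=\liminf_{N}\,\min_{V_{N}}\J_{N,\varepsilon_{N}}\,.
\end{displaymath}
On the other hand, fix any competitor $\hat{F}\in X_{M,R}$. By the $\Gamma$-$\limsup$ inequality (equivalently, by Proposition~\ref{p.7} combined with the uniform approximation property, Definition~\ref{d.UA}) there is a recovery sequence $\hat{F}_{N}\in V_{N}$ with $\hat{F}_{N}\to\hat{F}$ and $\limsup_{N}\J_{N,\varepsilon_{N}}(\hat{F}_{N})\leq\J_{\eta}(\hat{F})$; since $\hat{E}_{N}$ minimizes $\J_{N,\varepsilon_{N}}$ on $V_{N}$ and $\hat{F}_{N}\in V_{N}$,
\begin{displaymath}
\limsup_{N}\,\min_{V_{N}}\J_{N,\varepsilon_{N}}\leq\limsup_{N}\J_{N,\varepsilon_{N}}(\hat{F}_{N})\leq\J_{\eta}(\hat{F})\,.
\end{displaymath}
Chaining the two displays gives $\J_{\eta}(\hat{E})\leq\J_{\eta}(\hat{F})$ for all $\hat{F}\in X_{M,R}$, so $\hat{E}$ is a minimizer of $\J_{\eta}$ over $X_{M,R}$; as a byproduct, $\min_{V_{N}}\J_{N,\varepsilon_{N}}\to\min_{X_{M,R}}\J_{\eta}$.

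\emph{Main obstacle.} There is no serious difficulty beyond the routine verification above; the single point requiring attention is the mismatch of topologies: compactness of the minimizing sequence is naturally obtained on the whole box $[0,T]\times\overline{\B}_{R}$, whereas the $\Gamma$-convergence in Corollary~\ref{c.1} is formulated only with respect to $W^{1,\infty}(\supp(\bar\eta))$. This is harmless precisely because $\supp(\bar\eta)\subseteq[0,T]\times\overline{\B}_{R}$, so the box topology is finer and the subsequence produced by Arzel\`a--Ascoli is admissible for the $\Gamma$-$\liminf$ inequality; one must also remember to check, as in Step~1, that the limit still belongs to $X_{M,R}$, which follows from weak-$*$ lower semicontinuity of the $L^{\infty}$ norm of the Hessians.
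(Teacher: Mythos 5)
Your proposal is correct and follows essentially the same route as the paper: compactness of the ball $X_{M,R}$ (weak$^*$ in $W^{2,\infty}$, hence strong $W^{1,\infty}$ convergence on $[0,T]\times\overline{\B}_{R}$ of a subsequence), then Proposition~\ref{p.7}/Corollary~\ref{c.1} to pass to the limit in the energies, and minimality of $\hat{E}_{N}$ combined with the uniform approximation property of $(V_{N})_N$ to obtain recovery sequences and conclude that the limit minimizes $\J_{\eta}$. Your write-up merely makes explicit (Arzel\`a--Ascoli, lower semicontinuity of the Hessian bound, the topology comparison with $W^{1,\infty}(\supp(\bar\eta))$) what the paper's shorter proof leaves implicit.
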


\begin{proof} 
By definition of~$X_{M,R}$ a sequence of minimizers~$\hat{E}_{N}$ of~$\J_{N,\varepsilon_{N}}$ is weak$^*$-compact in~$X_{M,R}$, and we denote with~$\hat{E}$ the weak$^*$-limit of a suitable subsequence of~$\hat{E}_{N}$. In view of Proposition~\ref{p.7}, $\J_{N,\varepsilon_{N}} (\hat{E}_{N})$ converges to~$\J_{\eta}(\hat{E})$. From the minimality of~$\hat{E}_{N}$ and the uniform approximation property satisfied by the subspaces~$V_{N}$, we can easily prove that~$\hat{E}$ is a minimizer of~$\J_{\eta}$ in~$X_{M,R}$.
\end{proof}

\begin{theorem}\label{p.10}
Let $\delta>0$,~$\varepsilon_{N}>0$, and~$\eta^{N}_{\varepsilon_{N},t}, \, \eta^{N}_{t}\in\mathcal{P}(\R^{d}\times\R^{d})$ be as in Proposition~\ref{p.6}. Assume that there exists a Borel family~$\{\eta^{\delta}_{t}:t\in[0,T]\}\subseteq\mathcal{P}(\R^{d}\times\R^{d})$ such that $\eta^{N}_{\varepsilon_{N},t}\otimes\mathcal{L}^{1}_{|[0,T]}$ converges narrowly to~$\eta^{\delta} \coloneq\eta^{\delta}_{t}\otimes\mathcal{L}^{1}_{|[0,T]}$. Suppose that~$(V_{N})_{N\in\mathbb{N}}$ satisfies the uniform approximation property with respect to~$\eta^{\delta}$ and denote with~$\hat{E}_{N}\in V_{N}$ a solution of
\begin{equation}\label{e.22.5}
\min_{\hat{E}\in V_{N}}\, \J_{N}(\hat{E})\,.
\end{equation}
Then,~$(\hat{E}_{N})_N$ converges, up to a subsequence, to some~$\hat{E}_{\delta}\in X_{M,R}$ satisfying
\begin{equation}\label{e.23}
\J_{\eta^{\delta}}(\hat{E}_{\delta}) \leq C \delta
\end{equation}
for a positive constant~$C$ independent of~$\delta$.

Moreover, 
 there exist a further Borel family~$\{\eta_{t}: t\in [0,T]\} \subseteq \mathcal{P}(\R^{d}\times\R^{d})$ and a further~$\hat{E}\in X_{M,R}$ such that, up to a subsequence,~$\eta^{\delta}$ converges narrowly to $\eta \coloneq \eta_{t}\otimes \mathcal{L}^{1}_{|[0,T]}$,~$\hat{E}_{\delta}\to \hat{E}$ in~$W^{1,\infty}([0,T]\times \overline{B}_{R})$ for $\delta \to 0$, and 
\begin{equation}\label{e.23.5}
\J_{\eta}(\hat{E}) =0\,.
\end{equation}
\end{theorem}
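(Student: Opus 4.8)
The plan is to prove Theorem~\ref{p.10} in two stages, corresponding to the two limits $N \to \infty$ (with $\varepsilon_N \to 0$) and then $\delta \to 0$. For the first stage I would invoke Corollary~\ref{c.1}: under the stated hypotheses, the functionals $\J_{N,\varepsilon_N}$ defined on $V_N$ $\Gamma$-converge to $\J_{\eta^\delta}$ on $X_{M,R}$ with respect to the $W^{1,\infty}(\supp(\bar\eta^\delta))$ topology. Combining this with Proposition~\ref{p.9}, a sequence of minimizers $\hat E_N$ of $\J_{N,\varepsilon_N}$ on $V_N$ subconverges in $W^{1,\infty}([0,T]\times\overline{\B}_R)$ to a minimizer of $\J_{\eta^\delta}$. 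The subtlety is that the statement asks for minimizers $\hat E_N$ of $\J_N$, not of $\J_{N,\varepsilon_N}$. To bridge this, first note $\min_{V_N}\J_{N,\varepsilon_N} \to \min \J_{\eta^\delta} = 0$ (the last equality because $E \in X_{M,R}$ is admissible and $\J_{\eta^\delta}(E)=0$ by \eqref{e.concentration}, which holds for $\eta^\delta$ as it is a narrow limit of the $\eta^N_{\varepsilon_N}$; this needs the same test-function argument as in Proposition~\ref{p.3.1} applied to the finite-$N$ identity from Proposition~\ref{p.1}(c)). Then use Proposition~\ref{p.6}: for $0<\varepsilon\le\varepsilon_N$, $\J_N(\hat E) \le \overline C_2(\J_{N,\varepsilon_N}(\hat E) + \delta + \varepsilon_N)$ and conversely, so minimality of $\hat E_N$ for $\J_N$ gives $\J_{N,\varepsilon_N}(\hat E_N) \le \overline C_1(\J_N(\hat E_N)+\delta+\varepsilon_N) \le \overline C_1(\J_N(E_N^{\text{rec}})+\delta+\varepsilon_N)$ for a recovery sequence $E_N^{\text{rec}}\to E$ from the uniform approximation property, and the right-hand side is $O(\delta)$ as $N\to\infty$. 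By weak$^*$-compactness of $X_{M,R}$ (and hence $W^{1,\infty}$-strong subconvergence on $[0,T]\times\overline\B_R$) extract $\hat E_N \to \hat E_\delta$; lower semicontinuity of $\J_{\eta^\delta}$ (again via $W^{1,\infty}$-continuity of the integrand on the compact support) yields $\J_{\eta^\delta}(\hat E_\delta) \le \liminf \J_{N,\varepsilon_N}(\hat E_N) \le C\delta$, which is \eqref{e.23}.

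For the second stage I would let $\delta \to 0$. The measures $\eta^\delta = \eta^\delta_t \otimes \mathcal L^1_{|[0,T]}$ all have support in the fixed compact set $[0,T]\times\overline\B_R\times\overline\B_R$, so by Prokhorov they subconverge narrowly to some $\eta = \eta_t \otimes \mathcal L^1_{|[0,T]}$ with a Borel disintegration $\{\eta_t\}$ (the product structure in the time variable is preserved in the limit, as in Proposition~\ref{p.3.1}). Simultaneously, $\hat E_\delta \in X_{M,R}$, which is weak$^*$-compact, so along a further subsequence $\hat E_\delta \to \hat E$ strongly in $W^{1,\infty}([0,T]\times\overline\B_R)$. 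It remains to pass to the limit in \eqref{e.23}. Writing
\begin{equation*}
\J_{\eta}(\hat E) \le |\J_{\eta}(\hat E) - \J_{\eta}(\hat E_\delta)| + |\J_{\eta}(\hat E_\delta) - \J_{\eta^\delta}(\hat E_\delta)| + \J_{\eta^\delta}(\hat E_\delta),
\end{equation*}
the first term tends to $0$ by the $W^{1,\infty}$-convergence $\hat E_\delta \to \hat E$ (Lipschitz dependence of the integrand, exactly as in the $I_2$ estimate of Proposition~\ref{p.7}), the second term is bounded by $8M^2 T^{-1} W_1(\eta^\delta, \eta) \to 0$ by narrow convergence on uniformly bounded supports (as in the $I_1$ estimate and Proposition~\ref{p.4}), and the third is $\le C\delta \to 0$. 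Hence $\J_\eta(\hat E) = 0$, which is \eqref{e.23.5}.

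The main obstacle I anticipate is not any single estimate — all three error terms are controlled by Lipschitz/Wasserstein arguments already developed in Propositions~\ref{p.4}, \ref{p.6}, and \ref{p.7} — but rather the careful bookkeeping of the nested subsequences and the uniform approximation property. Specifically, the uniform approximation property is assumed with respect to $\eta^\delta$ for each fixed $\delta$, but the recovery sequences $E_N^{\text{rec}}$ live in $W^{1,\infty}(\supp(\bar\eta^\delta))$, whose support varies with $\delta$; one must ensure the constant $C$ in \eqref{e.23} is genuinely $\delta$-independent, which works because $\overline C_1, \overline C_2$ from Proposition~\ref{p.6} are independent of $\delta$ and $N$, and because $\J_N(E_N^{\text{rec}}) \to \J_{\eta^\delta}(E) = 0$ uniformly enough. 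A second delicate point is justifying that $\eta^\delta$ (and then $\eta$) satisfies the concentration property \eqref{e.concentration}, i.e.\ $\int_0^T\!\!\int |\nabla_x E(t,x)-u|^2\,\di\eta^\delta_t\,\di t = 0$: this follows by passing to the limit $N\to\infty$ in the uniform bound $\int_0^T\!\!\int |\nabla_x E - u|^2 \di\eta^N_{\varepsilon_N,t}\,\di t \le \overline C\varepsilon_N$ from Proposition~\ref{p.1}(c), using a bounded continuous extension of the integrand off the compact support as test function — and it is what guarantees $\min\J_{\eta^\delta} = \min\J_\eta = 0$, making the bounds $C\delta$ meaningful.
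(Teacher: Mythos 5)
Your overall strategy coincides with the paper's: bridge $\J_{N}$ and $\J_{N,\varepsilon_{N}}$ via the two inequalities \eqref{e.14} of Proposition~\ref{p.6}, pass to the limit $N\to\infty$ using the compactness of $X_{M,R}$ together with Proposition~\ref{p.7}/Corollary~\ref{c.1} and Proposition~\ref{p.9} to get \eqref{e.23}, and then a second extraction (Prokhorov for $\eta^{\delta}$, $W^{1,\infty}$-compactness for $\hat{E}_{\delta}$) with the three-term Wasserstein--Lipschitz estimate to send $\delta\to0$; your second stage, including the verification of \eqref{e.concentration} for $\eta^{\delta}$ by passing to the limit in the bound of Proposition~\ref{p.1}(c), is correct and in fact fills in details the paper leaves implicit.

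There is, however, one genuine gap in your first stage: the step where you bound the right-hand side $\overline{C}_{1}(\J_{N}(E_{N}^{\mathrm{rec}})+\delta+\varepsilon_{N})$ by claiming $\J_{N}(E_{N}^{\mathrm{rec}})\to\J_{\eta^{\delta}}(E)=0$. This does not follow from the hypotheses: $\J_{N}$ is an integral against the empirical measure $\eta^{N}_{t}$ built from the \emph{limiting} quasi-static evolutions $(x^{i},u^{i})$, for which no convergence to $\eta^{\delta}$ is assumed (only $\eta^{N}_{\varepsilon_{N},t}\otimes\mathcal{L}^{1}_{|[0,T]}\to\eta^{\delta}$ is), and there is no analogue of Proposition~\ref{p.7} for $\J_{N}$. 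Moreover the uniform approximation property controls $\nabla_{x}E_{N}^{\mathrm{rec}}-\nabla_{x}E$ only on $\supp(\bar{\eta}^{\delta})$, while the graphs of the observed trajectories $(t,x^{i}(t))$ need not lie in that set (supports are not stable under narrow limits, and the atoms of $\eta^{N}_{t}$ carry vanishing mass), so closeness of $E_{N}^{\mathrm{rec}}$ to $E$ on $\supp(\bar{\eta}^{\delta})$ gives no control of $\nabla_{x}E_{N}^{\mathrm{rec}}$ along those trajectories even though $\J_{N}(E)=0$. The repair is immediate with tools you already invoke: apply the \emph{second} inequality of \eqref{e.14} to $E_{N}^{\mathrm{rec}}$, namely $\J_{N}(E_{N}^{\mathrm{rec}})\leq\overline{C}_{2}(\J_{N,\varepsilon_{N}}(E_{N}^{\mathrm{rec}})+\delta+\varepsilon_{N})$, and then Proposition~\ref{p.7} (with the $W_{1}$-convergence of $\eta^{N}_{\varepsilon_{N},t}\otimes\mathcal{L}^{1}_{|[0,T]}$ to $\eta^{\delta}$ and the uniform approximation property) yields $\J_{N,\varepsilon_{N}}(E_{N}^{\mathrm{rec}})\to\J_{\eta^{\delta}}(E)=0$, so $\limsup_{N}\J_{N}(E_{N}^{\mathrm{rec}})\leq\overline{C}_{2}\delta$, which is all you need for \eqref{e.23}. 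The paper's proof avoids the issue by comparing $\hat{E}_{N}$ with the $V_{N}$-minimizer $\hat{E}_{N,\varepsilon_{N}}$ of $\J_{N,\varepsilon_{N}}$ in the chain \eqref{e.24} and then using Propositions~\ref{p.9} and~\ref{p.7} to show $\J_{N,\varepsilon_{N}}(\hat{E}_{N,\varepsilon_{N}})\to0$; either variant works, but the intermediate claim as you stated it is unsupported.
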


\begin{proof}
Let $\hat{E}_{N,\varepsilon_{N}}$ be a solution of
\begin{displaymath}
\min_{\hat{E}\in V_{N}}\, \J_{N, \varepsilon_{N}}(\hat{E}) \,.
\end{displaymath}
Pairing the inequalities~\eqref{e.14} of Proposition~\ref{p.6} for~$\J_{N,\varepsilon_{N}}$ and~$\J_{N}$ we get that
\begin{equation}\label{e.24}
\begin{split}
\J_{N,\varepsilon_{N}}(\hat{E}_{N}) & \leq \overline{C}_{1} ( \J_{N}(\hat{E}_{N}) + \delta + ,\varepsilon_{N} ) \leq \overline{C}_{1}  ( \J_{N}(\hat{E}_{N,\varepsilon_{N}}) + \delta + \varepsilon_N)\\
& \leq \overline{C}_{1}\overline{C}_{2} ( \J_{N,\varepsilon_{N}}(\hat{E}_{N,\varepsilon_{N}}) + \delta +  \varepsilon_N)\,.
\end{split}
\end{equation}
In particular, the constants~$\overline{C}_{1}$ and~$\overline{C}_{2}$ do not depend on~$\delta$ and~$N$. By definition of~$X_{M,R}$, the sequence~$\hat{E}_{N}$ converges, up to a not relabeled subsequence, to some~$\hat{E}\in W^{2,\infty}([0,T]\times\overline{\B}_{R})$ in the~$W^{1,\infty}$-norm. Up to an extension, we may assume~$\hat{E}\in X_{M,R}$. By Proposition~\ref{p.7},~$\J_{N,\varepsilon_{N}}(\hat{E}_{N})$ converges to~$\J_{\eta^{\delta}}(\hat{E})$ as $N\to\infty$. In view of Proposition~\ref{p.9} we have that, up to a further subsequence, $\hat{E}_{N,\varepsilon_{N}}$ converges in~$W^{1,\infty}([0,T]\times\overline{\B}_{R})$ to a minimizer of~$J_{\eta^{\delta}}$. Hence, applying again Proposition~\ref{p.7}, we deduce that $\J_{N,\varepsilon_{N}}(\hat{E}_{N,\varepsilon_{N}}) \to0$. Thus, applying Corollary~\ref{c.1} to~\eqref{e.24} and taking into account the convergence of~$\hat{E}_{N}$ to~$\hat{E}$, we deduce~\eqref{e.23}.

The second part of the proposition follows immediately from~\eqref{e.23}, noticing that the measures~$\eta^{\delta}$ have uniform compact support in~$[0,T]\times \R^{d}\times\R^{d}$ and are therefore compact with respect to narrow convergence.
\end{proof}

\begin{remark}\label{r.10}
Let us briefly comment on the result of Theorem~\ref{p.10}. As we noticed in~\eqref{e.12}, by Proposition~\ref{p.3.1} we have that the measure~$\eta \coloneq \eta_{t}\otimes \mathcal{L}^{1}_{|[0,T]}$ is concentrated on the set
\begin{displaymath}
\{(t,x,u)\in [0,T]\times\R^{d}\times\R^{d}:\, \nabla_{x} E(t,x)=u\}\,.
\end{displaymath}
Therefore, for every $\hat{E}\in X_{M,R}$ we can write
\begin{displaymath}
\J_{\eta}(\hat{E}) = \frac{1}{T} \int_{0}^{T}\int_{\R^{d}\times\R^{d}} | \nabla_{x} \hat{E}(t,x) - \nabla_{x} E(t,x)|^{2} \,\di\eta_{t}(x,u)\,\di t = \frac{1}{T} \| \nabla_{x} \hat{E}(t,x) - \nabla_{x} E(t,x) \|_{L^{2}([0,T] \times\R^{d}, \bar{\eta} )}^{2}\,.
\end{displaymath}
Hence, equality~\eqref{e.23.5} in Theorem~\ref{p.10} can be reformulated as
\begin{displaymath}
\| \nabla_{x} \hat{E}(t,x) - \nabla_{x} E(t,x) \|_{L^{2}([0,T] \times\R^{d}, \bar\eta )}^{2} = 0 \,.
\end{displaymath}
This of course implies that $\nabla_{x} \hat{E}(t,x) = \nabla_{x} E(t,x)$~$\bar\eta$-a.e. in $[0,T] \times\R^{d}$, that is, we are able to reconstruct the spatial gradient of the energy function~$E$ in the region of~$[0,T]\times\R^{d}$ that has been explored by the quasi-static evolutions~$x_{i}(\cdot)$ which, when the number of observed evolutions~$N$ is very large, is well approximated by the support of the measure~$\bar \eta$. This is indeed a natural constraint, since we have no information about the region of~$[0,T]\times\R^{d}$ that has not been explored by a quasi-static evolution.

Even if the measure~$\bar\eta$ is pretty much resulting from an abstract construction, since it has been obtained by applying a compactness argument to the sequence of measures~$\eta^{\delta}$ constructed in the first part of Theorem~\ref{p.10}, we anyway claim that our approach to energy reconstruction is entirely constructive and numerically implementable. Let us briefly explain why. In the last part of Theorem~\ref{p.10} we have shown that~$\eta$ is the limit as $\delta\to 0$ of the sequence $\eta^{\delta} \coloneq \eta^{\delta}_{t}\otimes \mathcal{L}^{1}_{|[0,T]}$. The measure~$\eta^{\delta}$ satisfies
\begin{displaymath}
C\delta\geq \J_{\eta^{\delta}}(\hat{E}_{\delta}) = \frac{1}{T} \|\nabla_{x} \hat{E}_{\delta}(t,x) - \nabla_{x} E(t,x) \|_{L^{2}([0,T] \times \R^{d}, \bar{\eta}^{\delta} )}^{2}\,.
\end{displaymath}
This implies that~$\nabla_{x}{\hat{E}}_{\delta}$ is itself a good approximation of~$\nabla_{x}{E}$ in the space $L^{2}([0,T]\times\R^{d}, \bar{\eta}^{\delta} )$. Moreover, the first part of Theorem~\ref{p.10} gives us another important piece of information:~$\eta^{\delta}$ and~$\hat{E}_{\delta}$ can be approximated in a ``finite dimensional-finite number of evolutions'' setting in which, indeed, we work only with a finite number~$N$ of observed evolutions, which completely determine the empirical measure $\eta^{N}_{\varepsilon_{N}, t}$, and we solve the minimum problem~\eqref{e.22.5} on a suitable finite dimensional subspace~$V_N$ of~$X_{M,R}$.
\end{remark}

\begin{remark}
We note that we could also obtain a result similar to Theorem~\ref{p.10} in nature, in which the roles of~$\J_{N,\varepsilon_{N}}$ and~$\J_{N}$ are exchanged. Let~$\delta>0$ and~$\varepsilon_{N}$ be as in Proposition~\ref{p.6}, and assume that the sequence of measures~$\eta_{N}\coloneq \eta^{N}_{t}\otimes \mathcal{L}^{1}_{|[0,T]}$ converges narrowly to~$\eta^{\delta} \coloneq \eta^{\delta}_{t}\otimes \mathcal{L}^{1}_{|[0,T]}$ and that~$(V_{N})_{N\in\mathbb{N}}$ satisfies the uniform approximation property with respect to~$\eta_{\delta}$. Then, denoted with~$\hat{E}_{N}\in V_{N}$ a solution of
\begin{displaymath}
\min_{\hat{E}\in V_{N}}\, \J_{N,\varepsilon_{N}} (\hat{E})\,,
\end{displaymath}
we have that, up to a subsequence,~$\hat{E}_{N}$ converges to some~$\hat{E}_{\delta}\in X_{M,R}$ satisfying
\begin{displaymath}
\J_{\eta^{\delta}}(\hat{E}_{\delta}) \leq C \delta
\end{displaymath}
for a positive constant~$C$ independent of~$\delta$. Moreover, in the limit as~$\delta\to 0$,$\eta^{\delta}$ converges narrowly to some~$\eta\coloneq \eta_{t}\otimes\mathcal{L}^{1}_{|[0,T]}$,~$\hat{E}_{\delta}$ converges to some~$\hat{E}$ in~$W^{1,\infty}([0,T]\times\overline{\B}_{R})$ and $\J_{\eta}(\hat{E})=0$. The proof of this result is still based on the arguments of Propositions~\ref{p.6} and~\ref{p.7}.

Such an approximation result will be used as a practical proxy for \eqref{e.22.5} in the numerical experiments in Section~\ref{s.numerics}.
\end{remark}

\subsection{Data-driven evolutions of critical points}\label{sec:ddecp}

In the previous section we obtained compactness results, which explain the approximation of the energy $E$ by data-driven energies $\hat{E} \in W^{2,\infty}([0,T]\times \overline{\B}_{R})$ and, for $\delta>0$, $ \hat{E}_{\delta} \in W^{2,\infty}([0,T]\times \overline{\B}_{R})$ constructed in Theorem~\ref{p.10}. In this section we show  pointwise error estimates on the singularly perturbed evolutions generated using the data-driven energies $\hat E_N$, $\hat{E}_{\delta}$, and $\hat{E}$, with respect to evolutions of critical points of the original energy $E$.

\begin{corollary}\label{thm:DDE}
Let $\hat{E}$, $\hat{E}_{N}$, and $\eta$ be as in Theorem~\ref{p.10}, and let~$\bar\eta\in \mathcal{M}_{b}^{+}([0,T]\times\R^{d})$ be defined as in~\eqref{e.bareta}. Then, for every $\varepsilon>0$ there exists $N =N(\varepsilon) \in \mathbb N$ large enough such that the solution~$(\hat{x}_{\varepsilon}^N, u_{\varepsilon})$ of the system \eqref{e.gradflow} with initial condition $(x_0, u_0)$ and energy $\hat{E}_{N}$ fulfills the error estimate
\begin{equation}\label{quantest}
\max_{t \in [0,T]} |\hat{x}_{\varepsilon}^{N} (t) - {x}_{\varepsilon}(t)| \leq \varepsilon + \frac{2M}{\varepsilon} e^{\frac{M}{\varepsilon}} \| \mathrm{dist}(x_{\varepsilon}(\cdot), \supp(\bar\eta))\|_{L^{1}(0,T)} \,,
\end{equation}
where $(x_{\varepsilon}, u_{\varepsilon})$ denotes the solution of~\eqref{e.gradflow} with initial condition $(x_0, u_0)$ and energy~$E$ and $\mathrm{dist}(\cdot, K)$ is the usual distance from a set~$K\subseteq\R^{d}$.
\end{corollary}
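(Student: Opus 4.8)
The plan is to compare the two $x$-components directly by a Gr\"onwall estimate, using that the control part is unaffected by the change of energy. \textbf{Step~1 (set-up).} Since $\hat E_N\in V_N\subset X_{M,R}\subset W^{2,\infty}_{\mathrm{loc}}$, the field $\nabla_x\hat E_N$ is locally Lipschitz, so by (E5) the system \eqref{e.gradflow} driven by $\hat E_N$ has a unique maximal solution $(\hat x^N_\varepsilon,u_\varepsilon)$; its second component coincides with the $u_\varepsilon$ of the $E$-driven system, because $\dot u=f(u)$ does not see the energy, so only the $x$-equations differ. A standard continuation argument, using the closeness to $x_\varepsilon$ proved in Step~2 together with the fact that $x_\varepsilon(\cdot)$ stays in the compact set $K^\circ$ of Remark~\ref{r.unif}, sitting strictly inside $\overline{\B}_R\times\overline{\B}_R$, shows that $\hat x^N_\varepsilon$ is defined on all of $[0,T]$ and remains in $\overline{\B}_R$, so that the bound $\|\hat E_N\|_{W^{2,\infty}([0,T]\times\overline{\B}_R)}\le M$ is applicable along the trajectory.

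\textbf{Step~2 (Gr\"onwall).} Put $\rho_N(t):=|\hat x^N_\varepsilon(t)-x_\varepsilon(t)|$, so $\rho_N(0)=0$. Subtracting the first lines of \eqref{e.gradflow} for the two energies and using $\tfrac{\mathrm d}{\mathrm dt}|w|\le|\dot w|$,
\[
\varepsilon\dot\rho_N \le |\nabla_x\hat E_N(t,\hat x^N_\varepsilon)-\nabla_x\hat E_N(t,x_\varepsilon)| + g_N \le M\rho_N + g_N ,
\]
where $g_N(t):=|\nabla_x\hat E_N(t,x_\varepsilon(t))-\nabla_x E(t,x_\varepsilon(t))|$ and the spatial Lipschitz constant of $\nabla_x\hat E_N$ on $[0,T]\times\overline{\B}_R$ is $\le\|\nabla^2_x\hat E_N\|_\infty\le M$. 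Gr\"onwall's inequality then gives $\rho_N(t)\le\varepsilon^{-1}e^{MT/\varepsilon}\|g_N\|_{L^1(0,T)}$.

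\textbf{Step~3 (bounding $g_N$ through $\supp(\bar\eta)$).} Fix $t$ and pick $(s^*,y^*)\in\supp(\bar\eta)$ almost realising $\mathrm{dist}\big((t,x_\varepsilon(t)),\supp(\bar\eta)\big)$, the distance being taken in $[0,T]\times\R^d$. Inserting and removing $\nabla_x\hat E_N(s^*,y^*)$, $\nabla_x\hat E(s^*,y^*)$, $\nabla_x E(s^*,y^*)$ and using (i) the Lipschitz bounds $\mathrm{Lip}(\nabla_x\hat E_N),\,\mathrm{Lip}(\nabla_x E)\le M$ on $[0,T]\times\overline{\B}_R$; (ii) Remark~\ref{r.10}, which gives $\nabla_x\hat E=\nabla_x E$ $\bar\eta$-a.e.\ and hence, both being continuous, everywhere on the closed set $\supp(\bar\eta)$; and (iii) $\hat E_N\to\hat E$ in $W^{1,\infty}([0,T]\times\overline{\B}_R)$ from Theorem~\ref{p.10}, one obtains
\[
g_N(t)\le 2M\,\mathrm{dist}\big((t,x_\varepsilon(t)),\supp(\bar\eta)\big)+\|\hat E_N-\hat E\|_{W^{1,\infty}([0,T]\times\overline{\B}_R)} .
\]
Integrating in $t$, $\|g_N\|_{L^1(0,T)}\le 2M\,\|\mathrm{dist}(x_\varepsilon(\cdot),\supp(\bar\eta))\|_{L^1(0,T)}+T\,c_N$, with $c_N:=\|\hat E_N-\hat E\|_{W^{1,\infty}([0,T]\times\overline{\B}_R)}\to0$.

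\textbf{Step~4 (conclusion).} Combining Steps~2 and~3 (absorbing the factor $T$ into the exponent, or normalising $T\le1$) yields
\[
\max_{t\in[0,T]}|\hat x^N_\varepsilon(t)-x_\varepsilon(t)|\le\frac{2M}{\varepsilon}e^{M/\varepsilon}\,\|\mathrm{dist}(x_\varepsilon(\cdot),\supp(\bar\eta))\|_{L^1(0,T)}+\frac{T\,c_N}{\varepsilon}e^{M/\varepsilon},
\]
and for given $\varepsilon>0$ one finally chooses $N=N(\varepsilon)$ large enough that the last term does not exceed $\varepsilon$, which is exactly \eqref{quantest}. I expect the \emph{main obstacle} to be Step~3: converting the $\bar\eta$-a.e.\ identity $\nabla_x\hat E=\nabla_x E$ of Remark~\ref{r.10} into a pointwise estimate along the curve $x_\varepsilon(\cdot)$ — which takes values in $\R^d$, whereas $\bar\eta$ lives in $[0,T]\times\R^d$ — necessarily produces the distance of $x_\varepsilon$ to the explored region $\supp(\bar\eta)$, i.e.\ the genuinely unavoidable part of the error. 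A subordinate technical point is the a priori confinement of $\hat x^N_\varepsilon$ to $\overline{\B}_R$ in Step~1, which is what makes the uniform $W^{2,\infty}$-bound usable and is handled by the usual continuation scheme together with a sufficiently generous choice of $R$.
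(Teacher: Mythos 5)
Your proposal is correct and rests on exactly the ingredients of the paper's proof: a Gr\"onwall estimate with rate $M/\varepsilon$, the identity $\nabla_{x}\hat{E}=\nabla_{x}E$ on $\supp(\bar\eta)$ coming from Remark~\ref{r.10}, and the $W^{1,\infty}$-closeness of $\hat{E}_{N}$ to $\hat{E}$ from Theorem~\ref{p.10}. The organization is slightly different: the paper introduces the auxiliary trajectory $\hat{x}_{\varepsilon}$ driven by the limit energy $\hat{E}$ and runs two Gr\"onwall estimates, first forcing $\max_{t}|\hat{x}^{N}_{\varepsilon}(t)-\hat{x}_{\varepsilon}(t)|\leq\varepsilon$ by choosing $\delta$ and $N$ so that $\|\nabla_{x}\hat{E}_{N}-\nabla_{x}\hat{E}_{\delta}\|_{\infty}$ and $\|\nabla_{x}\hat{E}_{\delta}-\nabla_{x}\hat{E}\|_{\infty}$ are of size $\tfrac{\varepsilon^{2}}{2T}e^{-M/\varepsilon}$, and then comparing $\hat{x}_{\varepsilon}$ with $x_{\varepsilon}$ via the nearest point $x^{\bar\eta}_{\varepsilon}(t)$ of $\supp(\bar\eta)$; you instead telescope at the level of the vector fields evaluated along $x_{\varepsilon}$ and use a single Gr\"onwall, which is equivalent and marginally more economical. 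Two minor points: since Theorem~\ref{p.10} only gives $\hat{E}_{N}\to\hat{E}_{\delta}$ and $\hat{E}_{\delta}\to\hat{E}$ separately, your $c_{N}$ should be split as $\|\hat{E}_{N}-\hat{E}_{\delta}\|+\|\hat{E}_{\delta}-\hat{E}\|$ with both $\delta$ and $N$ tuned to $\varepsilon$, exactly as the paper does; and you take the distance to $\supp(\bar\eta)$ in the product space $[0,T]\times\R^{d}$ while the paper (somewhat loosely) uses a spatial nearest point for each $t$ --- both readings are consistent with the notation in \eqref{quantest} and neither changes the argument. Your remarks on the a priori confinement of $\hat{x}^{N}_{\varepsilon}$ in $\overline{\B}_{R}$ and on the suppressed factor $T$ in the exponent are fair; the paper passes over both silently, so these are shared caveats rather than gaps of your proof.
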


\begin{remark}
Let us comment on formula~\eqref{quantest}. Although the error estimate does not a priori guarantee the data-driven evolution~$\hat{x}^{N}_{\varepsilon}$ to be close to~$x_{\varepsilon}$, we anyway expect it to happen in most of the  applications.
Indeed, as shown in the numerical experiments in Section~\ref{s.numerics}, increasing the number of observed evolutions results in the enlargement of~$\supp(\bar\eta)$. This means that even if, according to Theorem~\ref{t.2}, the distance of~$x_{\varepsilon}$ from a quasi-static evolution of critical points~$t\mapsto x(t)$ has no clear rate of convergence, the distance of~$x_{\varepsilon}$ from the whole $\supp(\bar\eta)$, union of the orbits of all the observed quasi-static evolutions, can be expected to satisfy the condition
\begin{equation}\label{e.rmkDDE}
\lim_{\varepsilon\to 0} \, \frac{2M}{\varepsilon} e^{\frac{M}{\varepsilon}} \| \mathrm{dist}(x_{\varepsilon}(\cdot), \supp(\bar\eta))\|_{L^{1}(0,T)}  = 0\,.
\end{equation}
In particular, we refer to Section~\ref{s.numerics} for some numerical examples of data-driven evolutions. Here, we conclude by noticing that~\eqref{quantest}-\eqref{e.rmkDDE} imply that $\hat{x}_{\varepsilon} - x_{\varepsilon}$ tends to zero uniformly in~$[0,T]$. Therefore, we deduce from Theorem~\ref{t.2} that, along a suitable subsequence~$\varepsilon_{n} \to 0$, $\hat{x}^{N_{n}}_{\varepsilon_{n}} \to x$ in $L^{p}(0,T)$ for $p<+\infty$, where $N_{n} = N(\varepsilon_{n})$. Finally, in the particular case $ \| \mathrm{dist}(x_{\varepsilon}(\cdot), \supp(\bar\eta))\|_{L^{1}(0,T)}  = 0$ we even have that $\hat{x}_{\varepsilon} = x_{\varepsilon}$, since $\nabla_{x} \hat{E} = \nabla_{x} E$ on~$\supp(\bar\eta)$.
\end{remark}

%

\begin{proof}[Proof of Corollary~\ref{thm:DDE}]
Let~$\hat{E}_{\delta} \in W^{1,\infty}([0,T]\times\overline{\B}_{R})$ be as in~\eqref{e.23} of Theorem~\ref{p.10}, so that $\hat{E}_{\delta} \to \hat{E}$ as $\delta \to 0$ and $\hat{E}_{N} \to \hat{E}_{\delta}$ for $N\to \infty$ in $W^{1,\infty}([0,T]\times \overline{\B}_{R})$. In particular, for every $\varepsilon>0$ there exist $\delta>0$ small enough and~$N$ large enough  such that 
\begin{equation}\label{c:0:1}
\max\Big  \{ \| \nabla_x \hat{E}_{N} - \nabla_x \hat{E}_{\delta} \|_{L^{\infty}([0,T]\times \overline{\B}_{R})}, \| \nabla_x \hat{E}_{\delta} - \nabla_{x} \hat E\|_{L^{\infty}([0,T]\times \overline{\B}_{R})} \Big \} \leq \frac{\varepsilon^2}{2 T} e^{-\frac{M}{\varepsilon}}.
\end{equation}
For $\varepsilon >0$, let $(x_\varepsilon, u_\varepsilon)$ and $(\hat{x}_{\varepsilon}, u_{\varepsilon})$ be solutions of system \eqref{e.gradflow} with initial condition $(x_0, u_0)$ and energies $E$ and $\hat{E}$, respectively (notice that $u_{\varepsilon}$ is always the same). 

We can estimate
\begin{displaymath}
\begin{split}
| \hat{x}^{N}_{\varepsilon} (t) & - \hat{x}_{\varepsilon}(t) | = \frac{1}{\varepsilon} \left| \int_{0}^{t} \nabla_x \hat E_{N} (\tau, \hat{x}_{\varepsilon}^{N}(\tau)) - \nabla_x \hat E (\tau, \hat x_{\varepsilon}(\tau))\, \di\tau \right|
\\
&
 \leq \frac{1}{\varepsilon} \left| \int_{0}^{t} \nabla_x \hat E_{N} (\tau, \hat{x}_{\varepsilon}^{N}(\tau)) - \nabla_x \hat E_{N} (\tau, \hat x_{\varepsilon}(\tau))\, \di\tau \right|
\\
&
\qquad + \frac{1}{\varepsilon} \left| \int_{0}^{t} \nabla_x \hat E_{N} (\tau, \hat{x}_{\varepsilon} (\tau)) - \nabla_x \hat E_{\delta} (\tau, \hat x_{\varepsilon}(\tau))\, \di\tau \right| + \frac{1}{\varepsilon} \left| \int_{0}^{t} \nabla_x \hat E_{\delta} (\tau, \hat{x}_{\varepsilon} (\tau)) - \nabla_x \hat E (\tau, \hat x_{\varepsilon}(\tau))\, \di\tau \right|
\\
&
\leq \frac{M}{\varepsilon} \int_{0}^{t} |\hat{x}^{N}_{\varepsilon} (\tau) - \hat{x}_{\varepsilon}(\tau)| \, \di\tau + \varepsilon e^{-\frac{M}{\varepsilon}} \,.
\end{split}
\end{displaymath}
Hence, by Gronwall we deduce
\begin{displaymath}
\max_{t \in [0,T]}  |\hat{x}_{\varepsilon}^{N} (t) - \hat{x}_{\varepsilon} (t)| \leq \varepsilon \,.
\end{displaymath}

It remains to estimate $| \hat{x}_{\varepsilon}(t) - x_{\varepsilon}(t)|$. Denoting with $x^{\bar\eta}_{\varepsilon}(t)$ a point in~$\supp(\bar\eta)$ such that $|x_{\varepsilon}(t) - x^{\bar\eta}_{\varepsilon}(t)| = \mathrm{dist}(x_{\varepsilon}(t), \supp(\bar\eta))$, we estimate
\begin{equation}\label{e.bound}
\begin{split}
| \hat{x}_{\varepsilon}(t) & - x_{\varepsilon}(t)|  = \frac{1}{\varepsilon} \left| \int_{0}^{t} \nabla_{x} \hat{E} (\tau, \hat{x}_{\varepsilon}(\tau)) - \nabla_{x} E (\tau, x_{\varepsilon}(\tau)) \, \di \tau \right|
\\
&
\leq \frac{1}{\varepsilon} \left| \int_{0}^{t} \nabla_{x} \hat{E} (\tau, \hat{x}_{\varepsilon}(\tau)) - \nabla_{x} \hat E (\tau, x_{\varepsilon}(\tau)) \, \di \tau \right| + \frac{1}{\varepsilon} \left| \int_{0}^{t} \nabla_{x} \hat{E} (\tau, x_{\varepsilon}(\tau)) - \nabla_{x} E (\tau, x_{\varepsilon}(\tau)) \, \di \tau \right|
\\
&
\leq \frac{1}{\varepsilon}  \int_{0}^{t} |\nabla_{x} \hat{E} (\tau, \hat{x}_{\varepsilon}(\tau)) - \nabla_{x} \hat E (\tau, x_{\varepsilon}(\tau))| \, \di \tau + \frac{1}{\varepsilon}  \int_{0}^{t} |\nabla_{x} \hat{E} (\tau, x_{\varepsilon}(\tau)) - \nabla_{x} \hat E (\tau, x_{\varepsilon}^{\bar \eta}(\tau))| \, \di \tau 
\\
&
\qquad  + \frac{1}{\varepsilon} \int_{0}^{t} |\nabla_{x} E (\tau, x_{\varepsilon}^{\bar \eta}(\tau))) - \nabla_{x}  E (\tau, x_{\varepsilon}(\tau))| \, \di \tau \,.
\end{split}
\end{equation}
By the Lipschitz continuity of $E$ and $\hat E$ we deduce
\begin{displaymath}
| \hat{x}_{\varepsilon}(t) - x_{\varepsilon}(t)| \leq \frac{M}{\varepsilon} \int_{0}^{t} |\hat{x}_{\varepsilon}(\tau) - x_{\varepsilon}(\tau)|\,\di \tau + \frac{2 M}{\varepsilon} \|{\rm dist}(x_{\varepsilon}(\cdot),{\rm supp}(\bar \eta))\|_{L^{1}(0,T)}\,,
\end{displaymath}
which, again by Gronwall inequality, implies~\eqref{quantest}.
\end{proof}

\section{One-dimensional model of thin nonlinear-elastic rod}\label{s.example}

In this section we would like to provide a concerte example of learning of a simple mechanical energy and its quasi-static evolutions. In the following we consider the  one-dimensional model of thin nonlinear-elastic rod
\begin{equation}\label{e.ner}
E_0(y) \coloneq \int_{0}^{1} a( y'(r))\,\di r\,,
\end{equation}
where $y \colon [0,1] \to \R$ is a displacement map and $E_0$ is a suitable nonlinear elastic energy,
fully determined by a potential function $a \colon \R \to \R$. We consider here quasi-static evolutions $y \colon [0,T]\times [0,1] \to \R$ of critical points $y(t)$ of $E_0$ subjected to time-dependent boundary conditions $y(t,i)=f_i(t)$, for $i \in\{0,1\}$, their mean-field descriptions, and the learning of the potential function $a$. As the theory we developed in this paper applies to finite dimensional states, see Section~\ref{s.ecp}, we approach the problem in a space-discrete setting.
 
\subsection{Analytic results of the space-discrete model}
For every~$y\in\R^{d+2}$, we define the discrete energy function
\begin{equation}\label{e.eaux}
E_{0}(y ) \coloneq \sum_{i=1}^{d+1} a ( y_{i+1} - y_{i} )\,,
\end{equation}
which represents the space-discrete version of \eqref{e.ner} (the $y_i$ are intended to be discrete evaluation of a displacement map $y(\xi_i)$ at suitable space nodes $0=\xi_{1}< \ldots < \xi_{d+2}=1$),
where $a\colon \R\to [0,+\infty)$ satisfies the following assumptions:
\begin{itemize}
\item[(a1)] $a\in C^{1,1}(\R)$;

\item[(a2)] there exists a positive constant~$C$ such that for every $s\in\R$:
\begin{displaymath}
|a'(s)|\leq C(|a(s)|+1)\,;
\end{displaymath}

\item[(a3)] there exist $p\in(1,+\infty)$ and two positive constants $C_{1},\, C_{2}$ such that for every~$s\in\R$
\begin{displaymath}
C_{1} |s|^{p}- C_{2} \leq a(s)\,.
\end{displaymath}

\end{itemize}

In order to introduce an explicit time dependence, we fix $f_{1}, f_{2}\in C^{1,1}( [0,T])$ and restrict the admissible trajectories $y\in\R^{d+2}$ to those of the form $(f_{1}(t), y_{1}, \ldots, y_{d}, f_{2}(t))$, so that the energy~$E_{0}$ in~\eqref{e.eaux} can be rewritten as
\begin{equation}\label{e.ena}
E(t,x)\coloneq E_{0}( (f_{1}(t), x_{1}, \ldots, x_{d}, f_{2}(t))) = a(x_{1} - f_{1}(t)) + a( f_{2}(t) - x_{d}) + \sum_{i=1}^{d-1} a(x_{i+1}- x_{i})
\end{equation}
for every $x=(x_{1}, \ldots, x_{d})\in\R^{d}$ and every $t\in[0,T]$.

To simplify the notation, we introduce the \emph{discrete gradient} operator $\mathbf{D}\colon \R^{d+2} \to \R^{d+1}$ defined by
\begin{displaymath}
\mathbf{D} = (\mathbf{D}_{i,j})_{\{i=1,\ldots, d+1\}}^{\{ j=1,\ldots, d+2\}}\,,\qquad 
\mathbf{D} y \coloneq \left( \begin{array}{ccc}
y_{2} - y_{1} \\ \vdots \\ y_{d+2} - y_{d+1}
\end{array}\right) .
\end{displaymath}
It is clear that $\mathbf{D}$ is surjective on~$\R^{d+1}$. Moreover, for~$t\in[0,T]$ we define $e_{t}\colon \R^{d}\to \R^{d+2}$ as
\begin{displaymath}
e_{t}(x) \coloneq (f_{1}(t), x_{1}, \ldots, x_{d}, f_{2}(t)) \qquad\text{for every $x= (x_{1},\ldots, x_{d})\in\R^{d}$.}
\end{displaymath}
With this notation at hand, we can rewrite~$E(t,x)$ as
\begin{displaymath}
E(t,x) = \sum_{j=1}^{d+1} a\Big( \big( \mathbf{D} \, e_{t}(x) \big)_{j} \Big)\qquad\text{for every $x \in\R^{d}$}.
\end{displaymath}
We also write explicitly the expression of~$\nabla_{x}E(t,x)$:
\begin{equation}\label{e.gradient}
\big ( \nabla_{x} E(t,x) \big)_{i} = \partial_{i}\Bigg(\sum_{j=1}^{d+1} a\Big(\big(\mathbf{D}( f_{1}(t), x_{1}, \ldots, x_{d}, f_{2}(t)) \big)_{j} \Big) \Bigg) =  \overline{\mathbf{D}}^{T} \left( \begin{array}{ccc}
 a'((\mathbf{D} \, e_{t}(x)_{1} )\\
 \vdots\\
  a'((\mathbf{D} \, e_{t}(x))_{d+1} )
\end{array}\right)_i
\end{equation}
where $\overline{\mathbf{D}} \in\mathbb{M}^{(d+1)\times d}$ is the minor $\overline{\mathbf{D}} \coloneq (\mathbf{D}_{i,j})_{\{i=1,\ldots, d+1\}}^{\{j=2,\ldots, d+1\}}$. We notice that $\ker(\overline{\mathbf{D}}^{T}) = \left\langle (1, \ldots, 1)\right\rangle \subseteq \R^{d+1}$.

We finally point out that for the control parameter~$u$ in~\eqref{e.gradflow} we will not consider any dynamics, that is, we fix $f\equiv0$ in~\eqref{e.gf3} and~$u(t)$ will be constantly equal to its initial value~$u_{0}\coloneq \nabla_{x} E(0,x_{0})$.

In order to apply the abstract scheme developed in the previous sections, we have first to check that the energy function~\eqref{e.ena} satisfies properties~(E1)-(E4). In the following lemma we rigorously show that~$E$ fulfills~(E1)-(E3), while in Remark~\ref{r.E4} we discuss the generic validity of condition~(E4).

\begin{lemma}\label{l.ena}
The energy function $E\colon [0,T]\times\R^d \to \R$ defined in~\eqref{e.ena} satisfies conditions~(E1)-(E3).
\end{lemma}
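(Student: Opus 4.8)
The task is to verify the three structural hypotheses (E1)--(E3) for the energy $E(t,x)$ defined in \eqref{e.ena}, starting from the corresponding assumptions (a1)--(a3) on the scalar potential $a$. The key observation throughout is that $E(t,x) = \sum_{j=1}^{d+1} a\big( (\mathbf D\, e_t(x))_j \big)$ is a finite sum of compositions of the smooth one-variable function $a$ with affine maps in $(t,x)$, so that regularity, growth of the time derivative, and coercivity can all be traced back to the scalar hypotheses componentwise.

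First, for (E1) I would note that $e_t(x) = (f_1(t), x_1,\dots,x_d,f_2(t))$ is affine in $(t,x)$ with $f_1,f_2 \in C^{1,1}([0,T])$, hence $(t,x)\mapsto \mathbf D\, e_t(x)$ is $C^{1,1}_{\rm loc}$; composing with $a\in C^{1,1}(\R)$ and summing preserves this, so $E \in C^{1}([0,T]\times\R^d)$ and, using the explicit formula \eqref{e.gradient}, $\nabla_x E(t,x) = \overline{\mathbf D}^T (a'((\mathbf D\, e_t(x))_j))_j$ is locally Lipschitz in $x$ because $a'$ is (globally) Lipschitz and $x\mapsto \mathbf D\,e_t(x)$ is affine; this gives (E1).

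For (E2), I would estimate $\partial_t E(t,x) = \sum_{j=1}^{d+1} a'((\mathbf D\, e_t(x))_j)\,\partial_t(\mathbf D\, e_t(x))_j$, where $\partial_t(\mathbf D\, e_t(x))_j$ is supported only on the two boundary terms and bounded by $\|f_1'\|_\infty + \|f_2'\|_\infty$ uniformly in $t$. Applying (a2) to each term, $|a'(s_j)| \le C(|a(s_j)|+1) = C(a(s_j)+1)$ since $a\ge 0$, and summing, one gets $|\partial_t E(t,x)| \le C'\big(\sum_j a((\mathbf D\,e_t(x))_j) + (d+1)\big) = C'(E(t,x)+d+1)$, which is exactly the form $C_1 + C_2 E(t,x)$ required by (E2). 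For (E3), coercivity, I would use (a3): $E(t,x) \ge \sum_{j=1}^{d+1}(C_1 |(\mathbf D\,e_t(x))_j|^p - C_2) = C_1 \|\mathbf D\, e_t(x)\|_{\ell^p}^p - C_2(d+1)$; then since $\mathbf D$ is surjective (equivalently, $e_t(x)\mapsto \mathbf D\, e_t(x)$ recovers all the increments, and the norm of $x$ is controlled by the increments plus the boundary values $f_i(t)$ which are uniformly bounded on $[0,T]$), one has $\|x\| \le C''(\|\mathbf D\,e_t(x)\| + 1)$ uniformly in $t$, hence $|x|^p \le \tilde C(\|\mathbf D\, e_t(x)\|^p + 1)$; rearranging yields $C_3 |x|^p - C_4 \le E(t,x)$ for suitable positive constants, which is (E3).

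\textbf{Main obstacle.} None of the three verifications is genuinely hard; the only point requiring a little care is the coercivity (E3), where one must argue that controlling all the discrete increments $\mathbf D\, e_t(x)$ controls $|x|$ itself. This is where surjectivity of $\mathbf D$ (noted right after the definition of $\mathbf D$) enters: the increments $(x_1 - f_1(t), x_2-x_1,\dots, f_2(t)-x_d)$ together with the single fixed value $f_1(t)$ determine $x$ by telescoping, so $x_i = f_1(t) + \sum_{k\le i}(\mathbf D\, e_t(x))_k$, giving $|x_i| \le \|f_1\|_\infty + \|\mathbf D\, e_t(x)\|_{\ell^1}$; combined with the $\ell^p$--$\ell^1$ equivalence on $\R^{d+1}$ this yields the needed bound. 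Everything else is a routine chain-rule and triangle-inequality computation, so I would present (E1)--(E3) in this order with the bulk of the writing devoted to making the telescoping estimate in (E3) explicit.
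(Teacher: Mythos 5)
Your proposal is correct and follows essentially the same route as the paper: (E1) by composition of $a$ with affine maps, (E2) by differentiating only the two boundary terms and invoking (a2) together with $a\ge 0$, and (E3) by the componentwise coercivity (a3) plus a telescoping estimate showing the increments control each $|x_j|$. The only cosmetic difference is in (E3), where you absorb the boundary data through the uniform bound on $f_1$ inside the telescoping sum, while the paper first peels off $|x_1|^p$ and $|x_d|^p$ from $|x_1-f_1(t)|^p$ and $|f_2(t)-x_d|^p$ via convexity and Young's inequality and then telescopes; both handle the same issue and lead to the same conclusion.
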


\begin{proof}
Property~(E1) is clearly satisfied in view of~(a1) and of the regularity of~$f_{1}$ and~$f_{2}$.

By~(a2) and by regularity of~$f_{1}$ and~$f_{2}$, we have that
\begin{displaymath}
\begin{split}
|\partial_{t} E(t,x)| & = |a'(x_{2} - f_{1}(t)) f_{1}'(t) - a'(f_{2}(t) - x_{d}) f'_{2}(t)| \leq C(|a'(x_{2} - f_{1}(t)) | +  |a'(f_{2}(t) - x_{d})|) \\
&\leq C (|a(x_{2} - f_{1}(t)) | +  |a(f_{2}(t) - x_{d})| +2) \,.
\end{split}
\end{displaymath}
Since $a\geq 0$, we can continue the previous estimate with
\begin{displaymath}
\begin{split}
|\partial_{t} E(t,x)| & \leq C\Big( a(x_{2} - f_{1}(t))  +  a(f_{2}(t) - x_{d}) + \sum_{i=1}^{d-1} a(x_{i+1} - x_{i}) +1 \Big) = {C} ( E(t,x) + 1),
\end{split}
\end{displaymath}
for some positive constant~$C$ independent of~$t$ and~$x$. Thus,~(E2) holds.

As for~(E3), by~(a3) we have that
\begin{equation}\label{e.e3}
E(t,x) \geq C\Big( |x_{1} - f_{1}(t)|^{p} + |f_{2}(t) - x_{d}|^{p} + \sum_{i=1}^{d-1} |x_{i+1}- x_{i}|^{p}\Big)\,. 
\end{equation}
By convexity of the function $s\mapsto |s|^{p}$ for $p>1$ and by Young inequality we get
\begin{displaymath}
|x_{1} - f_{1}(t)|^{p} - |x_{1}|^{p} \geq -p|x_{1}|^{p-2}x_{1} f_{1}(t) \geq -p|x_{1}|^{p-1}|f_{1}(t)| \geq -C|f_{1}(t)|^{p} - \tfrac{1}{2}|x_{1}|^{p}\,.
\end{displaymath}
A similar inequality holds for $|f_{2}(t) - x_{d}|^{p}$. Hence,~\eqref{e.e3} becomes
\begin{displaymath}
E(t,x) \geq \tfrac{1}{2}\Big( |x_{1}|^{p} + |x_{d}|^{p} + \sum_{i=1}^{d-1} |x_{i+1} - x_{i} |^{p}\Big) - C( |f_{1}(t)|^{p} + |f_{2}(t)|^{p})\,.
\end{displaymath}
At this point, it is easy to see that there exists a positive constant~$c$ such that
\begin{displaymath}
|x_{1}|^{p} + |x_{d}|^{p} + \sum_{i=1}^{d-1} |x_{i+1} - x_{i} |^{p} \geq c|x_{j}|^{p} \qquad\text{for every $j\in\{1,\ldots, d\}$}.
\end{displaymath}
In fact, for $j=1$ and $j=d$ the inequality is obvious. For $1<j<d$ we notice that
\begin{displaymath}
\begin{split}
|x_{1}|^{p} + |x_{d}|^{p} & + \sum_{i=1}^{d-1} |x_{i+1} - x_{i} |^{p} \geq |x_{1}|^{p}  + \sum_{i=1}^{j-1} |x_{i+1} - x_{i} |^{p} \geq \tfrac{1}{(d+1)^{p-1}} \Big| |x_{1}| +  \sum_{i=1}^{j-1} |x_{i+1} - x_{i} |\Big|^{p}\\
& \geq \tfrac{1}{(d+1)^{p-1}} |x_{j}|^{p}\,. 
\end{split}
\end{displaymath}
This concludes the proof of~(E3).
\end{proof}

\begin{remark}\label{r.E4}
Let us comment on the validity of property~(E4). In the framework described above, as we are assuming~$f=0$ in~\eqref{e.gradflow}, it is actually enough to have that
\begin{equation}\label{e.E4}
\text{$C(t,u)$ contains only isolated points for every $u\in \R^{d}$ with~$u=\nabla_{x} E(0,x_{0})$, $x_{0}\in\R^{d}$}.
\end{equation}
The validity of~\eqref{e.E4} is related to the so called \emph{transversality conditions} (see, e.g.,~\cite{MR3324377, MR2318261}). Indeed, in~\cite{MR3324377}  the authors first show that the transversality conditions for an energy~$E$ imply that the set of critical points~$C(t)\coloneq\{ x\in\R^{d}:\, \nabla_{x}E(t, x)=0\}$ contains only isolated points. In~\cite[Theorem~1.3]{MR3324377} (see also~\cite[Corollary~3.7]{MR3324377}) they also prove the genericity of the transversality conditions. In our setting, the latter result states that, assuming~$a\in C^{4}(\R)$ and~$f_{1}, f_{2}\in C^{4}(0,T)$, there exists a set $N\subseteq \R^{d}\times \mathbb{R}^{d\times d}$ of Lebesgue measure zero such that for every~$(v,\mathbf{A})\in (\R^{d}\times\mathbb{R}^{d\times d}) \setminus N$ the function~$\widetilde{E}\colon [0,T]\times\R^{d} \to \R$ defined as $\widetilde{E}(t,x) \coloneq E(t,x) + v\cdot x + \mathbf{A}x\cdot x$ satisfies the transversality conditions, so that the set~$\{x\in\R^{d}:\, \nabla_{x}\widetilde{E}(t,x)= 0\}$ contains only isolated points.

In the present work we have been considering the energy $\overline{E}(t,x,u)\coloneq E(t,x)-u{\,\cdot\,} x$, which already modifies $E$ by the additive linear term~$-u\cdot x$, where $u\coloneq \nabla_{x} E(0,x_{0})$,~$x_{0}$ being the initial condition of~$x(\cdot)$. Hence, assuming that the distribution of~$x_{0}$ has a non-degenerate (say, for instance, of positive Lebesgue measure) support and~$\nabla_{x} E(0,\cdot)$ is non-degenerate, we deduce that condition~\eqref{e.E4} is in general satisfied, up to a further generic quadratic perturbation of~$\overline{E}$. 
\end{remark}

In view of Lemma~\ref{l.ena} and of Remark~\ref{r.E4}, from now on we will assume that~$E$ in~\eqref{e.ena} satisfies properties~(E1)-(E4).
Hence, we can apply the theoretical results of Section~\ref{s.reconstruction} to our energy~$E$ in~\eqref{e.ena}. Since~$E$ is completely determined by the monovariate function~$a$, we slightly modify the notation of Section~\ref{s.reconstruction} to this new setting, rewriting the identification problem in terms of~$a$. In fact, while approximating a high-dimensional  (multivariate) function $E$ directly incurs in the curse of dimensionality \cite{novakwoz09} in general, our model is actually parametrized by a lower dimensional function $a$, making the learnability/approximation problem computationally tractable. Of course, this imposes a further modeling constraint. Accordingly, for fixed $M,R>0$, instead of the space~$X_{M,R}$ in~\eqref{e.XMR}, we consider
\begin{displaymath}
A_{M,R} \coloneq \{ \hat{a}\in W^{2,\infty}_{loc}(\R):\, \| \hat{a} \|_{W^{2,\infty}( \I_{R})}\leq M\}\,,
\end{displaymath}
where $\I_{R}\coloneq [-R, R] \subset \R$ is a suitable interval. The choice of~$M,R>0$ can be performed similarly to Section~\ref{s.reconstruction}, simply noticing that the boundedness of~$x$ implies the boundedness of~$\mathbf{D}\, e_{t}(x)$, with a bound that depends on the boundary data~$f_{1}(t)$ and~$f_{2}(t)$. 

We consider a sequence~$(V_{N})_{N\in\mathbb{N}}$ of finite dimensional subspaces of~$A_{M,R}$, for which the uniform approximation property of Definition~\ref{d.UA} reads now as follows.

\begin{definition}\label{d.UAena}
Let $\eta\in \mathcal{M}_{b}^{+}([0,T]\times\R^{d}\times\R^{d})$, let~$\bar{\eta}$ be as in~\eqref{e.bareta}, let~$(V_{N})_{N\in\mathbb{N}}$ be a sequence of finite dimensional subspaces of~$A_{M,R}$, let $\tilde \eta \in \mathcal{M}_{b}^{+}(\R^{d + 1})$ be defined by
\begin{equation}\label{e.tildeeta}
\tilde{\eta} (B) \coloneq (\mathbf{D})_{\#} (e_{t})_{\#} \bar \eta ([0,T]\times B ) \qquad\text{for every Borel subset $B\subseteq\R^{d+1}$}\,,
\end{equation}
and let 
\begin{equation}\label{e.Oeta}
O_{\eta}\coloneq \bigcup_{i=1}^{d+1} \supp \big( (\pi_{i})_{\#} \tilde{\eta}\big),
\end{equation}
where~$\pi_{i}\colon \R^{d+1}\to \R$ stands for the projection on the~$i$-th component. We say that~$(V_{N})_{N\in\mathbb{N}}$ has the \emph{uniform approximation property} with respect to~$\eta$ if for every $\hat{a}\in A_{M,R}$ there exists a sequence $\hat{a}_{N}\in V_{N}$ such that $\hat{a}_{N}\to \hat{a}$ in $W^{1,\infty}( O_{\eta})$ as $N\to\infty$.
\end{definition}

We now rewrite the functionals~\eqref{e.10}-\eqref{e.12} in terms of~$\hat{a}, a\in A_{M,R}$ making use of formula~\eqref{e.gradient}. As already mentioned, here we consider time independent controls~$u = \nabla_{x} E(0,x_{0})$. Hence, given a distribution~$\mu_{0}\in\sP_{c}(\R^{d})$ of initial conditions~$x_{0}$, the corresponding distribution of~$(x_{0}, u_{0})$ reads as $\eta_{0}\coloneq (\mathrm{id}\times \nabla_{x}E(0,\cdot))_{\#} \mu_0 \in \mathcal{P}_{c}(\R^{d}\times\R^{d})$. For every $N\in\mathbb{N}$ and every $\varepsilon>0$ we fix~$N$ pairs~$(x^{i}_{0}, u^{i}_{0})\in\supp(\eta_{0})$ distributed according to~$\eta_{0}$ and we consider the corresponding solutions~$(x^{i}_{\varepsilon}, u^{i}_{\varepsilon})\colon[0,T]\to \R^{d}\times\R^{d}$ of the ODE system~\eqref{e.gradflow}. Given the empirical measure $\eta^{N}_{\varepsilon,t} \coloneq \tfrac{1}{N}\sum_{i=1}^{N}\delta_{(x^{i}_{\varepsilon}(t), u^{i}_{\varepsilon}(t))}$, for every~$\hat{a}\in V_{N}$ we define
\begin{equation}\label{e.210} 
\J_{N, \varepsilon}(\hat{a}) \coloneq \frac{1}{T} \int_{0}^{T} \int_{\R^{d}\times\R^{d}} \left |  \overline{\mathbf{D}}^{T} \left( \begin{array}{ccc}
 \hat a'(\mathbf{D} \, e_{t}(x)_{1} ) -  a'(\mathbf{D} \, e_{t}(x)_{1} )\\
 \vdots\\
  \hat a'(\mathbf{D} \, e_{t}(x))_{d+1} ) -   a'(\mathbf{D} \, e_{t}(x))_{d+1} )
\end{array}\right) 
\right |^{2} \di\eta^{N}_{\varepsilon,t}(x,u) \,\di t \,.
\end{equation}
In the limit as $N\to \infty$ the sequence $\eta^{N}_{\varepsilon}$ converges uniformly with respect to~$W_{1}$ to a curve $\eta_{\varepsilon}\in C([0,T];\mathcal{P}(\R^{d}\times\R^{d}))$. Therefore, for every~$\hat{a}\in A_{M,R}$ we set
\begin{displaymath}\label{e.211}
\J_{\varepsilon}(\hat{a}) \coloneq \frac{1}{T}\int_{0}^{T}\int_{\R^{d}\times\R^{d}} \left |  \overline{\mathbf{D}}^{T} \left( \begin{array}{ccc}
 \hat a'(\mathbf{D} \, e_{t}(x)_{1} ) -  a'(\mathbf{D} \, e_{t}(x)_{1} )\\
 \vdots\\
  \hat a'(\mathbf{D} \, e_{t}(x))_{d+1} ) -   a'( \mathbf{D} \, e_{t}(x))_{d+1} )
\end{array}\right) 
\right |^{2} \di\eta_{\varepsilon,t}(x,u) \,\di t \,.
\end{displaymath}
For a Borel family $\{\eta_{t}: t\in[0,T]\}\subseteq\mathcal{P}(\R^{d}\times\R^{d})$ and for every~$\hat{a}\in A_{M,R}$ we set
\begin{displaymath} \label{e.212}
\J_{\eta}(\hat{a}) \coloneq \frac{1}{T}\int_{0}^{T}\int_{\R^{d}\times\R^{d}}  \left |  \overline{\mathbf{D}}^{T} \left( \begin{array}{ccc}
 \hat a'(\mathbf{D} \, e_{t}(x)_{1} ) \\
 \vdots\\
  \hat a'(\mathbf{D} \, e_{t}(x))_{d+1} ) 
  \end{array}\right) -u \right |^{2} \di\eta_{t}(x,u) \,\di t \,.
\end{displaymath}
Also in this case, we notice that if~$\{\eta_{t}:t\in[0,T]\}\subseteq\mathcal{P}(\R^{d}\times\R^{d})$ is such that
\begin{equation}\label{e.concentration2}
\int_{0}^{T} \int_{\R^{d}\times \R^{d}}  \left |  \overline{\mathbf{D}}^{T} \left( \begin{array}{ccc}
  a'(\mathbf{D} \, e_{t}(x)_{1} ) \\
 \vdots\\
   a'(\mathbf{D} \, e_{t}(x))_{d+1} ) 
  \end{array}\right) -u \right |^{2} \di\eta_{t}(x,u) \,\di t = 0\,,
\end{equation}
then we can also express~$\J_{\eta}$ in the equivalent form
\begin{displaymath}
\mathcal{J}_{\eta}(\hat{a})\coloneq \int_{0}^{T}\int_{\R^{d}\times\R^{d}} \left | \overline{\mathbf{D}}^{T}  \left(\begin{array}{ccc}
\hat a'(\mathbf{D} \, e_{t}(x)_{1} ) - a'( \mathbf{D} \, e_{t}(x)_{1} )\\
\vdots\\
\hat a'(\mathbf{D} \, e_{t}(x)_{d+1}) - a'( \mathbf{D} \, e_{t}(x)_{d+1})
\end{array} \right) \right |^{2} \di \eta_{t}(x,u)\,\di t\,.
\end{displaymath}

We now adapt the main results of Section~\ref{s.reconstruction}, namely, Proposition~\ref{p.7} and Theorem~\ref{p.10}.

\begin{proposition}\label{p.7.2}
Let $\{\eta_{t}: t\in[0,T]\}$ be a Borel family in~$\mathcal{P}(\R^{d}\times\R^{d})$ with uniformly compact support and such that~\eqref{e.concentration2} is satisfied. Let $\eta \coloneq \eta_{t}\otimes\mathcal{L}^{1}_{|[0,T]}$,~$\tilde{\eta}\in \mathcal{M}^{+}_{b}([0,T]\times\R^{d})$ and~$O_{\eta}$ be as in~\eqref{e.tildeeta}-\eqref{e.Oeta}, and let $\hat{a}_{1}, \hat{a}_{2}\in A_{M,R}$. Assume that~$O_{\eta}\subseteq \I_{R}$. Then,
\begin{displaymath}
| \J_{N, \varepsilon}(\hat{a}_{1}) - \J_{\eta}(\hat{a}_{2})| \leq D_{1}\,  W_{1}( \eta^{N}_{\varepsilon, t} \otimes \mathcal{L}^{1}_{|[0,T]}, \eta_{t}\otimes\mathcal{L}^{1}_{|[0,T]}) + D_{2} \|\hat{a}_{1} - \hat{a}_{2}\|_{W^{1,\infty}(O_{\eta})} \,,
\end{displaymath}
for some positive constants $D_{1}, D_{2}$ depending on~$\|\mathbf{D}\|$,~$d$,~$M$,~$T$,~$f_{1}$, and~$f_{2}$.
\end{proposition}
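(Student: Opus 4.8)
The plan is to mirror closely the proof of Proposition~\ref{p.7}, since the functionals~$\J_{N,\varepsilon}$ and~$\J_{\eta}$ here differ from their abstract counterparts only through the explicit substitution $\nabla_x E(t,x) = \overline{\mathbf D}^T (a'((\mathbf D\,e_t(x))_j))_{j=1,\dots,d+1}$ coming from~\eqref{e.gradient}. First I would split the estimate by the triangle inequality exactly as in~\eqref{e.19}, writing $|\J_{N,\varepsilon}(\hat a_1) - \J_{\eta}(\hat a_2)| \le I_1 + I_2$ where $I_1 \coloneq |\J_{N,\varepsilon}(\hat a_1) - \J_\eta(\hat a_1)|$ and $I_2 \coloneq |\J_\eta(\hat a_1) - \J_\eta(\hat a_2)|$, having first used the concentration hypothesis~\eqref{e.concentration2} to rewrite $\J_\eta(\hat a_j)$ in the difference form $\frac1T\int_0^T\!\!\int |\overline{\mathbf D}^T(\hat a_j'((\mathbf D e_t(x))_\ell) - a'((\mathbf D e_t(x))_\ell))_\ell|^2\,\di\eta_t\,\di t$.

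For $I_1$ the integrand $g_{\hat a_1}(t,x,u) \coloneq |\overline{\mathbf D}^T(\hat a_1'((\mathbf D e_t(x))_\ell) - a'((\mathbf D e_t(x))_\ell))_\ell|^2$ is, on the relevant compact set $[0,T]\times\overline{\B}_R\times\overline{\B}_R$, Lipschitz in $(t,x)$ with a constant controlled by $\|\mathbf D\|$, $d$, $M$, $T$, and the $C^{1,1}$-norms of $f_1,f_2$ (one differentiates the composition $x\mapsto \mathbf D\,e_t(x)$, whose Jacobian is $\overline{\mathbf D}$, and uses that $\hat a_1', a'$ are Lipschitz with constant $\le M$ on $\I_R$ and bounded by $M$; the extra $t$-dependence enters only through $f_1',f_2'$). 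Hence, as in~\eqref{e.20},
\[
I_1 \le \tfrac1T \,\mathrm{Lip}_{[0,T]\times\overline{\B}_R}(g_{\hat a_1})\; W_1(\eta^N_{\varepsilon,t}\otimes\mathcal L^1_{|[0,T]},\,\eta_t\otimes\mathcal L^1_{|[0,T]}) =: D_1\,W_1(\eta^N_{\varepsilon,t}\otimes\mathcal L^1_{|[0,T]},\,\eta_t\otimes\mathcal L^1_{|[0,T]}),
\]
which is legitimate because both empirical and limit measures are supported where $|\mathbf D\,e_t(x)| \le R$, so the arguments of $\hat a_1'$ and $a'$ stay in $\I_R \supseteq O_\eta$.

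For $I_2$ I would factor the difference of squares as in~\eqref{e.22}: pointwise
\[
g_{\hat a_1} - g_{\hat a_2} = \big(\overline{\mathbf D}^T v_1 - \overline{\mathbf D}^T v_2\big)\cdot\big(\overline{\mathbf D}^T v_1 + \overline{\mathbf D}^T v_2\big),
\]
with $v_j$ the vector with entries $\hat a_j'((\mathbf D e_t(x))_\ell) - a'((\mathbf D e_t(x))_\ell)$. The first factor is bounded by $\|\overline{\mathbf D}\|\sqrt{d+1}\,\|\hat a_1' - \hat a_2'\|_{L^\infty(O_\eta)}$ — here the push-forward structure~\eqref{e.tildeeta} and the definition~\eqref{e.Oeta} of $O_\eta$ are exactly what guarantees that for $\eta_t$-a.e.\ $(x,u)$ each coordinate $(\mathbf D\,e_t(x))_\ell$ lies in $\supp((\pi_\ell)_\#\tilde\eta)\subseteq O_\eta$ — while the second factor is bounded by $2\|\overline{\mathbf D}\|\sqrt{d+1}\,(M + \sup_{\I_R}|a'|) \le C\|\overline{\mathbf D}\|\sqrt{d+1}\,M$. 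Integrating over the probability measure $\eta_t\otimes\mathcal L^1_{|[0,T]}$ (total mass $\le T$, cancelling the $1/T$) yields $I_2 \le D_2\|\hat a_1 - \hat a_2\|_{W^{1,\infty}(O_\eta)}$ with $D_2$ depending only on $\|\mathbf D\|$, $d$, $M$, $T$. Combining the two bounds gives the claim.

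The only genuinely non-routine point — and where I would be most careful — is bookkeeping the domains: one must verify that the arguments $(\mathbf D\,e_t(x))_\ell$ fed into $\hat a', a'$ really remain in $\I_R$ (for the $W_1$-Lipschitz estimate, using $\mathrm{supp}(\eta^N_{\varepsilon,t})\cup\mathrm{supp}(\eta_t)\subseteq\overline{\B}_R\times\overline{\B}_R$ and the bound on $\mathbf D\,e_t(x)$ in terms of $R$, $f_1$, $f_2$) and in $O_\eta$ (for the $I_2$ estimate, using~\eqref{e.Oeta}); the hypothesis $O_\eta\subseteq\I_R$ reconciles the two. Everything else is the same triangle-inequality/difference-of-squares computation as in Proposition~\ref{p.7}, now pre-composed with the fixed linear map $\mathbf D$ and affine map $e_t$.
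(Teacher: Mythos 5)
Your proposal is correct and follows essentially the same route as the paper's proof: the triangle-inequality split into $I_1+I_2$, the $W_1$-Lipschitz duality bound for $I_1$ using \eqref{e.gradient}, and the difference-of-squares factorization for $I_2$ controlled via $O_\eta$. The only cosmetic difference is that you argue pointwise that $(\mathbf{D}\,e_t(x))_\ell$ lies $\eta$-a.e.\ in $\supp((\pi_\ell)_\#\tilde\eta)\subseteq O_\eta$, whereas the paper makes the same point by explicitly rewriting $I_2$ as an integral with respect to the push-forward $(\pi_1,\ldots,\pi_{d+1})_\#\tilde\eta$; the estimates are identical.
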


\begin{proof}
For $\hat{a}_{1}, \hat{a}_{2} \in A_{M,R}$ we have
\begin{displaymath}
| \J_{N, \varepsilon}(\hat{a}_{1}) - \J_{\eta}(\hat{a}_{2})| \leq  | \J_{N, \varepsilon}(\hat{a}_{1}) -  \J_{\eta}(\hat{a}_{1}) | + | \J_{\eta}(\hat{a}_{1}) - \J_{\eta}(\hat{a}_{2})| =: I_{1}+ I_{2}\,.
\end{displaymath}

 Following the lines of the proof of Proposition~\ref{p.7} and using~\eqref{e.gradient}, we get
 \begin{displaymath}
 I_{1} \leq   D_{1}\,  W_{1}( \eta^{N}_{\varepsilon, t} \otimes \mathcal{L}^{1}_{|[0,T]}, \eta_{t}\otimes\mathcal{L}^{1}_{|[0,T]})\,,
 \end{displaymath}
 where~$D_{1}= D_{1}(\|\mathbf{D}\|, d, M, T )>0$.
 
 As for~$I_{2}$ we write
 \begin{displaymath}
 \begin{split}
 I_{2} & \leq  \frac{1}{T}\int_{0}^{T}\int_{\R^{d}\times\R^{d}}  \left | \overline{\mathbf{D}}^{T}  \left(\begin{array}{ccc}
\hat a_{1}'(\mathbf{D} \, e_{t}(x)_{1} ) - \hat a_{2}'( \mathbf{D} \, e_{t}(x)_{1} )\\
\vdots\\
\hat a_{1}'(\mathbf{D} \, e_{t}(x)_{d+1}) - \hat a_{2}'( \mathbf{D} \, e_{t}(x)_{d+1})
\end{array} \right) \right | \\
&\qquad \qquad\qquad \left(  \left | \overline{\mathbf{D}}^{T}  \left(\begin{array}{ccc}
\hat a_{1}'(\mathbf{D} \, e_{t}(x)_{1} ) - a'( \mathbf{D} \, e_{t}(x)_{1} )\\
\vdots\\
\hat a_{1}'(\mathbf{D} \, e_{t}(x)_{d+1}) - a'( \mathbf{D} \, e_{t}(x)_{d+1})
\end{array} \right) \right | \right.\\
& \qquad\qquad\qquad + \left.  \left | \overline{\mathbf{D}}^{T}  \left(\begin{array}{ccc}
\hat a_{2}'(\mathbf{D} \, e_{t}(x)_{1} ) - a'( \mathbf{D} \, e_{t}(x)_{1} )\\
\vdots\\
\hat a_{2}'(\mathbf{D} \, e_{t}(x)_{d+1}) - a'( \mathbf{D} \, e_{t}(x)_{d+1})
\end{array} \right) \right| \right) \di  \eta_{t} (x, u)\,\di t \\
&=  \frac{1}{T}\int_{\R^{d+1}}  \left | \overline{\mathbf{D}}^{T}  \left(\begin{array}{ccc}
\hat a_{1}'(y_{1} ) - \hat a_{2}'( y_{1} )\\
\vdots\\
\hat a_{1}'(y_{d+1}) -\hat a_{2}'(y_{d+1})
\end{array} \right) \right | \left(  \left | \overline{\mathbf{D}}^{T}  \left(\begin{array}{ccc}
\hat a_{1}'(y_{1} ) - a'(y_{1} )\\
\vdots\\
\hat a_{1}'(y_{d+1}) - a'( y_{d+1})
\end{array} \right) \right | \right.\\
& \qquad\qquad\qquad + \left.  \left | \overline{\mathbf{D}}^{T}  \left(\begin{array}{ccc}
\hat a_{2}'(y_{1} ) - a'(y_{1} )\\
\vdots\\
\hat a_{2}'(y_{d+1}) - a'( y_{d+1})
\end{array} \right) \right| \right) \di (\pi_{1}, \ldots, \pi_{d+1})_{\#} \tilde\eta (y) \\
& \leq D_{2} \|\hat{a}_{1} - \hat{a}_{2}\|_{W^{1,\infty}(O_{\eta})} \,,
 \end{split}
 \end{displaymath}
 where~$D_{2}= D_{2}(\|\mathbf{D}\|, d, M, T, f_{1}, f_{2})>0$. This concludes the proof of the proposition.
 \end{proof}
 
 As an immediate consequence of Proposition~\ref{p.7.2} we deduce the following.
 
 \begin{corollary}\label{c.1.2}
Let $\varepsilon_{N}$ be a null sequence. Assume that the sequence~$\eta^{N}_{\varepsilon_{N}, t} \otimes \mathcal{L}^{1}_{| [0,T]}$ converges narrowly (and hence with respect to~$W_{1}$) to~$\eta \coloneq \eta_{t}\otimes \mathcal{L}^{1}_{| [0,T]}$ for a Borel family~$\{\eta_{t}: t\in[0,T]\} \subseteq \mathcal{P}(\R^{d}\times\R^{d})$. Let~$(V_{N})_{N \in \mathbb N}$ be a sequence of closed subspaces of~$A_{M,R}$ having the uniform approximation property of Definition~\ref{d.UAena} with respect to~$\eta$. Then, the sequence of functionals~$\J_{N,\varepsilon_{N}}$ defined on~$V_{N}$ $\Gamma$-converges to~$\J_{\eta}$ in~$A_{M,R}$ with respect to the topology of~$W^{1,\infty}(O_{\eta})$.
\end{corollary}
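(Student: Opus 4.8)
The statement is the rod-model counterpart of Corollary~\ref{c.1}, and the plan is to read off both $\Gamma$-convergence inequalities from the quantitative estimate of Proposition~\ref{p.7.2}, exactly as Corollary~\ref{c.1} was deduced from Proposition~\ref{p.7}. First I would verify the hypotheses of Proposition~\ref{p.7.2} for the limiting family $\{\eta_t\}$. Uniform compactness of $\supp(\eta_t)$ is inherited from the narrow limit, since each $\eta^{N}_{\varepsilon_{N},t}\otimes\mathcal{L}^{1}_{|[0,T]}$ is supported in the fixed compact set $[0,T]\times\overline{\B}_{R}\times\overline{\B}_{R}$ by Proposition~\ref{p.1}(b) and the choice of $R$. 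The concentration identity \eqref{e.concentration2}, which by \eqref{e.gradient} amounts to $\int_{0}^{T}\!\int|\nabla_{x}E(t,x)-u|^{2}\,\di\eta_{t}\,\di t=0$, follows from Proposition~\ref{p.1}(c) — which gives $\int_{0}^{T}\!\int|\nabla_{x}E(t,x)-u|^{2}\,\di\eta^{N}_{\varepsilon_{N},t}\,\di t\le\overline{C}\varepsilon_{N}\to0$ — together with the narrow convergence and the boundedness and continuity of the integrand on the common compact support. Finally $O_{\eta}\subseteq\I_{R}$ holds by the choice of $R$: the bound on $x$ over $\supp(\bar\eta)\subseteq[0,T]\times\overline{\B}_{R}$ forces a uniform bound on $\mathbf{D}\,e_{t}(x)$ (also depending on $f_{1},f_{2}$), so each $(\pi_{i})_{\#}\tilde\eta$ is supported in $\I_{R}$.

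With these checks in place, the two $\Gamma$-convergence inequalities are immediate. For the $\liminf$ inequality, let $\hat a_{N}\in V_{N}$ converge to $\hat a$ in $W^{1,\infty}(O_{\eta})$; applying Proposition~\ref{p.7.2} with $\hat a_{1}=\hat a_{N}$, $\hat a_{2}=\hat a$, and $\varepsilon=\varepsilon_{N}$ gives
\[
|\J_{N,\varepsilon_{N}}(\hat a_{N})-\J_{\eta}(\hat a)|\le D_{1}\,W_{1}(\eta^{N}_{\varepsilon_{N},t}\otimes\mathcal{L}^{1}_{|[0,T]},\eta_{t}\otimes\mathcal{L}^{1}_{|[0,T]})+D_{2}\|\hat a_{N}-\hat a\|_{W^{1,\infty}(O_{\eta})}\,.
\]
The first term tends to $0$ because narrow convergence of probability measures with uniformly bounded support implies convergence in $W_{1}$, and the second tends to $0$ by hypothesis; hence $\J_{N,\varepsilon_{N}}(\hat a_{N})\to\J_{\eta}(\hat a)$, and in particular $\liminf_{N}\J_{N,\varepsilon_{N}}(\hat a_{N})\ge\J_{\eta}(\hat a)$. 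For the $\limsup$ inequality, given $\hat a\in A_{M,R}$ I would use the uniform approximation property of Definition~\ref{d.UAena} to produce a sequence $\hat a_{N}\in V_{N}$ with $\hat a_{N}\to\hat a$ in $W^{1,\infty}(O_{\eta})$; the same application of Proposition~\ref{p.7.2} then gives $\J_{N,\varepsilon_{N}}(\hat a_{N})\to\J_{\eta}(\hat a)$, so $(\hat a_{N})$ is a recovery sequence.

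I do not expect a genuine obstacle here: once Proposition~\ref{p.7.2} and Definition~\ref{d.UAena} are in hand, the argument merely transcribes the proof of Corollary~\ref{c.1}. The step that deserves the most care — rather than a real difficulty — is the bundle of preliminary verifications above, namely extracting the concentration \eqref{e.concentration2} from Proposition~\ref{p.1}(c) together with the stability of that functional under narrow convergence, and tracing the inclusion $O_{\eta}\subseteq\I_{R}$ back to the a priori bounds on the trajectories and on the boundary data $f_{1},f_{2}$ that were used to fix $R$ and $M$.
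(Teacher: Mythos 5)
Your proposal is correct and follows essentially the same route as the paper: the paper proves Corollary~\ref{c.1.2} exactly by transcribing the argument of Corollary~\ref{c.1}, i.e., by reading both the $\Gamma$-liminf and the $\Gamma$-limsup inequalities off the quantitative estimate of Proposition~\ref{p.7.2}, with the recovery sequence supplied by the uniform approximation property of Definition~\ref{d.UAena}. Your preliminary verifications (uniform compact support, the concentration identity~\eqref{e.concentration2} via Proposition~\ref{p.1}(c) and narrow convergence, and $O_{\eta}\subseteq\I_{R}$) are left implicit in the paper but are consistent with its setup.
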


\begin{proof}
As Corollary~\ref{c.1}, this result follows from the application of Proposition~\ref{p.7.2}.
\end{proof}

We now adapt Theorem~\ref{p.10}.

\begin{theorem}\label{p.10.2}
Let $\delta>0$,~$\varepsilon_{N}>0$ and~$\eta^{N}_{\varepsilon_{N},t}\in\mathcal{P}(\R^{d}\times\R^{d})$ be as in Proposition~\ref{p.6}. Assume that there exists a Borel family~$\{\eta^{\delta}_{t}\}_{t\in[0,T]}\subseteq\mathcal{P}(\R^{d}\times\R^{d})$ such that $\eta^{N}_{\varepsilon_{N},t}\otimes\mathcal{L}^{1}_{|[0,T]}$ converges narrowly to $\eta^{\delta} \coloneq\eta^{\delta}_{t}\otimes\mathcal{L}^{1}_{|[0,T]}$. Suppose that~$(V_{N})_{N\in\mathbb{N}}$ satisfies the uniform approximation property of Definition~\ref{d.UAena} with respect to~$\eta^{\delta}$ and denote with~$\hat{a}_{N}\in V_{N}$ a solution of
\begin{equation}\label{e.optN}
\min_{\hat{a}\in V_{N}}\, \J_{N}(\hat{a})\,.
\end{equation}
Then,~$\hat{a}_{N}$ converges, up to a subsequence, to some~$\hat{a}_{\delta}\in A_{M,R}$ satisfying ~$\J_{\eta^{\delta}}(\hat{a}_{\delta}) \leq C \delta$ for a positive constant~$C$ independent of~$\delta$.

Moreover, there exist a further Borel family~$\{\eta_{t}:t\in [0,T]\} \subseteq \mathcal{P}(\R^{d}\times\R^{d})$ and a further~$\hat{a}\in A_{M,R}$ such that, up to a subsequence,~$\eta^{\delta}$ converges narrowly to $\eta \coloneq \eta_{t}\otimes \mathcal{L}^{1}_{|[0,T]}$,~$\hat{a}_{\delta}\to \hat{a}$ in~$W^{1,\infty}(\I_{R})$ as $\delta \to 0$, and~$\J_{\eta}(\hat{a}) =0$.
\end{theorem}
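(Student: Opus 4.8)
The plan is to mirror the proof of Theorem~\ref{p.10} almost verbatim, substituting Proposition~\ref{p.7.2} and Corollary~\ref{c.1.2} for Proposition~\ref{p.7} and Corollary~\ref{c.1}, and the function space $A_{M,R}$ (with its weak$^*$-compactness and the topology of $W^{1,\infty}(O_\eta)$, respectively $W^{1,\infty}(\I_R)$) for $X_{M,R}$. First I would introduce $\hat a_{N,\varepsilon_N} \in V_N$ a minimizer of $\J_{N,\varepsilon_N}$ over $V_N$; existence follows exactly as in Proposition~\ref{p.8} by compactness of $A_{M,R}$ and dominated convergence. The bridge between $\J_{N,\varepsilon_N}$ and $\J_N$ is provided by (the analogue for $a$ of) Proposition~\ref{p.6}: inspecting its proof, the estimates \eqref{e.14} only used the Lipschitz bounds on $\nabla_x E$, $\nabla_x\hat E$ and the uniform boundedness of trajectories; via \eqref{e.gradient} the chain rule gives $\nabla_x E(t,x) = \overline{\mathbf D}^T(a'(\mathbf D e_t(x)))$, and since $\overline{\mathbf D}$ is a fixed matrix and $e_t$ is affine with $C^{1,1}$ coefficients $f_1,f_2$, the same Gronwall-type argument yields constants $\overline C_1,\overline C_2$ (now depending also on $\|\mathbf D\|$, $f_1$, $f_2$) such that for $0<\varepsilon\le\varepsilon_N$ and all $\hat a \in A_{M,R}$ one has $\J_{N,\varepsilon}(\hat a)\le \overline C_1(\J_N(\hat a)+\delta+\varepsilon)$ and $\J_N(\hat a)\le \overline C_2(\J_{N,\varepsilon}(\hat a)+\delta+\varepsilon)$.

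Next I would reproduce the chain of inequalities \eqref{e.24}: using minimality of $\hat a_N$ for $\J_N$ and of $\hat a_{N,\varepsilon_N}$ for $\J_{N,\varepsilon_N}$,
\begin{displaymath}
\J_{N,\varepsilon_N}(\hat a_N) \le \overline C_1(\J_N(\hat a_N)+\delta+\varepsilon_N) \le \overline C_1(\J_N(\hat a_{N,\varepsilon_N})+\delta+\varepsilon_N) \le \overline C_1\overline C_2(\J_{N,\varepsilon_N}(\hat a_{N,\varepsilon_N})+\delta+\varepsilon_N)\,.
\end{displaymath}
By weak$^*$-compactness of $A_{M,R}$, a subsequence of $\hat a_N$ converges to some $\hat a_\delta \in A_{M,R}$ in $W^{1,\infty}(\I_R)$, and hence in $W^{1,\infty}(O_\eta)$ since $O_{\eta^\delta}\subseteq \I_R$. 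Proposition~\ref{p.7.2} then gives $\J_{N,\varepsilon_N}(\hat a_N)\to\J_{\eta^\delta}(\hat a_\delta)$; applying Proposition~\ref{p.9} (in its $a$-version) to the minimizers $\hat a_{N,\varepsilon_N}$ shows they converge in $W^{1,\infty}(\I_R)$ to a minimizer of $\J_{\eta^\delta}$, whence by Corollary~\ref{c.1.2} the $\Gamma$-liminf inequality and the upper bound force $\J_{N,\varepsilon_N}(\hat a_{N,\varepsilon_N})\to 0$. Passing to the limit in the displayed chain yields $\J_{\eta^\delta}(\hat a_\delta)\le \overline C_1\overline C_2(\delta)\le C\delta$ with $C$ independent of $\delta$.

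For the second assertion I would observe that the measures $\eta^\delta = \eta^\delta_t\otimes\mathcal L^1_{|[0,T]}$ all have support contained in a fixed compact subset of $[0,T]\times\overline{\B}_R\times\overline{\B}_R$ (by Proposition~\ref{p.1} and Remark~\ref{r.unif}), so by Prokhorov a subsequence converges narrowly to some $\eta = \eta_t\otimes\mathcal L^1_{|[0,T]}$ with $\{\eta_t\}$ Borel; simultaneously $\hat a_\delta$ converges, up to a subsequence, weak$^*$ in $A_{M,R}$ and hence strongly in $W^{1,\infty}(\I_R)$ to some $\hat a \in A_{M,R}$. A stability estimate of the type of Proposition~\ref{p.7.2} (comparing $\J_{\eta^\delta}(\hat a_\delta)$ with $\J_\eta(\hat a)$ through the Wasserstein distance of the measures and the $W^{1,\infty}$-distance of $\hat a_\delta$ to $\hat a$) gives $\J_\eta(\hat a)=\lim_{\delta\to0}\J_{\eta^\delta}(\hat a_\delta)\le\lim_{\delta\to0}C\delta=0$, i.e. $\J_\eta(\hat a)=0$. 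The main obstacle is purely bookkeeping rather than conceptual: one must make sure that the $O_\eta$ appearing in Definition~\ref{d.UAena} behaves well under the double limit (in $N$ and then in $\delta$), i.e. that the uniform approximation property with respect to $\eta^\delta$ is enough to run the $\Gamma$-convergence step for each fixed $\delta$, and that all constants ($\overline C_1,\overline C_2,C,D_1,D_2$) remain independent of $\delta$ and $N$ — this is guaranteed because they only involve $\|\mathbf D\|$, $d$, $M$, $T$, $f_1$, $f_2$ and the $\varepsilon$-uniform bounds from Proposition~\ref{p.1}, none of which depend on $\delta$ or $N$.
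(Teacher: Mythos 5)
Your proposal is correct and follows essentially the same route as the paper, whose proof of this theorem consists precisely of the instruction to repeat the argument of Theorem~\ref{p.10} step by step, noting that the estimates of Proposition~\ref{p.6} remain valid in the present setting; you have simply spelled out those steps (the chain of inequalities~\eqref{e.24}, compactness of $A_{M,R}$, Proposition~\ref{p.7.2}, Corollary~\ref{c.1.2}, and the Prokhorov argument for the $\delta\to0$ limit) with the correct bookkeeping on the constants.
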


\begin{proof}
It is enough to follow step by step the proof of Theorem~\ref{p.10} taking into account that the results of Proposition~\ref{p.6} still hold in the present framework.
\end{proof}

\begin{remark}
Let us comment on the results of Theorem~\ref{p.10.2}. Since $\ker(\overline{\mathbf{D}}^{T})= \left\langle (1,\ldots, 1)\right\rangle$, we can write~$\J_{\eta}( \hat a)$ as
\begin{displaymath}
\begin{split}
\J_{\eta}(\hat a) = \int_{0}^{T} \int_{\R^{d} \times \R^{d} }  & \left | \overline{\mathbf{D}}^{T}  \left[  \left(\begin{array}{ccc}
\hat a'(\mathbf{D} \, e_{t}(x)_{1} ) - a'( \mathbf{D} \, e_{t}(x)_{1} )\\
\vdots\\
\hat a'(\mathbf{D} \, e_{t}(x)_{d+1}) - a'( \mathbf{D} \, e_{t}(x)_{d+1}) 
\end{array}  \right)  \right.  \right.  \\
& \qquad\qquad\qquad\qquad \left. \left. - \frac{1}{d+1} \left( \begin{array}{ccc}
 \sum_{i=1}^{d+1}( \hat{a}' - a') (\mathbf{D}\, e_{t} (x)_{i} )\\
 \vdots\\
 \sum_{i=1}^{d+1}( \hat{a}' - a') (\mathbf{D}\, e_{t} (x)_{i})
 \end{array}\right) \right] \right |^{2} \di \eta_{t}(x,u)\,\di t\,.
 \end{split}
\end{displaymath}
By definition of~$\tilde{\eta}$ in~\eqref{e.tildeeta}, we can further recast the expression as  
\begin{equation}\label{e.estimateJeta}
\J_{\eta}(\hat{a}) = \int_{\R^{d+1}} \left | \overline{\mathbf{D}}^{T}  \left[ \left(\begin{array}{ccc}
\hat a'(z_{1} ) - a'( z_{1} )\\
\vdots\\
\hat a'(z_{d+1}) - a'( z_{d+1})
\end{array}  \right) - \frac{1}{d+1} \left( \begin{array}{ccc}
 \sum_{i=1}^{d+1}( \hat{a}' - a') (z_{i} )\\
 \vdots\\
 \sum_{i=1}^{d+1}( \hat{a}' - a') (z_{i})
 \end{array}\right) \right ] \right |^{2} \di \tilde \eta ( z )  \,.
\end{equation}
Again since $\ker(\overline{\mathbf{D}}^{T}) = \left\langle (1,\ldots, 1)\right\rangle$, we deduce that there exists a positive constant~$c>0$ such that $|\overline{\mathbf{D}}^{T} w| \geq c|w|$ for every~$w\in\R^{d+1}$ with $\sum_{i=1}^{d+1} w_{i}=0$. Therefore, we can estimate~\eqref{e.estimateJeta} from below with
\begin{equation}\label{e.estimateeta}
\J_{\eta}(\hat{a}) \geq c \int_{\R^{d+1}} \left | \left(\begin{array}{ccc}
\hat a'(z_{1} ) - a'( z_{1} )\\
\vdots\\
\hat a'(z_{d+1}) - a'( z_{d+1})
\end{array}  \right) - \frac{1}{d+1} \left( \begin{array}{ccc}
 \sum_{i=1}^{d+1}( \hat{a}' - a') (z_{i} )\\
 \vdots\\
 \sum_{i=1}^{d+1}( \hat{a}' - a') (z_{i})
 \end{array}\right) \right |^{2} \di \tilde \eta ( z )  \,.
\end{equation}

If we assume that~$\hat{a}\in A_{M,R}$ is a minimizer of~$\J_{\eta}$, i.e.,~$\J_{\eta}(\hat{a}) =0$, inequality~\eqref{e.estimateeta} implies that for~$\tilde{\eta}$-a.e.~$z\in\R^{d+1}$, for every~$i=1,\ldots, d+1$ it holds
\begin{equation}\label{e.consequence2}
\hat{a}'(z_{i}) - a'(z_{i}) = \frac{1}{d+1} \sum_{j=1}^{d+1} \hat{a}'(z_{j}) - a'(z_{j})\,.
\end{equation}
If, for instance, the support of~$\tilde{\eta}$ is connected, we deduce from~\eqref{e.consequence2} that the function~$\hat{a}$, constructed following the approach of Theorem~\ref{p.10.2}, satisfies 
\begin{equation}\label{e.consequence}
\hat{a}'(s) - a'(s) = \rm{const}, \qquad \text{for every~$s\in O_{\eta}$}\,,
\end{equation}
where~$O_{\eta}$ has been defined in~\eqref{e.Oeta}. Clearly, the constant in~\eqref{e.consequence} remains unknown. However, as we will do in our numerical experiments in Section~\ref{s.numerics}, if we further restrict our attention to the class of functions~$\hat{a}, a \in A_{M,R}$ with~$\hat{a}'(0) = a'(0)=0$ and, in addition, we have~$0\in O_{\eta}$, then $\hat{a}'(s) - a'(s) \equiv 0$ in~$O_{\eta}$, and we obtain exact identification of the gradient of the energy, since clearly the energy~$E$ is always defined up to a constant.

We finally notice that the above considerations can still hold even if we drop the connectedness of~$\supp ( \tilde\eta)$. Assume that $\supp(\tilde \eta) = U_{1} \cup U_{2}$, where~$U_{i}$ are two connected components of~$\supp(\tilde \eta)$. If, for instance, there exists~$s\in O_{\eta}$,~$z^{1}\in U_{1}$, and~$z^{2}\in U_{2}$ such that
\begin{displaymath}
z^{1}= (z^{1}_{1}, \ldots, z^{1}_{i-1}, s, z^{1}_{i+1}, \ldots, z^{1}_{d+1}) \qquad\text{and}\qquad z^{2} = (z^{2}_{1},\ldots, z^{2}_{j-1}, s, z^{2}_{j+1}, \ldots, z^{2}_{d+1})\,,
\end{displaymath} 
then we can still deduce~\eqref{e.consequence}. Indeed, by~\eqref{e.consequence2} we have that
\begin{displaymath}
\hat{a}'(z_{i}) - a'(z_{i}) = \hat{a}'(s) - a'(s) \qquad\text{for~$z\in U_{1}$, for every $i=1, \ldots, d+1$}\,,
\end{displaymath}
and the same holds true in~$U_{2}$. Hence,~\eqref{e.consequence} holds true.
\end{remark}


\subsection{Numerical results}\label{s.numerics}

In this section we present numerical experiments, which show the practical efficiency of the optimization~\eqref{e.optN} in recovering the potential function~$a$ from observation of a finite number of evolutions of critical points. In particular, we highlight some practical issues and the impact of various parameters of the problem on the reconstructions. First of all, we recast the problem in a discrete and numerically efficient implementation. Afterwards, we focus on how the available information -- corresponding to the number of experiments or measurements per experiment -- impacts the quality of reconstructions. Finally, we show that the choice of the constant~$M$ as in~$A_{M,R}$ is in a sense generic, as sufficiently large~$M$ (for other parameters fixed) allows for appropriate reconstructions. Finally, we compare simulations of data-driven evolutions generated by the empirical $\hat a$ with those generated by the true potential $a$. We show the remarkable accordance of the results.

\subsubsection{Efficient numerically implementable formulation}
The following experiments are realized by a common numerical implementation and are applied to the toy mechanical example of Section \ref{s.example}.  As space of competitors, we consider 
\begin{equation}
V_\Lambda \coloneq \big  \{ \hat a \in  A_{M,R} | \   \hat a  \text{ is piecewise quadratic on a given grid }\Lambda \big  \} \,.\end{equation} 
We observe measurements at times $0=t_0< \dots<t_{N_e}=T$ with stepsizes $\Delta_m=(t_{m+1}-t_{m-1})/2$ and gridpoints $p_1<\dots<p_K$ of $\Lambda$ with stepsizes $\tilde \Delta_k=p_{k+1}-p_{k}$.
For an appropriate increasing sequence of grids $\Lambda \coloneq \Lambda(N)$, the corresponding sequence of spaces $V_N \coloneq V_{\Lambda(N)}$ has the uniform approximation property on compact sets.
We consider an initial data distribution~$\mu_0^N$ drawn from a $d$-dimensional normal distributions with uniform standard deviation. For any initial data~$x_0^i$ in the support of~$\mu_0^N$, we solve the system~\eqref{e.gf2} for trajectories~$x_\varepsilon^i$ for fixed $\varepsilon>0$.

As time-discrete approximation of the energy functional $\mathcal J_{N,\varepsilon}$  in \eqref{e.210} we consider 
\begin{equation} \label{equ:discrete_energy}
 \mathcal{ \tilde  J}_{N,\varepsilon}(\hat{a})\coloneq  \frac{1}{N} \sum_{i=1}^N \sum_{m=1}^{N_e} \Delta_m \left |  \overline{\mathbf{D}}^{T}  \left( \begin{array}{ccc}
 \hat a'(\mathbf{D} \, e_{t_m}(x_\varepsilon^i)_{1} ) -  a'(\mathbf{D} \, e_{t_m}(x_\varepsilon^i)_{1} )\\
 \vdots\\
  \hat a'(\mathbf{D} \, e_{t_m}(x_\varepsilon^i)_{d+1} ) -   a'(\mathbf{D} \, e_{t_m}(x_\varepsilon^i)_{d+1} )
\end{array}\right) \right |^2,
\end{equation}
obtained by replacing the integral in time with a sum of point evaluations, which would correspond to assuming solutions, control, and boundary conditions to be piecewise constant in time.  We assume that the arguments of~$\hat a'$ as  in~\eqref{equ:discrete_energy} are distributed according to 
a discrete version of~$\tilde{\eta}$ in Definition~\ref{d.UAena}, which encodes  the available information to recover~$a'$.

As previously stated,~$V_N$ needs to be designed in order to approximate~$A_{M,R}$. In order to  provide additionally a form of numerical stabilization and preconditioning, we choose the grid~$\Lambda$ adaptively with respect to the distribution~$\tilde{\eta}$. In particular we consider denser meshes in regions of the support where~$\tilde{\eta}$ has large density and coarser grids in regions of low density, thereby exploring the entire support of~$\tilde{\eta}$. 
 
As functional defined in  \eqref{equ:discrete_energy} solely depends on  derivatives $\hat a'$, we seek for $\hat a' \in V_N'$ which consists of piecewise linear functions such that $\hat a \in V_N$. In particular we consider the expansion 
\begin{equation}
\hat a'(r)= \sum_{\lambda=1}^{D(N)} \hat{a}_\lambda' \phi_\lambda(r) \,,
\end{equation}
where $\{\phi_\lambda: \lambda=1, \dots, D(N)\}$ is a set of suitable basis functions of $V_N'$, and ${\mathbf a'} \coloneq (\hat a_1',\dots,\hat {a}_{D(N)}')$ denotes the corresponding coefficient vector. From this information it is immediate by integration to identify~$a$  up to additive constants  on connected components of the support of~$\tilde \eta$ . Note, however, that it is not possible to relate additive constants at different connected components of the support of $\tilde \eta$. In case the initial distribution has connected support, since the forward processes are continuous, also the support of $\tilde \eta$ consists of connected components (at most one for each time $t_m$). Thus, one can assume that for a sufficient amount of experiments and connected support of the initial distribution there are only few connected components.

It should be noted that $\mathcal{\tilde J}_{N,\varepsilon}(\hat a)$ with $\hat{ a}'$ written as above can be written as a quadratic functional
\begin{equation} \label{equ:discrete_quadratic_problem}
\begin{split}
\mathcal{\tilde J}_{N,\varepsilon}(\hat a) & = \frac{1}{N}\sum_{i=1}^N \sum_{m=1}^{N_e} \Delta_m \left | \sum_{\lambda=1}^{D(N)} \hat a'_\lambda \overline{\mathbf{D}}^T
 \left( \begin{array}{ccc}
  \phi_\lambda(\mathbf{D} \, e_{t_m}(x_\varepsilon^i)_{1} ) )\\
 \vdots\\
  \hat \phi_\lambda(\mathbf{D} \, e_{t_m}(x_\varepsilon^i)_{d+1} )
  \end{array}\right)
 -  \overline{\mathbf{D}}^T
 \left( \begin{array}{ccc}   
         a'(\mathbf{D} \, e_{t_m}(x_\varepsilon^i)_{1} )\\
 \vdots\\
 a'(\mathbf{D} \, e_{t_m}(x_\varepsilon^i)_{d+1} )
\end{array}\right) \right |^2
\\
&
\vphantom{\int} = \| \mathbf{M}\mathbf{a'}- \mathbf{Y}\|^2_2
\end{split}
\end{equation}
with $\mathbf{a'}=(\hat a'_\lambda)_{\lambda=1}^{D(N)} \in \mathbb{R}^{D(N)}$, $\mathbf{M}\in \mathbf{R}^{dN{N_e} \times D(N)}$ and $\mathbf{Y}\in \mathbb{R}^{dN{N_e}}$. Here the data vector $\mathbf Y$ corresponds to \begin{equation*}
\mathbf Y= \left [ \overline{\mathbf{D}}^T\left( \begin{array}{ccc}   
         a'(\mathbf{D} \, e_{t_m}(x_\varepsilon^i)_{1} )\\
 \vdots\\
 a'(\mathbf{D} \, e_{t_m}(x_\varepsilon^i)_{d+1} )
\end{array}\right) \right ]_{i,m} 
\end{equation*} 
and the tensor $\mathbf{M}$   is defined blockwise via
\begin{equation}
\mathbf{M}_{im,\lambda} \coloneq \overline{\mathbf{D}}^T
 \underbrace{\left( \begin{array}{ccc}
 \hat \phi_\lambda(\mathbf{D} \, e_{t_m}(x_\varepsilon^i)_{1} ) )\\
 \vdots\\
  \hat \phi_\lambda(\mathbf{D} \, e_{t_m}(x_\varepsilon^i)_{d+1} )
  \end{array}\right)}_{B_{i,m}}
\end{equation}
Therefore the assembly of $\mathbf{M}$ corresponds to formulation of the interpolation matrix $B_{i,m}$ and the ``componentwise'' application of $\overline{\mathbf{D}}^T$ and can be done iteratively. In particular, the system is sparse with at most~$4$ entries per row, and thus  the approach can be applied even with a large number of measurements and experiments.

Minimizing the function $\mathcal{\tilde J}_{N,\varepsilon}(\hat a)$ over~$V_N$ is a quadratic optimization problem. However, we require the ansatz to be conform, i.e., ensure the inclusion $V_N \subset A_{M,R}$. This constraint requires  $\|\hat a\|_{W^{2,\infty}(I_R)}\leq M$ for $\hat a  \in V_N$.
To enforce this constraint numerically, note that $a'$ and $a''$ are bounded by the maximal and minimal values of the corresponding coefficient vectors $\mathbf{a'}$ and $\mathbf{a''}$ as follows: One considers the gradient operator 
\begin{equation} \label{equ:def_DG}
\mathbf{D}_{\Lambda} \coloneq \begin{pmatrix}
\frac 1 {\tilde \Delta_1} & - \frac 1 {\tilde \Delta_1} & 0& \dots &0 &0
\\
0 & \frac 1 {\tilde \Delta_2} & - \frac 1 {\tilde \Delta_2} & \dots &0 &0
\\
\vdots&\vdots &\ddots & \ddots & \vdots & \vdots
\\
0 & 0&0& \dots & \dots &\frac{1}{\tilde \Delta_{D(N)-1}} & -\frac{1}{\tilde \Delta_{D(N)-1}}
\end{pmatrix} \in \mathbb{R}^{(D(N)-1) \times D(N)}
\end{equation}
corresponding to the grid $\Lambda$, so that $\mathbf{a''}=\mathbf{D}_\Lambda \mathbf{a'}$ is the coefficient vector of the piecewise constant function corresponding to $a''$. Combining \eqref{equ:discrete_quadratic_problem} and \eqref{equ:def_DG}, we can consider as a discrete version of the reconstruction problem, 
\begin{equation}
\min_{\hat a' \in V_N'} \|\mathbf{M} \mathbf{a'}- \mathbf{Y}\|^2 \quad \text{subject to} \begin{cases} \|\mathbf{a'}\|_\infty \leq M_1,
\\
\|\mathbf{D}_{\Lambda}\mathbf{a'}\|_\infty \leq M_2.
\end{cases} 
\label{equ:discrete_minimization_problem}\end{equation}
Note that allowing two different bounds $M_1$ and $M_2$ offers more flexibility, while serving the same purpose in the theoretical setting of creating compactness. In particular this allows to target $a''$ more specifically by stricter bounds in order to avoid oscillatory behavior.

Moreover, note that due to the structure of~$\mathbf{M}$, $\mathbf{a'}$ can only be reconstructed up to a constant vector, meaning~$\hat a'$ can only be determined up to constant analogously to the considerations leading to~\eqref{e.consequence}.  For the sake of simplicity we further restrict the optimization to competitors with $\hat a'(0)=0$ and we assume that $a'(0)=0$ as well.

Moreover, we notice that, because of discretizations in time in~$\mathcal{\tilde{J}}_{N,\varepsilon}$ and the use of non equivalent constraints, the minimizer of~\eqref{equ:discrete_minimization_problem} does not precisely coincide with the minimizer of the original minimization problem. It is however reasonable to think that it indeed approximates the true solution of~$\mathcal{J}_{N,\varepsilon}$, which in turn approximates the true energy due to $\Gamma$-approximation.

Being \eqref{equ:discrete_minimization_problem} a least squares problem with norm  constraints, a variety of optimization algorithms are applicable. For the results presented in this work we used the CVX toolbox \cite{cvx,gb08}, which is well suited as all functions and operations can be written as convex functions and constraints.

\begin{figure}[h!]
\begin{center}
\begin{tabular}{l r}
\includegraphics[width=0.48\textwidth]{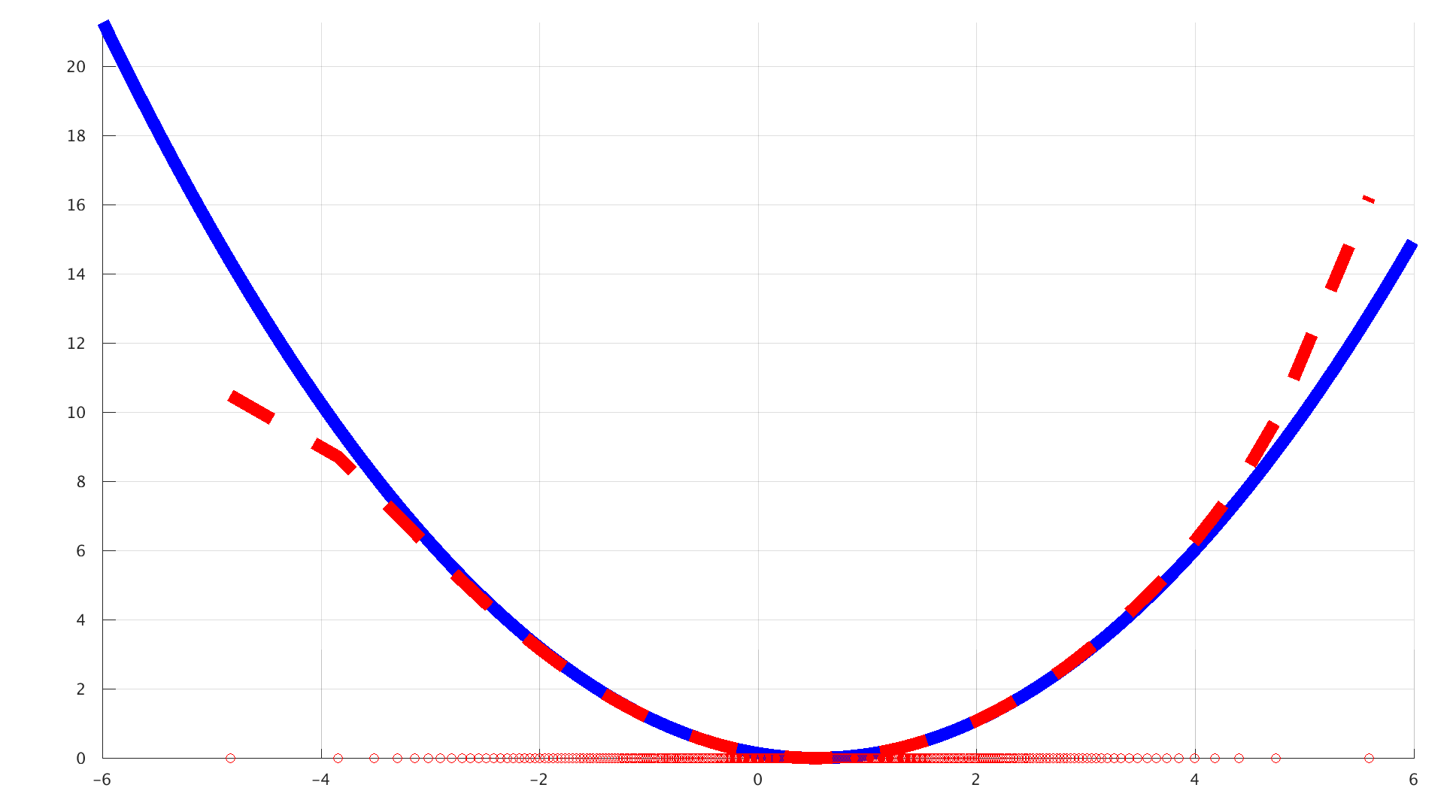}&\includegraphics[width=0.48\textwidth]{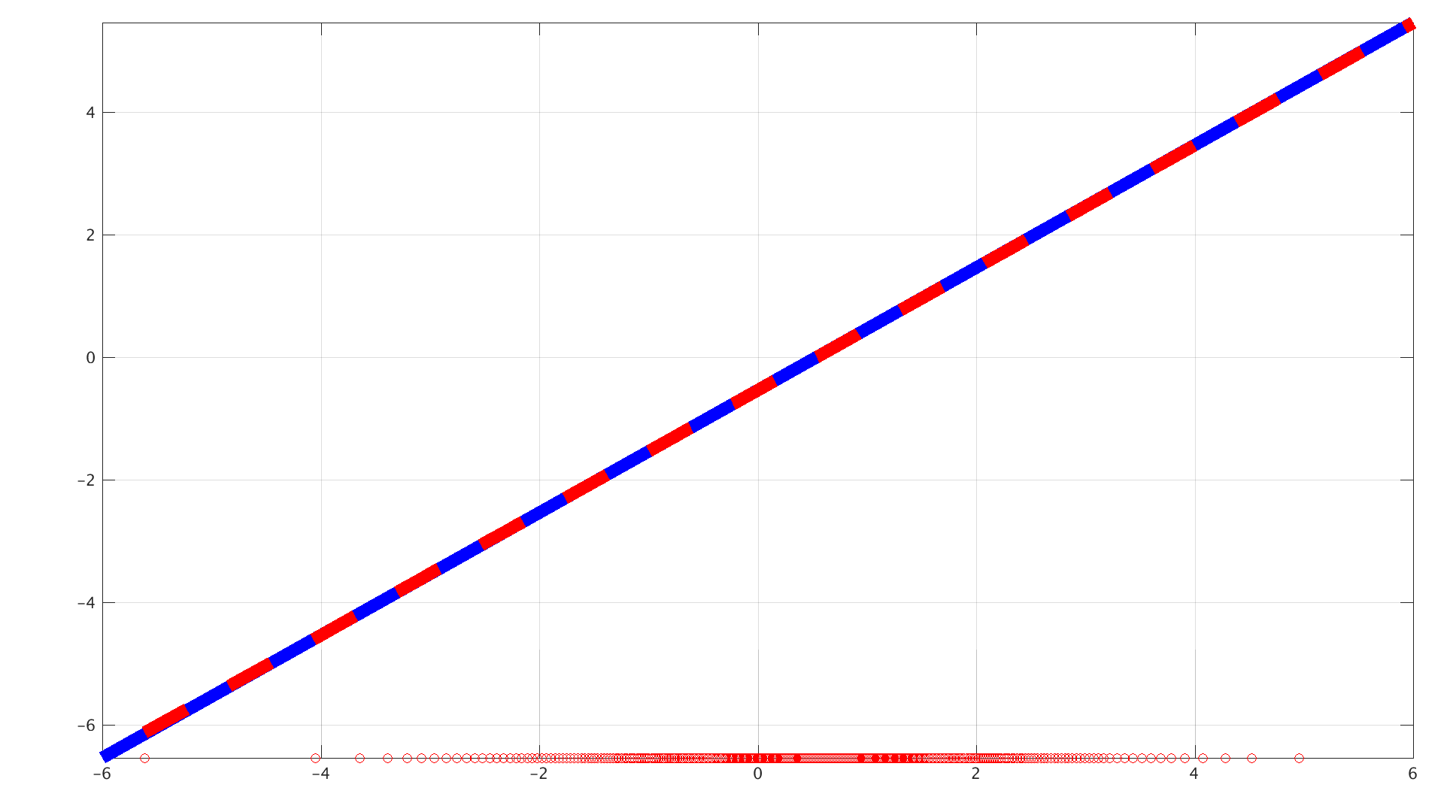} \\
\includegraphics[width=0.48\textwidth]{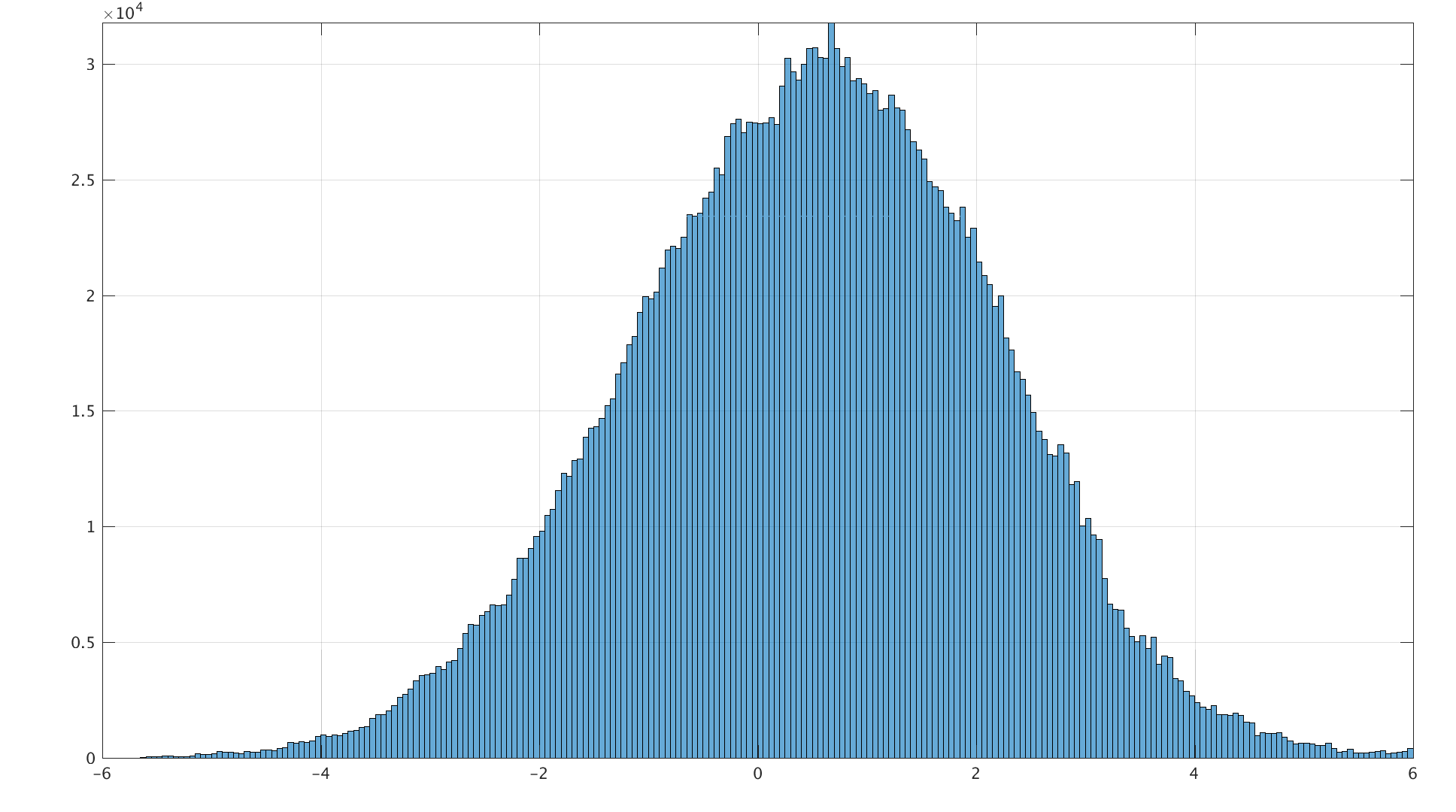} &
 
\end{tabular}
{\begin{tabular}{c| c| c| c| c|c|c}
d & T&$M_1$ & $M_2$ &${N_e}$&N & D(N) \\ \hline 
20 & 1 &1000&1000 &2000 &  60 &4N 
\end{tabular} } 
\end{center}

\caption{Reconstructions $\hat a$ (left) and $\hat a'$ (right) in dotted red and true $a$ and $a'$ in blue. The underlying red circles depict the (adaptive) nodes in $\Lambda$. Below you see the distribution $\tilde \eta$ of available data.}
\label{Fig:Quadratic_Potential}
\end{figure}

\subsubsection{Linear elasticity -- a trivial example}\label{Subsection:Numericsquadratic}

We start with the standard potential $a(y)=\frac{y^2} {2}$, which is considered in the context of linear elasticity. 
As this potential is uniformly convex and contained in $V_N$, one  expects the reconstruction to work better compared to more complex potentials.

Figure \ref{Fig:Quadratic_Potential} depicts the approximation of $a$ and $a'$ for a quadratic potential, and shows very accurate approximation. The approximation of $a'$ appears almost exact everywhere, while the approximation of~$a$ loses accuracy at the boundaries of the observed interval, due to summation of minor (systematic) errors. Nonetheless $a$ is overall  best approximated in regions where~$\tilde \eta$  has higher density. Thus elastic potentials can be identified very well using this approach.
However, for more complex potentials one may not expect to obtain always such a good reconstruction, and  in the following sections we consider the impact of various parameters on the reconstruction of non-quadratic potentials.

\begin{figure}[h!]
\begin{center}
\begin{tabular}{r  l} 
{\centering\rotatebox[origin=l]{90}{\qquad$N_e=2$}} \includegraphics[width=0.4\textwidth]{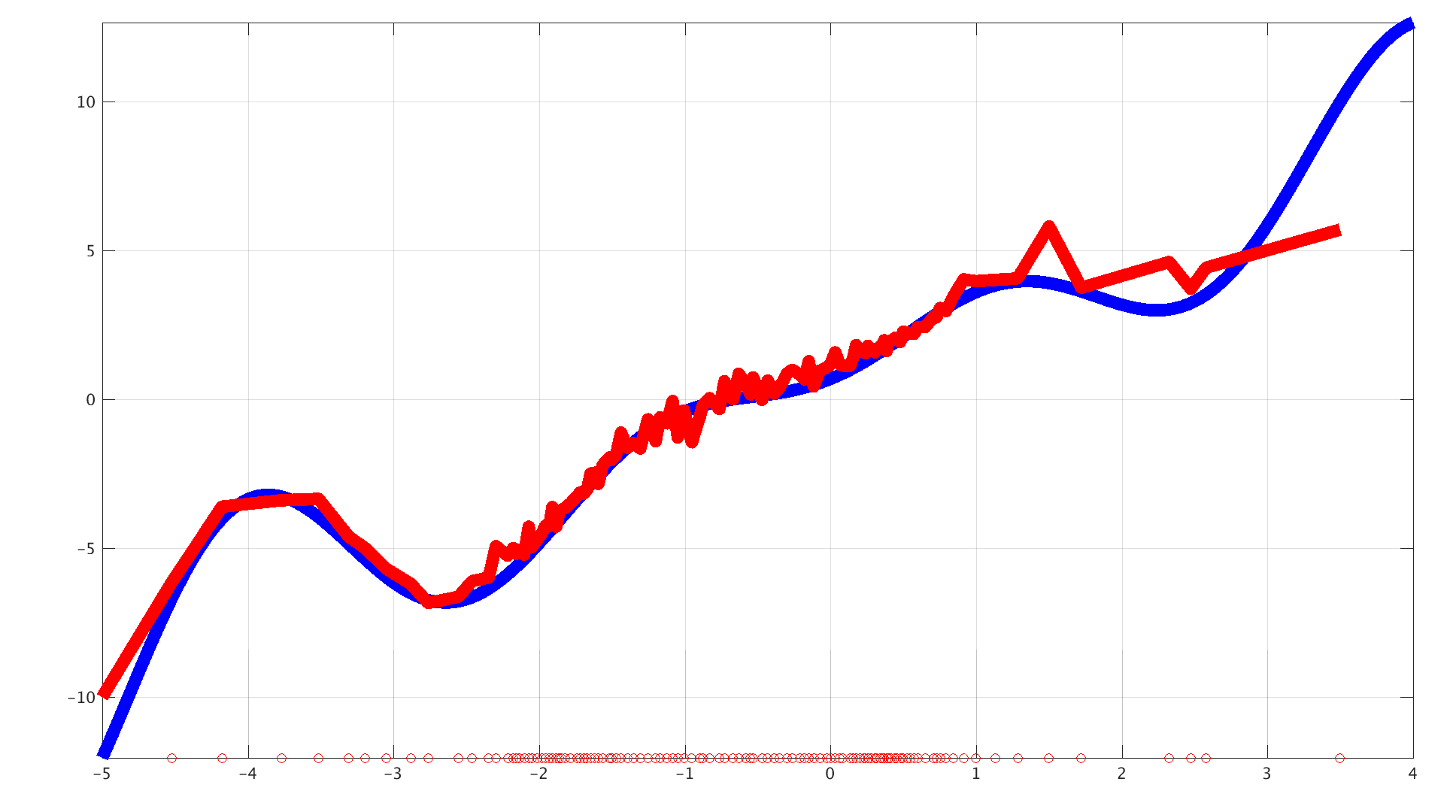} & \includegraphics[width=0.4\textwidth]{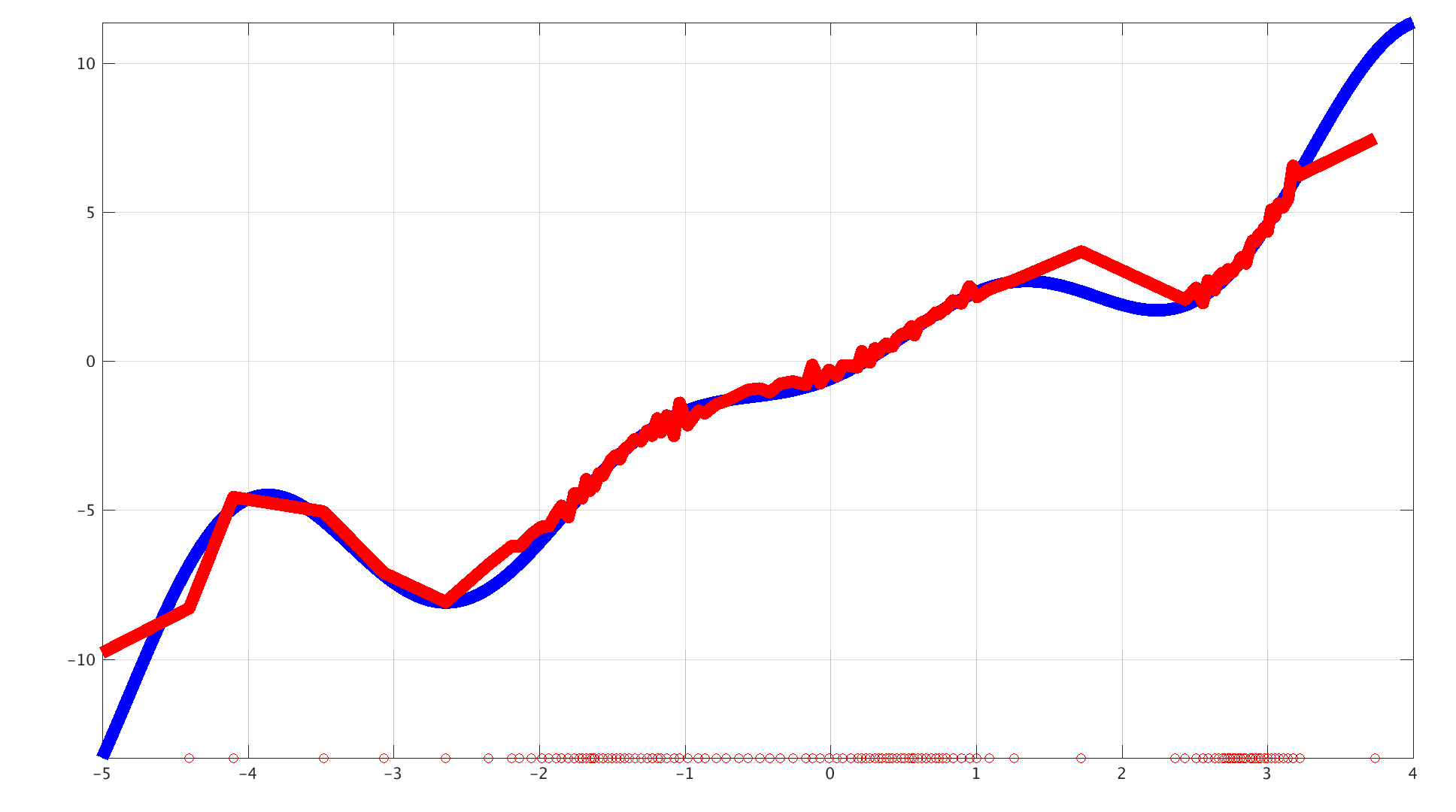}  \rotatebox[origin=l]{90}{\qquad$N_e=5$}\\ 
\rotatebox{90}{\qquad$N_e=10$}
\includegraphics[width=0.4\textwidth]{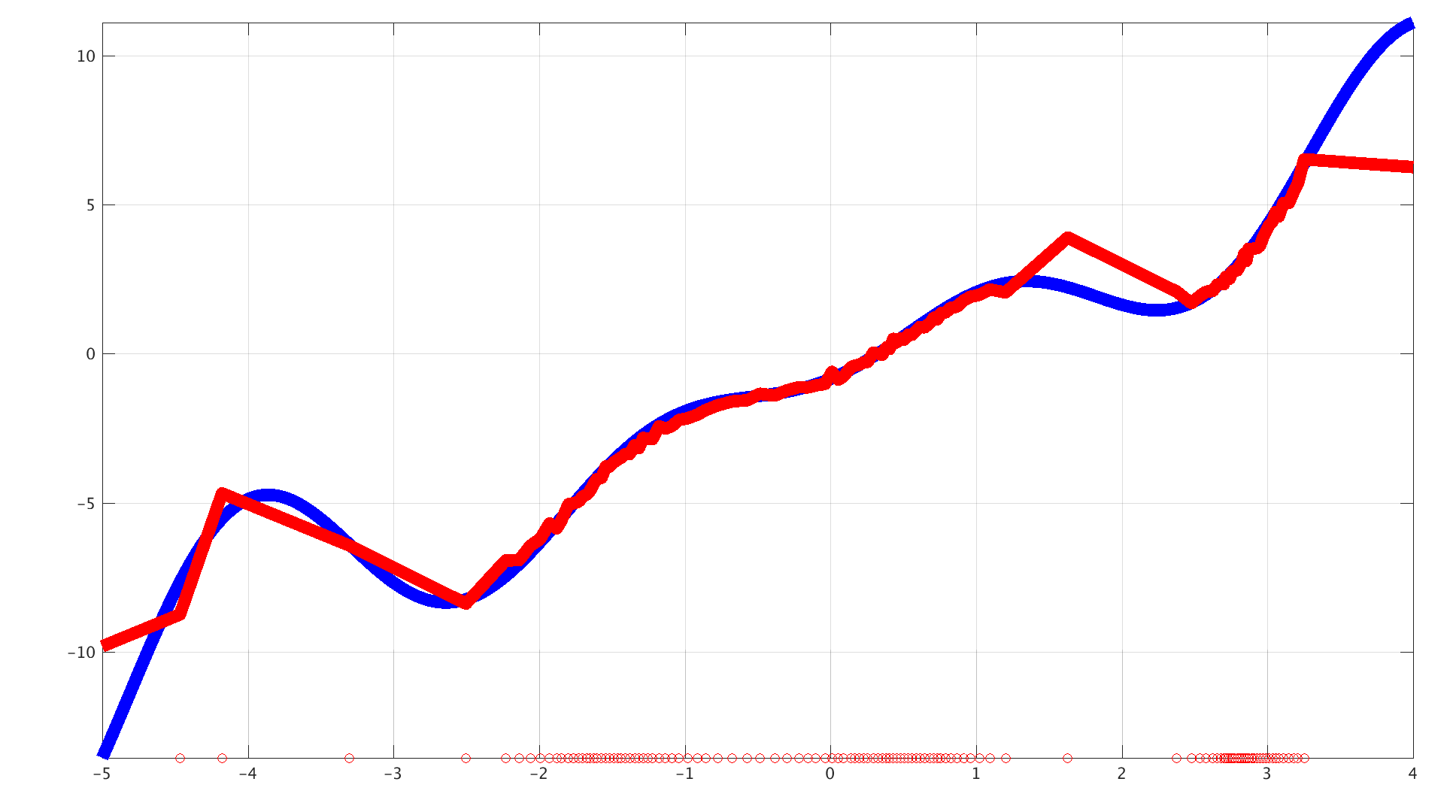} & 
\includegraphics[width=0.4\textwidth]{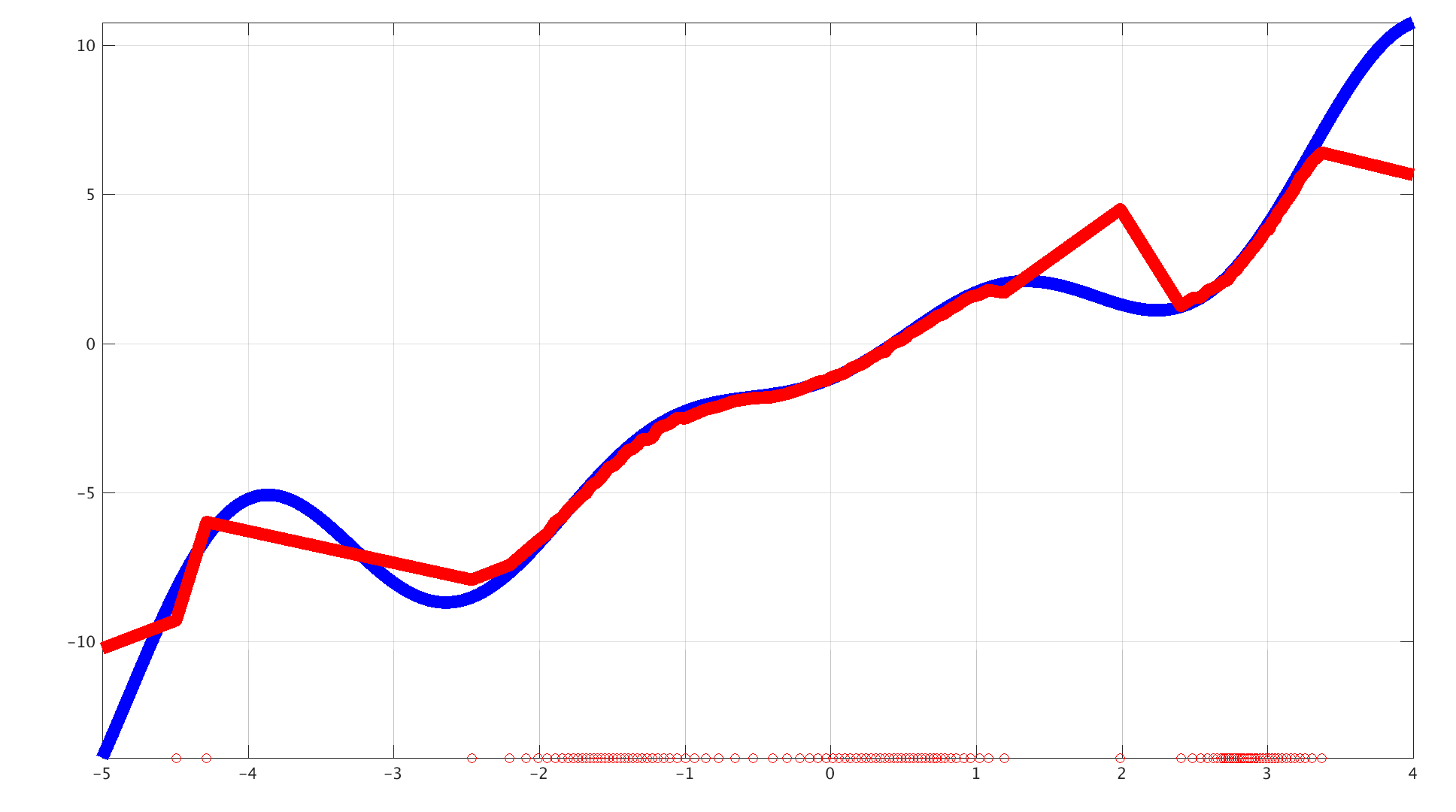}  \rotatebox{90}{\qquad$N_e=100$}\\  
\includegraphics[width=0.4\textwidth,height=2cm]{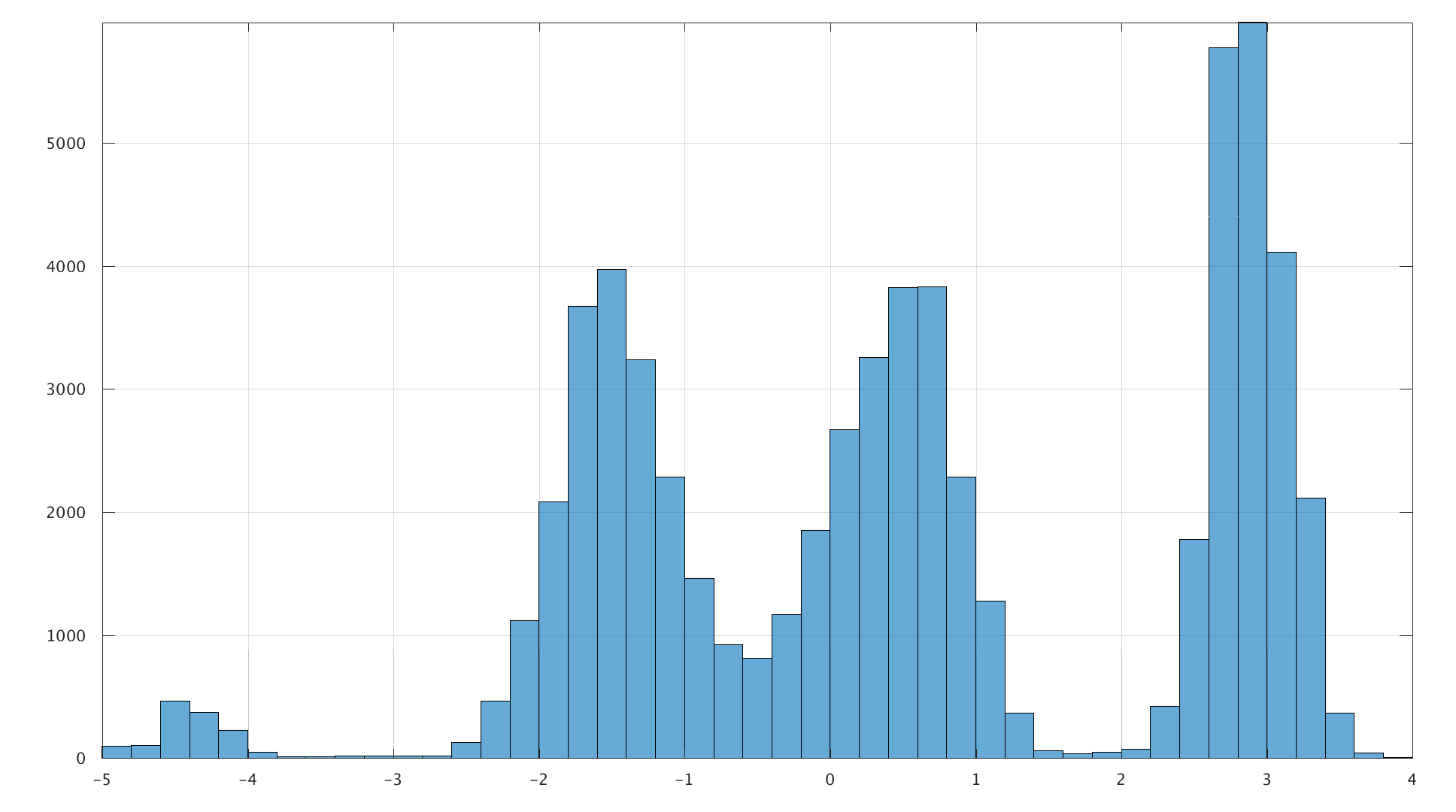} 
\end{tabular}
\begin{tabular}{c| c| c| c| c| c|c}
d &T &$M_1$&$M_2$&${N_e}$ &N & D(N) \\ \hline 
20 & 1 &1000&1000&[2,5,10,100] & 30 &100
\end{tabular}
\end{center}
\caption{Reconstruction of~$a'$ with increasing number~${N_e}$ of measurements. The true~$a'$ is depicted by the blue curve while the red line depicts the reconstruction~$\hat a'$ following~\eqref{equ:discrete_minimization_problem}. The red circles at the bottom of the figures depict the position of nodes of the underlying mesh~$\Lambda_N$.
The histogram  below describes the density of the available information encoded by the probability measure~$\tilde{\eta}$ for the case $N_e=60$.}
\label{Fig:Parameters_for_varying_measurements}
\end{figure}

\subsubsection{Impact of the amount of information on the reconstruction quality}\label{Subsection:Number_experiments}
From a numerical perspective, solving~\eqref{equ:discrete_minimization_problem} is a least squares problem, and with more available information, one would expect to increase the reliability of reconstructions. This amount of information in our setting mainly depends on two factors -- the amount of measurements ${N_e}$ made for every experiment and  the number~$N$ of observed experiments. Thus, we want to demonstrate the effect of increasing amount of information, in particular verifying that for sufficient amount of information one can accurately recover solutions, while too little information yields unstable recovery.

%
%
%

In Figure \ref{Fig:Parameters_for_varying_measurements} all parameters of the reconstruction are fixed except for the amount of measurements~${N_e}$. One observes that results get more reliable and noise is reduced with an increasing number ${N_e}$ of measurements and for a sufficient amount of information one can precisely reconstruct  $a'$ on the support of $\tilde \eta$, whose density is approximately depicted below by the histogram of observed information. In the reconstructions with little information, the solutions are oscillating, and,  although following the overall trend of the true energy functional, do not perfectly capture its behavior. 
One can also see that in regions without information, and correspondingly with no or few nodes, the approximation is crude and can not be trusted, but in regions with much information a rather accurate reconstruction can be found. 
Of course, in many practical applications the amount of sampling in time might be limited by technical limitations. The resulting issue of lack of information can be offset by considering a larger number of experiments.

\begin{figure}[h!]
\begin{center}
\begin{tabular}{ r l}
\rotatebox{90}{\qquad$N=1$}
\includegraphics[width=0.45\textwidth]{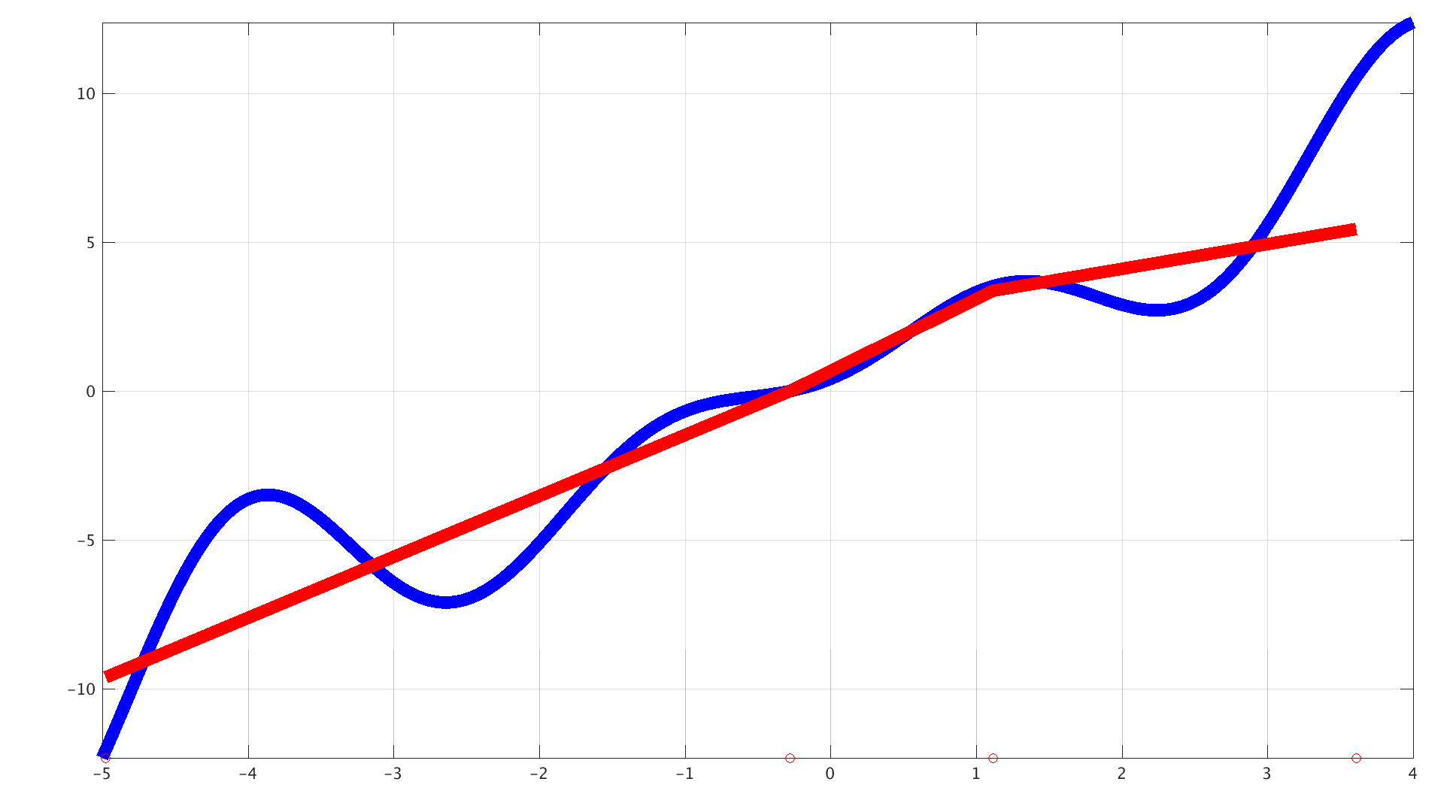} & \includegraphics[width=0.45\textwidth]{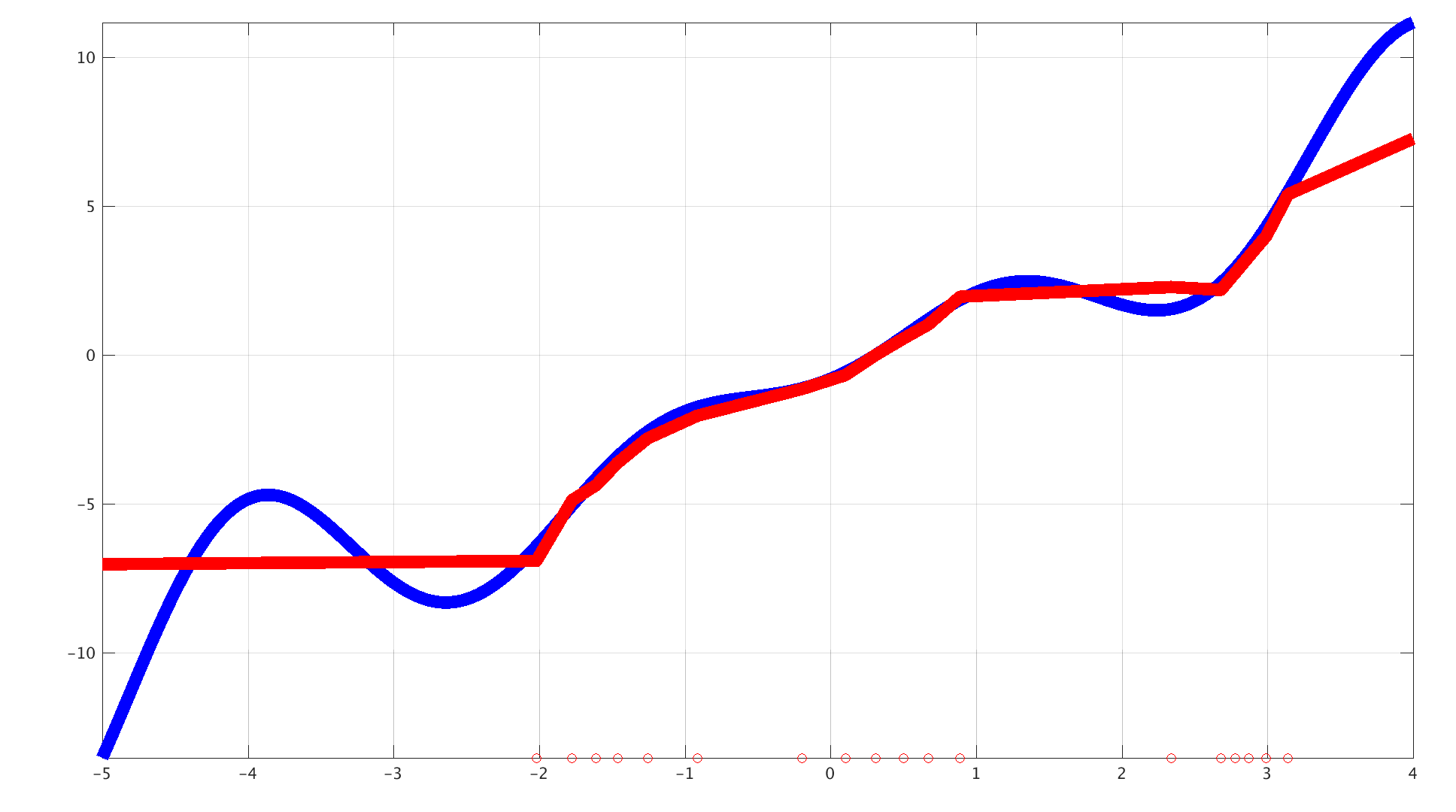}  \rotatebox{90}{\qquad$N=5$}\\ \rotatebox{90}{\qquad$N=20$} 
\includegraphics[width=0.45\textwidth]{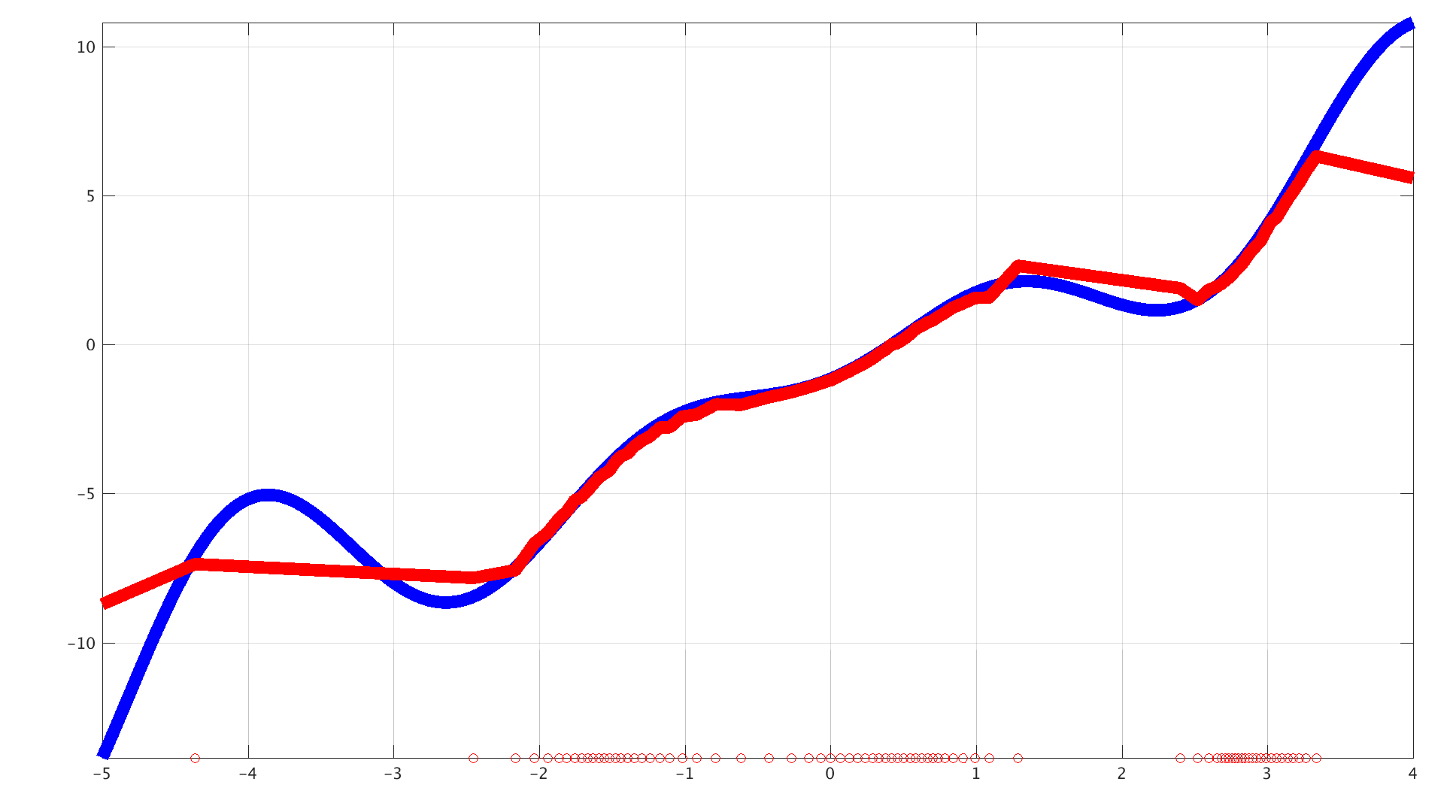} & \includegraphics[width=0.45\textwidth]{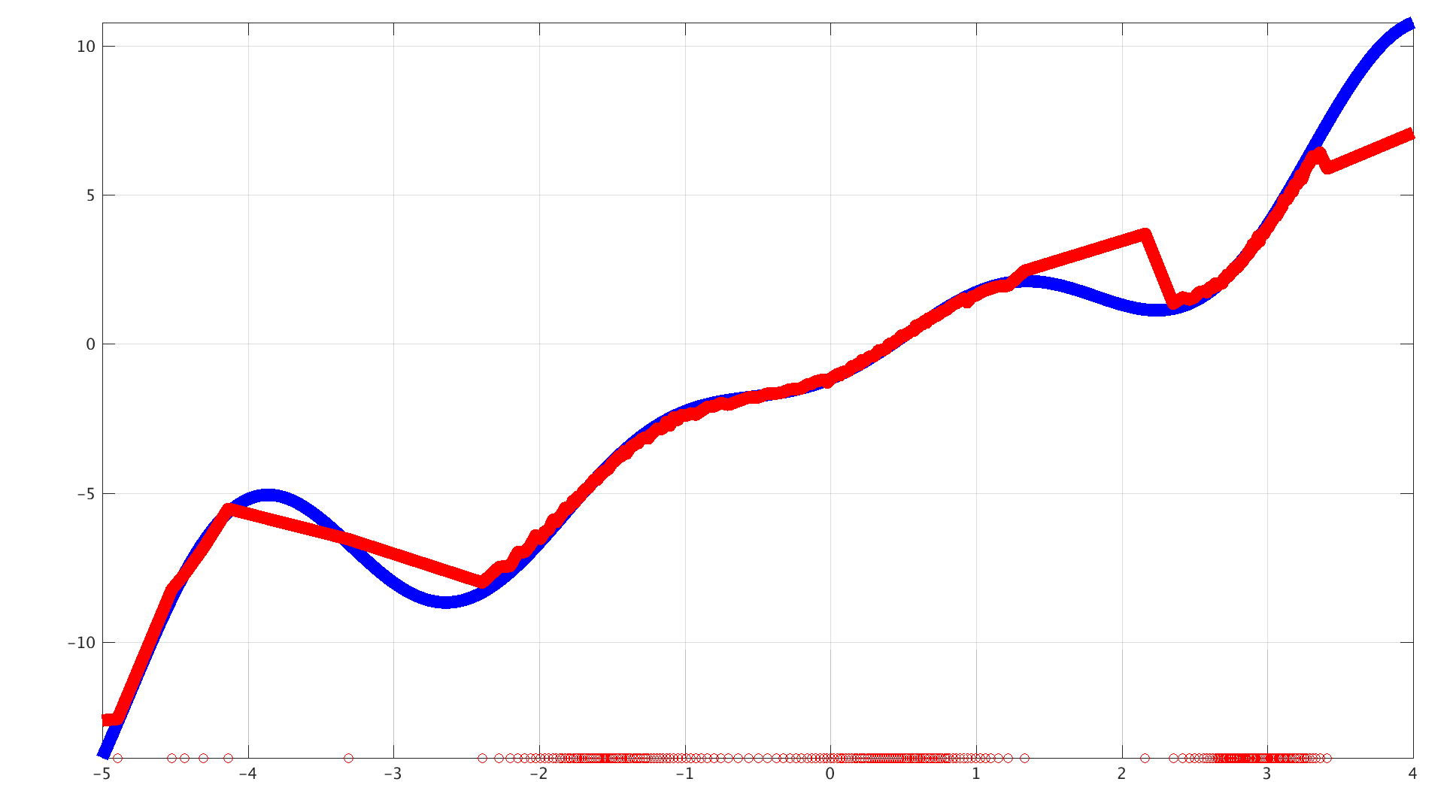} \rotatebox{90}{\qquad$N=60$}\\
\includegraphics[width=0.45\textwidth,height=2cm]{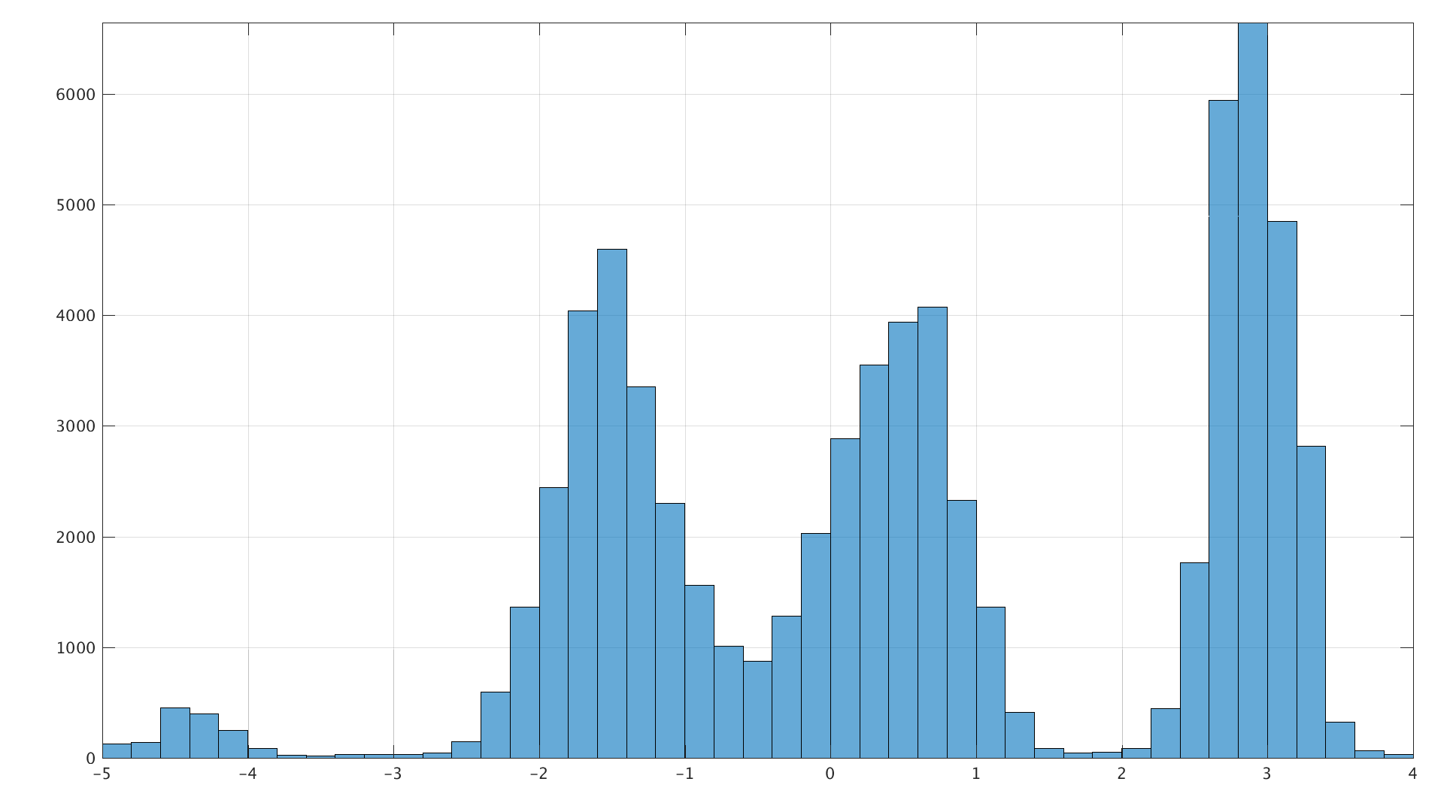} 

\end{tabular}
\begin{tabular}{c| c| c| c| c|c|c}
d & T&$M_1$ & $M_2$ &${N_e}$&N & D(N) \\ \hline 
20 & 1 &1000&1000 &55 & [1, 5, 20, 60] &4N
\end{tabular} 
\end{center}
\caption{Reconstruction of $a'$ for varying $N$ and $D(N)=4N$ and distribution $\tilde \eta$ for $N=60$. Graphic as described in Figure \ref{Fig:Parameters_for_varying_measurements}.}
\label{Fig:Varying_N_and_nodes}
\end{figure}

In fact the main result of the provided theory is that for $N\to \infty$ -- considering ever more experiments -- one can reconstruct~$a'$ increasingly well. 
In order to approximate with~$V_N$ the space~$A_{M,R}$ in the sense of the uniform approximation property, it is necessary to increase adaptively the amount of nodes~$D(N)$. A trivial way to do this is considering a linear relation between~$N$ and  the amount of nodes~$D(N)$, i.e., $D(N) \asymp N$.
Figure~\ref{Fig:Varying_N_and_nodes} shows that an increased number of experiments and nodes improves significantly the quality of the reconstruction.
%

In comparison, we show that the improved reconstruction is not solely the result of a finer grid, but rather a consequence of more available information. Therefore, we considered in Figure \ref{Fig:Varying_N_fixed_nodes} the same experiment as in Figure \ref{Fig:Varying_N_and_nodes}, but with the number of nodes $D(N)=300$ independent of $N$. While the reconstruction even for a single experiment is not particularly bad, one can see that it is not very smooth, representing a smaller degree of confidence in the solutions. For increasing number $N$ of  experiments, the results become smoother. Moreover, regions not significantly visited by a single experiment, and therefore not well supported by the mesh (e.g., left side of the plot), get better represented for a larger number of experiments as they might get explored more thoroughly. However, note that there appear to be regions which does not get -- or does get very rarely -- visited independently of the number $N$ of experiments as the density of $\tilde \eta$ is zero or very small there, and therefore no reasonable reconstruction can be obtained at such locations.

\begin{figure}[h!]
\begin{center}
\begin{tabular}{ r l}
\rotatebox{90}{\qquad$N=1$} 
\includegraphics[width=0.45\textwidth]{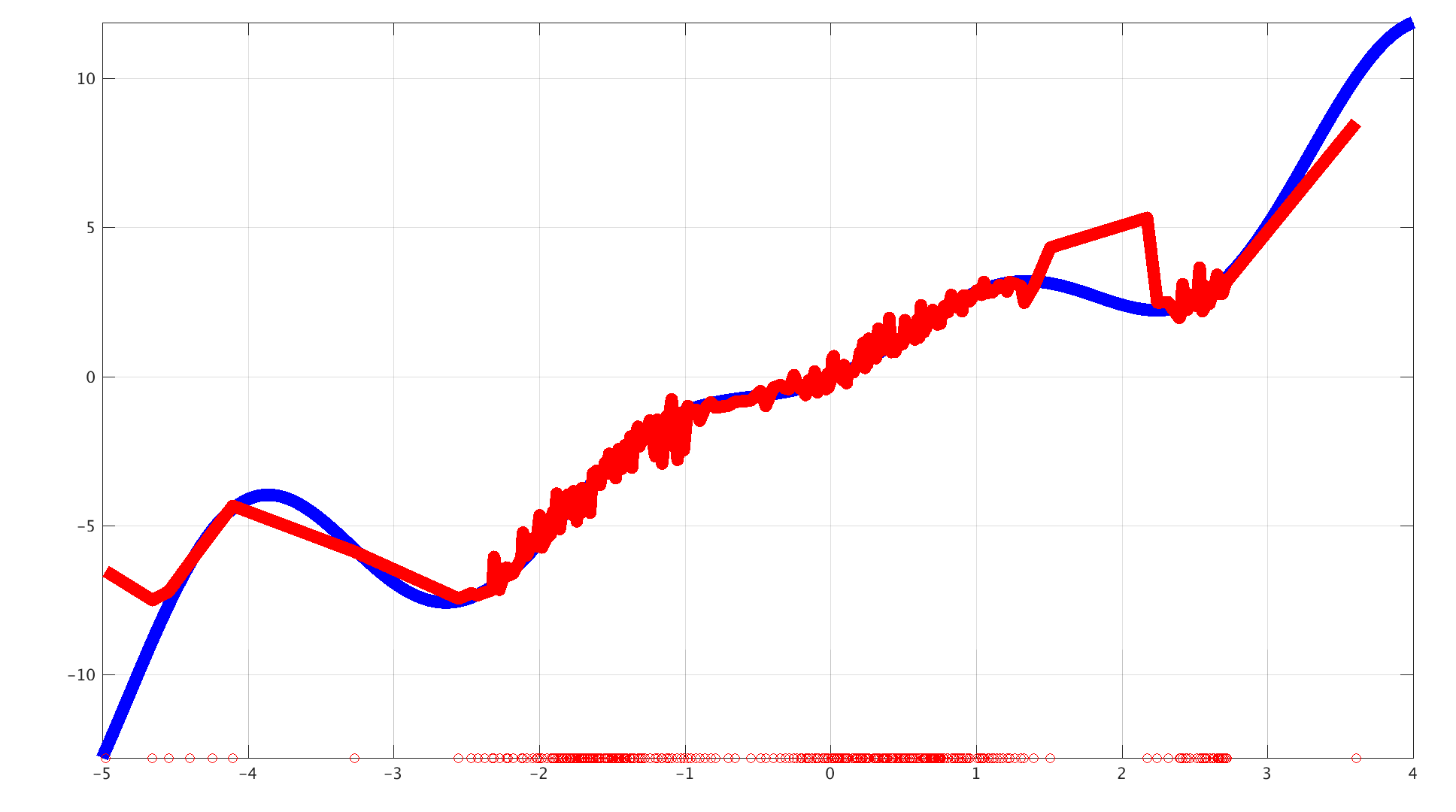} & \includegraphics[width=0.45\textwidth]{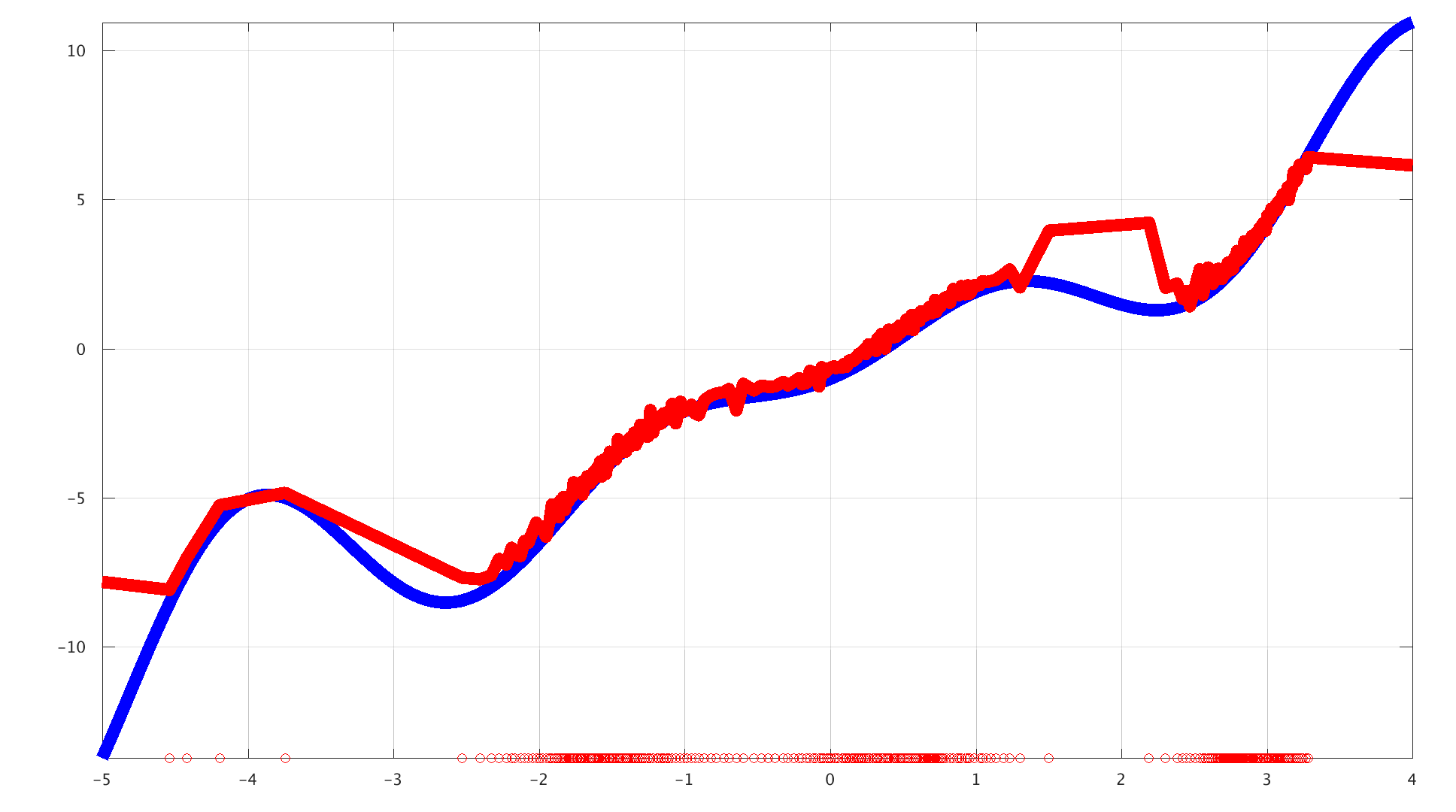} \rotatebox{90}{\qquad$N=5$}\\ \rotatebox{90}{\qquad$N=20$}
\includegraphics[width=0.45\textwidth]{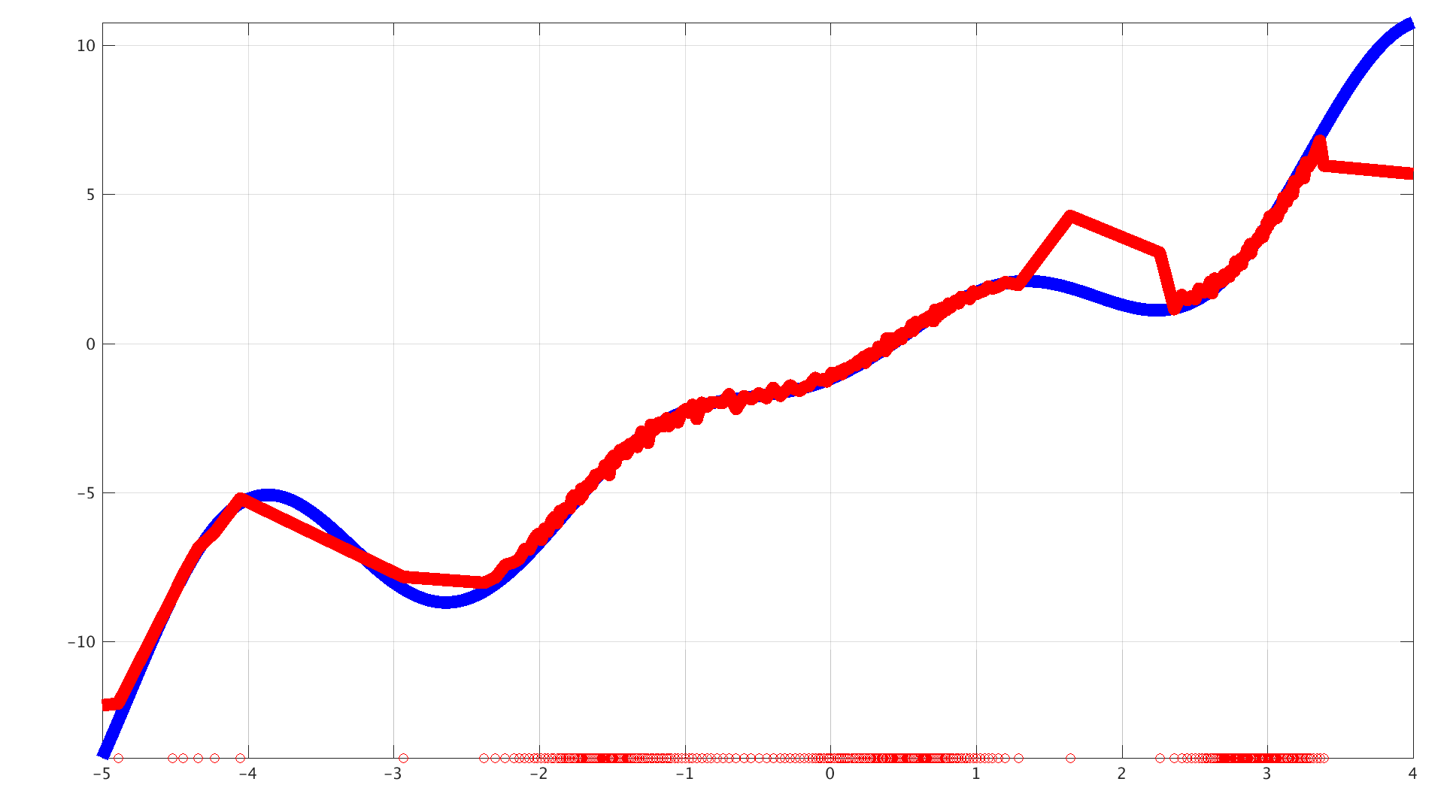} & \includegraphics[width=0.45\textwidth]{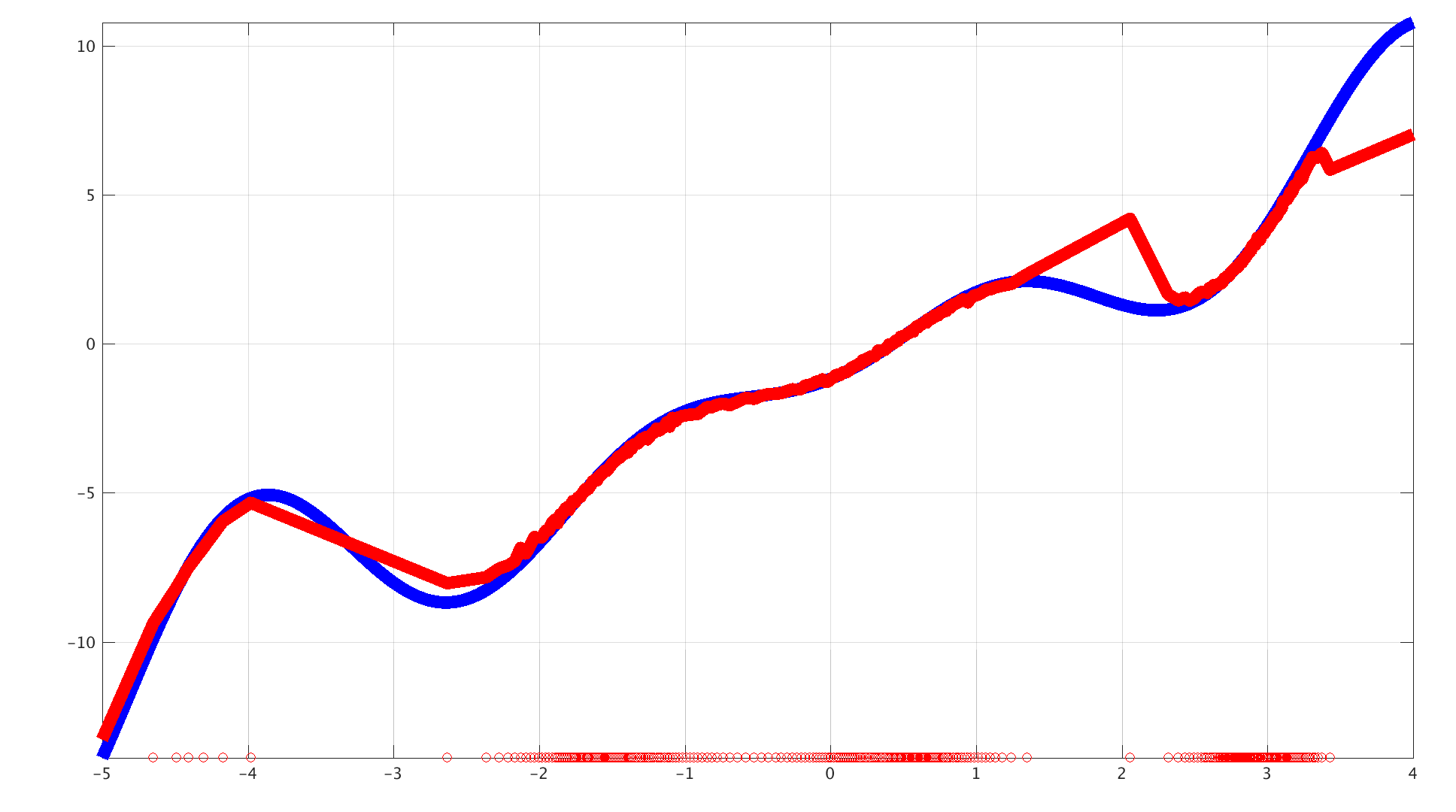}\rotatebox{90}{\qquad$N=60$}\\
\includegraphics[width=0.45\textwidth,height=2cm]{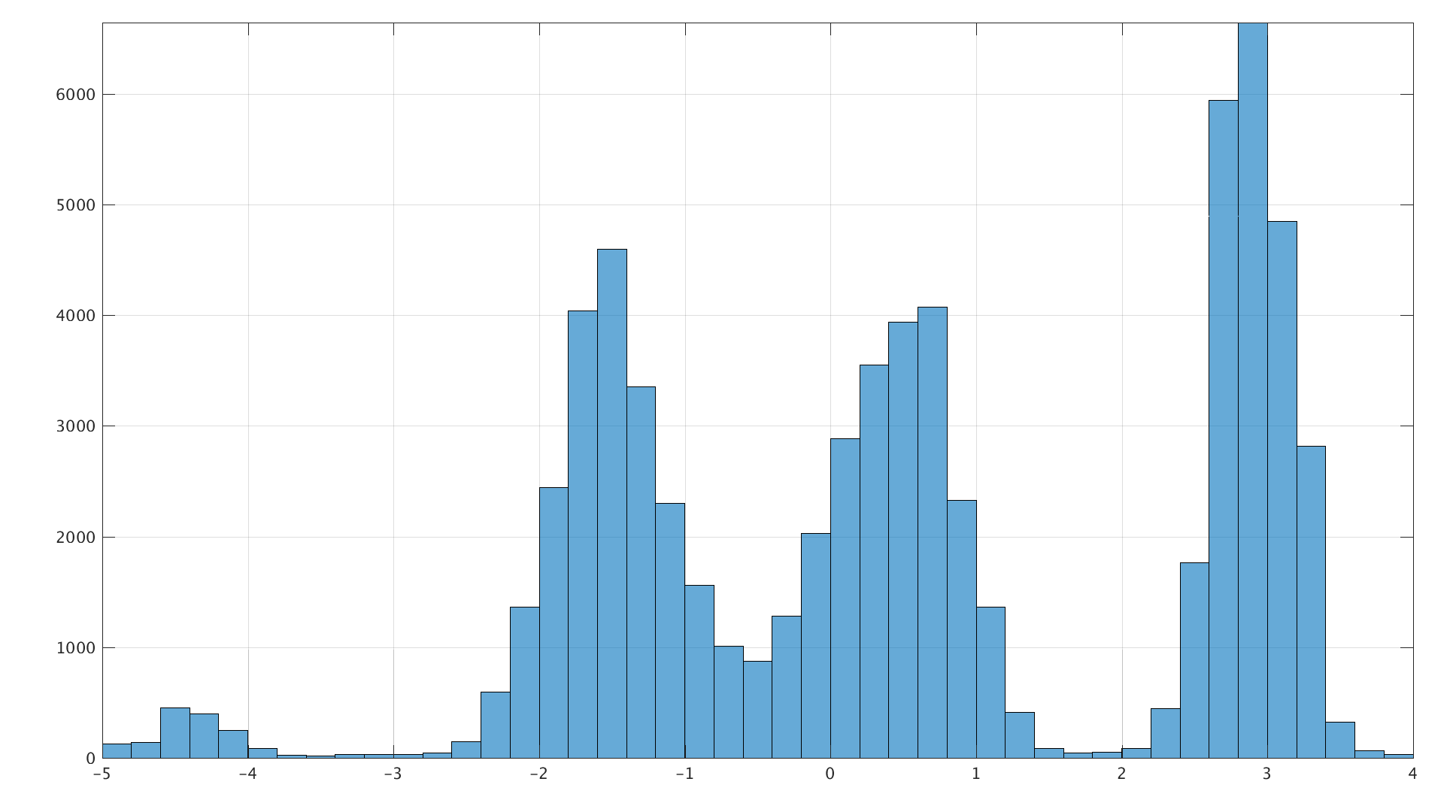} 

\end{tabular}
\begin{tabular}{c| c| c| c| c|c|c}
d & T &$M_1$ &$M_2$ &${N_e}$&N & D(N) \\ \hline 
20 & 1 & 1000 &1000& 55 & [1, 5, 20, 60] &300
\end{tabular} 
\end{center}
\caption{Reconstruction of $a'$ for varying $N$ and fixed $D(N)=300$ and distribution $\tilde \eta$. Graphics as described in Figure \ref{Fig:Parameters_for_varying_measurements}.}
\label{Fig:Varying_N_fixed_nodes}
\end{figure}

\subsubsection{Suitable $W^{2,\infty}$ constraints} \label{Subsection:Numeric_constraints}

The theory in this work does not yet provide a method for  choosing $M$ (or $M_1,M_2$ in our numerical model). It is clear that $M_1$ and $M_2$ too small will significantly limit the available class of competitors, and therefore one can not expect to capture the true $a'$ if $M_1$ and $M_2$ are much smaller than $\|a'\|_\infty$ and $\|a''\|_\infty$, respectively. On the other hand, $M_1$ and $M_2$ finite is necessary to ensure compactness from a theoretical perspective, so it is not obvious what the impact of too large $M_1$ and $M_2$ is. However, for suitable data, one would expect that for $M_1,M_2>\bar M$ sufficiently large have no real impact on the reconstruction. 

\begin{figure}[h!]
\begin{center}
\begin{tabular}{r l} 
\rotatebox{90}{\qquad$\|\hat a''\|\leq 2$}
\includegraphics[width=0.45\textwidth]{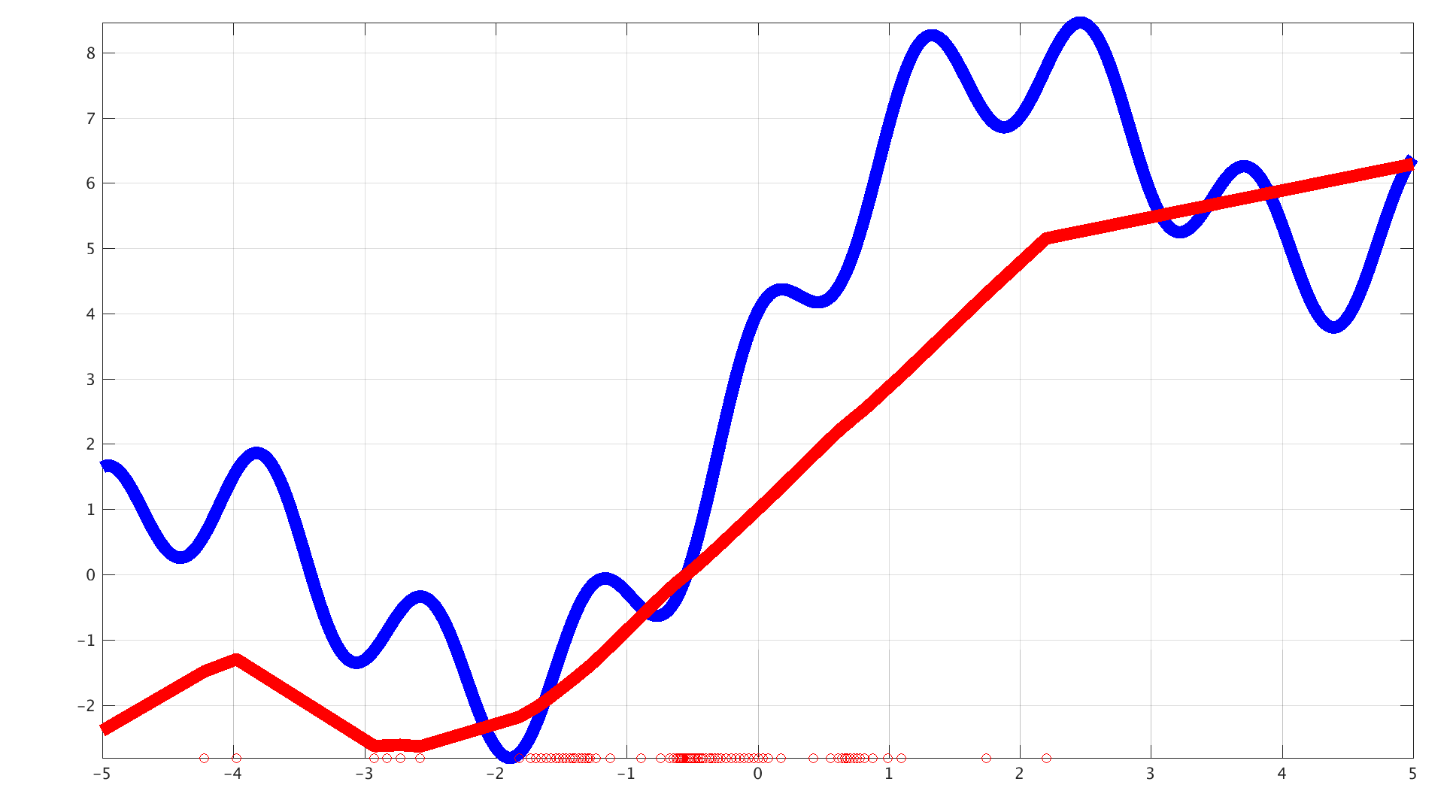} & \includegraphics[width=0.45\textwidth]{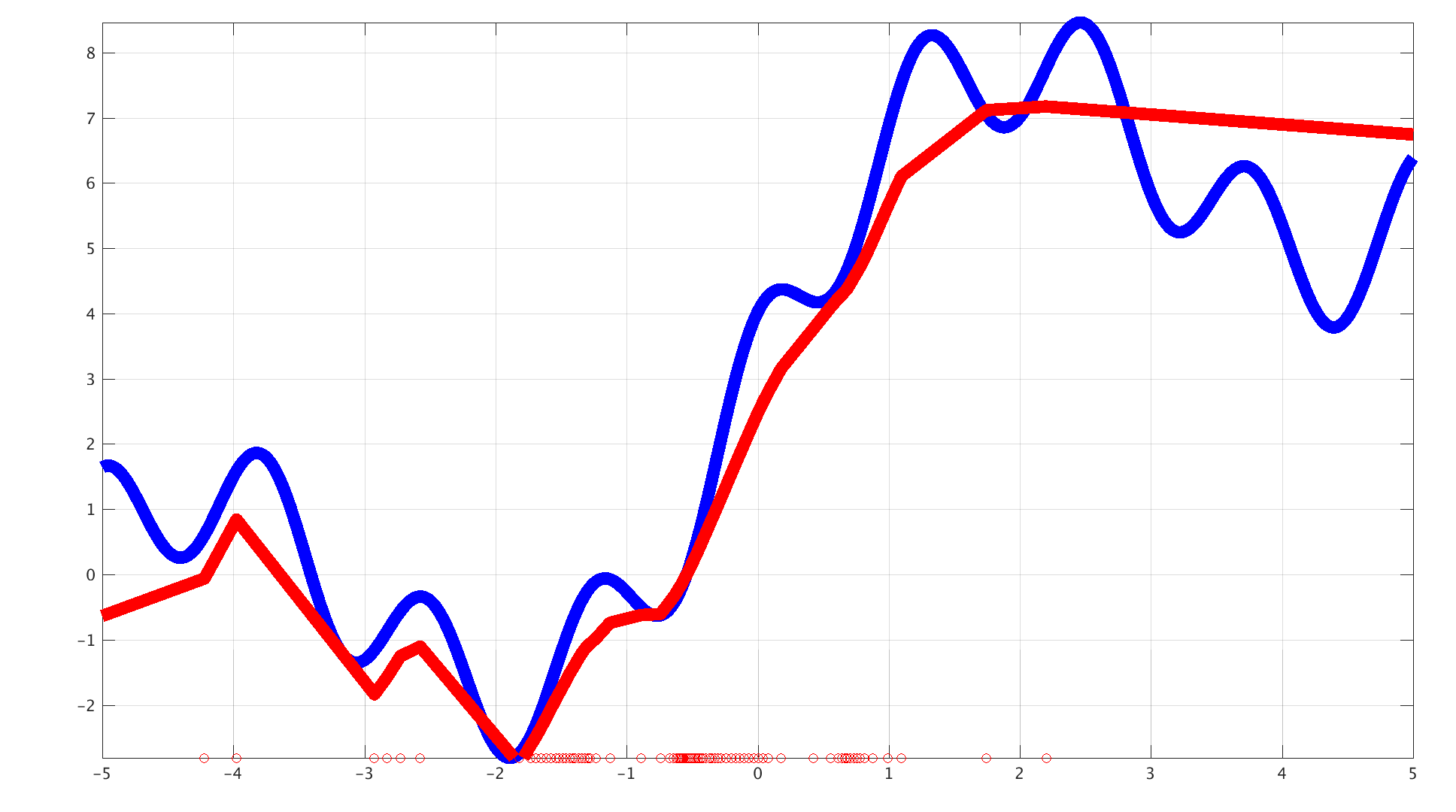}\rotatebox{90}{\qquad$\|\hat a''\|\leq 5$}\\
\rotatebox{90}{\qquad$\|\hat a''\|\leq 20$}
\includegraphics[width=0.45\textwidth]{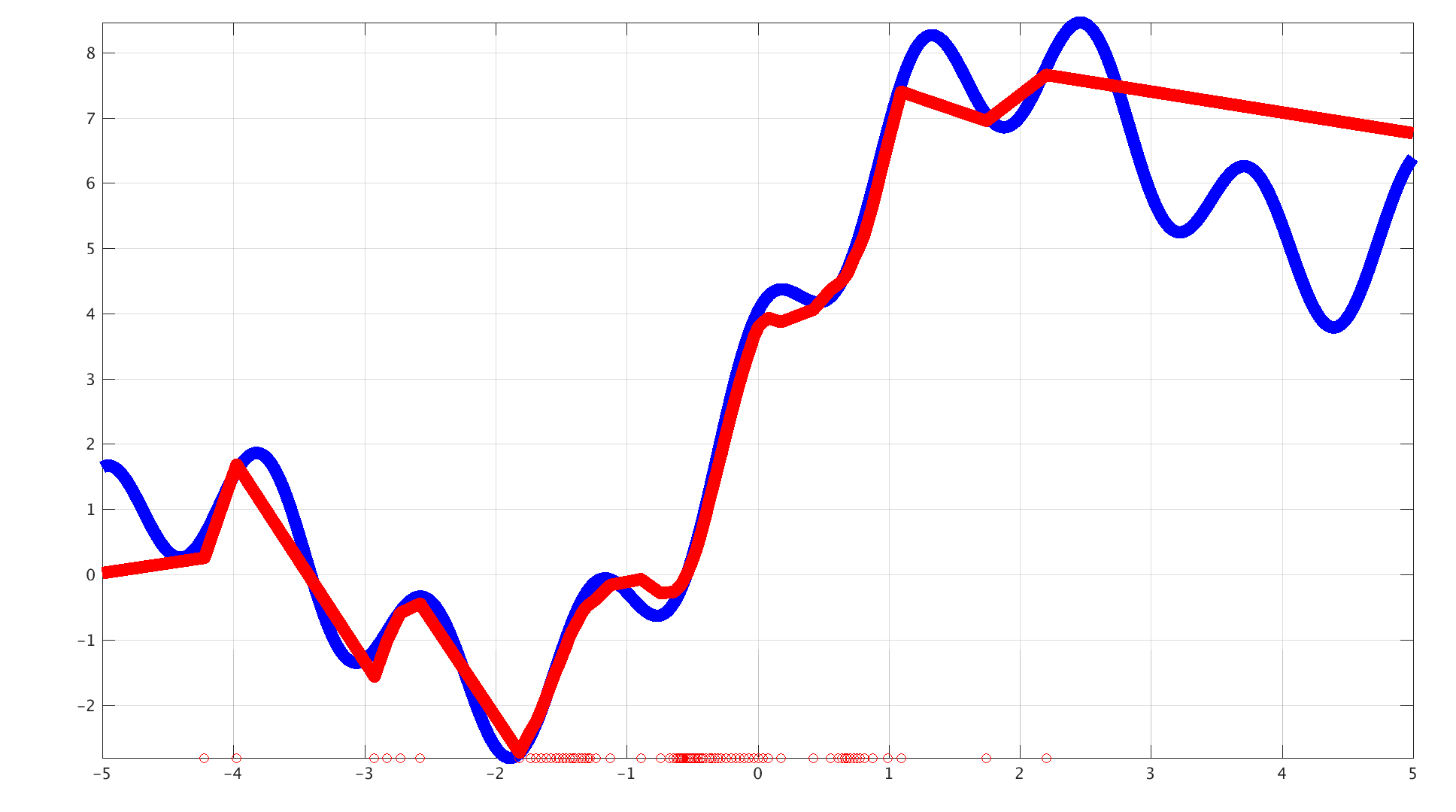} & \includegraphics[width=0.45\textwidth]{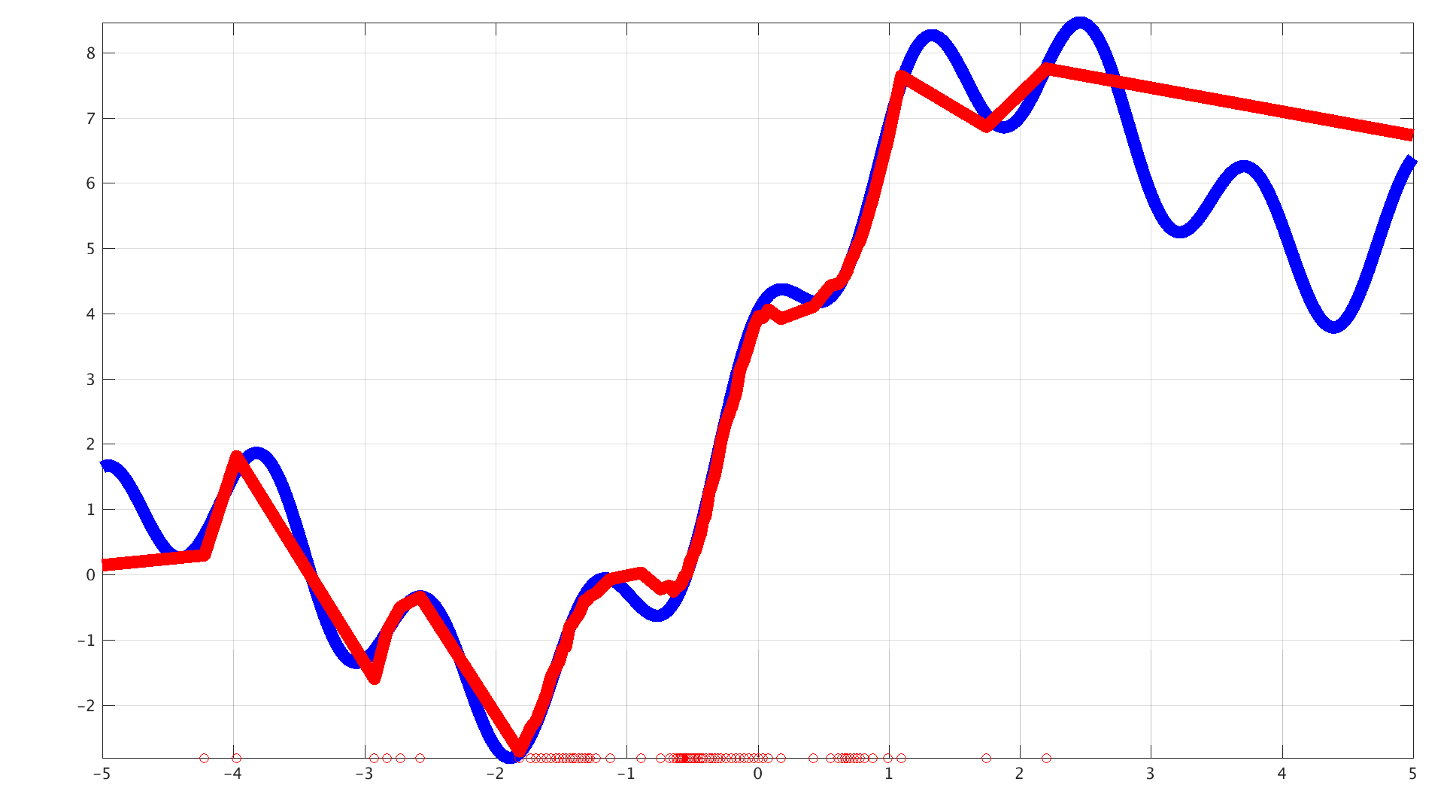} \rotatebox{90}{\qquad$\|\hat a''\|\leq 1000$}\\
\includegraphics[width=0.45\textwidth,height=2cm]{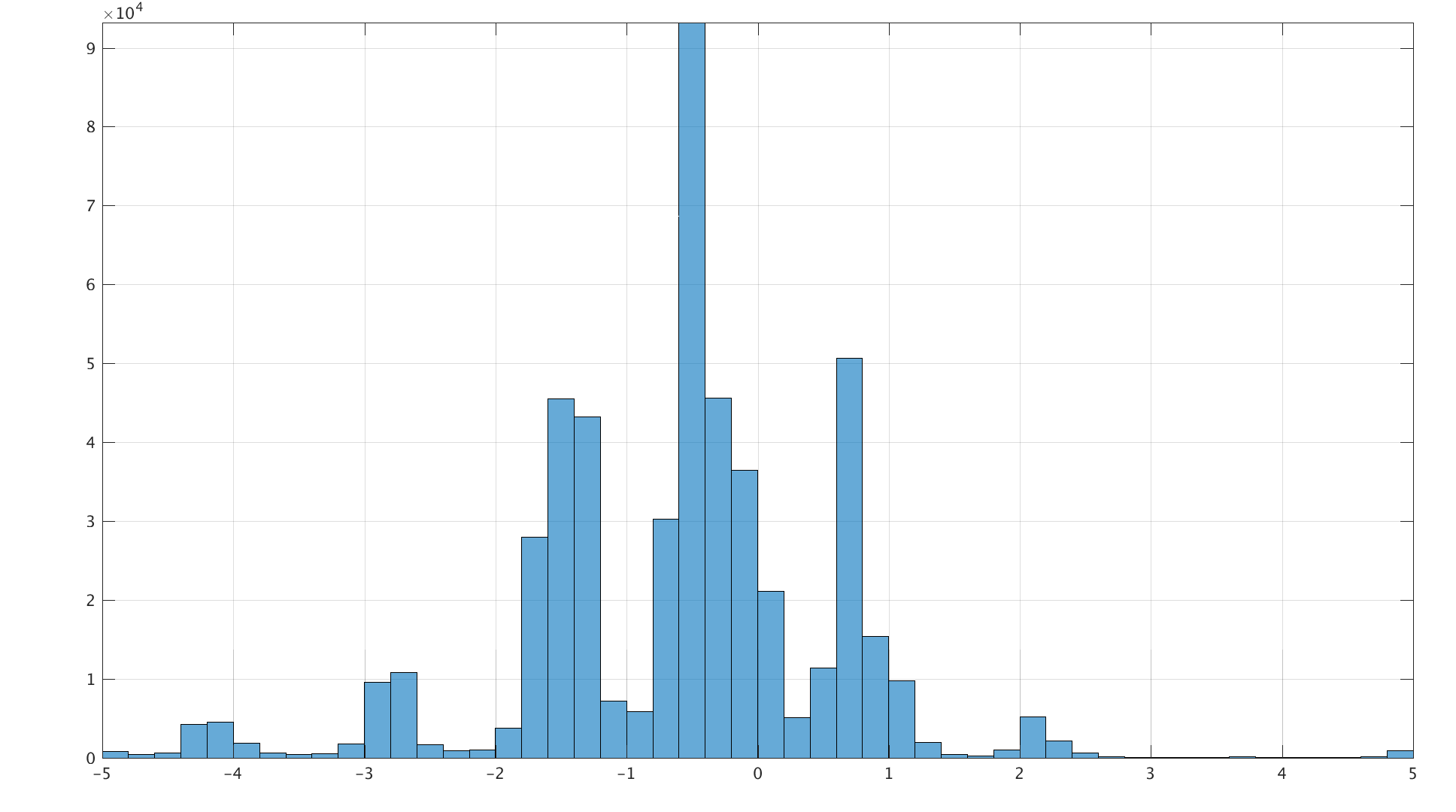} 

\end{tabular}
\begin{tabular}{c| c| c| c| c|c|c}
d & T &$M_1$&$M_2$&${N_e}$&N & D(N) \\ \hline 
20 & 1 &1000 &[2,5,10,100]& 1000 & 30 &100
\end{tabular} 
\end{center}

\caption{ Impact of different constants~$M_2$ for constraints on~$a''$ with $\|a''\| \leq [2,5;20,1000]$ on reconstructs are depicted, showing improved  approximations for increasing~$M_2$. 
Graphics as in Figure~\ref{Fig:Parameters_for_varying_measurements}.}
\label{Fig:changing_derivative_constraints}
\end{figure}

%

Figure \ref{Fig:changing_derivative_constraints} depicts the effect of different constraints $M_2$ on $\|a''\|_\infty$. One can see that for too small~$M_2$ the reconstruction follows the overall trend, but can not replicate local fluctuations, with more detail captured by increasing $M_2$. In particular, note that there is no difference between solving with $M_2=20$ and $M_2=1000$ since $\|a''\|_\infty\leq 20$, where~$a''$ is the derivative of the true solution of the minimization problem~\eqref{equ:discrete_minimization_problem} without constraints and therefore the constraint has no effect. We further stress that this does not imply that the constraint~$M_2$ is irrelevant, as for poor or incomplete  data the least squares problem can become highly unstable (e.g., due to overfitting), and constraints can limit this effect.

On the other hand, the constraint $M_1$ bounds the overall values of $a'$. For too small $M_1$ the reconstruction corresponds to a projection of the true energy function to the corresponding bound. This effect can be observed in Figure \ref{Fig:changing_derivative_constraints}.

\begin{figure}[h!]
\begin{center}

\begin{tabular}{r l}
\rotatebox{90}{\qquad$\|\hat a'\|\leq 2$}
\includegraphics[width=0.45\textwidth]{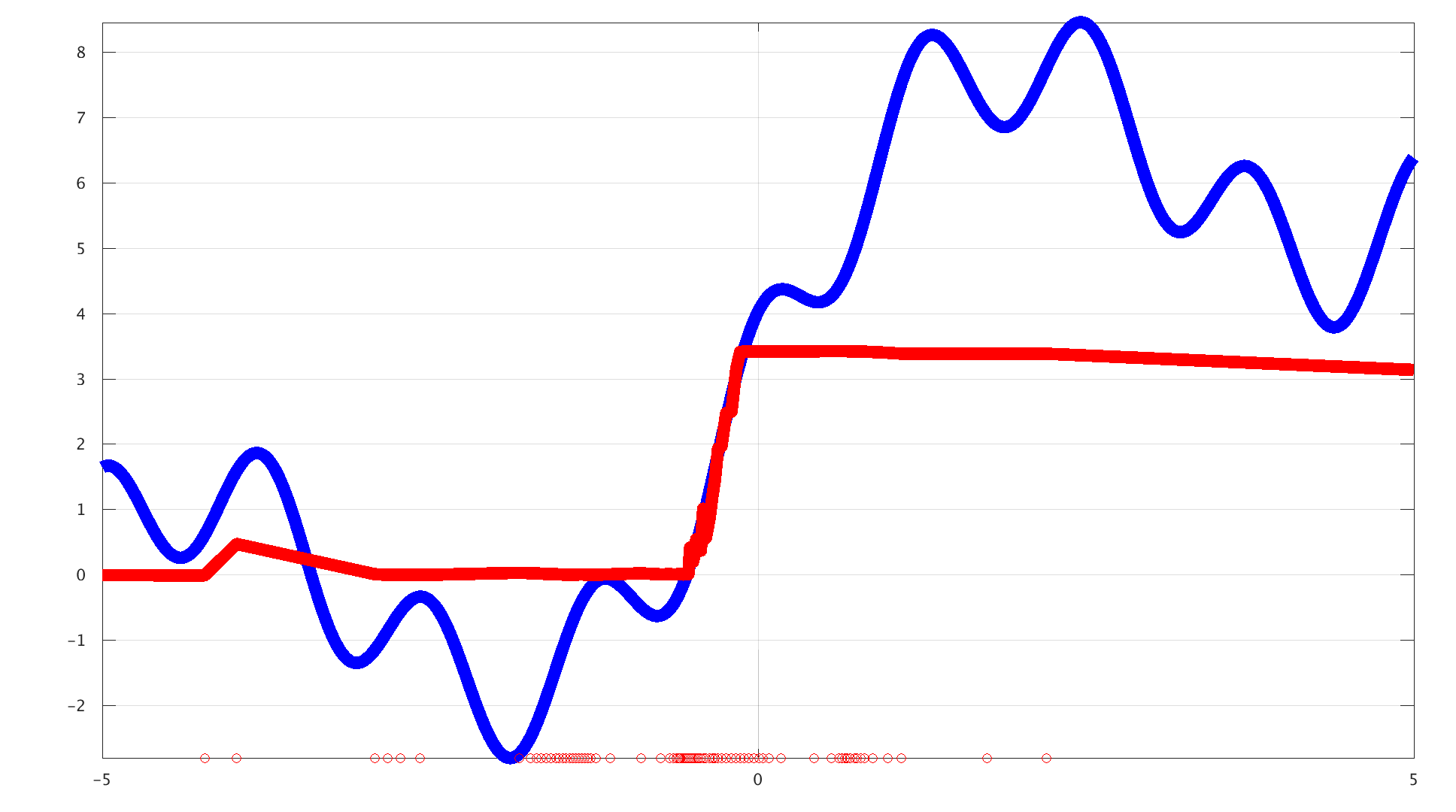} & \includegraphics[width=0.45\textwidth]{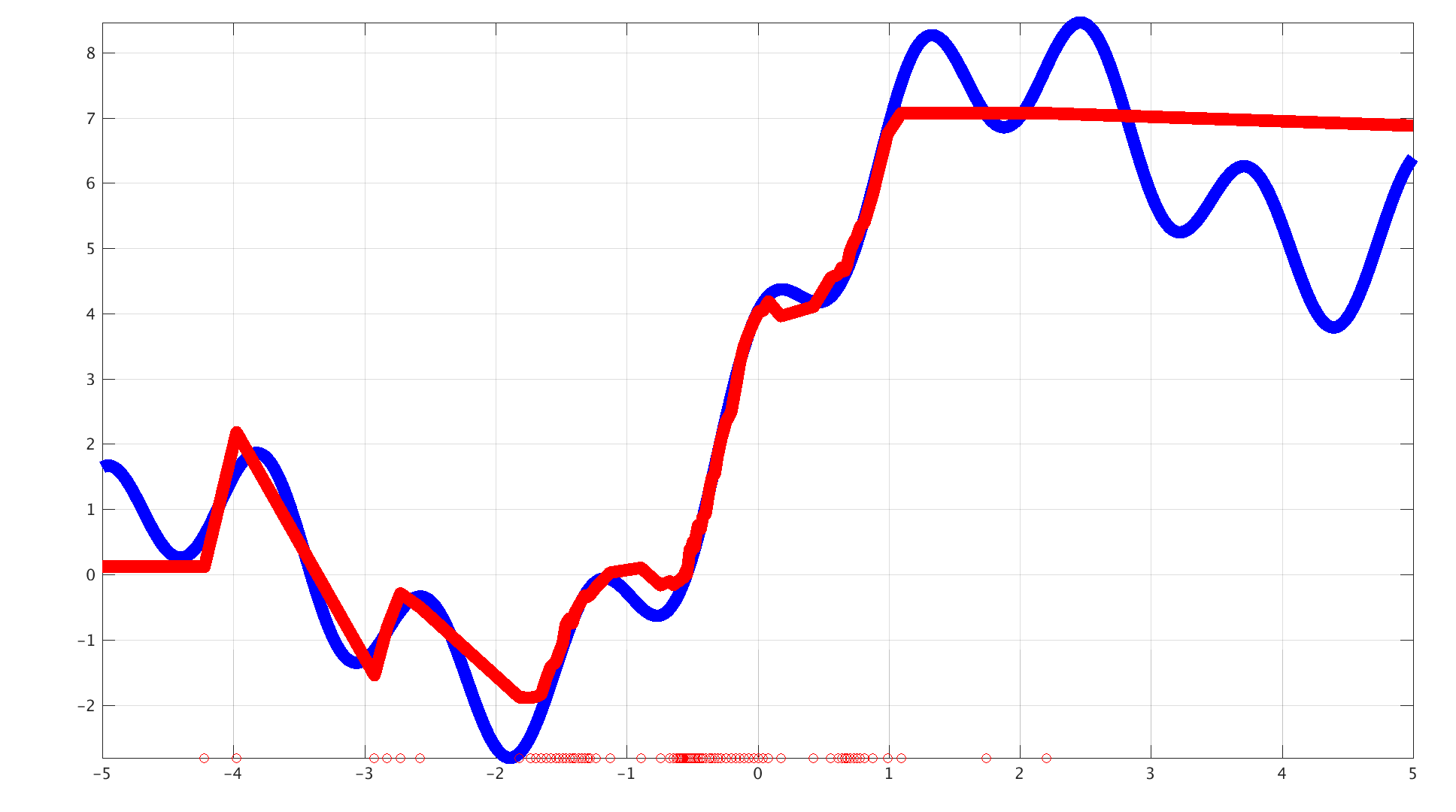}\rotatebox{90}{\qquad$\|\hat a'\|\leq 5$}\\ \rotatebox{90}{\qquad$\|\hat a'\|\leq 20$}
\includegraphics[width=0.45\textwidth]{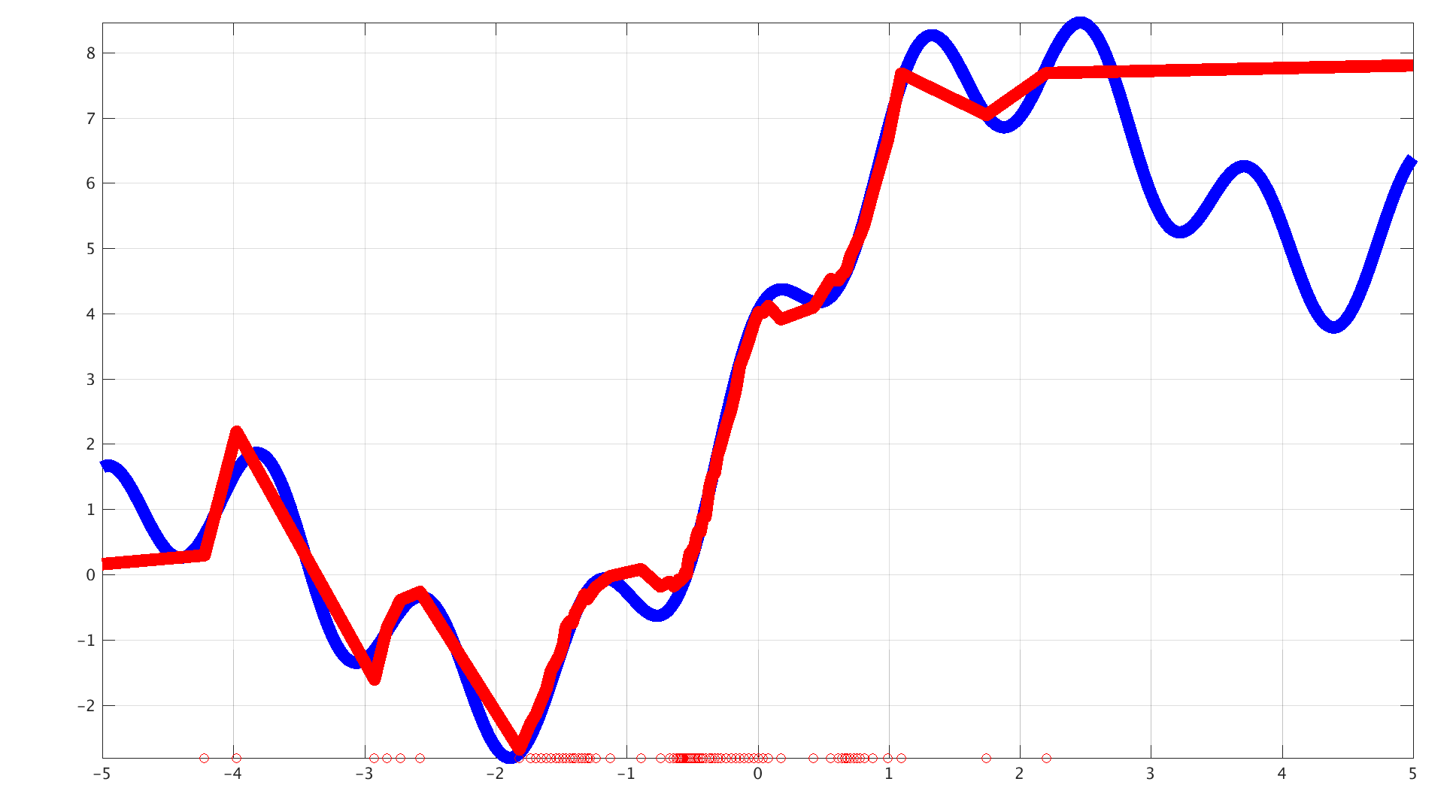} & \includegraphics[width=0.45\textwidth]{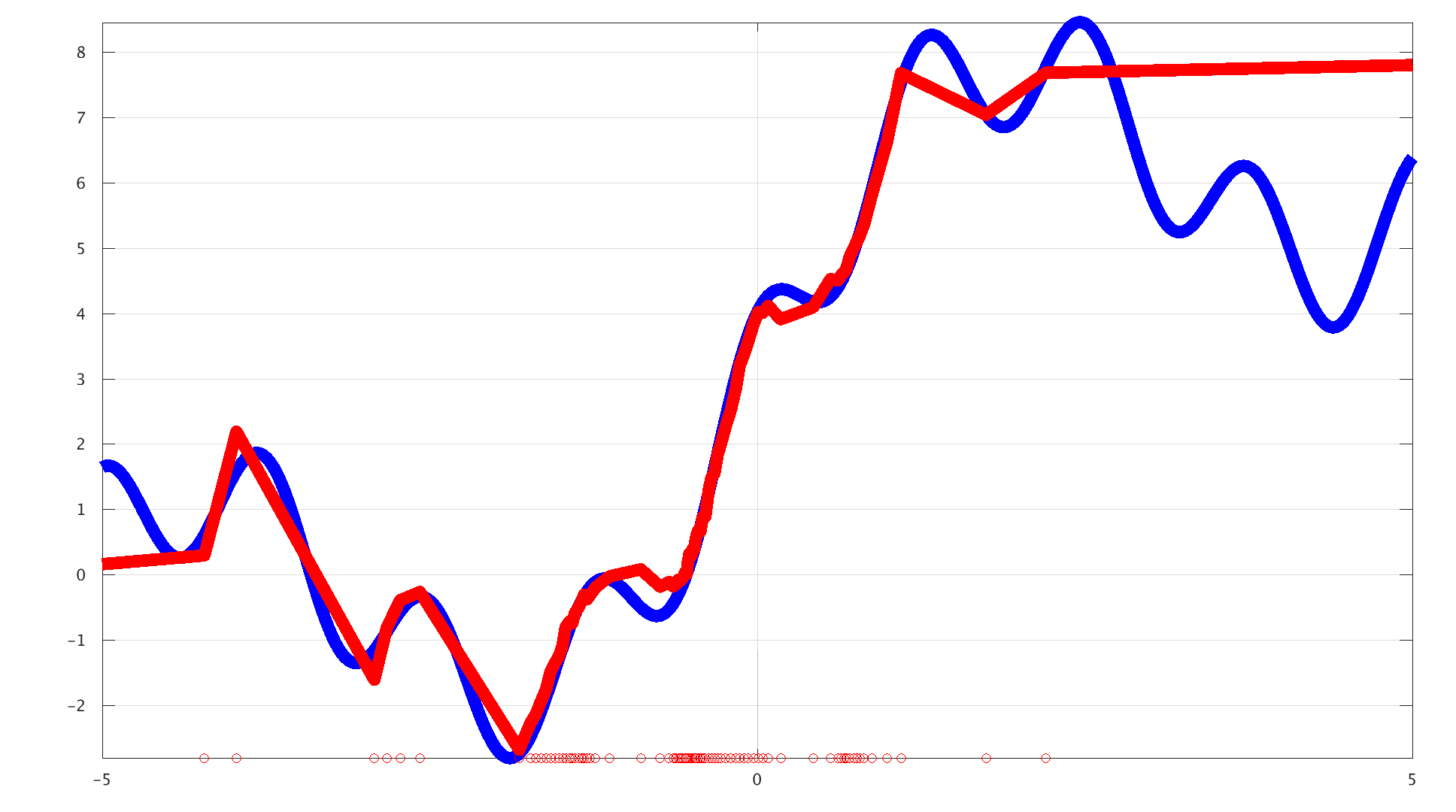} \rotatebox{90}{\qquad$\|\hat a'\|\leq 100$}\\
\includegraphics[width=0.45\textwidth,height=2cm]{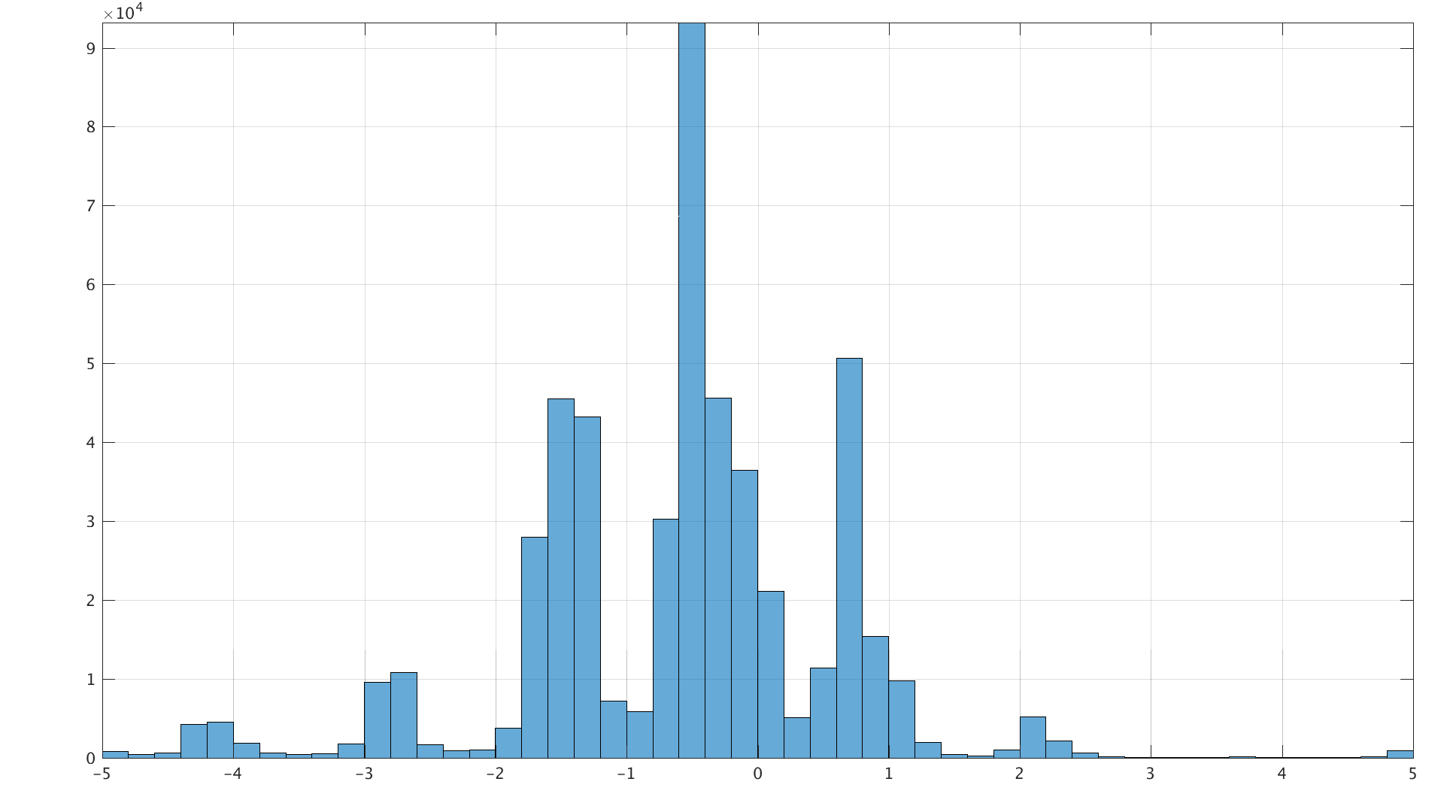} 

\end{tabular}
\begin{tabular}{c| c| c| c| c|c|c}
d & T &$M_1$&$M_2$&${N_e}$&N & D(N) \\ \hline 
20 & 1 &[2,5,10,100] &1000 & 1000 & 30 &100
\end{tabular} 
\end{center}

\caption{Impact of changing constraint $M_1$ bounding the values of $a'$,  showing a projection-like behaviour of the reconstruction for small $M_1$. Graphics as in Figure \ref{Fig:Parameters_for_varying_measurements}.  }
\label{Fig:LooConstraints}
\end{figure}

\subsubsection{Data-driven evolutions}

Given $a, \varepsilon, x_0, u_0$, and $f$, the system \eqref{e.gf3} can be solved to generate the evolution of $x_\varepsilon$. While we created or observed evolutions generated by the true $a$ and used these trajectories in the previous sections to identify $a$, a practical reason for determining $\hat a \approx a$ is that this in turn can be used for simulations of system \eqref{e.gf3}, e.g. instead of performing further  real-life experiments.
This section discusses the quality of such numerical simulation showing that indeed suitable evolutions can be replicated, as theoretically analyzed in Section \ref{sec:ddecp}.
%
%

We start by considering the situation with linear elastic potential $a(y)= \frac{1}{2} y^2$ discussed in Section~\ref{Subsection:Numericsquadratic}, where high fidelity approximation of~$a'$ by~$\hat a'$ is achieved.
The left side of Figure \ref{Fig:Quadratic_Complex_trajectories} depicts the corresponding trajectories~$\hat x_\varepsilon$ and~$x_\varepsilon$ generated by~$\hat a$ and~$a$, respectively, with an initial datum~$(x_0,u_0)$ taken from the distribution~$\eta_0$. These trajectories are basically identical, which is to be expected in view of the good reconstruction~$\hat a$ of~$a$, and since we chose the initial data from~$\mu_0$, and the corresponding~$\tilde \eta$ is supported on a sufficiently large domain.

\begin{figure}[h!]
\includegraphics[width=0.45\textwidth]{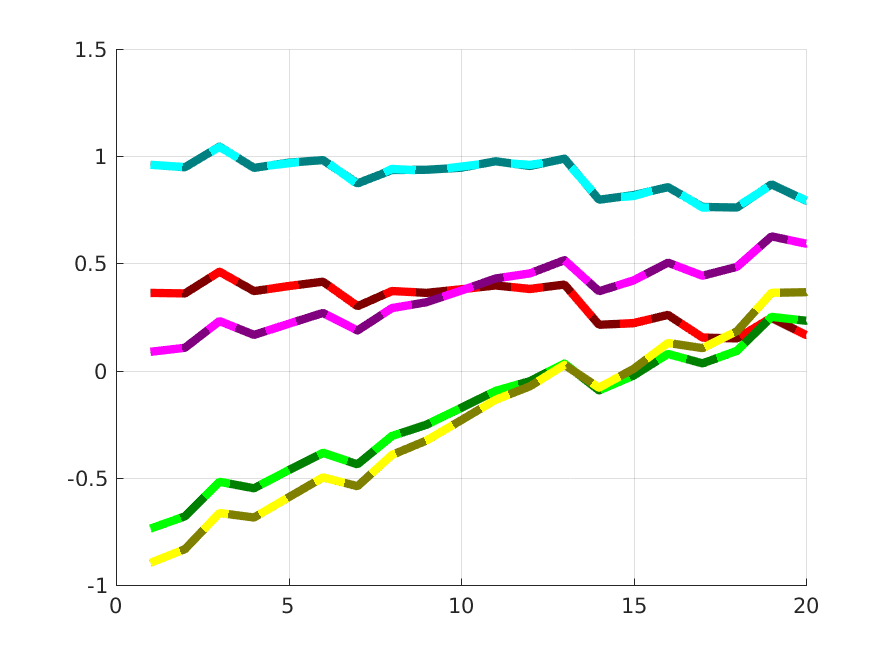}
\includegraphics[width=0.45\textwidth]{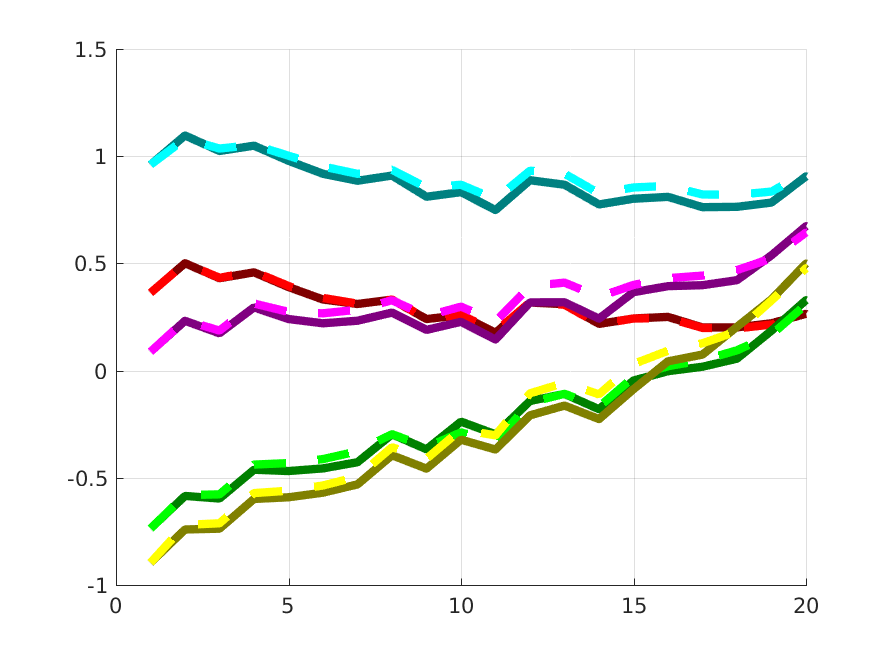}
\caption{Left showing  $\hat x_\varepsilon(t)$ (dark dashed) and $x_\varepsilon(t)$ (line bright) with data from Section \ref{Subsection:Numericsquadratic} at times $t=[\textcolor{red}{0.2}$, $\textcolor{green}{0.4}$, $\textcolor{yellow}{0.6}$, $\textcolor{magenta}{0.8} $, $\textcolor{cyan}{1.0}]$.  }
\label{Fig:Quadratic_Complex_trajectories}
\end{figure}

When considering the more challenging example with highly nonlinear  potentials of Section~\ref{Subsection:Number_experiments} for $N_e=55$, $N=60$ and $D(N)=4N$, the recovery is slightly less precise, in particular since the support of~$\tilde \eta$ is no longer connected in this case. On the right of Figure~\ref{Fig:Quadratic_Complex_trajectories} we show that indeed in this situation we can not perfectly replicate the evolutions, nonetheless the overall behavior of the trajectory remains intelligible.

The errors occur when the trajectory~$\hat x$ is driven by~$\hat a$ into locations where components of~$\mathbf{D}\, e_{t} (x)$ are  distant from the support of~$\tilde \eta$. There~$\hat a$ is not reliable creating further errors. However, the resulting trajectories appear quite acceptable and in particular they show no extreme outliers where values may diverge or act too wildly.

In summary, the presented simulations confirm the theoretical findings about the robust recovery  of various potentials $a$ or $a'$ from observations of evolutions of critical points. The reconstructions $\hat a$ are such that further simulations of trajectories are faithful.


\bibliographystyle{siam}
\bibliography{AFH_19.bib}

\begin{thebibliography}{10}

\bibitem{MR3705699}
{\sc V.~Agostiniani and R.~Rossi}, {\em Singular vanishing-viscosity limits of
  gradient flows: the finite-dimensional case}, J. Differential Equations, 263
  (2017), pp.~7815--7855.

\bibitem{MR3324377}
{\sc V.~Agostiniani, R.~Rossi, and G.~Savar\'{e}}, {\em On the transversality
  conditions and their genericity}, Rend. Circ. Mat. Palermo (2), 64 (2015),
  pp.~101--116.

\bibitem{pers_comm}
{\sc V.~Agostiniani, R.~Rossi, and G.~Savar\'{e}}, {\em Singular
  vanishing-viscosity limits of gradient flows in {H}ilbert spaces}, personal
  communication: in preparation,  (2018).

\bibitem{DDvolatility}
{\sc V.~Albani, U.~M. Ascher, X.~Yang, and J.~P. Zubelli}, {\em Data driven
  recovery of local volatility surfaces}, Inverse Probl. Imaging, 11 (2017),
  pp.~799--823.

\bibitem{Ambrosio2000}
{\sc L.~Ambrosio, N.~Fusco, and D.~Pallara}, {\em Functions of Bounded
  Variation and Free Discontinuity Problems}, Oxford Mathematical Monographs,
  Clarendon Press, Oxford, 2000.

\bibitem{MR2401600}
{\sc L.~Ambrosio, N.~Gigli, and G.~Savar\'e}, {\em Gradient flows in metric
  spaces and in the space of probability measures}, Lectures in Mathematics ETH
  Z\"urich, Birkh\"auser Verlag, Basel, second~ed., 2008.

\bibitem{BFHM16}
{\sc M.~Bongini, M.~Fornasier, M.~Hansen, and M.~Maggioni}, {\em Inferring
  interaction rules from observations of evolutive systems {I}: The variational
  approach}, Math. Models Methods Appl. Sci., 27 (2017), pp.~909--951.

\bibitem{Braides02}
{\sc A.~Braides}, {\em {$\Gamma$}-convergence for beginners}, vol.~22 of Oxford
  Lecture Series in Mathematics and its Applications, Oxford University Press,
  Oxford, 2002.

\bibitem{NIPS2018_7892}
{\sc T.~Q. Chen, Y.~Rubanova, J.~Bettencourt, and D.~K. Duvenaud}, {\em Neural
  ordinary differential equations}, in Advances in Neural Information
  Processing Systems 31, S.~Bengio, H.~Wallach, H.~Larochelle, K.~Grauman,
  N.~Cesa-Bianchi, and R.~Garnett, eds., Curran Associates, Inc., 2018,
  pp.~6571--6583.

\bibitem{MR3799091}
{\sc S.~Conti, S.~M\"{u}ller, and M.~Ortiz}, {\em Data-driven problems in
  elasticity}, Arch. Ration. Mech. Anal., 229 (2018), pp.~79--123.

\bibitem{cre03}
{\sc S.~Cr\'{e}pey}, {\em Calibration of the local volatility in a generalized
  {B}lack-{S}choles model using {T}ikhonov regularization}, SIAM J. Math.
  Anal., 34 (2003), pp.~1183--1206.

\bibitem{PhysRevLett.108.120503}
{\sc T.~S. Cubitt, J.~Eisert, and M.~M. Wolf}, {\em Extracting dynamical
  equations from experimental data is np hard}, Phys. Rev. Lett., 108 (2012),
  p.~120503.

\bibitem{MR1201152}
{\sc G.~Dal~Maso}, {\em An introduction to {$\Gamma$}-convergence}, vol.~8 of
  Progress in Nonlinear Differential Equations and their Applications,
  Birkh\"{a}user Boston, Inc., Boston, MA, 1993.

\bibitem{ewen17}
{\sc W.~E}, {\em A proposal on machine learning via dynamical systems},
  Communications in Mathematics and Statistics, 5 (2017), pp.~1--11.

\bibitem{ewen18}
{\sc W.~E, J.~Han, and Q.~Li}, {\em A mean-field optimal control formulation of
  deep learning}, CoRR, abs/1807.01083 (2018).

\bibitem{Egger_2005}
{\sc H.~Egger and H.~W. Engl}, {\em Tikhonov regularization applied to the
  inverse problem of option pricing: convergence analysis and rates}, Inverse
  Problems, 21 (2005), pp.~1027--1045.

\bibitem{GERSTBERGER1997117}
{\sc R.~Gerstberger and P.~Rentrop}, {\em Feedforward neural nets as
  discretization schemes for odes and daes}, Journal of Computational and
  Applied Mathematics, 82 (1997), pp.~117 -- 128.
\newblock 7th ICCAM 96 Congress.

\bibitem{gb08}
{\sc M.~Grant and S.~Boyd}, {\em Graph implementations for nonsmooth convex
  programs}, in Recent Advances in Learning and Control, V.~Blondel, S.~Boyd,
  and H.~Kimura, eds., Lecture Notes in Control and Information Sciences,
  Springer-Verlag Limited, 2008, pp.~95--110.
\newblock \url{http://stanford.edu/~boyd/graph_dcp.html}.

\bibitem{cvx}
{\sc M.~Grant and S.~Boyd}, {\em {CVX}: Matlab software for disciplined convex
  programming, version 2.1}.
\newblock http://cvxr.com/cvx, Mar. 2014.

\bibitem{nla.cat-vn1161769}
{\sc J.~Guckenheimer and P.~Holmes}, {\em Nonlinear oscillations, dynamical
  systems, and bifurcations of vector fields}, vol.~42 of Applied Mathematical
  Sciences, Springer-Verlag, New York, 1983.

\bibitem{KIRCHDOERFER201681}
{\sc T.~Kirchdoerfer and M.~Ortiz}, {\em Data-driven computational mechanics},
  Comput. Methods Appl. Mech. Engrg., 304 (2016), pp.~81--101.

\bibitem{lu2019learning}
{\sc F.~Lu, M.~Maggioni, and S.~Tang}, {\em Learning interaction kernels in
  heterogeneous systems of agents from multiple trajectories},
  arXiv:1910.04832.

\bibitem{Lu14424}
{\sc F.~Lu, M.~Zhong, S.~Tang, and M.~Maggioni}, {\em Nonparametric inference
  of interaction laws in systems of agents from trajectory data}, Proc. Natl.
  Acad. Sci. USA, 116 (2019), pp.~14424--14433.

\bibitem{novakwoz09}
{\sc E.~Novak and H.~Woźniakowski}, {\em Approximation of infinitely
  differentiable multivariate functions is intractable}, Journal of Complexity,
  25 (2009), pp.~398 -- 404.

\bibitem{sctrwa17-1}
{\sc H.~Schaeffer, G.~Tran, and R.~Ward}, {\em Extracting sparse
  high-dimensional dynamics from limited data}, SIAM J. Appl. Math., 78 (2018),
  pp.~3279--3295.

\bibitem{sctrwa17}
{\sc H.~Schaeffer, G.~Tran, and R.~Ward}, {\em Learning dynamical systems and
  bifurcation via group sparsity}, arXiv:1709.01558.

\bibitem{SS20172}
{\sc G.~Scilla and F.~Solombrino}, {\em Delayed loss of stability in singularly
  perturbed finite-dimensional gradient flows}, Asymptot. Anal., 110 (2018),
  pp.~1--19.

\bibitem{SS2017}
{\sc G.~Scilla and F.~Solombrino}, {\em Multiscale analysis of singularly
  perturbed finite dimensional gradient flows: the minimizing movement
  approach}, Nonlinearity, 31 (2018), pp.~5036--5074.

\bibitem{trwa16}
{\sc G.~Tran and R.~Ward}, {\em Exact recovery of chaotic systems from highly
  corrupted data}, Multiscale Model. Simul., 15 (2017), pp.~1108--1129.

\bibitem{MR2318261}
{\sc C.~Zanini}, {\em Singular perturbations of finite dimensional gradient
  flows}, Discrete Contin. Dyn. Syst., 18 (2007), pp.~657--675.

\end{thebibliography}

\end{document}